\newcommand{\fancy}[1]{\color{white}\contour{black}{#1}}
\newcommand{\robVar}[1]{\bm{\mathrm{#1}}}
\newcommand{\robnu}{\bm \nu}
\newcommand{\robtau}{\bm \tau}
\newcommand{\ageVar}[1]{\mathbbm{#1}}
\newcommand{\agenu}{\fancy{$\nu$}}
\newcommand{\agetau}{\fancy{$\tau$}}
\newcommand{\comVar}[1]{\bm{#1}}
\newcommand\PapersFontSize{\fontsize{8}{10}\selectfont}
\newlength\longest
\newcommand{\chapreface}[1]
{
	\begin{center}
		\vspace{-1cm}
		\begin{tcolorbox}[rounded corners, colback=white!30,
			colframe=white!20!black!0,width=1.05\textwidth, center]
				\vspace{0.5cm}
				\begin{tcolorbox}[rounded corners, colback=white!30,
					colframe=white!20!black!0,width=1\textwidth, center]
					#1
				\end{tcolorbox}
		\end{tcolorbox}
		
		\vspace{0.1cm}
		\titlerule
		\vspace{0.1cm}
		
	\end{center}
	
}
\definecolor{gray85}{gray}{0.85}
\newcommand\Yuge{\fontsize{80}{20}\selectfont}
\titleformat{\part}[display]{\Huge\mdseries\scshape\filright}{\partname~\thepart:}{20pt}{\Huge}
\titleformat{\chapter}[display]{\raggedleft\Yuge\mdseries\slshape}{\textcolor{gray85}\thechapter}{0pt}{\Huge}[\vspace*{.1\baselineskip}\titlerule]
\titlespacing{\chapter}{0pt}{0pt}{15pt}
\titleformat{\section}[hang]{\Large\mdseries}{\thesection}{10pt}{\Large}
\titleformat{\subsection}[hang]{\large\mdseries}{\thesubsection}{10pt}{\large}
\titleformat{\subsubsection}[hang]{\normalsize\bfseries}{\thesubsubsection}{10pt}{\normalsize}
\definecolor{Gray}{gray}{0.9} %
\DeclareMathOperator*{\skewOp}{S}
\DeclareMathOperator*{\argmin}{\arg\!\min}
\def \hrsup_size{0.48}
\def \framesu_size{0.48}
\DeclareMathAlphabet{\mathcal}{OMS}{cmsy}{m}{n}
\newtheorem*{proposition}{Proposition}
\definecolor{CustomGray}{gray}{0.85}
\newcolumntype{P}[1]{>{\centering\arraybackslash}p{#1}}
\renewcommand{\cite}{\citep}
\newtheorem{thm}{Theorem}
\newtheorem{lemma}[thm]{Lemma}
\newsavebox\myboxA
\newsavebox\myboxB
\newlength\mylenA
\newcommand*\xoverline[2][0.75]{%
   \sbox{\myboxA}{$\m@th#2$}%
   \setbox\myboxB\null%
   \ht\myboxB=\ht\myboxA%
   \dp\myboxB=\dp\myboxA%
   \wd\myboxB=#1\wd\myboxA%
   \sbox\myboxB{$\m@th\overline{\copy\myboxB}$}%
   \setlength\mylenA{\the\wd\myboxA}%
   \addtolength\mylenA{-\the\wd\myboxB}%
   \ifdim\wd\myboxB<\wd\myboxA%
      \rlap{\hskip 0.5\mylenA\usebox\myboxB}{\usebox\myboxA}%
   \else
       \hskip -0.5\mylenA\rlap{\usebox\myboxA}{\hskip 0.5\mylenA\usebox\myboxB}
   \fi}
\newcommand{\mcrot}[4]{\multicolumn{#1}{#2}{\rlap{\rotatebox{#3}{#4}~}}}
\begin{document}

\frontmatter

\thispagestyle{empty}
\begin{figure}[h!]
 \centering
 \includegraphics[scale=0.20]{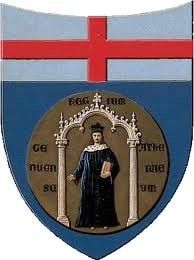} %
	\begin{center} 
		\Large
		{\textsc{University of Genova}}\\
		  \vspace{0.15em}
		  \large
	         \textsc{PhD Program in Bioengineering and Robotics}\\

	\end{center}
\end{figure}

\vspace{-0.65cm}

\begin{center} 

		\Large
		\textbf{Enabling Human-Robot Collaboration via \\ 
			Holistic Human Perception and \\
			 Partner-Aware Control} \\
\end{center}

 	\begin{center} 
		\textbf{Venkata Sai Yeshasvi Tirupachuri}\\
		\vspace{1em}
		\normalsize
		Thesis submitted for the degree of \textit{Doctor of Philosophy} ($31^\circ$ cycle) \\
	    \vspace{1em}	
		\normalsize
		April \\ 
		2020\\ 
	\end{center}

    \begin{center}
    	\textbf{Supervisors} \\
    	Daniele Pucci \\ 
    	Giulio Sandini
    \end{center}
    
    \begin{center}
     \vspace{-0.35cm}	
    	\begin{tabular}{ l@{\hspace{0.25cm}}l }
    		\textbf{Jury \& External Reviewers*} & \\
    		Rachid Alami* & Senior Scientist, LAAS-CNRS, France \\ 
    		Antonio Franchi & Assoc. Professor, University of Twente, Netherlands \\
    		Robert Griffin & Research Scientist, IHMC, USA \\
    		Dana Kuli\'{c}* & Professor, Monash University, Australia \\
    		Ludovic Righetti & Assoc. Professor, New York University, USA \\
    		Oliver Stasse & Senior Researcher, LAAS-CNRS, France
    	\end{tabular}
        
    \end{center}

\vspace{1em}

\begin{figure}[h!]
	\centering
	\begin{subfigure}{.4\textwidth}
		\centering
		\includegraphics[trim=0.25cm 0.25cm 0.25cm 0.25cm, clip, scale=0.315]{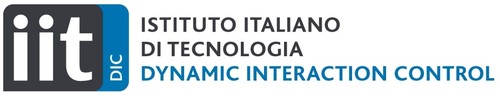}
		\begin{center} 
			\small{Dynamic Interaction Control Lab \\ Italian Institute of Technology}
		\end{center}
	\end{subfigure}%
	\begin{subfigure}{.5\textwidth}
		\centering
		\includegraphics[scale=0.425]{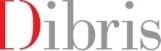}
		\begin{center} 
			\small{Department of Informatics, Bioengineering, Robotics and Systems Engineering\\}
		\end{center}
	\end{subfigure}
\end{figure}

\begin{dedication} 
	
	\begin{center}
		\thispagestyle{empty}
		To my beloved grandmother and parents \dots
		\vspace*{\fill}
	\end{center}

\end{dedication}

\begin{acknowledgements}

This research work would not have materialized without the trust and constant support from Dr. Daniele Pucci. Thank you for believing in me and giving me an incredible opportunity to be a part of our research team. To my dearest collaborator, Gabriele Nava, who has been a trusted confidant over the years, thank you for constantly inspiring and supporting me in carrying out this research. My comrades in the trenches of AnDy: Claudia, Silvio, Diego, Lorenzo, Kourosh, Ines, Francisco, thank you for always being there in exploring new ideas and materializing them to fruition. The last four years have been an enriching and rewarding experience which was made possible by many of the current and former colleagues and friends. Thanks to everyone, especially Stefano, Giulio, Nuno, Prashanth, Luca Fiorio, Enrico, Gianluca, Yue, Ugo, Vadim, Prof. Gabriel Baud-Bovy, Lisa, Mariacarmela, Chiara, Marco, Prof. Fulvio Mastrogiovanni, Marie, Naveen, Joan, Ali, Dimitris, Alberto, Michele, Stefano De Simone, Matteo, Julien, Marta, Lucia, Luca Recla. I would also like to thank Prof. Rachid Alami and Prof. Dana Kuli\'{c} for providing their invaluable reviews of this doctoral thesis. 

Finally, this journey would not have been easier and possible without the incredible love and support from my dearest Lizeth, my parents, and my dear friends Stefano Bracchi, Dara and Tagore.

Above all, I am forever grateful for all your kindness and compassion. Thank you!

\end{acknowledgements}

\begin{dedication} 
	
	\begin{figure}[hbt!]
		\centering
		\begin{subfigure}{0.5\textwidth}
			\centering
			\includegraphics[scale=0.37]{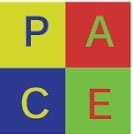}
			\includegraphics[scale=0.3]{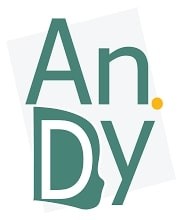}
			\includegraphics[scale=0.3]{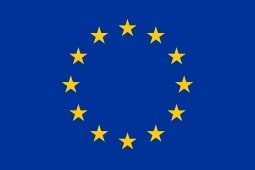}
		\end{subfigure}%
	\end{figure}
	
	\begin{center}
		\thispagestyle{empty}
		This work is supported by PACE project, Marie Skłodowska-Curie grant agreement No. 642961 and An.Dy project which has received funding from the European Union's Horizon 2020 Research and Innovation Programme under grant agreement No. 731540. 
		\vspace*{\fill}
	\end{center}

\end{dedication}

\begin{abstract}
	
    As robotic technology advances, the barriers to the coexistence of humans and robots are slowly coming down. Application domains like elderly care, collaborative manufacturing, collaborative manipulation, etc., are considered the need of the hour, and progress in robotics holds the potential to address many societal challenges. The future socio-technical systems constitute of blended workforce with a symbiotic relationship between human and robot partners working collaboratively. This thesis attempts to address some of the research challenges in enabling human-robot collaboration. In particular, the challenge of a holistic perception of a human partner to continuously communicate his intentions and needs in real-time to a robot partner is crucial for the successful realization of a collaborative task. Towards that end, we present a holistic human perception framework for real-time monitoring of whole-body human motion and dynamics. On the other hand, the challenge of leveraging assistance from a human partner will lead to improved human-robot collaboration. In this direction, we attempt at methodically defining what constitutes assistance from a human partner and propose partner-aware robot control strategies to endow robots with the capacity to meaningfully engage in a collaborative task.
	
\end{abstract}

\newpage
\thispagestyle{plain} %
\mbox{}
\thispagestyle{empty}
\null\vfill

\settowidth\longest{\large Start where you are. Use what you have. Do what you can.;}

\hspace{6cm}
\parbox{\longest}{%
	\raggedright{\large%
		Start where you are. \\
		Use what you have. \\
	    Do what you can. \par\bigskip
	}   
	\hspace{2cm}\large{--- Arthur R. Ashe Jr.}\par%
}

\vfill\vfill

\newpage
\thispagestyle{plain} %
\mbox{}

\subsection*{Nomenclature}
\vspace{-0.5cm}
\rule{\textwidth}{0.3pt}
\begin{longtable}{l p{10cm}}
    \label{nomenclature_table}
$p$                          & Scalar (non-bold small letter)\\
\rowcolor{Gray}
$\comVar{p}$                 & Vector (bold small letter)\\
$\dot {\comVar{p}}$          & First-order time derivative\\
\rowcolor{Gray}
$\ddot {\comVar{p}}$         & Second-order time derivative\\
$\comVar{A}$                 & Matrix (bold capital letter)\\
\rowcolor{Gray}
$(\cdot)^{\top}$                & Transpose matrix\\
$|\cdot|$                    & Matrix determinant\\
\rowcolor{Gray}
$\bm \skewOp(\cdot)$         & Skew-symmetric matrix\\
$\mathcal{A}$                & Coordinate reference frame (calligraphic letter)\\
\rowcolor{Gray}
$O_\mathcal{A}$              & Origin of a coordinate frame $\mathcal{A}$\\
${}^\mathcal{A}\comVar{p}$       & Vector expressed in frame $\mathcal{A}$\\
\rowcolor{Gray}
$\mathcal{I}$                & Inertial frame of reference\\
$\sum$                       & Summation operator\\
\rowcolor{Gray}
$(\cdot)^\dagger$            & Pseudoinverse of a matrix\\
$(\cdot)^{-1}$               & Inverse matrix\\
\rowcolor{Gray}
$rank(\cdot)$                & Rank of a matrix\\
$\arg \max(\cdot)$           & Maximizing argument\\
\rowcolor{Gray}
$\arg \min(\cdot)$           & Minimizing argument\\
$diag(\cdot)$                & Diagonal matrix\\
\rowcolor{Gray}
$\|\cdot\|$                  & Norm\\
$\comVar{J}$                 & Jacobian matrix\\
\rowcolor{Gray}
$\comVar{M}$                 & Mass matrix\\
$\comVar{C}$                 & Coriolis matrix\\
\end{longtable}

\clearpage

\subsection*{Abbreviations and Acronyms}
\rule{\textwidth}{0.3pt}
\begin{longtable}{l p{10.6cm}}
    \label{abbrevationsAndAcronims_table}
HRI          & Human-Robot Interaction\\
\rowcolor{Gray}
HRC          & Human-Robot Collaboration\\
DoF          & Degrees of Freedom\\
\rowcolor{Gray}
$1$D         & One-dimensional\\
$2$D         & Two-dimensional\\
\rowcolor{Gray}
$3$D         & Three-dimensional\\
$6$D         & Six-dimensional\\
\rowcolor{Gray}
F/T          & Force-Torque\\
IMU          & Inertial Measurement Unit\\
\rowcolor{Gray}
IK           & Inverse Kinematics\\
ID           & Inverse Dynamics\\
\rowcolor{Gray}
RNEA         & Recursive Newton-Euler Algorithm\\
URDF         & Unified Robot Description Format\\
\rowcolor{Gray}
API          & Application Programming Interface\\
YARP 		 & Yet Another Robot Platform\\
\rowcolor{Gray}
ROS 		 & Robot Operating System\\
RGB          & Red-Green-Blue\\
\rowcolor{Gray}
CoM          & Centre of Mass\\
PDF          & Probability Density Function\\
\rowcolor{Gray}
MAP          & Maximum-A-Posteriori\\
MSE          & Mean Square Error\\
\rowcolor{Gray}
RMSE         & Root Mean Square Error\\
\end{longtable}

\tableofcontents

\listoffigures

\listoftables

\mainmatter

\chapter*{Prologue}
\addcontentsline{toc}{chapter}{Prologue}

Ever since the creation of the first industrial robot Unimate from Unimation, the field of robotics made tremendous progress in developing a variety of robots with new venues for their use ranging from disaster response to elderly care~\cite{gasparetto2019unimate, zamalloa2017dissecting}. Industrial Robots (\textsc{ir}) have been massively adopted in manufacturing industries. Historically, industrial robots have been employed in caged environments and are programmed to do heavy duty tasks repetitively with minimal supervisory control from humans. Though industrial robots typically offer higher operation speeds and payload capacities, they generally lack the flexibility to change the production lines quickly. This puts them at a disadvantage under emerging economic pressures of competitive markets that deem increased productivity and flexibility of production lines to meet the demand of rapid product changes.

Over the last two decades, the field of Human-Robot Interaction (\textsc{hri}) emerged as an established independent research area that is dedicated to understanding, designing and evaluating robotic systems intended to be used by or with humans~\cite{goodrich2008human}. In particular, the research area of Human-Robot Collaboration (\textsc{hrc}) focuses on the aspects of bringing together humans and robots as collaborative partners with a shared sense of purpose in achieving a common goal. Recent technological advances in hardware design, sensory and actuation capabilities paved way to a new class of robots called collaborative robots or cobots that are intended to be used safely along-side human partners either to assist them or augment their physical capabilities~\cite{ogenyi2019physical}. 

The human partner is an integral part of human-robot collaboration scenarios~\cite{pfeiffer2016robots}. In general, the human partner establishes a common goal for collaboration and the robot partner needs to be communicated about the human intentions and needs during the entire duration of the collaborative task. Several communication interfaces are investigated in literature for various collaborative applications~\cite{ajoudani2018progress}. Interest in multi-modal interfaces is clearly on the rise owing to the new possibilities they enable for active collaboration. However, achieving a holistic human perception not limiting to just the kinematics but also the dynamics of the human partner helps in achieving sophisticated human as an actor models in collaborative scenarios. Furthermore, it enables to quantitatively measure several human factors for complex tasks across different sectors.

Current generation of robotic platforms ranging from cobots to humanoids facilitate various control modalities for improved interactions with humans partners. The critical aspect of safety during physical human-robot interaction has been extensively investigated in literature and several reactive robot behaviors have been successfully tested~\cite{haddadin2017robot, Haddadin2015}. In contrast to unexpected collisions or disturbances from a human partner, collaboration scenarios involve sustained intentional physical interactions, that are often helpful for the robot partner. So, instead of exhibiting simple reactive behaviors, a robot partner can exploit the help from a human partner to perform the collaborative task efficiently. However, a clear definition of what constitutes a helpful interaction from a human partner is still an open question that needs to be methodically addressed.

In view of the above observations, we believe there is a strong potential benefit in establishing a holistic human perception framework towards partner-aware reactive robot behaviors on collaborative tasks. The research presented in this thesis is a minuscule contribution towards realizing the future socio-technical systems where human and robot partners constitute the blended workforce across different sectors. This research work has been carried out during my PhD within Dynamic Interaction Control (\textsc{dic}) - a robotics research group at the Italian Institute of Technology (\textsc{iit}). The doctoral program has been carried in accordance with the requirements of University of Genoa, Italy in order to obtain a PhD title. My research work is funded by the European projects: Perception and Action in Complex Environments\footnote{Marie Skłodowska-Curie grant agreement No. 642961} (\textsc{pace}), and Advancing
Anticipatory  Behaviors in Dyadic Human-Robot Collaboration\footnote{Horizon 2020 Research and Innovation Programme under grant agreement No. 731540} (An.Dy). The present document is organized into three parts:

\vspace{0.25cm}
\noindent \textbf{Part I: Background and Thesis Context}

\begin{itemize}
	\item \textbf{Chapter~\ref{cha:human-and-robot-partners} Human and Robot Partners} presents a brief introduction of the current robotic technology trends in relation to humans. A brief literature review related to the challenges in human perception and robot control for collaboration scenarios is presented with a the motivation behind the research work in this doctoral thesis within the scope of An.Dy project.
	\item \textbf{Chapter~\ref{cha:RigidMultiBodySystem} Rigid Multi-Body System} introduces the basic notation followed throughout this thesis, followed by rigid body kinematics and dynamics representation. Furthermore, the modeling and dynamics of a rigid multi-body system is presented in this chapter.
	\item \textbf{Chapter~\ref{cha:human-modeling} Recall on Human Modeling} described the importance of digital human modeling in different fields and presents modeling details human as a system of articulated rigid bodies that is relevant to the context of robotics research.
	\item  \textbf{Chapter~\ref{cha:technologies} Enabling Technologies} describes the technologies used for human motion measurements and environmental interaction force and moment measurements. Furthermore, details of the humanoid robotic platform used for experiments in this research are presented in this chapter.
\end{itemize}

\vspace{0.25cm}
\noindent \textbf{Part II: Holistic Human Perception}

\begin{itemize}
	\item \textbf{Chapter~\ref{cha:human-kinematics-estimation} Human Kinematics Estimation} introduces the problem of inverse kinematics in the context of real-time human motion tracking. Different approaches to solving an inverse kinematics problem for a highly articulated human model are presented.
	\item \textbf{Chapter~\ref{cha:human-dynamics-estimation} Human Dynamics Estimation} recalls the problem of inverse dynamics formulated as a stochastic estimation problem. A novel sensorless external force estimation approach is presented along with experimental validation of human dynamic variables estimation for a floating base human model.
	\item \textbf{Chapter~\ref{cha:software-architecture} Software Architecture} explains our modular and extensible software infrastructure with wearable technology towards realizing a sophisticated holistic human perception framework. 
\end{itemize}

\vspace{0.25cm}
\noindent \textbf{Part III: Reactive Robot Control}

\begin{itemize}
	\item \textbf{Chapter~\ref{cha:partner-aware-control} Partner-Aware Control} recalls classical feedback linearization control technique and describes the typical interaction characterization for human-robot collaboration scenarios. Further, this chapter presents a coupled dynamics formalism of an external agent and a robotic agent engaged in physical collaboration. More importantly, interaction characterization in terms of external agent joint torques is presented and a partner-aware control law through Lyapunov stability analysis is proposed. Experimental validation with a whole-body humanoid robot controller for performing the task of sit-to-stand transition is carried using two complex humanoid robots.
	\item \textbf{Chapter~\ref{cha:trajectory-advancement} Trajectory Advancement} explores the idea of an intuitive reactive robot behavior through the trajectory advancement that endows a robot with the capacity to accomplish a task quicker by leveraging assistance from the human partner. The details of trajectory advancement proposition through the Lyapunov analysis is presented with experimental validation using a simple trajectory tracking controller for an end-effector of the robot and a more complex whole-body controller for performing the task of sit-to-stand transition.
	\item \textbf{Chapter~\ref{cha:whole-body-retargeting} Whole-Body Retargeting \& Teleoperation} presents our research towards telexistence with the main focus on whole-body human motion retargeting to robotic systems and teleoperation. Thanks to the flexibility of our software architecture presented in Chapter~\ref{cha:software-architecture} that enables us to perform experiments with minimal changes. Our approach is validated through whole-body retargeting experiments with multiple human subjects and multiple robots. Furthermore, teleoperation experiments with two state-of-the-art whole-body controllers are presented in this chapter.
\end{itemize}

\vspace{0.25cm}
\noindent \textbf{Research Contributions}

\begin{itemize}
	\item Research investigation into the coupled dynamics formalism and partner-aware control during a  physical collaboration task between an external partner and a robot partner presented in Chapter~\ref{cha:partner-aware-control} is published as a part of the conference proceedings and it secured a "Best Student Paper" award.
	
	\begin{tcolorbox}[rounded corners, colback=white!30,
		colframe=white!20!black!30,width=0.9\textwidth, center]
		\PapersFontSize  \href{https://link.springer.com/chapter/10.1007/978-3-030-29513-4_78}{\underline{Tirupachuri, Y.}, Nava, G., Latella, C., Ferigo, D., Rapetti, L., Tagliapietra, L., Nori, F., \& Pucci, D. (2019, September). Towards partner-aware humanoid robot control under physical interactions. In Proceedings of SAI Intelligent Systems Conference (pp. 1073-1092). Springer, Cham.}
		
		\vspace{0.15cm}
		\PapersFontSize{\textbf{Video:} \href{https://youtu.be/auHfyuTvkuY}{https://youtu.be/auHfyuTvkuY}}
	\end{tcolorbox}

	\item The concept of trajectory advancement and the experimental validation presented in Chapter~\ref{cha:trajectory-advancement} is published as a part of the conference proceedings and the extended experimental validation is accepted as a workshop dissemination.
	
	\begin{tcolorbox}[rounded corners, colback=white!30,
		colframe=white!20!black!30,width=0.9\textwidth, center]
		\PapersFontSize \href{https://arxiv.org/pdf/1907.13445.pdf}{\underline{Tirupachuri, Y.}, Nava, G., Rapetti, L., Latella, C., \& Pucci, D. (2019). Trajectory Advancement during Human-Robot Collaboration, in press IEEE Ro-MAN 2019. arXiv preprint arXiv:1907.13445.}
		
		\vspace{0.15cm}
		\PapersFontSize{\textbf{Video:} \href{https://youtu.be/mGHU5QTk9BI}{https://youtu.be/mGHU5QTk9BI}}
		
		\vspace{0.2cm}
		
		\PapersFontSize \href{https://arxiv.org/pdf/1910.06786.pdf}{\underline{Tirupachuri, Y.}, Nava, G., Rapetti, L., Latella, C., \& Pucci, D. (2019). Trajectory Advancement for Robot Stand-up with Human Assistance, Italian Robotics and Intelligent Machines Conference (I-RIM 2019) 2019. arXiv preprint arXiv:1907.13445.}
		
		\vspace{0.15cm}
		\PapersFontSize{\textbf{Video:} \href{https://youtu.be/OZ-cgzTm_pM}{https://youtu.be/OZ-cgzTm\_pM}}
		
	\end{tcolorbox}

	\item The research on whole-body human motion retargeting to a robot platform and the experimental validation with state-of-the-art whole-body controllers for humanoid robots detailed in Chapter~\ref{cha:whole-body-retargeting} is presented as a part of the conference proceedings.
	
	\begin{tcolorbox}[rounded corners, colback=white!30,
		colframe=white!20!black!30,width=0.9\textwidth, center]
		\PapersFontSize \href{https://arxiv.org/abs/1909.10080}{\underline{Tirupachuri, Y.}, Darvish, K., Romualdi, G., Rapetti, L., Ferigo, D., Chavez, F. J. A., \& Pucci, D. (2019). Whole-Body Geometric Retargeting for Humanoid Robots, in press IEEE Humanoids 2019. arXiv preprint arXiv:1909.10080.}
		
		\vspace{0.15cm}
		\PapersFontSize{\textbf{Video:} \href{https://youtu.be/hUj83DMWxCo}{https://youtu.be/hUj83DMWxCo}}
				
	\end{tcolorbox}

	\item Investigations into extending our stochastic human dynamics estimation framework for a floating base human model laid out in Chapter~\ref{cha:human-dynamics-estimation} are presented as a journal contribution.
	
	\begin{tcolorbox}[rounded corners, colback=white!30,
		colframe=white!20!black!30,width=0.9\textwidth, center]
		\PapersFontSize \href{https://www.mdpi.com/1424-8220/19/12/2794}{Latella, C., Traversaro, S., Ferigo, D., \underline{Tirupachuri, Y.}, Rapetti, L., Andrade Chavez, F. J., Nori, F., \& Pucci, D. (2019). Simultaneous Floating-Base Estimation of Human Kinematics and Joint Torques. Sensors, 19(12), 2794.}
		
		\vspace{0.15cm}
		\PapersFontSize{\textbf{Video:} \href{https://youtu.be/kLF4GS7tDxY}{https://youtu.be/kLF4GS7tDxY}, \href{https://youtu.be/VuvVmXXiYEA}{https://youtu.be/VuvVmXXiYEA}}
		
	\end{tcolorbox}

	\item Our work on real-time human motion tracking through various inverse kinematics algorithms and their benchmarking presented in Chapter~\ref{cha:human-kinematics-estimation} is under review for a conference.
	
	\begin{tcolorbox}[rounded corners, colback=white!30,
		colframe=white!20!black!30,width=0.9\textwidth, center]
		\PapersFontSize \href{https://arxiv.org/abs/1909.07669}{Rapetti, L., \underline{Tirupachuri, Y.}, Darvish, K., Latella, C., \& Pucci, D. (2020). Model-Based Real-Time Motion Tracking using Dynamical Inverse Kinematics. arXiv preprint arXiv:1909.07669.}
		
		\vspace{0.15cm}
		\PapersFontSize{\textbf{Video:} \href{https://youtu.be/_-oe8F8UP7g}{https://youtu.be/\_-oe8F8UP7g}}
		
	\end{tcolorbox}

    \item The research effort on sensorless external force estimation through an updated formulation of our stochastic human dynamics estimation presented in Chapter~\ref{cha:human-dynamics-estimation} will soon be submitted to an upcoming conference. \PapersFontSize{\textbf{Video:} \href{https://youtu.be/4hLnP_6-rCs}{https://youtu.be/4hLnP\_6-rCs}}
	
\end{itemize}

\part{Background and Thesis Context}

\chapter{Human and Robot Partners}  %
\label{cha:human-and-robot-partners}

\section{Introduction}
\label{sec:introduction}

The current decade (2010-2020) will be etched in history as the decade of many key successes in the field of robotics. The success stories are marked by some of the most practical and interesting research work witnessed through government-backed competitions like DARPA Robotics Challenge (\textsc{drc})~\cite{Krotkov:2017:DRC:3074644.3074647, doi:10.1177/0278364916688254}. Also, the total amount of investments in robotics has been steadily increasing throughout the decade as highlighted in Fig.~\ref{fig:robotics-funding}.

\begin{figure}[H]
	\centering
	\begin{subfigure}{0.925\textwidth}
		\centering
		\includegraphics[width=\textwidth]{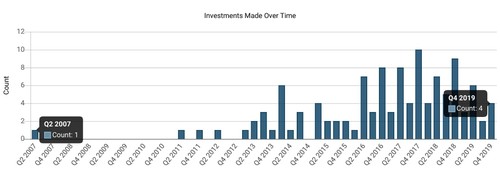}
		\label{fig:robotics-funding-count}
	\end{subfigure}
	\begin{subfigure}{0.925\textwidth}
		\centering
		\vspace{-.75cm} \includegraphics[width=\textwidth]{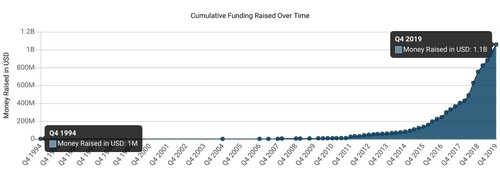}
		\label{fig:robotics-funding-cumulative}
	\end{subfigure}
    \vspace{-0.55cm}
	\caption{Investment trends in robotics - Source: \href{https://www.crunchbase.com/hub/robotics-companies-seed-funding}{Crunchbase}}
	\label{fig:robotics-funding}
\end{figure}

A myriad of sophisticated products like Jibo, Mayfield Robotics Kuri, Anki Cozmo, Rethink Robotics Baxter, Franka Emika Panda, Universal Robotics U5, DJI Phantom from some of the most notable robotic startups are introduced in the market. However, as the decade draws in, streams of failures dawn on may robotic startups leading to eventual bankruptcy. On careful analysis, one can conclude that most of the startups that failed attempted to provide social robots or promised a higher emphasis on the social component in their products oriented to consumers. While on the other hand, successful companies focused more on delivering products that are function-oriented e.g., Franka Emika and Universal Robotics products are for collaborative manufacturing, DJI drones are for creative fields and entertainment.

An interesting thought exercise is to compare and contrast the smart devices market and consumer robotics market. At the hardware level, the key enabling technologies behind the smart devices and the current robots are similar e.g., interactive touch screens, high capacity graphics processing units, high bandwidth, and low latency communication devices, etc. However, the momentum sustained by the smart devices and the rapid cultural adaptation of them in the previous decade is markedly more prominent than that of robots in the current decade. Although built as general-purpose computing units, the promise, and the strength of smart devices lie in providing connectivity in the digital space, access to information and a myriad of tools to enhance creativity and productivity. In contrast, robots are expected to share the same physical space alongside humans and are expected to have many anthropomorphic traits ranging from physical structure to emotional intelligence that can assist or augment the quality of human life. So, to have more cultural acceptance and adoption, robots need to have relational skills~\cite{damiano2015towards, parviainen2017vulnerable} that ensure a social component in their existence. On the other hand, the need for standardized measurement tools and methodological principles to quantify the effect of a robot for interacting with humans reliably is backed in the research community~\cite{belhassein2019towards, bartneck2009measurement}.

Businesses are capitalistic entities by nature and they always try to lower the costs and increase the profit margins. The global trend of income inequality in western society is a direct result of decades of operating costs optimization by relocating the business operations from labor expensive regions to inexpensive regions. The latest advances in robotics and automation are perceived as a great threat that completely replaces human labor across different sectors \cite{frey2017future}. Current sentiments assume a general rule of thumb that any routine work is on the brink of replacement by robots and automation. An example of routine work is the assembly task on a shop floor. However, a recent qualitative study on assembly work highlights the intricacies of the "\emph{perceived}" routine work of assembly and the importance of experiential knowledge \cite{pfeiffer2016robots}.

\begin{tcolorbox}[sharp corners, colback=white!30,
	colframe=blue!20!black!30, 
	title=Excerpt from \cite{pfeiffer2016robots}]
	\small \emph{Automation always has aimed and always will aim to substitute for human labor, and with new robotic concepts ahead we will see many attempts to do so in the world of assembly. However, the limits of automation are to be seen after decades of robotic use in industry: coping with imponderables, the flexibility and ability to adapt, the unlimited variability of behavior and tacit skills and body intelligence — all these dimensions of genuinely human work cannot be replaced by robots.}
\end{tcolorbox}

Humans will remain an integral part of the organic workforce, even as the complexity and composition of the work evolves to meet the demands of rapid market changes. Recently collaborative robots gained a lot of popularity and are increasingly adopted across several industries \cite{Braganca2019} as the technology is becoming more flexible, versatile and safe \cite{saenz2018survey}. However, current robotic technology is limited in terms of active collaboration through physical interactions with human partners. This limitation is directly linked to the deficiencies of human perception involving both whole-body motion and articular stress which leads to a blind spot for robots working alongside humans. Furthermore, a collaborative human partner aims at helping the robot partner through physical interactions. Under such circumstances instead of exhibiting simple reactive behaviors, a robot partner can actively use the help to accomplish a shared goal.

\section{Brief Literature Review}
\label{sec:brief-literature-review}

Robots existed as separate entities till now but the horizons of a symbiotic human-robot partnership is impending. In particular, application domains like elderly care, collaborative manufacturing, collaborative manipulation, etc., are considered the need of the hour. Across all these domains, it is crucial for robots to physically interact with humans to either assist them or to augment their capabilities. Such human in the loop physical human-robot interaction (\textsc{phri}) or physical human-robot collaboration (\textsc{phrc}) scenarios demand careful consideration of both the human and the robot systems while designing controllers to facilitate robust interaction strategies for successful task completion. More importantly, a generalized agent-robot interaction formalism is needed to study the physical interaction adaptation and exploitation, where the \emph{agent} can be a human or another robot. 

The three main components of any physical agent-robot collaboration (\textsc{parc}) scenario are 1) a \textit{robotic agent}, 2) an \textit{external agent} be it a human or another robot, and 3) the \textit{environment} surrounding them. Over time, physical interactions are present between any of the two components. An intuitive conceptual representation of the interactions occurring during collaborative scenarios is presented in \citep{losey2018review}. 

\subsection{Challenges in Human Perception}
\label{sec:challenges-in-human-perception}

The knowledge of human intent is a key element for the successful realization of \textsc{phri} tasks. The process of human intent detection is broadly divided into \textit{intent information}, \textit{intent measurement} and \textit{intent interpretation} \cite{losey2018review}. The choice of a communication channel is directly related to intent measurement and affects the robot's ability to understand human intent. Accordingly, a myriad of technologies has been used as interfaces for different applications of \textsc{phri}. In the context of rehabilitation robotics, electroencephalography (\textsc{eeg}) \cite{mattar2018biomimetic,sarac2013brain,mcmullen2014demonstration} and electromyography (\textsc{emg}) \cite{radmand2014characterization,au2008powered,song2008assistive,zhou2018multi} proved to be invaluable. Force myography (\textsc{fmg}) \cite{cho2016force,yap2016design,rasouli2016towards} is a relatively new technology that has been successfully used in rehabilitation. \textsc{emg} has also been successfully used by \cite{peternel2018robot} to realize a collaborative application to continuously monitor human fatigue. Force-Torque (\textsc{f/t}) sensors mounted on the robots are often the most relied technology in \textsc{phrc} scenarios for reactive robot control as they facilitate direct monitoring and regulation of the interaction wrenches\footnote{Wrench refers to forces and moments and is sometimes used in this thesis in place of forces and moments} between the human and the robot partners \cite{bussy2012human,bussy2012proactive,ikemoto2012physical,peternel2013learning,donner2016cooperative}. 

Vision-based techniques like human skeletal tracking \cite{reily2018skeleton}, human motion estimation \cite{kyrkjebo2018inertial}, and hand gesture recognition \cite{rautaray2015vision} are also used as interfaces. In general, the designer decides an interface, to communicate the human intent, depending on the application and often using a single interface mode is limiting. Hence, a combination of vision and haptic interfaces are used in literature to realize successful applications of human-robot collaboration \cite{agravante2014collaborative,de2007human}.

We believe there is an impending change in the paradigm of human perception and the future technologies for human-robot collaboration will leverage on getting as much holistic information as possible from human partners, especially for domains like collaborative manufacturing. A step in this direction is Digital Human Models (\textsc{dhm}) simulation tools that are embraced currently in manufacturing systems~\cite{endo2014dhaiba, demirel2007applications, fritzsche2010ergonomics, SHAHROKHI200955}. Ergonomics or human factors are a critical component in modern hybrid assembly lines that consists of a shared workspace between human and robot partners~\cite{galin2019review, Alexopoulos2013ErgoToolkitAE, rajesh2016review, irshad2019coupling}. Several investigations into ergonomic analysis using digital human modeling tools are available in the literature. However, these tools are often used as offline standalone tools during the preliminary design phases of a product assembly line and reiterating or redesigning is a resource exhaustive process. Furthermore, lack of holistic human perception poses a big challenge to quantitatively evaluate and validate the performance metrics of passive or active assistive devices, which encouraged researchers to find interesting, albeit complex applications for humanoid robots~\cite{ito2018evaluation, imamura2018evaluation, yoshida2017towards}. The development of low-cost easy-to-use tools for real-time monitoring of human motion and dynamics is still an open challenge~\cite{doi:10.1080/00207543.2019.1636321, moes2010digital}. Having both the kinematic quantities like joint positions, velocities and accelerations; and dynamic quantities like joint torques of the human will enable real-time monitoring to build robust controllers for successful task completion taking into account the physical interactions.
 
\subsection{Challenges in Robot Control}
\label{sec:challenges-in-robot-control}

Collisions pose a significant challenge to robots in dynamic environments of human habitats and the capacity to detect, isolate, identify and react is fundamental for their coexistence alongside humans \cite{haddadin2017robot}. Given the task of tracking a reference trajectory, impedance control \citep{hogan1984impedance,magrini2015control} is one of the most exploited approaches to achieve stable robot behavior while maintaining contacts. It facilitates a compliant and safe physical interaction between the agents in \textsc{hrc} scenarios. Any interaction wrench from the human is handled safely by changing the robot's actual trajectory i.e. the forces and moments are controlled by acting on position and orientation changes. Novel adaptive control schemes are successfully implemented expanding the applicability of impedance control \citep{gopinathan2017user,li2013impedance,gribovskaya2011motion}. The quality of interaction is further augmented through online adaptive admittance controls schemes which consider human intent inside the control loop \citep{ranatunga2016adaptive,lecours2012variable}. Adaptive control schemes endow robots with compliant characteristics that are safe for physically interacting with them. An obvious outcome of such compliance is the momentary deviation from the reference trajectory to accommodate external interactions but the original trajectory is restored when the interaction stops \citep{geravand2013human}.

Interaction between an external agent and a humanoid robot is particularly challenging because of the complexity of the robotic system \citep{goodrich2008human}. Unlike traditional industrial robots which are fixed base by design, humanoid robots are designed as floating base systems to facilitate anthropomorphic navigational capabilities. The aspect of balancing has received a lot of attention in the humanoid robotics community and several prior efforts  \cite{caux1998balance,hirai1998development,hyon2007full} went into building controllers that ensure stable robot behavior. More recently momentum-based control proved to be a robust approach and several successful applications have been realized \cite{stephens2010dynamic,herzog2014balancing,koolen2016design,hofmann2009exploiting} ensuring contact stability \cite{nori2015} with the environment by monitoring contact wrenches through quadratic programming \cite{ott2011posture,wensing2013generation,nava2016stability}. In general, these controllers are built to ensure robustness to any external perturbations and hence they are often blind to any helpful interaction an external agent is trying to have with the robot to help achieve its task.

Collaboration scenarios involve sustained intentional physical interactions, that are often helpful for the robot partner. So, instead of exhibiting simple compliant reactive behaviors, a robot can exhibit partner-aware behavior by exploiting the help from the human partner to perform the collaborative task efficiently. However, a clear definition of what constitutes a helpful interaction of a human partner is still an open question that needs to be methodically addressed.

\section{Motivation}
\label{sec:motivation}

The Horizon-$2020$ European project named An.Dy - Advancing
Anticipatory  Behaviors in Dyadic Human-Robot Collaboration
(H$2020$-ICT-$2016$-$2017$, No.$731540$) \citep{andy} aims at advancing the current state-of-the-art in human-robot collaboration scenarios that require intentional physical interactions between the human partner and the robot partner. Successful physical collaboration needs sustained dynamic wrench exchange between the interacting agents. Some example scenarios of physical human-robot collaboration are highlighted in Fig.~\ref{fig:andy-phri-scenarios} where an industrial manipulator helping an assembly line worker in transporting a heavy load (\ref{fig:andy-phri-scenario-1}); an actuated exoskeleton assisting overhead tasks (\ref{fig:andy-phri-scenario-2}); and a whole-body collaboration from a humanoid robot for assisted object transportation (\ref{fig:andy-phri-scenario-3}).

\begin{figure}[hbt!]
	\centering
	\begin{subfigure}{0.275\textwidth}
		\centering
		\includegraphics[width=\textwidth]{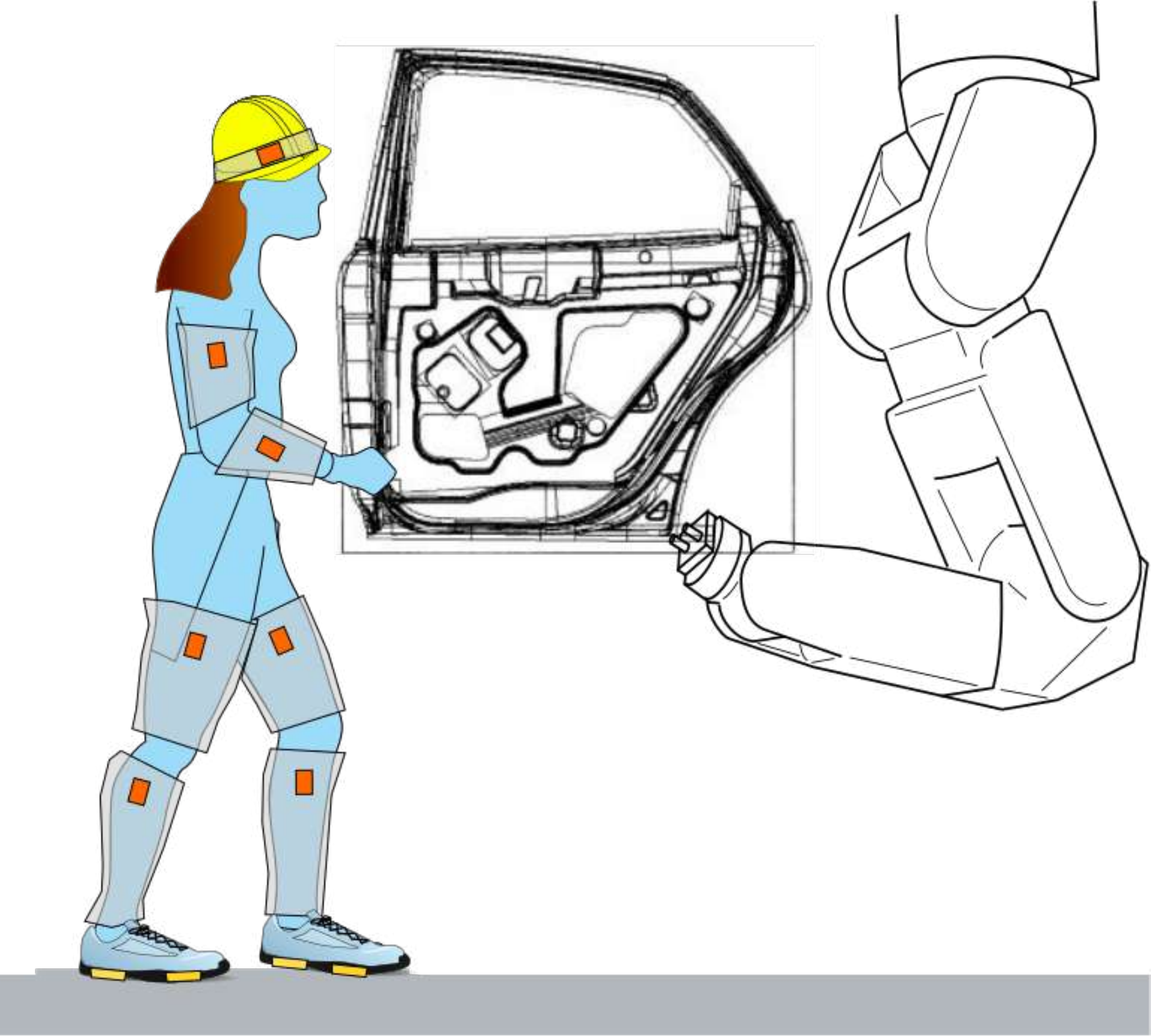}
		\caption{}
		\label{fig:andy-phri-scenario-1}
	\end{subfigure}%
	\begin{subfigure}{0.275\textwidth}
		\centering
		\includegraphics[width=\textwidth]{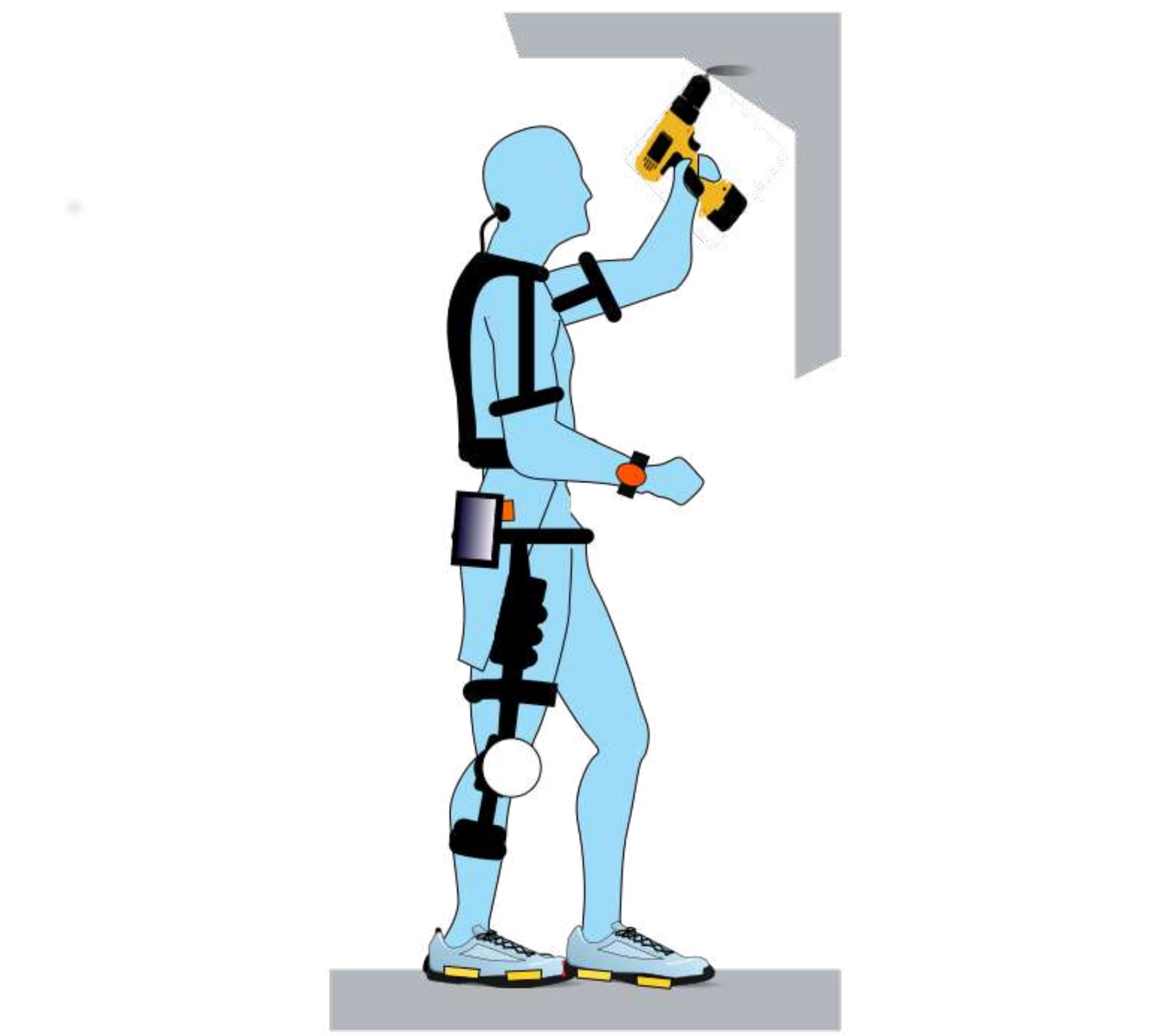}
		\caption{}
		\label{fig:andy-phri-scenario-2}
	\end{subfigure}%
	\begin{subfigure}{0.275\textwidth}
		\centering
		\includegraphics[width=\textwidth]{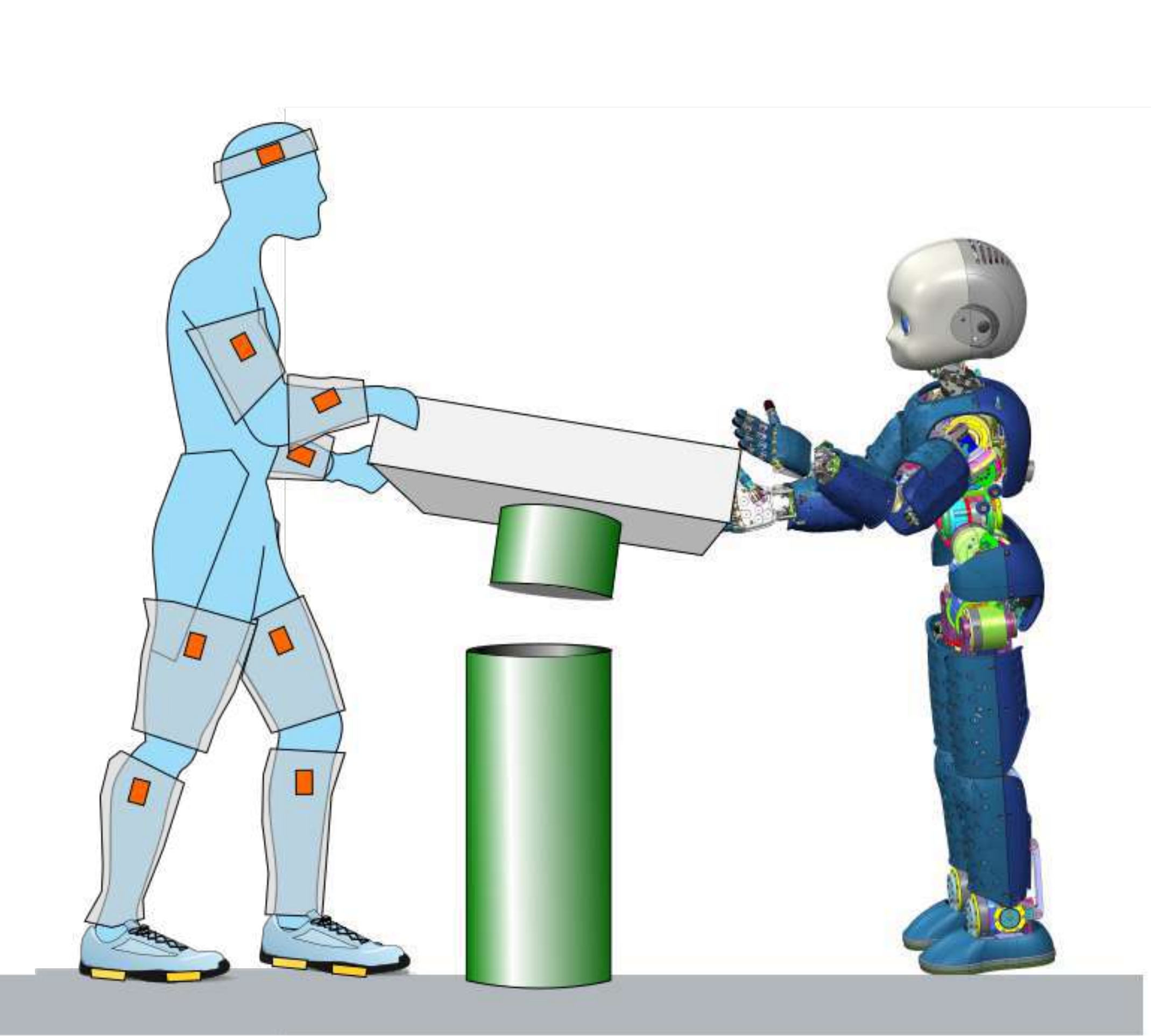}
		\caption{}
		\label{fig:andy-phri-scenario-3}
	\end{subfigure}
	\caption{Example scenarios of physical human-robot collaboration}
	\label{fig:andy-phri-scenarios}
\end{figure}

\noindent Two of the main objectives of An.Dy project are:

\begin{itemize}
	\item AnDy Suit: A novel wearable technology for real-time monitoring of human whole-body kinematics (position, velocity, acceleration), and dynamics (contact forces, joint torques, and muscle activations).
	\item  AnDy Control: Development of partner-aware reactive and predictive robot control strategies for efficient physical human-robot collaboration leveraging AnDy suit technology.
\end{itemize}

A growing technological trend referred to as \emph{Human 2.0} is about augmenting humans covering a wide range of cognitive and physical improvements. The two main categories of physical augmentation (source - \href{https://www.gigabitmagazine.com/ai/gartner-tech-trends-2020-augmented-humans}{Gartner tech trends 2020}) are: Sensory augmentation (hearing, vision, perception); Appendage and biological function augmentation (exoskeletons, prosthetics). We envision a future where technologies like AnDy suit become an integral part of human augmentation and the advanced control strategies enhance the physical interaction experience with collaborative robot partners. A potential outcome of this is a future blended workforce where the robots are seamlessly incorporated in the human umwelt.

Towards the goal of enabling human-robot collaboration via holistic human perception and partner aware control, we consider both the robot and the human are a system of rigid bodies. So, the next chapter presents the basic notation and representational details relevant to rigid body systems.

\chapter{Rigid Multi-Body System}
\label{cha:RigidMultiBodySystem}

\section{Notation}
\label{sec:background-notation}

The mathematical notation introduced in this section is followed throughout this thesis. Additional notation as required for a specific chapter will be introduced at the beginning of the chapter.

\begin{itemize}

    \item $\mathcal{I}$ denoted the inertial frame of reference, with $z$-axis pointing against gravity.
    
    \item The constant $g$ denotes the norm of the gravitational acceleration.
    
    \item  $\mathbb{R}$ denotes the set of real numbers and $\mathbb{N}$ denotes the set of natural numbers.
    
    \item $\comVar{p} \in \mathbb{R}^n$ denotes a $n$-dimensional column vector of real numbers and $p$ denotes a scalar quantity.
    
    \item Given two $n$-dimensional column vectors of real numbers, i.e. $\comVar{u},\comVar{v} \in \mathbb{R}^n$, $\comVar{u}^\top \comVar{v}$ denotes their inner product, with the transpose operator $\top$ .
    
    \item {Given two $n$-dimensional column vectors of real numbers, i.e. $\comVar{u},\comVar{v} \in \mathbb{R}^n$, $\comVar{u}{\times}~ \comVar{v}$ denotes their cross product, where}

\begin{equation}\label{crossProcuct_inR3}
    \comVar{u}{\times} :=
\begin{bmatrix}
   0    &  -u_z  &   u_y    \\
  u_z   &   0    &  -u_x     \\
 -u_y   &   u_x  &   0
\end{bmatrix} \in \mathbb{R}^{3\times 3}
\end{equation}

     \item The operator $\left\lVert . \right\rVert$ indicates the squared norm of a vector, such that given a vector $\comVar{u} \in \mathbb{R}^{3}$, it is defined as,
     
     \begin{equation}\label{squared_norm}
        \left\lVert \comVar{u} \right\rVert = \sqrt{\comVar{u}_1^2 + \comVar{u}_2^2 + \comVar{u}_3^2}
    \end{equation}

    \item $\comVar{I}_n \in \mathbb{R}^{n \times n}$ denotes an identity matrix of dimension~$n$; $\comVar{0}_n \in \mathbb{R}^n$ denotes the zero column vector of dimension~$n$; and  $\comVar{0}_{n \times m} \in \mathbb{R}^{n \times m}$ denotes the zero matrix of dimension~$n \times m$.
    
    \item Given a vector $\comVar{p}$ and a reference frame $\mathcal{A}$, the notation ${}^\mathcal{A}\comVar{p}$ denotes the vector $\comVar{p}$ expressed in $\mathcal{A}$.
    
    \item Let $SO(3)$ be the set of $\mathbb{R}^{3 \times 3}$ orthogonal matrices with determinant equal to one, such that
    
    \begin{equation}
        SO(3) := \{\comVar{R} \in \mathbb{R}^{3 \times 3} \ |~~ \comVar{R}^T \comVar{R} = \comVar{I}_3
        ~,~~|\comVar{R}| = 1 \}
    \end{equation}

    \item Let $so(3)$ be the set of the skew-symmetric matrices $\in \mathbb{R}^{3 \times 3}$, such that

    \begin{equation}
        so(3) :=  \{ \bm\skewOp \in \mathbb{R}^{3 \times 3}  \ |~~ {\bm \skewOp}^T = -\bm \skewOp \}
    \end{equation}

    \item Let the set $SE(3)$ be defined as

    \begin{equation}
        SE(3) :=  \Big\{
        \begin{bmatrix}
            \comVar{R}            & \comVar{p} \\
            \comVar{0}_{1\times3} & 1
        \end{bmatrix} \in \mathbb{R}^{4 \times 4} \ |~~
        \comVar{R} \in SO(3)~,~~\comVar{p} \in \mathbb{R}^3 \Big\}
    \end{equation}
    
    \item The operator $ \bm \skewOp(.) :\mathbb{R}^{3} \to so(3)$ denotes \textit{skew-symmetric} vector operation, such that given two vectors $\comVar{v}, \comVar{u} \in \mathbb{R}^{3}$, it is defined as $\comVar{v} \times \comVar{u} = S(\comVar{v}) \comVar{u}$.

    \item The \textit{vee} operator $.^{\vee} : so(3) \to \mathbb{R}^{3}$ denotes the inverse of the \textit{skew-symmetric} vector operator, such that given a matrix $\comVar{A} \in so(3)$ and a vector $\comVar{u} \in \mathbb{R}^{3}$, it is defined as $\comVar{A} \comVar{u} = \comVar{A}^{\vee} \times \comVar{u}$.
    
\end{itemize}

\section{Rigid Body Representation}

\begin{tcolorbox}[sharp corners, colback=white!30,
     colframe=white!20!black!30, 
     title=Definition]
\emph{A rigid body is defined as an object that is non-deformable under the application of external forces and moments. Considering any physical object to be a collection of point masses, the distance between any two point masses of a rigid body does not change when an external wrench is applied.}
\end{tcolorbox}

Although a perfectly non-deformable object seems unrealistic, for all practical purposes a rigid body is considered to have \emph{negligible} deformation.

The motion of a rigid body is well represented by associating a \emph{coordinate frame} that is attached to its body, typically with the origin of the frame at the Center of Mass (CoM) of the rigid body. This frame is usually referred to as the \emph{body frame} denoted by $\mathcal{B}$ \citep{handbook_robotics,Siciliano2009}. Consider an inertial frame of reference $\mathcal{I}$, with the origin at a 3D point $O_\mathcal{I}$. Let $O_\mathcal{B}$ be a 3D point that is the origin of the body frame $\mathcal{B}$. The motion of the rigid body (translation, rotation or a combination of the two) can be expressed with respect to (w.r.t) the inertial frame $\mathcal{I}$.

\subsection{Rigid Body Pose}

The coordinates of the origin $O_\mathcal{B}$ w.r.t. $\mathcal{I}$ are elements of the position vector,

\begin{equation}\label{coordinate_vector}
    {}^\mathcal{I}\comVar{o}_\mathcal{B} = {\begin{bmatrix}
         {}^\mathcal{I} {o_x}\\
         {}^\mathcal{I} {o_y}\\
         {}^\mathcal{I} {o_z}
        \end{bmatrix}}_\mathcal{B} \in \mathbb{R}^3
    \end{equation}

The orientation of $\mathcal{B}$ w.r.t. $\mathcal{I}$ is described by a rotation matrix $^{\mathcal{I}}\comVar{R}_{\mathcal{B}} \in SO(3)$, regardless of the positions of the origins $O_\mathcal{I}$ and $O_\mathcal{B}$.

\subsubsection{Homogeneous Transformation}

A compact representation of the pose combining both the position and the orientation of a rigid body is given by the homogeneous transformation matrix $^\mathcal{I}\comVar{H}_\mathcal{B} \in SE(3)$.

\begin{equation}
    \label{homogeneous_matrix}
    {}^\mathcal{I} \comVar{H}_\mathcal{B} =
    \begin{bmatrix}
        {}^\mathcal{I} \comVar{R}_\mathcal{B} & {}^\mathcal{I}\comVar{o}_\mathcal{B}\\
        \comVar{0}_{1\times3}                   & 1
    \end{bmatrix}
\end{equation}

Consider a 3D point $P$ present in a rigid body as shown in Figure~\ref{fig:Pose_representation}. The vector $^\mathcal{B}\comVar{p} \in \mathbb{R}^3$ denotes the point $P$ w.r.t frame $\mathcal{B}$. The vector $^\mathcal{I}\comVar{p} \in \mathbb{R}^3$ denotes the point $P$ w.r.t frame $\mathcal{I}$ can be computed using the homogeneous matrix ${}^\mathcal{I} \comVar{H}_\mathcal{B}$ as following,

\begin{equation}\label{homogenous_transformation_matrix_form}
    {}^\mathcal{I} \comVar{p} = 
    \begin{bmatrix}
        {}^\mathcal{I} \comVar{R}_\mathcal{B} & {}^\mathcal{I}\comVar{o}_\mathcal{B}\\
        \comVar{0}_{1\times3}                   & 1
    \end{bmatrix}
    \begin{bmatrix}
    {}^\mathcal{B} \comVar{p} \\
    1
    \end{bmatrix}
\end{equation}

\begin{equation}\label{homogenous_transformation}
    {}^\mathcal{I} \comVar{p} =  {}^\mathcal{I}\comVar{o}_\mathcal{B} + {}^\mathcal{I} \comVar{R}_\mathcal{B} \ {}^\mathcal{B} \comVar{p}
\end{equation}

If $O_\mathcal{B}
  \equiv O_\mathcal{I}$ i.e., null position vector
   ${}^\mathcal{I}\comVar{o}_\mathcal{B}$, Eq. \eqref{homogenous_transformation} falls into the
    a pure rotational case and the transformation becomes, 
    
\begin{equation}\label{pure_rotational_transformation}
    {}^\mathcal{I} \comVar{p} = {}^\mathcal{I} \comVar{R}_\mathcal{B} \ {}^\mathcal{B} \comVar{p}
\end{equation}
    
If the frame $\mathcal{B}$
     has the same orientation as that of the inertial frame of reference $\mathcal{I}$ i.e., the rotation matrix
      ${}^\mathcal{I} \comVar{R}_\mathcal{B} = \comVar{1}_3$, then
       \eqref{homogenous_transformation} falls into a pure translation case and the transformation becomes,

\begin{equation}\label{pure_translational_transformation}
    {}^\mathcal{I} {\comVar{p}} =
    {}^\mathcal{B} {\comVar{p}} + {}^\mathcal{I}\comVar{o}_\mathcal{B}
\end{equation}
       
\begin{figure}[H]
    \centering
    \includegraphics[width=.65\textwidth]{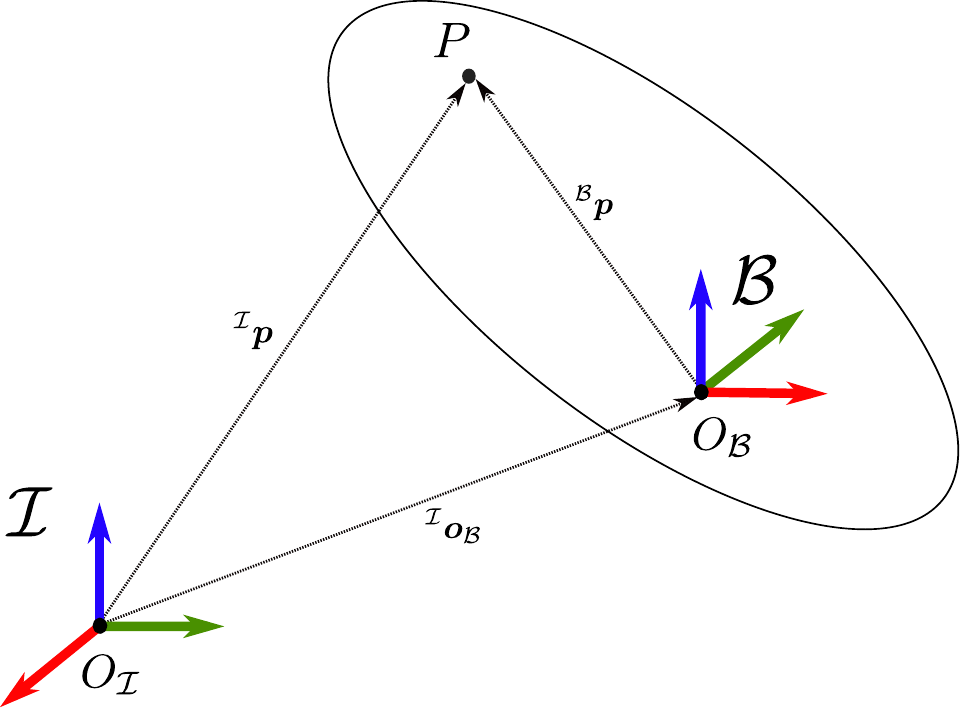}
    \caption{Representation of a rigid-body with body frame $\mathcal{B}$ and a 3D point $P$ in two different coordinate frames.  The figure introduces RGB (Red-Green-Blue) convention for ${x}$-${y}$-${z}$ axes.}
   \label{fig:Pose_representation}
\end{figure}

\begin{tcolorbox}[sharp corners, colback=white!30,
     colframe=white!20!black!30, 
     title=Frame Convention]
\emph{Throughout the thesis, the reference frames are represented using a RGB (Red-Green-Blue) convention for ${x}$-${y}$-${z}$ axes, respectively.}
\end{tcolorbox}
 
\subsection{The Derivative of a Rotation Matrix}

Given a rotational motion of a rigid body, the rotational matrix ${}^\mathcal{I} \comVar{R}_\mathcal{B}$ is a time-varying quantity. The angular velocity ${}^{\mathcal{I}}\comVar{\omega}_\mathcal{B}$ of the body w.r.t. the inertial frame of reference $\mathcal{I}$, the relation between the time derivative of the rotation matrix and the transformation of the rotation matrix can be established as following (refer to \citep{Siciliano2009}, Section 3.1.1, for a detailed description),

\begin{equation}\label{def_skewOp}
    \bm \skewOp({}^{\mathcal{I}}\comVar{\omega}_\mathcal{B}) = {}^\mathcal{I}\dot{\comVar{R}}_\mathcal{B} \  {}^\mathcal{I}\comVar{R}^T_\mathcal{B}
\end{equation}

where, 

\begin{equation}\label{skew_omega}
\bm \skewOp({}^{\mathcal{I}}\comVar{\omega}_\mathcal{B}) := {}^{\mathcal{I}}\comVar{\omega}_\mathcal{B}{\times}~ = 
\begin{bmatrix}
   0        &  -\omega_z  &   \omega_y    \\
\omega_z    &      0      &  -\omega_x    \\
-\omega_y   &   \omega_x  &      0 
\end{bmatrix}_\mathcal{B} \in so(3)
\end{equation}

\vspace{-0.25cm}

\subsection{Velocity}

\vspace{-0.15cm}

The velocity of the 3D point $P$ of a rigid body as shown in Figure~\ref{fig:Pose_representation} w.r.t. $\mathcal{I}$ can be obtained from the first-order time derivative of Eq. \eqref{homogenous_transformation}, such that

\begin{equation} \label{vel_pdot_step1}
    {}^\mathcal{I} \dot{\comVar{p}} = \frac{d}{dt} \Big({}^\mathcal{I} {\comVar{p}}\Big) = {}^\mathcal{I}\dot{\comVar{o}}_\mathcal{B} +
     {}^\mathcal{I}\dot{\comVar{R}}_\mathcal{B} \ {}^\mathcal{B}{\comVar{p}}
\end{equation}

Using the relation from Eq.~\eqref{skew_omega}, the above relation becomes,

\begin{equation} \label{vel_pdot_step2}
    {}^\mathcal{I} \dot{\comVar{p}} ={}^\mathcal{I}\dot{\comVar{o}}_\mathcal{B} +
      \bm \skewOp({}^{\mathcal{I}}\comVar{\omega}_{\mathcal{B}}) \ {}^\mathcal{I}{\comVar{R}}_\mathcal{B} \ {}^\mathcal{B}{\comVar{p}}
\end{equation}

\begin{equation} \label{vel_pdot_step3}
    {}^\mathcal{I} \dot{\comVar{p}} ={}^\mathcal{I}\dot{\comVar{o}}_\mathcal{B} + {}^{\mathcal{I}}\comVar{\omega}_\mathcal{B}{\times} \ {}^\mathcal{I}{\comVar{R}}_\mathcal{B} \ {}^\mathcal{B}{\comVar{p}}
\end{equation}

\vspace{-0.25cm}

\subsection{Acceleration}

\vspace{-0.15cm}

The velocity of the 3D point $P$ of a rigid body as shown in Figure~\ref{fig:Pose_representation} w.r.t. $\mathcal{I}$ can be obtained from the second-order time derivative of Eq. \eqref{homogenous_transformation}, such that

\begin{eqnarray} \label{acc_ddp} \notag
    {}^\mathcal{I} \ddot{\comVar{p}} &=&
    \frac{d^{2}}{d^{2}t} \Big({}^\mathcal{I} {{\comVar{p}}}\Big) 
     = {}^\mathcal{I}\ddot{\comVar{o}}_\mathcal{B} +
      {}^\mathcal{I}\dot{\comVar{\omega}}_\mathcal{B}{\times}~{}^\mathcal{I}{\comVar{R}}_\mathcal{B} \ 
       {}^\mathcal{B} {\comVar{p}} + {}^\mathcal{I}\comVar{\omega}_\mathcal{B}{\times}~{}^\mathcal{I}{\dot {\comVar{R}}}_\mathcal{B} \ 
        {}^\mathcal{B} {\comVar{p}} \\
      &=& {}^\mathcal{I}\ddot{\comVar{o}}_\mathcal{B} +
      {}^\mathcal{I}\dot{\comVar{\omega}}_\mathcal{B}{\times}~{}^\mathcal{I}{\comVar{R}}_\mathcal{B} \
       {}^\mathcal{B} {\comVar{p}} + {}^\mathcal{I}\comVar{\omega}_\mathcal{B}{\times}~\Big({}^\mathcal{I}\comVar{\omega}_\mathcal{B}{\times}~{}^\mathcal{I}{{\comVar{R}}}_\mathcal{B} \
         {}^\mathcal{B} {\comVar{p}} \Big)
\end{eqnarray}

where, $\dot {\comVar{\omega}}_\mathcal{B}$ is the angular acceleration of the rigid body.

\section{6D Vectors}

A proper mathematical representation of the rigid body kinematics and dynamics becomes essential when dealing with a multi-body system to express it succinctly. 6D vectors are a handy tool for expressing rigid body dynamics in a compact form. A thorough material on the most commonly used 6D vector representations for dealing with rigid body dynamics is presented in~\cite{traversaro2017thesis}. 

\subsection{6D Motion Vectors}

The 6D velocity vector of a rigid body expressed w.r.t. the inertial frame of reference $\mathcal{I}$ is written as,

\begin{equation} \label{spatial_vel}
    {}^\mathcal{I} {\comVar{v}}_\mathcal{B} = 
    \begin{bmatrix}
    {}^\mathcal{I} \dot{\comVar{p}} \\
    {}^\mathcal{I}\comVar{\omega}_\mathcal{B}
    \end{bmatrix} \in \mathbb{R}^6
\end{equation}

The 6D acceleration vector of a rigid body expressed w.r.t. the inertial frame of reference $\mathcal{I}$ is written as,

\begin{equation} \label{spatial_acc}
    {}^\mathcal{I} {\comVar{a}}_\mathcal{B} = 
    \begin{bmatrix}
    {}^\mathcal{I} \ddot{\comVar{p}} \\
    {}^\mathcal{I}\dot{\comVar{\omega}}_\mathcal{B}
    \end{bmatrix} \in \mathbb{R}^6
\end{equation}

\subsubsection{Adjoint Transformation for Motion Vectors}

Homogeneous transformation introduced in Eq.~\eqref{homogeneous_matrix} cannot be used directly with 6D motion vectors. The change of frame of reference for 6D motion vectors is achieved through a new transformation matrix called \emph{adjoint matrix} denoted by $\comVar{X} \in \mathbb{R}^{6 \times 6}$. Given two generic frames of reference $\mathcal{A}$ and $\mathcal{B}$, and ${}^\mathcal{B}{\comVar{o}}_\mathcal{A}$ the position vector of the origin of $\mathcal{A}$ w.r.t. $\mathcal{B}$, the adjoint transformation matrix is defined as,

\begin{equation} \label{adjoint_motion_matrix}
{}^\mathcal{B} \comVar{X}_\mathcal{A} =
\begin{bmatrix}
    {}^\mathcal{B} \comVar{R}_\mathcal{A}   & \comVar{0}_{3}\\
     - {}^\mathcal{B} \comVar{R}_\mathcal{A}
          \bm \skewOp\big({}^\mathcal{B}{\comVar{o}}_\mathcal{A}\big) &
          {}^\mathcal{B} \comVar{R}_\mathcal{A} 
    \end{bmatrix}
\end{equation}

and the transformation of 6D motion vectors is denoted as,

\begin{equation} \label{adjoint_motion_changeOfFrame}
{}^\mathcal{B} {\comVar{v}_\mathcal{A}} = {}^\mathcal{B} \comVar{X}_\mathcal{A}
\ {}^\mathcal{A}{\comVar{v}_\mathcal{A}}
\end{equation}

\subsubsection{Cross Product for Motion Vectors}

The cross product operation as defined in Eq.~\eqref{crossProcuct_inR3} is not applicable to 6D motion vectors. Consider a rigid body with a body frame $\mathcal{B}$ that is moving with a velocity ${^{\mathcal{I}}\comVar{v}}_\mathcal{B}$, as defined in Eq.~\eqref{spatial_vel}, the cross product operator is defined as,

\begin{equation}\label{crossOperator_inR6}
    {{}^\mathcal{I} \comVar{v}}_\mathcal{B}{\times}~ :=
    \begin{bmatrix}
        {}^\mathcal{I}\comVar{\omega}_\mathcal{B}{\times}~  &   {}^\mathcal{I}\dot{\comVar{p}}  \\
        \comVar{0}_{3}                        &  {}^\mathcal{I}\comVar{\omega}_\mathcal{B}{\times}~
        \end{bmatrix} \in \mathbb{R}^{6\times 6}
\end{equation}

\subsection{6D Force Vectors}

6D force vectors represent the forces and moments, with forces being the first three elements followed by the moments as the next three elements. A 6D force vector with respect to the inertial frame $\mathcal{I}$ is denoted as,

\begin{equation} \label{spatial_force}
    {}^\mathcal{I} {\comVar{f}} = 
    \begin{bmatrix}
    {}^\mathcal{I}{\comVar{\textbf{f}}} \\
    {}^\mathcal{I}{\comVar{m}}
    \end{bmatrix} \in \mathbb{R}^6
\end{equation}

Note that the 6D force vector denoted by ${}^\mathcal{I} {\comVar{f}}$ only expresses the forces and moments coordinates with respect to the inertial frame of reference $\mathcal{I}$ but they do not specify the frame (associated to any rigid body) on which they are acting up. This distinction is important to observe because the 6D motion vectors represent the motion associated with a rigid body and are associated with frame that is attached to the body. While in the case of 6D force vectors, the forces and moments are quantities that are external to the rigid body.

\subsubsection{Adjoint Transformation Dual for Force Vectors}

Given two generic frames of reference $\mathcal{A}$ and $\mathcal{B}$, and ${}^\mathcal{B}{\comVar{o}}_\mathcal{A}$ the position vector of the origin of $\mathcal{A}$ w.r.t. $\mathcal{B}$, the adjoint transformation matrix for force vectors is defined as,

\begin{equation} \label{adjoint_force_matrix}
{}^\mathcal{B} \comVar{X}_\mathcal{A}^* =
\begin{bmatrix}
    {}^\mathcal{B} \comVar{R}_\mathcal{A}   &  - {}^\mathcal{B} \comVar{R}_\mathcal{A}
          \bm \skewOp\big({}^\mathcal{B}{\comVar{o}}_\mathcal{A}\big)\\
     \comVar{0}_{3} &  {}^\mathcal{B} \comVar{R}_\mathcal{A} 
    \end{bmatrix}
\end{equation}

and the transformation of 6D force vectors is denoted as,

\begin{equation} \label{adjoint_force_changeOfFrame}
{}^\mathcal{B} {\comVar{f}} = {}^\mathcal{B} \comVar{X}_\mathcal{A}^* \
 {{}^\mathcal{A}\comVar{f}}
\end{equation}

\subsubsection{Cross Product Dual for Force Vectors}

Similar to the cross product for the 6D motion vectors, we have a cross product for the 6D force vectors that is the dual version of Eq.~\eqref{crossOperator_inR6} that is defined as,

\begin{equation}\label{dualCrossOperator_inR6}
    {}^\mathcal{I}{\comVar{v}}_\mathcal{B}{\times}^*~ :=
    \begin{bmatrix}
        {}^\mathcal{I}\comVar{\omega}_\mathcal{B}{\times}~  &   \comVar{0}_{3}            \\
        {}^\mathcal{I}\dot{\comVar{p}}        &  {}^\mathcal{I}\comVar{\omega}_\mathcal{B}{\times}~
        \end{bmatrix} \in \mathbb{R}^{6\times 6}
\end{equation}

and the cross product dual operator to be used with the 6D motion vectors and 6D force vectors is denoted as,

\begin{equation}\label{dualCrossProduct_inR6}
    {{}^\mathcal{I}\comVar{v}}_\mathcal{B}{\times}^*~ {}^{\mathcal{B}}{\comVar{f}} \in \mathbb{R}^6
\end{equation}

\section{Rigid Body Dynamics}
\label{sec:rigid-body-dynamics}

The dynamics of a rigid body deals with understanding the motion induced by the application of an external force or torque on the rigid body. The equations of motion describe the mathematical model for studying the dynamics of a rigid body. Newton-Euler representation of equations of motion expressed in a compact form using 6D vectors is denoted as,

\begin{equation} \label{equation_of_motion_1body}
    {}^{\mathcal{I}}{\comVar{f}}^{\mathcal{B}} = \frac{d}{dt}{\left({\comVar{I}} \  {}^{\mathcal{I}}{\comVar{v}}_{\mathcal{B}}\right)} = {\comVar{I}} \  {}^{\mathcal{I}}{\comVar{a}}_{\mathcal{B}} + {}^{\mathcal{I}}{\comVar
      {v}}_{\mathcal{B}}{\times^*}~ {\comVar{I}} \ {}^{\mathcal{I}}{\comVar{v}}_{\mathcal{B}}
\end{equation}

where, 
\begin{itemize}
    \item{${}^{\mathcal{I}}{\comVar{f}}^{\mathcal{B}} \in \mathbb{R}^6$ in the net 6D force i.e., sum of all the forces and moments acting on the rigid body expressed in the inertial frame of reference}
    \item{${\comVar{I}}$ is the inertia matrix $\in \mathbb{R}^{6 \times 6}$, such that}
    \begin{equation}\label{spatial_inertia_tensor}
        {\comVar{I}} =
        \begin{bmatrix} {\comVar{I}}_c + \textrm{m}~\comVar{c}{\times}~\comVar{c}^\top &
        \textrm{m}~\comVar{c}{\times}~\\ \textrm{m}~\comVar{c}{\times}^\top~& \textrm{m}~\comVar{1}_{3}
    \end{bmatrix}
\end{equation}

where $\textrm{m}$ is the body mass, ${\comVar{I}}_c$ is the rotational inertia w.r.t. the body center of mass (CoM), and $\comVar{c}$ is the position vector from the CoM of the body to the origin of the body frame $\mathcal{B}$

\item{The term ${}^{\mathcal{I}}{\comVar
		{v}}_{\mathcal{B}}{\times^*}$ is the operator that maps ${\comVar{I}}$ to its derivative $\dot{{\comVar{  I}}}$}
  
Eq.~\ref{equation_of_motion_1body} describes the relation between the net forces and moments acting on a rigid body and the rate of change of its momentum  ${\comVar{ I}} \ {}^{\mathcal{I}}{\comVar
	{v}}_{\mathcal{B}}$.

\end{itemize}

\section{Rigid Multi-Body System}
\label{sec:background-rigid-multi-body-system}

A rigid multi-body system is composed of two or more interconnected rigid bodies. The connection between the rigid bodies is made through \emph{joints}. The relative motion between two interconnected rigid bodies is determined by the motion subspace of the joint present between them. So, a combination of rigid bodies through joints compose an articulated multi-body system. The resulting motion of the entire articulated rigid multi-body system can be determined by accounting the elementary motions of each of the rigid bodies present, taking into account the constraints imposed by various joints. 

\subsection{Modeling}
\label{sec:rigid-multi-body-system-modeling}

A robotic system is often considered as an articulated rigid multi-body system that can be represented as a kinematic tree \citep{Featherstone2007,Siciliano2009}. According to the technical jargon in robotics community, a rigid body of a robotic system is often referred to as a \emph{link}. The links and the joints of a robotic system forms the edges and nodes of the kinematic tree topological representation of an articulated rigid multi-body system as shown in Figure.~\ref{figs:kinematicsTree}.

 \begin{figure}[hbt!]
     \centering
         \includegraphics[width=0.9\textwidth]{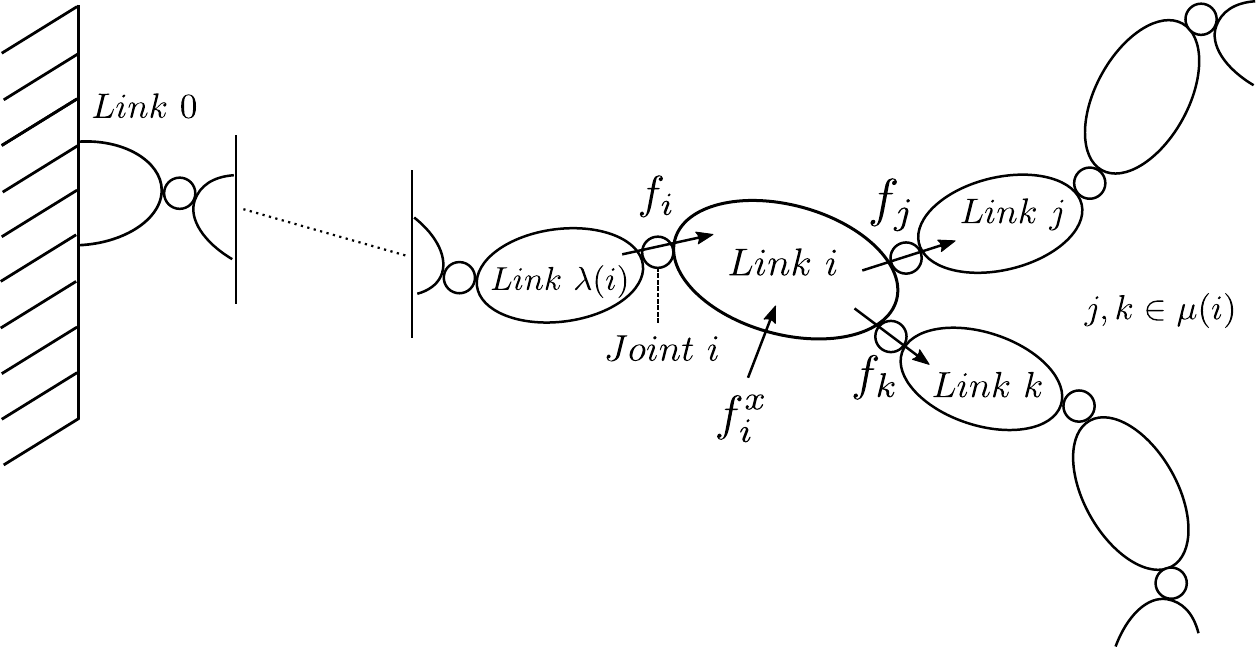}
         \caption{ Kinematic tree representation of an articulated rigid multi-body system.}
     \label{figs:kinematicsTree}
 \end{figure}

Consider a robotic system that consists of $N_B \in \mathbb{N}$ $(\ge 2)$ links. One of the links is consider to be a \emph{base link} that is typically associated with the topological numbering of $0$. Let $i$ be the index of a generic link in the tree such that $1<i<N_B$. The node numbers in the kinematic tree are always selected in a topological order, such that the $i$-th link has a higher number than its unique parent link $\lambda(i)$ and a smaller number than all the link in the set of its children $\mu(i)$.

The $i$-th link and its parent $\lambda(i)$ are coupled through joint $i$ \citep{Denavit1955}. The degrees of freedom (DoF) of the $i$-th joint is denoted as $n_i \in \mathbb{N}$ $(0<n_i<6)$ and its motion subspace is denoted as ${\bar {\comVar{S}}}_i \in \mathbb R^{6 \times n_i}$. The total number of degrees of freedom present in the articulated systems is denoted by $n$ = $n_1$ + $...$ + $n_{N_B}$.

The articulation of a robotic system is achieved through the control (actuation) of the various joints present in the system. Depending on the nature of the joint, actuation is achieved by applying either a torque (for rotary motion) or a force (for translatory motion). The configuration of the $i$-th joint is typically denoted by $\comVar{s}_i \in \mathbb{R}^{n_i}$ where $n_i$ is the number of DoF of the $i$-th joint. The actuation of the $i$-th joint is typically denoted by ${\tau}_i \in \mathbb{R}^{n_i}$. The variable $\comVar{q}_i$ is referred to as the \emph{generalized coordinate} of the mechanical system and the topology of the entire system is represented by a  vector $\comVar{s}$ as,

\begin{equation}\label{topology_s}
    \comVar{s} = \begin{bmatrix}
        \comVar{s}_1^\top && \comVar{s}_2^\top, && ..... && \comVar{s}_n^{\top}
    \end{bmatrix}^\top \in \mathbb{R}^n
\end{equation}

Traditional robotic manipulators as shown in Figure.~\ref{fig:fixed-base-system} are considered to be \emph{fixed base systems} \citep{Siciliano2009}. It means that the base link of the robot is rigidly fixed to the ground and it does not move with respect to the inertial frame of reference. So, it is a general practice to consider the coordinate frame attached to the base link as the inertial frame of reference. On the contrary, robotic systems like \emph{Humanoids} are designed on the principles of anthropomorphism to facilitate human like navigational capabilities. On such robotic systems, none of the links are assumed to have an \emph{a-priori} pose with respect to the inertial frame of reference including the base link as shown in Figure.~\ref{fig:floating-base-system}. Such systems are referred to as \emph{floating base systems} or \emph{free-floating systems} \citep{Siciliano2009} and the coordinate frame attached to the base link is referred to as \emph{floating base frame} denoted by $\mathcal{F}$. So, to refer to a floating base robotic systems, the pose and velocity of the base link are also an integral part at the modeling level and are considered as generalized coordinates. The mechanical systems considered in this thesis are assumed to be floating base systems.   

\begin{figure}[hbt!]
    \centering
    \begin{subfigure}{.5\textwidth}
         \centering
         \includegraphics[width=0.65\textwidth]{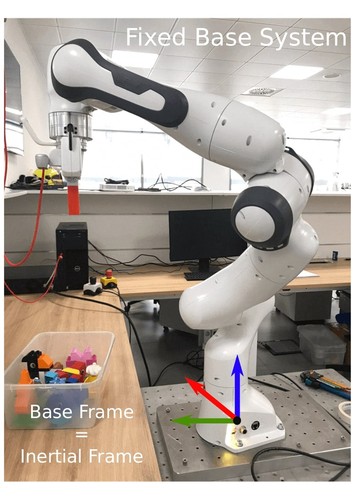}
         \caption{Fixed Base System}
         \label{fig:fixed-base-system}
    \end{subfigure}%
    \begin{subfigure}{.5\textwidth}
        \centering
        \includegraphics[width=0.65\textwidth]{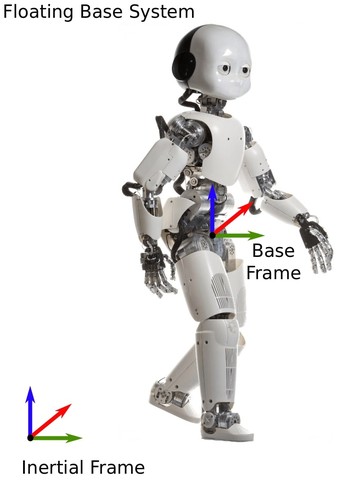}
        \caption{Floating Base System}
        \label{fig:floating-base-system}
    \end{subfigure}
    \caption{Examples of fixed base and floating base mechanical system}
    \label{fig:fixed-floating}
\end{figure}

\subsubsection{System Configuration}
\label{sec:system-configuration}

The configuration space of a \textit{free-floating} mechanical system is characterized by the \textit{floating base frame} $\mathcal{F}$ and the \textit{joint positions}. It is defined as a set of elements with $6$ dimensions representing the \textit{floating base} pose and the total number of joints $n$. Hence, it lies on the Lie group $\mathbb{Q} = \mathbb{R}^3 \times SO(3) \times \mathbb{R}^n$. 
 
An element in the configuration space is denoted by $\comVar{q} = (\comVar{q}_{\mathcal{B}},\comVar{s}) \in \mathbb{Q}$, which consists of the pose of the floating base frame $\comVar{q}_{\mathcal{B}} = (^{\mathcal{I}}{\comVar{p}}_{\mathcal{F}}, ^{\mathcal{I}}{\comVar{R}}_{\mathcal{F}}) \in \mathbb{R}^3 \times SO(3)$ where  $^{\mathcal{I}}{\comVar{p}}_{\mathcal{F}} \in \mathbb{R}^3$ denotes the position of the floating base frame $\mathcal{F}$ with respect to the inertial frame $\mathcal{I}$; $^{\mathcal{I}}{\comVar{R}}_{\mathcal{F}} \in SO(3)$ denotes the rotation matrix representing the orientation of the floating base frame $\mathcal{F}$ with respect to the inertial frame $\mathcal{I}$; and the joint positions vector $\comVar{s} \in \mathbb{R}^n$ captures the topology of the mechanical system.

\subsubsection{System Velocity}
\label{sec:system-velocity}

The system velocity is characterized by the \emph{linear and angular velocity} of the floating base frame along with the \textit{joint velocities}. The configuration velocity space lies on the Lie group $\mathbb{V} = \mathbb{R}^3 \times \mathbb{R}^3 \times \mathbb{R}^n$ and an element $\nu \in \mathbb{V}$ is defined as $\nu = (^{\mathcal{I}}{}{\comVar{v}}_{\mathcal{F}}, \dot{\comVar{s}})$ where ${}^{\mathcal{I}}{\comVar{v}}_{\mathcal{F}}=(^{\mathcal{I}}{\dot{\comVar{p}}}_{\mathcal{F}}, {}^{\mathcal{I}}{\comVar{\omega}}_{\mathcal{F}}) \in \mathbb{R}^6$ denotes the linear and angular velocity of the floating base frame $ \mathcal{F}$ expressed with respect to the inertial frame $\mathcal{I}$, and $\dot{\comVar{s}} \in \mathbb{R}^n$ denotes the joint velocities.

\subsection{Equations of Motion}

A multi-body mechanical system with links connected through joints is a dynamical system that evolves in time through articulation. Joint actuation is one of the most relevant and prominent ways to produce a dynamic motion in a mechanical system. The dynamics of a mechanical system is typically described using a set of nonlinear, second-order, ordinary differential equations. The \emph{state} of the system is composed of system configuration $\comVar{q}$ and the system velocity $\nu$ as described in section~\ref{sec:rigid-multi-body-system-modeling}. Two prominent representations of the dynamics of a rigid multi-body system are 1) Newton-Euler representation and 2) Euler-Poincar\'{e} Representation.
  
\subsubsection{Newton-Euler Representation}
\label{sec:newton-euler-dynamics-representation}

Similar to the application of Newton-Euler representation for the equations of motion for a rigid body explained in section~\ref{sec:rigid-body-dynamics}, we can express the dynamics of a rigid multi-body system based on the balance of forces acting on each of the rigid bodies present in the system. This approach is referred in robotics literature as the Recursive Newton-Euler Algorithm (RNEA) (refer Chapter 3 of \cite{Featherstone2007}). Under the Newton-Euler representation, all the quantities, except the net external forces, are expressed with respect to the body frame. The external forces are expressed with respect to the inertial frame of reference $\mathcal{I}$. 
 
Considering the modeling of a rigid multi body system presented in section~\ref{sec:background-rigid-multi-body-system}, the velocity of the $i$-th link and the velocity of the $i$-th joint are defined recursively as,
 
\begin{eqnarray}
 \label{eq:vJi}
 \comVar{v}_{Ji} &=& \comVar{\bar S}_i \dot {\comVar{q}_i} \\
 \label{eq:vi}
 \comVar{v}_i &=& \prescript{i}{} {\comVar{X}_{\lambda(i)}} \comVar{{
 		v}}_{\lambda(i)} + \comVar{ v}_{Ji}
\end{eqnarray}
 
The acceleration of the $i$-th link is defined as,
 
\begin{eqnarray}
 \label{eq:ai}
 \comVar{ a}_i &=& \prescript{i}{}{\comVar{X}_{\lambda(i)}} \comVar{ a}_{\lambda(i)} + \comVar{\bar S}_i \ddot { \comVar{q}}_i + \comVar{ v}_i{\times}~ \comVar{ v}_{Ji} 
\end{eqnarray}
 
 Equations \eqref{eq:vJi}, \eqref{eq:vi} and \eqref{eq:ai} are propagated
 throughout the kinematic tree with the initial boundary
 conditions $\bm { v}_0 =\bm 0$ and $\bm { a}_0 = -\bm
 { g}$, which corresponds to the gravitational
 acceleration vector expressed in the body frame $0$, such that
 $\bm { g} = 
 \begin{bmatrix}
 0 & 0 & -9.81 & 0 & 0 & 0
 \end{bmatrix}^\top$.
 
 The net 6D force $\comVar{ f}^{\mathcal{B}}_i$ acting on the $i$-th link is related to the rate of change of momentum as described in Eq.~\eqref{equation_of_motion_1body}
 
 \begin{eqnarray}
 \label{eq:fBi}
 \comVar{f}^{\mathcal{B}}_i & =&  {\comVar{I}}_i \comVar{ a}_i + \comVar{ v}_i{\times^*}~ {\comVar{ I}}_i \comVar{ v}_i
 \end{eqnarray}
 
 The internal forces and moments transmitted through the $i$-th joint are denoted by the 6D force vector $\comVar{f}_{Ji}$ and all the external forces and moments acting on the $i$-th link are denoted by the 6D force vector $ \comVar{f}_i^x$. Now, the balance of forces is give by the following relation,
 
 \begin{eqnarray}
 \label{eq:fi}
 \comVar{f}_{Ji} & =&  \comVar{f}^{\mathcal{B}}_i - \prescript{i}{} {\comVar{X}_{0}^*} \ \comVar{f}_i^x + \sum_{j \in \mu(i)} \prescript{i}{}{\comVar{X}_{j}^*} \ \comVar{f}_{Jj}
 \end{eqnarray}
 
 On computing the forces and moments across each joint, the joint torques are computed through the joint motion subspace using the following relation,
 
 \begin{eqnarray}
 \label{eq:taui}
 \comVar{\tau}_i & =&  \comVar{\bar S}^T_i \ \comVar{f}_{Ji}
 \end{eqnarray}

\subsubsection{Euler-Poincar\'{e} Representation}
\label{sec:lagrandian-dynamics-represenatation}

Another most commonly used approach to derive the dynamic equations of motion of a mechanical systems is  \emph{Lagrangian Analysis}, that relies on the energy properties of the mechanical system under consideration. The detailed derivation is available in several robotics text books like \citep{Siciliano2009}, Chapter 7 or \citep{khalil2004modeling}, Chapter 9. Most of the textbooks present the Euler-Lagrangian equations of motion for a fixed base mechanical systems. However, in this thesis, the mechanical systems considered are assumed to be a floating base systems and hence the equations of motions for such systems are slightly different than a fixed base system. Specifically, Euler-Poincar\'{e} equations~\cite{Marsden2010} describe the equations of motion for a floating base system. The reader is advised to refer to Chapter 3 of \cite{traversaro2017thesis} for a detailed derivation of the equations of motion for a floating base system.

The equations of motion of a \textit{floating base} mechanical system are described by,

\begin{equation}
\comVar{M}(\comVar{q}) \dot{\robnu} + \comVar{C}(\comVar{q},{\robnu}) {\robnu} + \comVar{G}(\comVar{q}) = \comVar{B} {\robtau} + \sum_{i = 1}^{n_c} \comVar{J}_{c_i}^\top \ {\comVar{f}}_i
\label{eq:equations-of-motion}
\end{equation}

where, $\comVar{M} \in \mathbb{R}^{ (n+6) \times (n+6)}$ is the mass matrix, $\comVar{C} \in \mathbb{R}^{ (n+6) \times (n+6) }$ is the Coriolis matrix, $\comVar{G} \in \mathbb{R}^{n+6}$ is the gravity term, $\comVar{B} = (\comVar{0}_{n \times 6},\comVar{I}_n)^T$ is a selector matrix, ${\robtau}  \in \mathbb{R}^{n}$ is a vector representing the joint torques, ${\comVar{f}}_i \in \mathbb{R}^{6n_c}$ represents the 6D forces acting on $i$-th contact link expressed with respect to the body frame, and $\comVar{J}_{c_i} \in \mathbb{R}^{ (n+6) \times 6n_c}$ is the $i$-th contact jacobian. 

\chapter{Recall on Human Modeling}
\label{cha:human-modeling}

\chapreface{ Digital Human Models (\textsc{dhm}) are becoming increasingly popular as enabling technologies in several domains. This chapter presents the significance of a faithfully articulated model for a human partner across different domains and lays the considerations we made in our human modeling aimed towards the goal of human-robot collaboration. At the modeling level, we consider the human partner to be a mechanical system of rigid bodies. More specifically, we consider the human to be a floating base mechanical systems. }

\section{Digital Human Modeling}

A faithfully articulated full body human model is very useful across various domains~\cite{alami2006safe, xiang2010predictive, GUO2020106544, duffy2016handbook}. Human bodies are anatomically very complex that consists of bones, joints, muscles, tendons and skin. Capturing all of the details for digital representation is an enormous endeavor. On one hand, graphics community uses \emph{virtual humans} to do character animation for entertainment or interactive virtual reality applications. On the other hand,  biomechanical community uses human models to understanding the biomechanics of complex human movement. Furthermore, there is a growing interest in the human-robot interaction community towards using digital human model abstractions~\cite{alami2013human, Saikia2017}.

An established standard for virtual humans in graphics industry is called \emph{H-Anim} (Humanoid Animation) \cite{HAnim} which is a hierarchical representation of the human skeletal structure. A simplest skeletal structure is composed of joints and the bones. Naturally, all the joints are considered to be rotational type and the change in their configuration modifies the pose of various bones in the skeletal structure. So, the joints are the basis for the articulation of a skeleton. \emph{Levels of Articulation} (\textsc{loa}) refers to the number of joints considered for the skeleton. Figure.~\ref{fig:H-Anim-LOA1} highlights the joints that are considered for Level of Articulation 1 (\textsc{loa1}) in H-Anim standard.

\begin{figure}[!hbt]
    \centering
    \includegraphics[scale=0.6]{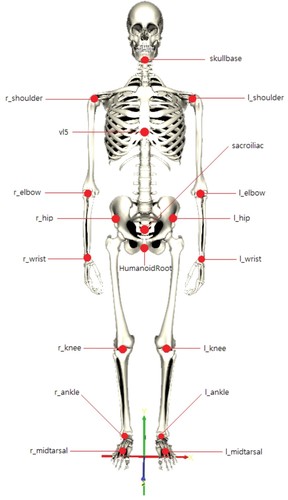}
    \caption{Joints considered for Level of Articulation 1 (\textsc{loa1}) in Humanoid Animation (H-Anim) standard \cite{HAnim}. Skeletal model in anatomically neutral N-pose.}
    \label{fig:H-Anim-LOA1}
\end{figure}

Coming to the field of biomechanics, the human model is more holistic with the inclusion of muscles and tendons. The articulation of the skeleton is achieved through the muscle activation rather than using the joints directly. Motions capture systems are an invaluable asset that captures motion data that can be used along with a complex human model for detailed simulation and analysis of anatomically accurate models.

Most commercial motion capture systems often define their models that facilitate motion data in a clinically meaningful manner without the full complexity of the human model. The models they employ are similar to the graphics community as a way to provide a real-time visualization of the human motion projected to a virtual avatar. However, they often adhere to International Society of Biomechanics (\textsc{isb}) standards for their modeling. One such motion capturing system we consider as reference for our human model is Xsens MVN motion capture system \citep{Roetenberg2009}. The kinematic model consists of a total of 23 body segments whose names are as indicated in Figure.~\ref{fig:xsens-mvn-model-names} and the frames associated with each segment are as shown in Figure.~\ref{fig:xsens-mvn-model-frames}.
  
\begin{figure}[hbt!]
    \centering
    \begin{subfigure}{.35\textwidth}
         \centering
         \includegraphics[scale=0.425]{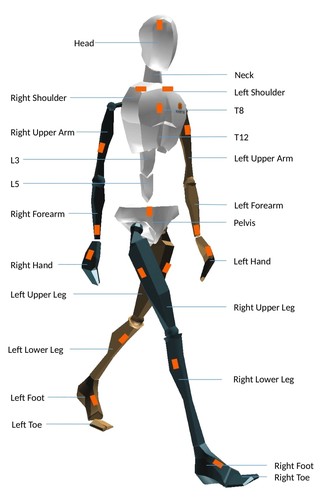}
         \caption{Segment names}
         \label{fig:xsens-mvn-model-names}
    \end{subfigure}%
    \begin{subfigure}{.65\textwidth}
        \centering
        \includegraphics[scale=0.525]{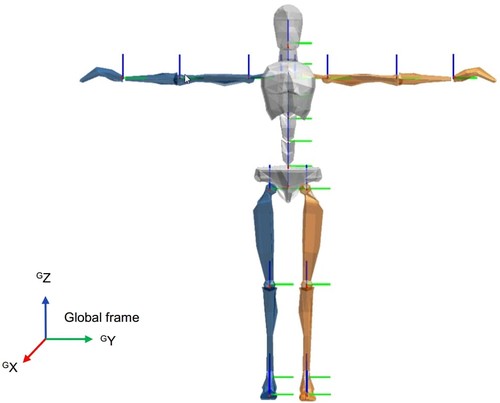}
        \caption{Segment frame definitions when the model is in T-pose}
        \label{fig:xsens-mvn-model-frames}
    \end{subfigure}
    \caption{The definition of the 23 segments in the kinematic model of Xsens MVN motion capturing system \cite{karatsidis2017estimation}}
    \label{fig:xsens-model}
\end{figure}
  
In this thesis, the human body is assumed to be composed of a set of rigid bodies for various body segments. Following the details presented in Section~\ref{sec:background-rigid-multi-body-system}, we model whole human body as a rigid multi-body system. A total of $23$ rigid bodies or links constitute our human model i.e., $N_B=23$. This choice is motivated by the use of Xsens MVN motion capture system at the initial stages of our research.

\section{Link Modeling}

The names associated with the links in our model are consistent with that of the Xsens model. Unlike graphical virtual humans that are in general very detailed from the level of facial expressions to the fine articulation at the hands, we are primarily interested in whole-body motion and dynamics of humans. So, the \emph{geometry} of the links considered are very simple when compared to the sophisticated virtual humans used in graphics community.

\subsection{Geometry}

The main shapes we considered for the links are \emph{parallelepiped boxes},
 \emph{cylinders} and \emph{spheres}. This choice is motivated by our interest in representing the human model close to an animated avatar rather than a simple multi rigid-body structure. The dimensions of each of the body segment are obtained through a calibration procedure that is capable of scaling based on the dimensions of the actual human being we want to model. We rely on an anthropomorphic reference \citep{Winter1990} that attributes a certain percentage of the total human mass to different body segments. Table~\ref{table_linkModelling} highlights the link names (similar to the Xsens model), link geometry considered and the link mass in terms of percentage of the total mass of the human. 

\begin{table}[!ht]
\vspace{0.15cm}
\centering
\small
\caption{Link names, link geometry and the link mass in terms of percentage of the total mass of the human subject (extracted from \citep{Winter1990})}
\label{table_linkModelling}
\begin{tabular}{ccc}
\hline\hline
\\
\textbf{Label} & \textbf{Shape} & \textbf{$\%$ Mass of human}\\
\\
\hline
\\
Pelvis          &  parallelepiped  & $0.08$\\
\rowcolor{Gray}
L5              &  parallelepiped  & $0.102$\\
L3              &  parallelepiped  & $0.102$\\
\rowcolor{Gray}

T12             &  parallelepiped  & $0.102$\\
T8              &  parallelepiped  & $0.04$\\
\rowcolor{Gray}
Neck            &  cylinder        & $0.012$\\
Head            &  sphere          & $0.036$\\
\rowcolor{Gray}
RightShoulder   &  cylinder        & $0.031$\\
RightUpperArm   &  cylinder        & $0.030$\\
\rowcolor{Gray}
RightForeArm    &  cylinder        & $0.020$\\
RightHand       &  parallelepiped  & $0.006$\\
\rowcolor{Gray}
LeftShoulder    &  cylinder        & $0.031$\\
LeftUpperArm    &  cylinder        & $0.030$\\
\rowcolor{Gray}
LeftForeArm     &  cylinder        & $0.020$\\
LeftHand        &  parallelepiped  & $0.006$\\
\rowcolor{Gray}                            
RightUpperLeg   &  cylinder        & $0.125$\\
RightLowerLeg   &  cylinder        & $0.0365$\\
\rowcolor{Gray}
RightFoot       &  parallelepiped  & $0.013$ \\
RightToe        &  parallelepiped  & $0.015$ \\
\rowcolor{Gray}
LeftUpperLeg   &  cylinder        & $0.125$  \\
LeftLowerLeg   &  cylinder        & $0.0365$ \\
\rowcolor{Gray}
LeftFoot       &  parallelepiped  & $0.013$  \\
LeftToe        &  parallelepiped  & $0.0015$ \\
\\
\hline\hline
\end{tabular}
\end{table}

\subsection{Inertial Properties}

Inertial properties of the body segments i.e., mass, center of mass and moments of inertial are important to understand and capture the dynamics of the human. A popular approach for human body segments inertial properties estimation is based on \emph{geometric approximation} where a collection of measurements from the human are mapped to a geometric model to derive the segments inertial properties based on the geometry and density assumption. Unlike muscles of a human body that generally vary in density, we assumed density isotropy for all the body segments \citep{Hanavan1964}. Under these assumptions, the inertial tensor $\comVar{I}$ is computed as,

\begin{equation}\label{inertiaTensor}
\comVar{I}  =  \begin{bmatrix}
        \mathrm{I}_{xx} & 0               & 0               \\
         0              & \mathrm{I}_{yy} & 0               \\
         0              & 0               & \mathrm{I}_{zz} \\
    \end{bmatrix}
\end{equation}

where $\mathrm{I}_{xx}$, $\mathrm{I}_{yy}$ and $\mathrm{I}_{zz}$ are the
 principal moments of inertia.  Table \ref{PrincipalMomentsInertia_shapes}
  lists analytical formulas for the principal moments of inertia computation.

\begin{table}[!ht]
\vspace{0.15cm}
\centering
\caption{Principal moments of inertia of three different geometric shapes considered for the links with mass $\textrm{m}$:
 (\emph{on left column}) a rectangular parallelepiped of width $\alpha$, height
  $\beta$ and depth $\gamma$; (\emph{on middle column}) a circular cylindrical
   of radius $r$ and height $h$; (\emph{on right column}) a sphere of radius
    $r$}
\label{PrincipalMomentsInertia_shapes}
\centering
\scriptsize
\begin{tabular}{c|cccc}
\hline\hline
\\
\textbf{Inertia} & \textbf{Parallelepiped} & \textbf{Cylinder}
                 & \textbf{Sphere}\\
\\
\hline
\\
$\mathrm{I}_{xx}$      & $\frac{1}{12}\textrm{m} ~
 \big(\alpha^{2}+\beta^{2}\big)$
                       & $\frac{1}{12}\textrm{m} ~ \big(3r^{2}+h^{2}\big)$
                       & $\frac{2}{5}\textrm{m}r^{2}$ \\
                       \\
\rowcolor{Gray}
$\mathrm{I}_{yy}$      & $\frac{1}{12}\textrm{m} ~
 \big(\beta^{2}+\gamma^{2}\big)$
                       & $\frac{1}{2}\textrm{m} r^2$
                       & $\frac{2}{5}\textrm{m}r^{2}$ \\
                       \\
$\mathrm{I}_{zz}$      & $\frac{1}{12}\textrm{m} ~
 \big(\gamma^{2}+\alpha^{2}\big)$
                       & $\frac{1}{12}\textrm{m} ~ \big(3r^{2}+h^{2}\big)$
                       & $\frac{2}{5}\textrm{m}r^{2}$ \\
\\
\hline\hline
\end{tabular}
\end{table}

\section{Joint Modeling}

 The links are coupled through joints and as with the link names, we use the same joint names as that of the Xsens model. Typically, a rotational joint like the shoulder is usually composed of three degrees of freedom (DoF) rotation, although some anatomical joints like elbow, knee and finger joints are less than three degrees of freedom rotation. For the sake of generality, we assumed all the joints present in our model to be a three degrees of freedom joints. Given a total of $23$ links considered in our model, a total of $22$ three degree of freedom rotational joints are present in our model. In other words, the total number of internal degrees of freedom are $n = 22 \times 3 = 66$. A scaled down version of our model consists of only $n = 48$ degrees of freedom, by setting some of the joints to be of two degrees of freedom or one degree of freedom. Table~\ref{table_jointModelling} highlights the joint names, total degrees of freedom of a joint with the reduced DoF in parenthesis and the links connected through a joint.
 
 \begin{table}[!ht]
    \vspace{0.15cm}
\centering
\small
\caption{Joint names, DoFs per each joint (reduced DoFs) and the links connected through a joint}
\label{table_jointModelling}
\begin{tabular}{ccccc}
\\
\hline\hline
\\
\multicolumn{1}{c}{ \textbf{Label}} & \multicolumn{1}{c}{\textbf{DoF}} &
 \multicolumn{3}{c}{\textbf{Connected links}}\\
\\
\hline
\\
\multicolumn{1}{c}{jL5S1} & \multicolumn{1}{c}{$3~(2)$} &
 \multicolumn{1}{r}{Pelvis} & \multicolumn{1}{c}{$\longleftrightarrow$} &
 \multicolumn{1}{l}{L5} \\
\rowcolor{Gray}
\multicolumn{1}{c}{jL4L3} & \multicolumn{1}{c}{$3~(2)$} & \multicolumn{1}{r}{L5} &
 \multicolumn{1}{c}{$\longleftrightarrow$} & \multicolumn{1}{l}{L3} \\
\multicolumn{1}{c}{jL1T12} & \multicolumn{1}{c}{$3~(2)$} & \multicolumn{1}{r}{L3} &
 \multicolumn{1}{c}{$\longleftrightarrow$} & \multicolumn{1}{l}{T12} \\
\rowcolor{Gray}
\multicolumn{1}{c}{jT9T8} & \multicolumn{1}{c}{$3$} & \multicolumn{1}{r}{T12} &
 \multicolumn{1}{c}{$\longleftrightarrow$} & \multicolumn{1}{l}{T8} \\
\multicolumn{1}{c}{jT1C7} & \multicolumn{1}{c}{$3$} & \multicolumn{1}{r}{T8} &
 \multicolumn{1}{c}{$\longleftrightarrow$} & \multicolumn{1}{l}{Neck} \\
\rowcolor{Gray}
\rowcolor{Gray}
\multicolumn{1}{c}{jC1Head} & \multicolumn{1}{c}{$3~(2)$} &
 \multicolumn{1}{r}{Neck} & \multicolumn{1}{c}{$\longleftrightarrow$} &
  \multicolumn{1}{l}{Head} \\
\multicolumn{1}{c}{jRightHip} & \multicolumn{1}{c}{$3$} &
 \multicolumn{1}{r}{Pelvis} & \multicolumn{1}{c}{$\longleftrightarrow$} &
 \multicolumn{1}{l}{RightUpperLeg} \\
\rowcolor{Gray}
\multicolumn{1}{c}{jRightKnee} & \multicolumn{1}{c}{$3~(2)$} &
 \multicolumn{1}{r}{RightUpperLeg} & \multicolumn{1}{c}{$\longleftrightarrow$}
  & \multicolumn{1}{l}{RightLowerLeg} \\
\multicolumn{1}{c}{jRightAnkle} & \multicolumn{1}{c}{$3$} &
 \multicolumn{1}{r}{RightLowerLeg} & \multicolumn{1}{c}{$\longleftrightarrow$}
  & \multicolumn{1}{l}{RightFoot} \\
\rowcolor{Gray}
\multicolumn{1}{c}{jRightBallFoot} & \multicolumn{1}{c}{$3~(1)$} &
 \multicolumn{1}{r}{RightFoot} & \multicolumn{1}{c}{$\longleftrightarrow$} &
  \multicolumn{1}{l}{RightToe} \\
\multicolumn{1}{c}{jLeftHip} & \multicolumn{1}{c}{$3$} &
 \multicolumn{1}{r}{Pelvis} & \multicolumn{1}{c}{$\longleftrightarrow$} &
  \multicolumn{1}{l}{LeftUpperLeg} \\
\rowcolor{Gray}
\multicolumn{1}{c}{jLeftKnee} & \multicolumn{1}{c}{$3~(2)$} &
 \multicolumn{1}{r}{LeftUpperLeg} & \multicolumn{1}{c}{$\longleftrightarrow$} &
  \multicolumn{1}{l}{LeftLowerLeg} \\
\multicolumn{1}{c}{jLeftAnkle} & \multicolumn{1}{c}{$3$} &
 \multicolumn{1}{r}{LeftLowerLeg} & \multicolumn{1}{c}{$\longleftrightarrow$} &
  \multicolumn{1}{l}{LeftFoot} \\
\rowcolor{Gray}
\multicolumn{1}{c}{jLeftBallFoot} & \multicolumn{1}{c}{$3~(1)$} &
 \multicolumn{1}{r}{LeftFoot} & \multicolumn{1}{c}{$\longleftrightarrow$} &
  \multicolumn{1}{l}{LeftToe} \\
\multicolumn{1}{c}{jRightC7Shoulder} & \multicolumn{1}{c}{$3~(1)$} &
 \multicolumn{1}{r}{T8} & \multicolumn{1}{c}{$\longleftrightarrow$} &
  \multicolumn{1}{l}{RightShoulder} \\
\rowcolor{Gray}
\multicolumn{1}{c}{jRightShoulder} & \multicolumn{1}{c}{$3$} &
 \multicolumn{1}{r}{RightShoulder} & \multicolumn{1}{c}{$\longleftrightarrow$}
  & \multicolumn{1}{l}{RightUpperArm} \\
\multicolumn{1}{c}{jRightElbow} & \multicolumn{1}{c}{$3~(2)$} &
 \multicolumn{1}{r}{RightUpperArm} & \multicolumn{1}{c}{$\longleftrightarrow$}
  & \multicolumn{1}{l}{RightForeArm} \\
\rowcolor{Gray}
\multicolumn{1}{c}{jRightWrist} & \multicolumn{1}{c}{$3~(2)$} &
 \multicolumn{1}{r}{RightForeArm} & \multicolumn{1}{c}{$\longleftrightarrow$} &
  \multicolumn{1}{l}{RightHand} \\
\multicolumn{1}{c}{jLeftC7Shoulder} & \multicolumn{1}{c}{$3~(1)$} &
 \multicolumn{1}{r}{T8} & \multicolumn{1}{c}{$\longleftrightarrow$} &
  \multicolumn{1}{l}{LeftShoulder} \\
\rowcolor{Gray}
\multicolumn{1}{c}{jLeftShoulder} & \multicolumn{1}{c}{$3$} &
 \multicolumn{1}{r}{LeftShoulder} & \multicolumn{1}{c}{$\longleftrightarrow$} &
  \multicolumn{1}{l}{LeftUpperArm} \\
\multicolumn{1}{c}{jLeftElbow} & \multicolumn{1}{c}{$3~(2)$} &
 \multicolumn{1}{r}{LeftUpperArm} & \multicolumn{1}{c}{$\longleftrightarrow$} &
  \multicolumn{1}{l}{LeftForeArm} \\
\rowcolor{Gray}
\multicolumn{1}{c}{jLeftWrist} & \multicolumn{1}{c}{$3~(2)$} &
 \multicolumn{1}{r}{LeftForeArm} & \multicolumn{1}{c}{$\longleftrightarrow$} &
  \multicolumn{1}{l}{LeftHand} \\
\\
\hline\hline
\\
\end{tabular}
\end{table}

\section{Model Representation}

Both the graphics and biomechanics communities have an active developer base whose contributions lead to tools with many advanced features. Blender~\citep{Blender} is a free and open-source 3D computer graphics software used for character animation. \textsc{collada} (with a .dae extension) is one of the most widely used file formats for sharing graphics related models. Typically, along with the geometry of the model, \textsc{collada} facilitates storing appearance, scene and animation information that are essential for character animation. OpenSim~\citep{delp2007opensim} is an open source software for biomechanical modeling, simulation and analysis. Opensim Model (with .osim extension) is the file format used for sharing an anatomically accurate human model. In our observation we noticed that \textsc{collada} and OpenSim Model formats are not compatible with each other as the purposes they serve are not overlapping.

Our main motivation behind developing a human model is towards facilitating human-robot interaction and human-robot collaboration research. Lately, \emph{Robot Operating System} (\textsc{ros})~\citep{ros2018} emerged as the most popular open-source middle-ware for robotics research. Unified Robot Description Format (\textsc{urdf}) (with .urdf extension) is an Extensible Markup Language (\textsc{xml}) schema based file format that is widely adopted to represent robot models. \textsc{urdf} models are not as anatomical accurate as OpenSim models or as detailed as \textsc{collada} models, but they capture all the details needed to perform the simulation and analysis of rigid multi-body systems. So, a natural choice for us to represent our human model is the \textsc{urdf} format. Currently, the process of generating a human \textsc{urdf} model relies on the Xsens motion capture data during the calibration procedure~\cite{latella2018thesis}. Human models of subjects with varying body dimensions are available as an open source repository \texttt{human-gazebo}\footnote{https://github.com/robotology/human-gazebo}.

\begin{figure}[ht!]
  \centering
  \includegraphics[width=0.9\columnwidth]{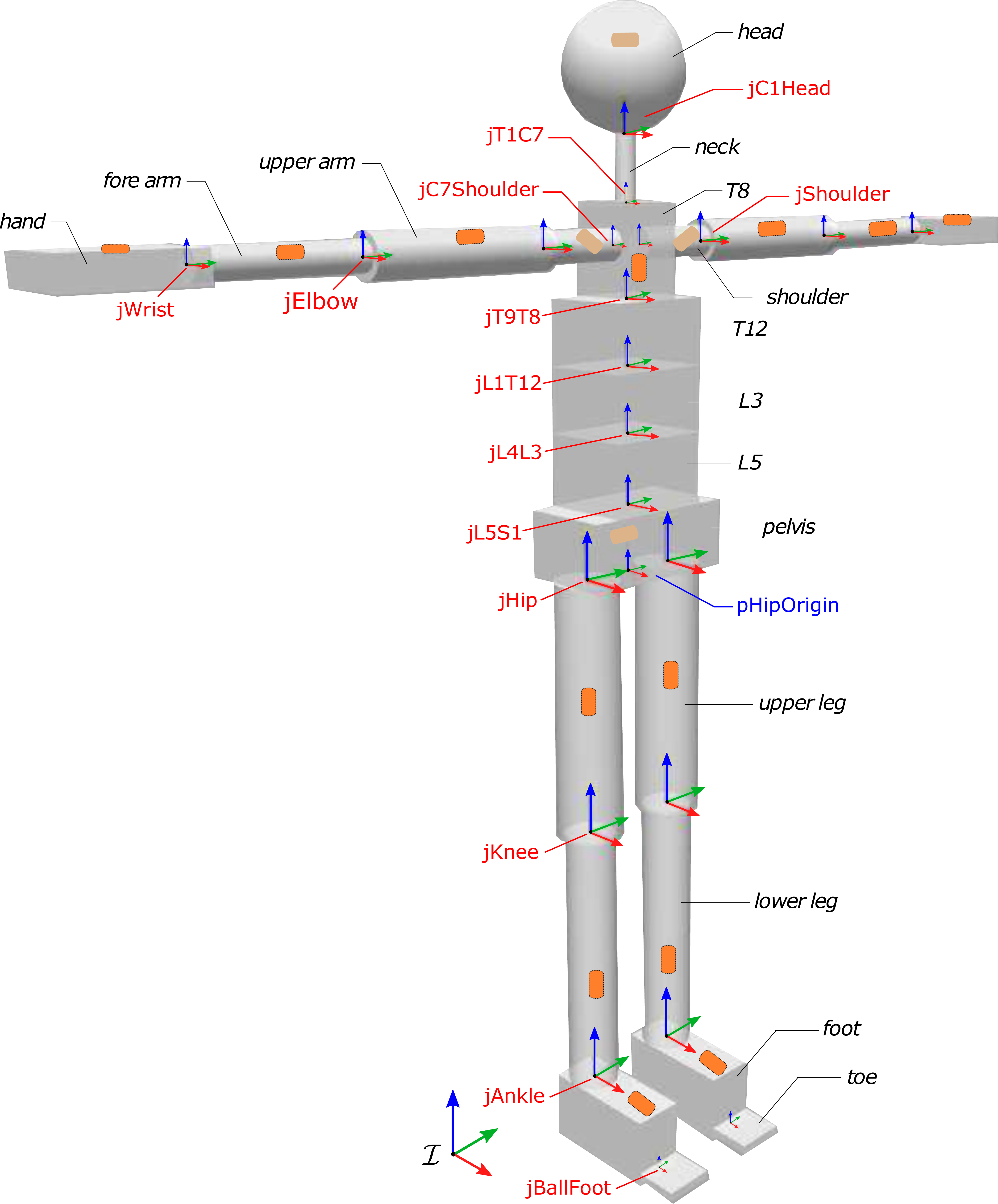}
  \caption{\textsc{urdf} model of human body}
  \label{human_urdfModel_sens}
\end{figure}
\chapter{Enabling Technologies}
\label{cha:technologies}

\chapreface{The outcome of any scientific endeavor depends significantly on the use of proper tools to address the research questions and accomplish the set objectives. This chapter presents the enabling technologies that serve as the main tools for building a holistic human perception framework and evaluating our partner-aware robot control techniques towards realizing our research goals of enabling human-robot collaboration.}

\section{Human Motion Perception}
\label{sec:technologies-human-motion-perception}

Human motion tracking is used in a variety of applications like gaming, animation, virtual reality etc. Different representations of human body with different level of complexity from contours~\cite{shio1991,Leung1995}, stick figure~\cite{Niyogi1994,Bharatkumar1994}, and volumes~\cite{wachter1999tracking,gall2009motion} are investigated in literature. Motion tracking through optical tracking technology is one of the most mature fields with it roots starting from over two decades \cite{aggarwal1999human, Wagh2019}. Several motion capture systems based on reflective markers also exists as commercial products such as Vicon and Qualisys. However, they impose space constraints and are not robust to occlusions. 

Inertial tracking technologies quickly rose to popularity as they provide a convenient alternative to human motion tracking without any space constraints. Furthermore, they ensure low latency that is crucial for real-time applications \cite{zhu2004real}. So, in this research, we chose to employ inertial tracking technology over optical tracking technology. At the time of this writing, we relied on Xsens MVN inertial tracking system \citep{Roetenberg2009}, a commercial platform of multiple inertial measurement units (IMUs) distrusted over the entire human body as shown in Figure.~\ref{fig:xsens-human}.

\begin{figure}[hbt!]
	\centering
	\includegraphics[scale=0.585]{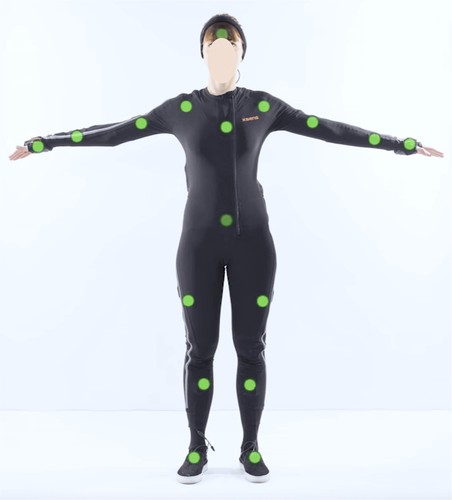}
	\caption{Xsens whole-body motion tracking suit with distributed IMUs present at the locations of the green dots}
	\label{fig:xsens-human}
\end{figure}

As discussed in the human modeling chapter \ref{cha:human-modeling}, the kinematic model considered by Xsens system is composed of 23 body segments. So, the motion capturing system provides, the pose of a body segment with respect to the inertial frame of reference. Additionally, one can also access the linear and angular velocity of a body segment, and the \emph{sensor acceleration} (Refer to Chapter 2 of \cite{traversaro2017thesis}). Although the joint configuration values are accessible, they are pertinent to the kinematic model considered by Xsens system. To reduce our reliance on the kinematic model of a commercial platform and have more control on our human model, we do not make use of the joint configuration data. Instead, we employ the motion measurement data in terms of link pose and velocity, and developed our inverse kinematic algorithms.

\section{Force-Torque Sensing}
\label{sec:force-torque-sensing}

Six axis force-torque (F/T) sensing is an important capability that has been exploited in robotics for regulating contact forces and torques~\cite{siciliano2012robot}, joint torque estimation and whole-body control of complex humanoid robots~\cite{nori2015}. Also, force-torque sensing is an invaluable tool in understanding human dynamics~\cite{riemer2008improving, skals2017prediction}. Strain gauge-based sensing is one of the most popular approaches. In the case industrial robots force-torque sensors are embedded at the end-effector and provides reliable measurements. However, in the case of complex humanoid robots, the reliability of the measurements deteriorate and procedures for quick and reliable calibration are highly important~\cite{chavez2016model}. Coming to the case human dynamics analysis, ground fixed force plates provide very reliable and accurate information of ground reaction forces. However, as they are ground fixed, they limit the mobility of the human subject. This limits the use for force plates for tasks that consider the human model to be a floating base system.

We rely on the six axis force-torque sensor technology\footnote{\href{http://wiki.icub.org/wiki/FT_sensor}{http://wiki.icub.org/wiki/FT\_sensor}} developed at the Istituto Italiano di Tecnologia (\textsc{iit}) shown in Fig.~\ref{fig:iCubFT}.

\begin{figure}[ht!]
	\centering
	\begin{subfigure}{0.19\textwidth}
		\includegraphics[width=\textwidth]{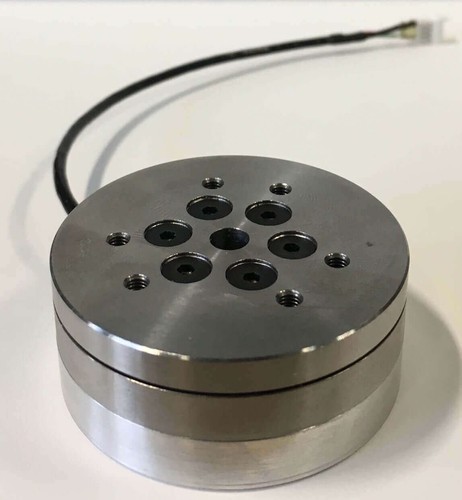}
		\caption{}
		\label{fig:iCubFT}
	\end{subfigure}
    \begin{subfigure}{0.19\textwidth}
    	\includegraphics[width=\textwidth]{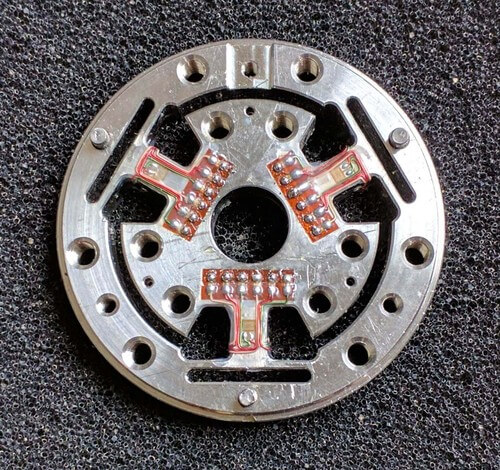}
    	\caption{}
    	\label{fig:ft_strain2_guage}
    \end{subfigure}
	\begin{subfigure}{0.19\textwidth}
		\includegraphics[width=\textwidth]{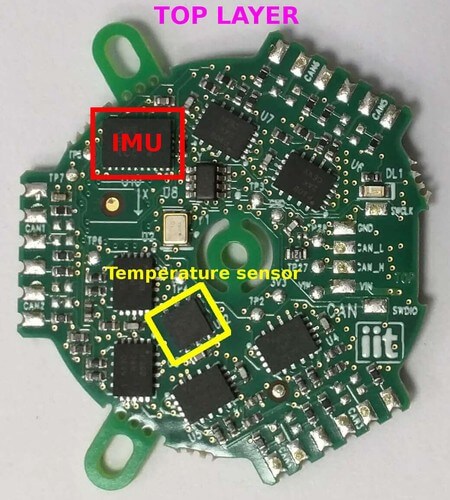}
		\caption{}
		\label{fig:ft_strain2_electronics_top}
	\end{subfigure}
 	\begin{subfigure}{0.19\textwidth}
 		\includegraphics[width=\textwidth]{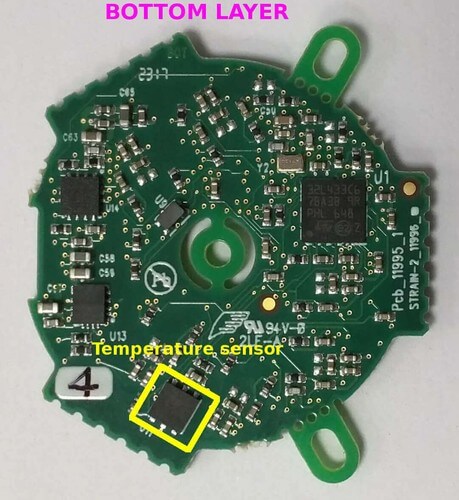}
 		\caption{}
 		\label{fig:ft_strain2_electronics_bottom}
 	\end{subfigure}
	\begin{subfigure}{0.19\textwidth}
		\includegraphics[width=\textwidth]{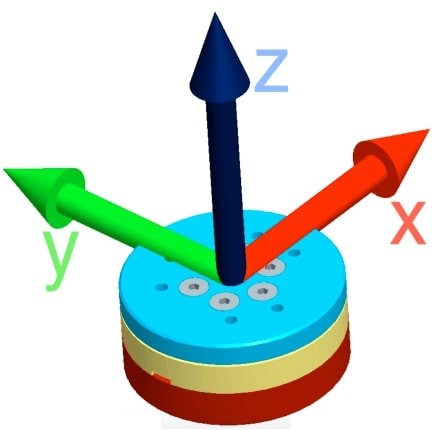}
		\caption{}
		\label{fig:FTref}
	\end{subfigure}
	\caption{(\subref{fig:iCubFT}) A six axis force-torque sensor developed at the Istituto Italiano di Tecnologia (\textsc{iit}); (\subref{fig:ft_strain2_guage}) Strain gauge; (\subref{fig:ft_strain2_electronics_top}) Top layer of internal electronics showing an inertial measurement unit (\textsc{imu}) and a temperature sensor;  (\subref{fig:ft_strain2_electronics_bottom}) Bottom layer of internal electronics showing a temperature sensor; (\subref{fig:FTref}) Sensor reference frame;}
\end{figure}

\section{Ground Reaction Force Monitoring}
\label{sec:HDE-ftShoes}

The six axis force-torque sensor technology presented in section~\ref{sec:force-torque-sensing} are small and provide reliable sensor measurements with proper calibration procedures~\cite{chavez2016model}. Leveraging this sensor technology, we developed sensorized wearable shoes as shown in Fig.~\ref{fig:Figs_footInShoe}. They will be referred to as \emph{ftShoes} and they do not pose the limitations of the force plates on human mobility. Each shoe is equipped with two force-torque sensors, one at the front and the other at the rear, as shown in Figure.~\ref{fig:Figs_footInShoe}. The sensor measurements from the front and the read sensors are combined and expressed with respect to the human foot frame that is shown in Figure.~\ref{fig:Figs_footInShoe}. Unlike the ground fixed force platform, the \emph{ftShoes} enable real-time ground reaction forces monitoring of human subjects while doing dynamic tasks like walking.

\begin{figure}[hbt!]	
	\centering
	\includegraphics[width=0.85\textwidth]{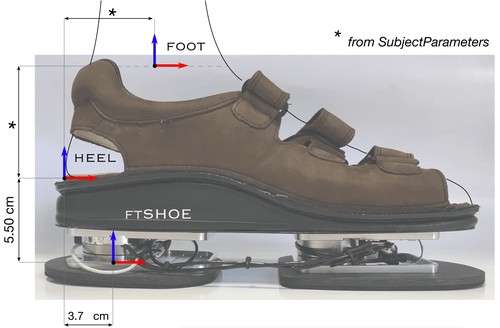}
	\caption{Sensorized wearable shoe with two force-torque sensors placed at the front and the rear, with frame references for measurements transformation from the sensors to the human foot}
	\label{fig:Figs_footInShoe}
\end{figure}

\section{iCub Humanoid Robot}
\label{sec:background-icub}

The iCub\footnote{http://www.icub.org/} humanoid robot is the main robotic platform used for experiments presented in this thesis. It is developed by the iCub Facility at the Italian Institute of Technology (\textsc{iit}). It is a child-sized humanoid robot originally developed by the RobotCub\footnote{http://www.robotcub.org/} European Project for research in embodied cognition \citep{sandini2014}. Since its initial release in 2006, the platform has been continuously updated with latest improvements and features both in terms of hardware and software. This section introduces the key details concerning the latest ``standard'' version of the iCub, informally referred hereafter as iCub 2.5, as of the beginning of 2017. 

The iCub platform consists of 53 degrees-of-freedom (\textsc{dof}) that are distributed as following: 3 for the head and 3 for the eyes; 7 for each arm and 9 for each hand; 3 for the torso and 6 for each leg. The actuation is achieved using Brushless DC electric motor (\textsc{bldc}) with an Harmonic Drive transmission, making them suitable for joint torque control. More details on the actuation and mechanics of the iCub 2.5 can be found in \citep{parmiggiani2012}. 

\begin{figure}[!ht]
	\centering
	\includegraphics[width=0.475\textwidth]{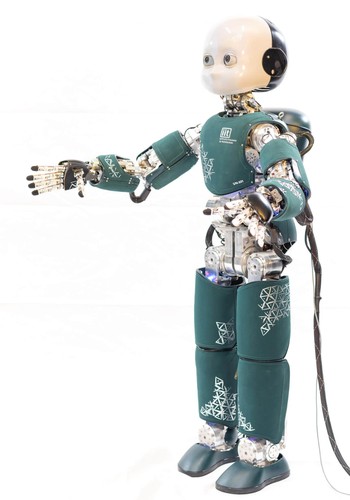}
	\caption{iCub humanoid robot (version 2.5)}
\end{figure}

Traditional robotic manipulators are fixed-base systems meaning that one of the known links of the robot is always in rigid contact with the environment. Alternatively, humanoid robots are floating-base systems \cite{Featherstone2007} meaning that the contact information is typically not known a-priori. It is a direct reflection of their design to facilitate anthropomorphic navigational capabilities. One of the research goals of the iCub project is 
to endow humanoids with advanced physical interaction capabilities. This is motivated by the idea that future robots will be required to physically interact with the environment and, in the long run, with humans or other robotic agents. A key requirement for this interaction control is the capability to measure and control the forces that a humanoid robot is exchanging at its contacts, i.e. contact forces control. While in traditional industrial applications this is achieved by placing a six axis force-torque sensor between the robot and the environment, this is not feasible in humanoid robots, in which the contact location is typically not known a-priori. To overcome this limitations, a unique set of dynamics-related sensors have been added during the years to the iCub. 

\begin{figure}[!ht]
	\centering
	\begin{subfigure}{0.33\textwidth}
		\centering
		\includegraphics[width=.75\linewidth]{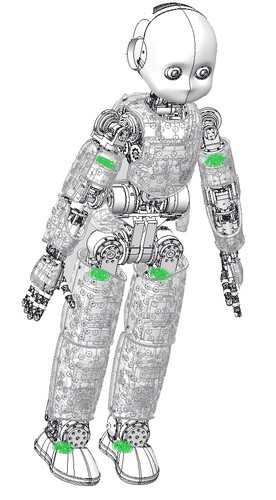}
		\caption{}
		\label{fig:icub-ftsens}
	\end{subfigure}%
	\begin{subfigure}{0.66\textwidth}
		\centering
		\includegraphics[width=.365\linewidth]{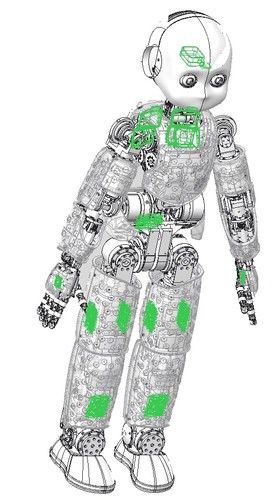}%
		\includegraphics[width=.365\linewidth]{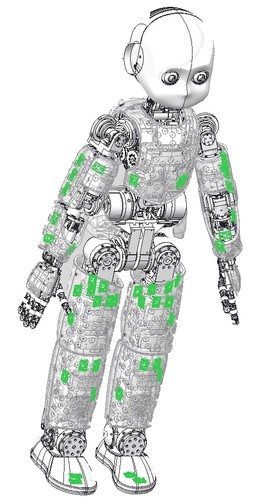}
		\caption{}
		\label{fig:distributedInertialSensing}
	\end{subfigure}
	\caption{Distribution of a) six embedded six axis force-torque sensors b) inertial sensors i.e. gyroscopes (left) and accelerometers (right) in iCub 2.5.}
	\label{fig:iCub-sensors}
\end{figure}

The main force sensors available on the iCub are six internal six axis force-torque sensors. Four of them are mounted at the base of each limb while  two of them are mounted in feet right below the ankles. The locations of these sensors are highlighted in Figure~\ref{fig:icub-ftsens}. iCub has a full-fledged Inertial Measurement Unit, equipped with a 3 DOFs magnetometer, accelerometer and gyroscope, that is mounted on the head of the robot. Additionally, several motor control boards are distributed in the robot structure, and each motor control board is equipped with both a 3 DOFs accelerometers and a 3 DOFs gyroscopes. Furthermore, some dedicated electronic boards to read distributed tactile sensors are equipped with a 3 Degree-of-Freedom (DOFs) accelerometers. These distributed inertial sensing is highlighted in Figure~\ref{fig:distributedInertialSensing}.

The availability of these distributed sensors, coupled with a novel modeling formalism and estimation algorithms \cite{traversaro2017thesis, traversaro2015situ, nori2015simultaneous, andrade2019six} enabled the development of whole-body controllers for the iCub humanoid robot capable of performing highly dynamic movements while balancing~\cite{pucci2016video,nori2015, dafarra2016torque}.

\part{Holistic Human Perception}
\chapter{Human Kinematics Estimation}
\label{cha:human-kinematics-estimation}

\chapreface{The problem of inverse kinematics is well addressed in robotics literature. This chapter presents a brief introduction to the problem of inverse kinematics in robotics and highlights the role of inverse kinematics in the context of real-time human motion tracking through the kinematics estimation of a human wearing a whole-body inertial motion tracking system. A description of two instantaneous optimization approaches and a dynamical optimization approach to real-time human motion tracking is presented with a benchmarking in terms of the computation time.}

Kinematics deals with the description of the motion of a multi rigid-body system without considering the influence causing the motion or the inertial parameters of rigid bodies. A multi rigid-body system with a set of rigid bodies connected without any loops is referred as a \emph{serial kinematic chain}. Robotic manipulators as shown in Figure.~\ref{fig:panda-ee-example} are an ideal example of a serial kinematic chain. One of the links of the system is referred as an \emph{End-Effector} that is present at the end of the kinematic chain. 

\vspace{0.1cm}
\begin{figure}[H]
    \centering
    \includegraphics[width=0.5\textwidth]{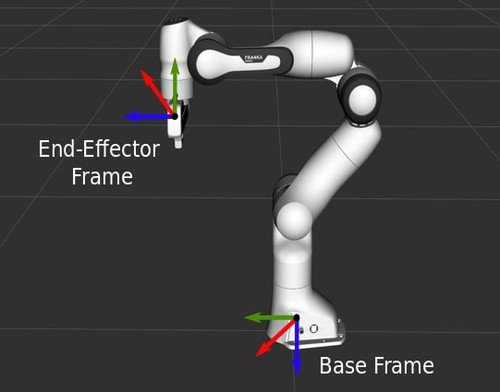}
    \caption{ Example of a robotic manipulator}
    \label{fig:panda-ee-example}
\end{figure}

End-effectors are designed to interact with the environment. So, the knowledge of the end-effector pose with respect to the base frame is crucial for realizing any task with the manipulator. Given the values of the joint configuration i.e., the topology $\comVar{s}$ as indicated in Eq.~\ref{topology_s}, and the knowledge of the kinematic chain model, one can compute the end-effector pose with respect to the base. This is known as the \emph{Forward Kinematics} (\textsc{fk}) problem. However, it is more intuitive to specify an interested pose for the end-effector in the Cartesian space and then compute the joint configuration that enables the end-effector placement correctly. This is referred as \emph{Inverse Kinematics} (\textsc{ik}) problem. As an example, the desired pose of the end-effector along with the desired pose of the rest of the robot links is indicated in \emph{orange} color in Figure.~\ref{fig:panda-ik-example}. An example inverse kinematics solution of a series of joint configurations to realize the desired end-effector pose is highlighted in Figure.~\ref{fig:panda-ik-solution-example}.

\begin{figure}[H]
    \centering
    \begin{subfigure}{0.49\textwidth}
         \centering
         \includegraphics[width=\textwidth]{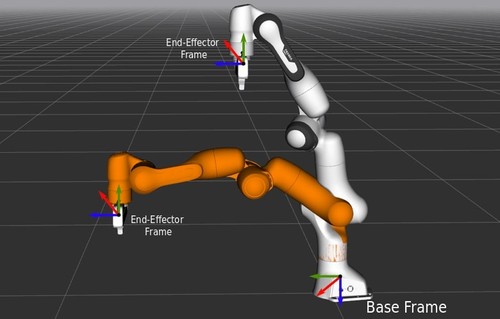}
         \caption{}
         \label{fig:panda-ik-example}
    \end{subfigure}
    \begin{subfigure}{0.49\textwidth}
        \centering
        \includegraphics[width=0.825\textwidth]{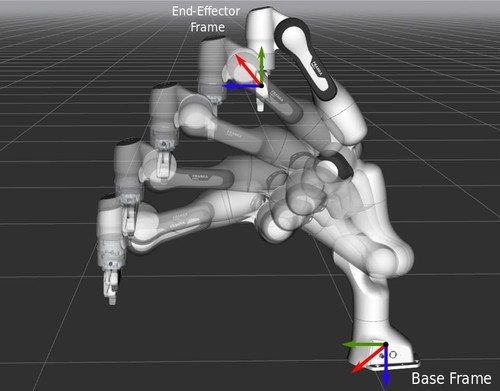}
        \caption{}
        \label{fig:panda-ik-solution-example}
    \end{subfigure}
    \caption{Inverse kinematics example of a robotic manipulator}
    \label{fig:panda-ik}
\end{figure}

Investigations into inverse kinematics problem first originated in robotic community and significant research effort was put towards finding efficient algorithms that provide real-time performance. Apart from robotics community, \textsc{ik} is also a critical component in graphics community for animating articulated models \citep{aristidou2018inverse} or for real-time human motion tracking \citep{pons2011outdoor}. Analytical methods provide an exact solution and are usually faster to compute closed-form solution for mechanisms of lower degrees of freedom like robotic manipulators \citep{craig2009introduction}. However, when the considered model is highly redundant, as our human model with 66 DoFs, application of analytical methods is not feasible. In the subsequent sections we refer to inverse kinematics only in the context of human motion tracking.

\section{Inverse Kinematics Problem Definition}

This section presents the inverse kinematic problem in the context of real-time human motion tracking. A \emph{posture} is defined as a particular configuration of human body parts that results from a particular set of human joints configuration. A human subject wearing a whole-body motion tracking suit moves various joints to attain a series of postures depending on the task. For example, when the human subject moves from an anatomically neutral \emph{N-pose} to \emph{T-pose}, there is a change in the configuration of the shoulder joints as shown in Figure.~\ref{fig:human-posture-change-t-pose}. Similarly, when moving from N-pose to a squatting position, there is a change in the configuration of the knee and ankle joints as shown in Figure.~\ref{fig:human-posture-change-squat-front}.

\begin{figure}[H]
    \centering
    \begin{subfigure}{0.49\textwidth}
        \centering
        \includegraphics[width=\textwidth]{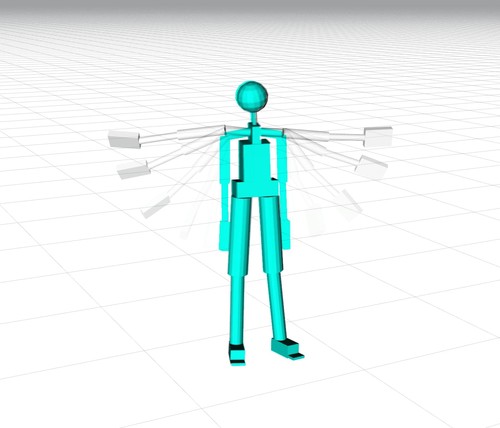}
        \caption{Transition from N-pose to T-pose}
        \label{fig:human-posture-change-t-pose}
    \end{subfigure}
    \begin{subfigure}{0.49\textwidth}
        \centering
        \includegraphics[width=\textwidth]{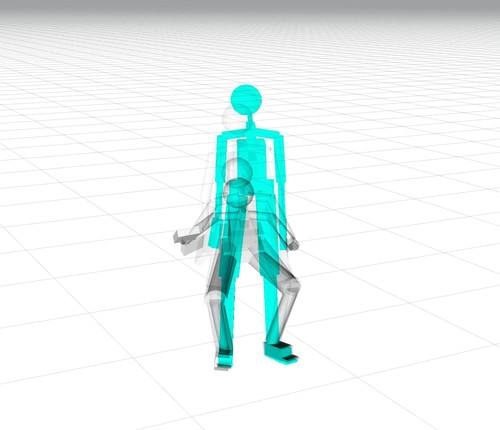}
        \caption{Transition from N-pose to Squat}
        \label{fig:human-posture-change-squat-front}
    \end{subfigure}
    \caption{Example of a series of human postures during different tasks}
    \label{fig:human-posture-changes}
\end{figure}

The motion tracking system provides position and orientation measurements of various body segments in real-time. These measurements are expressed with respect to the inertial frame of reference. Now, the inverse kinematics problem is defined as, given the position and orientation measurements of various body segments, we need to compute the joint configuration of the human.

The set of frames $\mathbb{L} = \{ \mathcal{L}_0, \mathcal{L}_1, .... \mathcal{L}_{N_B-1}\}$ correspond to the number of body segments (links) considered in the human model with $N_B = 23$. The human model is considered as a floating base system i.e. $\mathcal{L}_1 = \mathcal{F}$. Given a system configuration at a time instant $t$, $\comVar{q}(t) = (\comVar{q}_{\mathcal{B}}(t),\comVar{s}(t))$, the Cartesian pose of a link with respect to the inertial frame of reference can be computed as a combination of a series of transformations obtained through the kinematic equations of the model. This geometric mapping through a series of transformations constitute the forward kinematics that is denoted as a function,

\begin{equation} \label{eq:human-forward-kinematics}
    ^{\mathcal{I}}\comVar{H}_{\mathcal{L}_i}(t) = f(\comVar{q}(t)) \ \ \forall \ \ i = 1....,N_B-1
\end{equation}

The positions of all the links is put together as a vector denoted by,

\begin{equation}
    \comVar{p}_{\mathbb{L}}  = 
    \begin{bmatrix}
    ^{\mathcal{I}}{\comVar{p}}_{\mathcal{F}} \\
    ^{\mathcal{I}}{\comVar{p}}_{\mathcal{L}_1} \\
    ^{\mathcal{I}}{\comVar{p}}_{\mathcal{L}_2} \\
    ... \\
    ... \\
    ^{\mathcal{I}}{\comVar{p}}_{\mathcal{L}_{N_B-1}}
    \end{bmatrix} \in \mathbb{R}^{3 N_B}
\end{equation}

The rotations of all the links is put together as a matrix denoted by,

\begin{equation}
    \comVar{R}_{\mathbb{L}}  = 
    \begin{bmatrix}
    ^{\mathcal{I}}{\comVar{R}}_{\mathcal{F}} \\
    ^{\mathcal{I}}{\comVar{R}}_{\mathcal{L}_1} \\
    ^{\mathcal{I}}{\comVar{R}}_{\mathcal{L}_2} \\
    ... \\
    ... \\
    ^{\mathcal{I}}{\comVar{R}}_{\mathcal{L}_{{N_B}-1}}
    \end{bmatrix} \in \mathbb{R}^{3 N_B \times 3}
\end{equation}

The posture change of the human results in a new set of measurements for the position and orientation of the links, say at a time instant $t+1$. The new measurements are referred as \emph{targets}. The vector of position targets is denoted as,

\begin{equation}
    \comVar{p}_{\mathbb{L}}^d  = 
    \begin{bmatrix}
    ^{\mathcal{I}}{\comVar{p}}_{\mathcal{F}}^d \\
    ^{\mathcal{I}}{\comVar{p}}_{\mathcal{L}_1}^d \\
    ^{\mathcal{I}}{\comVar{p}}_{\mathcal{L}_2}^d \\
    ... \\
    ... \\
    ^{\mathcal{I}}{\comVar{p}}_{\mathcal{L}_{N_B - 1}}^d
    \end{bmatrix} \in \mathbb{R}^{3 N_B}
\end{equation}

Similarly, the vector of rotational targets is denoted as,

\begin{equation}
    \comVar{R}_{\mathbb{L}}^d  = 
    \begin{bmatrix}
    ^{\mathcal{I}}{\comVar{R}}_{\mathcal{F}}^d \\
    ^{\mathcal{I}}{\comVar{R}}_{\mathcal{L}_1}^d \\
    ^{\mathcal{I}}{\comVar{R}}_{\mathcal{L}_2}^d \\
    ... \\
    ... \\
    ^{\mathcal{I}}{\comVar{R}}_{\mathcal{L}_{N_B-1}}^d
    \end{bmatrix} \in \mathbb{R}^{3 N_B \times 3}
\end{equation}

A distance measure between the current link positions and target position is defined in terms of \emph{Euclidean} distance as,

\begin{equation}
    d_{pos} (\comVar{p}_{\mathbb{L}}^d, \comVar{p}_{\mathbb{L}} ) = 
    \begin{bmatrix}
        ^{\mathcal{I}}\comVar{p}_{\mathcal{F}}^d - ^{\mathcal{I}}\comVar{p}_{\mathcal{F}} \\
        ^{\mathcal{I}}\comVar{p}_{\mathcal{L}_1}^d - ^{\mathcal{I}}\comVar{p}_{\mathcal{L}_1} \\
        ^{\mathcal{I}}\comVar{p}_{\mathcal{L}_2}^d - ^{\mathcal{I}}\comVar{p}_{\mathcal{L}_2} \\
        ... \\
        ... \\
        ^{\mathcal{I}}\comVar{p}_{\mathcal{L}_{N_B-1}}^d - ^{\mathcal{I}}\comVar{p}_{\mathcal{L}_{N_B-1}}
    \end{bmatrix}
\end{equation}

\begin{tcolorbox}[sharp corners, colback=white!30,
     colframe=white!20!black!30, 
     title=Position Targets Remark]
\emph{The human model considered in this work consists of only rotational joints. So, the only relevant position target corresponds to the base frame $\mathcal{F}$.}
\end{tcolorbox}

As orientation is parametrized in terms of rotation matrix, the distance measure is defined using the $ \bm \skewOp(.)^{\vee}$ operator denoted as,

\begin{equation}
    d_{ori} (\comVar{R}_{\mathbb{L}}^d, \comVar{R}_{\mathbb{L}} ) =
    \begin{bmatrix}
         \bm \skewOp((^{\mathcal{I}}\comVar{R}_{\mathcal{F}}^d)^T \ ^{\mathcal{I}}\comVar{R}_{\mathcal{F}})^{\vee} \\
         \bm \skewOp((^{\mathcal{I}}\comVar{R}_{\mathcal{L}_1}^d)^T \ ^{\mathcal{I}}\comVar{R}_{\mathcal{L}_1})^{\vee} \\
         \bm \skewOp((^{\mathcal{I}}\comVar{R}_{\mathcal{L}_2}^d)^T \ ^{\mathcal{I}}\comVar{R}_{\mathcal{L}_2})^{\vee} \\
         ... \\
         ... \\
         \bm \skewOp((^{\mathcal{I}}\comVar{R}_{\mathcal{L}_{N_B - 1}}^d)^T \ ^{\mathcal{I}}\comVar{R}_{\mathcal{L}_{N_B - 1}})^{\vee}
    \end{bmatrix}
\end{equation}

Now, the solution of the inverse kinematics problem is the joint configuration change that resulted in the posture change of human during a task. The following sections present different ways of formulating the inverse kinematics problem for real-time human motion tracking.

\section{Instantaneous Optimization}

Unlike analytical methods, \emph{iterative methods} are ideal for systems with high degrees of freedom. One of the most common approaches is to formulate the inverse kinematics problem as a non-linear optimization problem and solve via iterative algorithmss~\cite{goldenberg1985,buss2004}.  This class of algorithms, referred to as \textit{instantaneous optimization}, aims to converge to a stable solution for each time-step $t_k$. A general formulation of the optimization problem is defined as following:

\begin{subequations}\label{nonlinear_optimization_inverse_kinematics}
\begin{align}
& \underset{\comVar{q}(t_k)}{\text{minimize}} & \left\lVert \comVar{K}_{pos} \ d_{pos} (\comVar{p}_{\mathbb{L}}^d, \comVar{p}_{\mathbb{L}} ) +  \comVar{K}_{ori} \ d_{ori} (\comVar{R}_{\mathbb{L}}^d, \comVar{R}_{\mathbb{L}} ) \right\lVert \\
& \text{subject to} 
& \comVar{A}  \comVar{q}(t_k) \leq \comVar{b}
\end{align}
\end{subequations}

where $\comVar{K}_{pos}$ and $\comVar{K}_{ori}$ are diagonal weight matrices for position and orientation targets respectively, $\comVar{A}$ and $\comVar{b}$ are two parameters that represent the limits for the joint configuration of the model. A multibody kinematics library iDynTree~\cite{nori2015}, and an open-source non-linear optimization software library called \textsc{ipopt}~\cite{Wachter2006} are used to implement two different inverse kinematics problems. The first implementation, referred as \emph{whole-body instantaneous optimization}, considers the inverse kinematics problem of the full human model as a single non-linear optimization problem i.e., with $23$ body segments and $66$ DoFs. The second implementation, referred as \emph{pair-wise instantaneous optimization}, decomposes the human model into multiple sub models with each model containing a pair of links. So, inverse kinematics of each pair is solved in parallel and combined to get the final joint configuration of the full human model.

\section{Dynamical Optimization}
\label{sec:dynamical-optimization}

Real-time human motion tracking is a time critical application and in our experience we observed that instantaneous optimization algorithms perform poorly for highly dynamic movements like running. So, we resort to an alternative approach of formulating the inverse kinematics as a control problem~\cite{sciavicco1988}. This work is originally presented in~\citep{Rapetti2019}\footnote{Rapetti, L., \underline{ Tirupachuri, Y.}, Darvish, K., Latella, C., \& Pucci, D. (2020). Model-Based Real-Time Motion Tracking using Dynamical Inverse Kinematics. IEEE IROS 2020 under review arXiv:1909.07669.}. In this approach we consider the state of the system to be composed of both the system configuration $\comVar{q}$ and the system velocity $\nu$. The pose targets are grouped into a vector denoted as,

\begin{equation}
	\comVar{x} = 
	\begin{bmatrix}
		^{\mathcal{I}}{\comVar{p}}_{\mathcal{F}}^d \\
		^{\mathcal{I}}{\comVar{p}}_{\mathcal{L}_1}^d \\
		... \\
		^{\mathcal{I}}{\comVar{p}}_{\mathcal{L}_{N_B - 1}}^d \\
		^{\mathcal{I}}{\comVar{R}}_{\mathcal{F}}^d \\
		^{\mathcal{I}}{\comVar{R}}_{\mathcal{L}_1}^d \\
		... \\
		^{\mathcal{I}}{\comVar{R}}_{\mathcal{L}_{N_B - 1}}^d
	\end{bmatrix}
\end{equation}

The pose distance vectors are grouped as a pose residual vector denoted as,

\begin{equation}
	\comVar{r} = 
	\begin{bmatrix}
		d_{pos} \\
		d_{ori}
	\end{bmatrix}
\end{equation}

The velocity of the $i$-th link $^{\mathcal{I}}\comVar{v}_{\mathcal{L}_i} = (^{\mathcal{I}}\dot{\comVar{p}}_{\mathcal{L}_i}, ^{\mathcal{I}}\comVar{\omega}_{\mathcal{L}_i})$ is related to the system velocity $\nu$ through the Jacobian matrix $\comVar{J}_{\mathcal{L}_i}(\comVar{q}) \in \mathbb{R}^{6 \times (n + 6)}$ which is dependent on the system configuration $\comVar{q}$ through the following relation,

\begin{equation}
	^{\mathcal{I}}\comVar{v}_{\mathcal{L}_i} = \comVar{J}_{\mathcal{L}_i}(\comVar{q}) \ \nu
\end{equation}

Similar to the pose targets, the velocity targets are grouped as a vector denoted by,

\begin{equation}
	\comVar{v} = 
	\begin{bmatrix}
		^{\mathcal{I}}{\dot{\comVar{p}}}_{\mathcal{F}}^d \\
		^{\mathcal{I}}{\dot{\comVar{p}}}_{\mathcal{L}_1}^d \\
		... \\
		^{\mathcal{I}}{\dot{\comVar{p}}}_{\mathcal{L}_{N_B - 1}}^d \\
		^{\mathcal{I}}\comVar{\omega}_{\mathcal{F}} \\
		^{\mathcal{I}}\comVar{\omega}_{\mathcal{L}_1} \\
		... \\
		^{\mathcal{I}}\comVar{\omega}_{\mathcal{L}_{N_B - 1}}
	\end{bmatrix}
\end{equation}

The current link velocities are denoted as $\comVar{J}(\comVar{q}) \ \nu$, where

\begin{equation}
	\comVar{J}(\comVar{q}) =
	\begin{bmatrix}
		\comVar{J}_{\mathcal{F}}(\comVar{q}) \\
		\comVar{J}_{\mathcal{L}_1}(\comVar{q}) \\
		\comVar{J}_{\mathcal{L}_2}(\comVar{q}) \\
		... \\
		... \\
		\comVar{J}_{\mathcal{L}_{N_B - 1}}(\comVar{q})
	\end{bmatrix}
\end{equation}

Now, the velocity residual vector is denoted as,

\begin{equation}
	\comVar{u} = \comVar{v} - \comVar{J}(\comVar{q}) \ \nu
\end{equation}

Given the residual vectors $\comVar{u}, \comVar{r}$, we consider the dynamic system,

\begin{equation}
	\comVar{u} + \comVar{K} \ \comVar{r} = 0
\end{equation}

where, $\comVar{K} \in \mathbb{R}^{3 N_B \times 3 N_B}$ is a positive definite diagonal matrix. The system velocity $\nu$ is considered as a control input to drive the residual vectors towards zero. This approach is referred as \emph{dynamical optimization} as the system configuration and velocity is controlled to dynamically converge to the targets. Fig.~\ref{fig:dynamical-optimization} presents the dynamical optimization algorithm. In contrast to the instantaneous optimization approach, here the solution is computed in a single iteration. So, the absence of iterations make this approach faster for solving the whole-body inverse kinematics for the human model.

\begin{figure}[!ht]
	\centering
	\includegraphics[width=\textwidth]{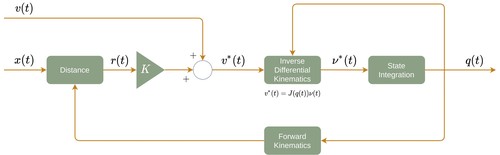}
	\caption{Dynamical optimization for real-time inverse kinematics}
	\label{fig:dynamical-optimization}
\end{figure}

\section{Benchmarking}

The proposed approaches of solving inverse kinematics for real-time human motion tracking are benchmarked in terms of \emph{computational time} using three different dynamic tasks:

\begin{itemize}
    \item \textbf{T-pose} : The human subject standing on two feet moves the arms parallel to the ground
    \item \textbf{Walking} : The human subject walks on a treadmill at a constant speed of $\SI{4}{\kilo\meter\per\hour}$
    \item \textbf{Running} : The human subject runs on a treadmill at a constant speed of $\SI{10}{\kilo\meter\per\hour}$
\end{itemize}

The motion data is acquired with the Xsens Awinda wearable suit~\cite{Roetenberg2009} with wireless \textsc{imu} units distributed through out the human body. The motion capturing system provides the pose and velocity of the 23 body segments and are associated with the links of our human models at a sampling rate of $\SI{60}{\hertz}$. Furthermore, we considered the full human model with 66 DoFs referred as \emph{Human66} and also the reduced human model with 48 DoFs referred as \emph{Human48}. The motion data is streamed through \textsc{yarp} middleware~\cite{yarp2006} that facilitates recording and real-time playback of data. Box plots of computational time for the proposed inverse kinematics approaches with two different human models is highlighted in Figure.~\ref{fig:ik-computational-time}. It is evident that the dynamical optimization approach is the best way that solves the inverse kinematics of different human models across different tasks when compared with the instantaneous optimization approaches. Furthermore, the computational time of the dynamic approaches increase as the dynamicity of the task increases. So, the task of T-pose has the lowest computational time followed by walking and then running.

Between the two instantaneous approaches, pair-wise instantaneous optimization seems to perform poorly for the T-pose task. This is explained by the additional time taken for parallel computation of small \textsc{ik} problems and combining their output. However, for the tasks of high dynamicity like walking and running, the computational time of pair-wise instantaneous optimization is lower than the whole-body instantaneous optimization.

\begin{figure}[H]
	\includegraphics[width=\textwidth]{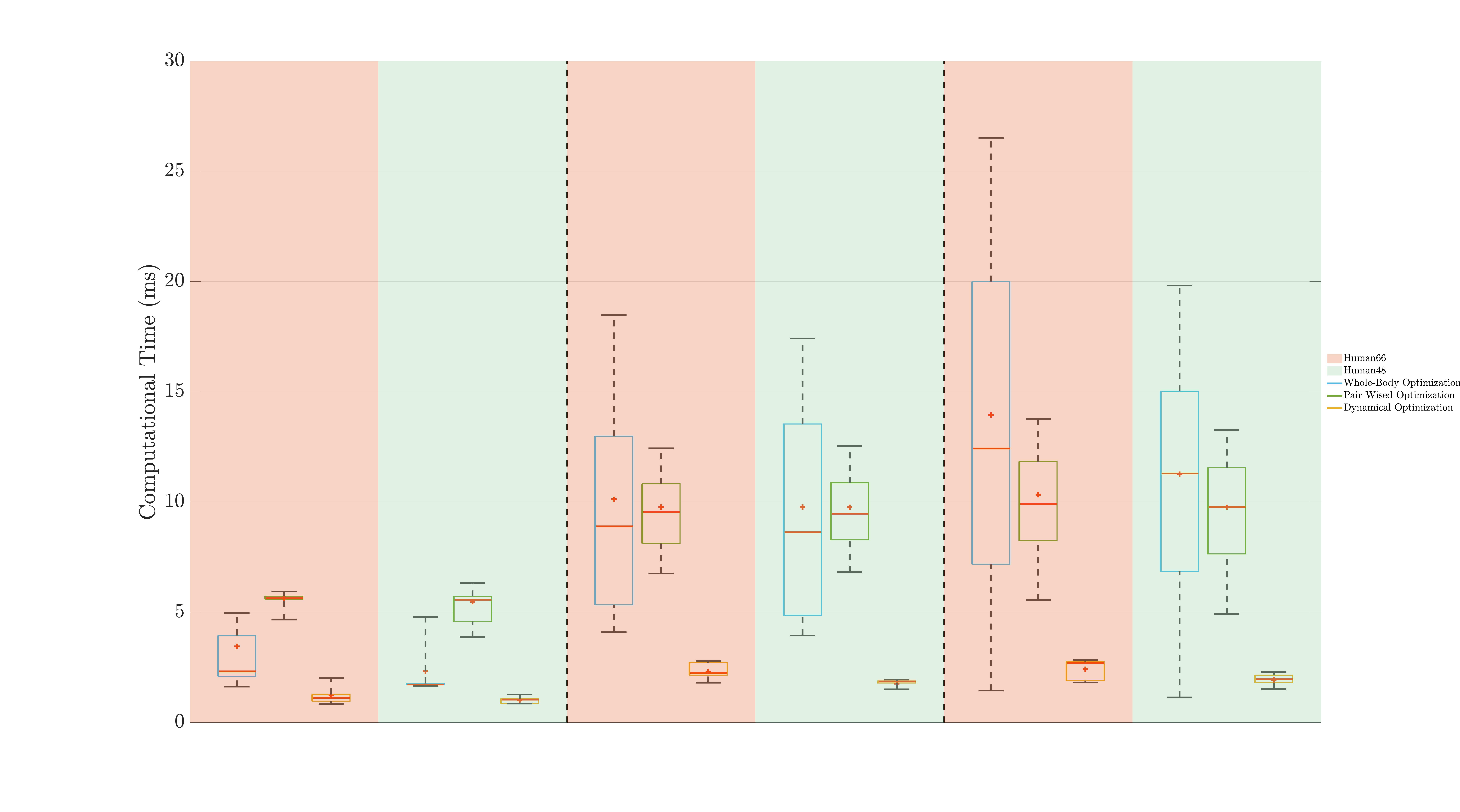}
	\caption{Box plots of computational time for the proposed inverse kinematics approaches with two different human models}
	\label{fig:ik-computational-time}
\end{figure}

\chapter{Human Dynamics Estimation}
\label{cha:human-dynamics-estimation}

\chapreface{The knowledge of joint torques of a human body serves useful in various domains like ergonomic analysis, biomechanical studies, collaborative robot control such as exoskeletons. This chapter presents our stochastic framework for human dynamics estimation to perform a sensorless external force estimation on different links of the human. Furthermore, an experimental validation of our framework for a floating base fully articulated human model performing different dynamic tasks is presented.}

The dynamics of a mechanical system describes the motion of the rigid bodies under external influence with the inertial parameters considered. In robotics literature, the two main problems concerning the dynamics are:

\begin{itemize}
    \item \emph{Forward Dynamics (FD)}: Given the actuation forces for joints of the robot, forward dynamics finds the resulting accelerations and motion.
    \item \emph{Inverse Dynamics (ID)}: Give the desired motion of the robot, inverse dynamics finds the actuation forces for the joints that will induced the desired motion.
\end{itemize}

Unlike in robotics, human motion is generated through the joint articulation facilitated by sophisticated musculoskeletal structure. Internal body forces and joint torques are crucial in understanding the human dynamics. Although our model lacks the sophistication of biomechanical models, it enables us to model the joint torques i.e., articular stress. In the context of human dynamics, the inverse dynamics problem deals with estimation of the joint torques of human.

\section{Recourse on Stochastic Dynamics Estimation}
\label{sec:recourse-on-stochastic-dynamics-estimation}

A stochastic formulation of the inverse dynamics estimation problem for a whole-body fixed base human model is first investigated in \cite{Latella2019, latella2018thesis} and later extended to a floating base human model in our recent work~\cite{latella2019simultaneous}. The central idea is to represent the equations of motion as a set of linear equations by defining a new human dynamic variables vector $\comVar{d}$ and the measurement vector $\comVar{y}$. The relation between the dynamic variables $\comVar{d}$ and the model is encapsulated in a model constrains equation. The measurements equations represents the relation between the dynamic variables $\comVar{d}$ and the measurements vector $\comVar{y}$.

\subsection{Model Constraints Equation}
\label{sec:model-constraints-equation}
 
Considering the human model described in Chapter~\ref{cha:human-modeling} with $N_B$ rigid bodies and $n$ internal DoFs, the dynamic variables vector is constructed as,
 
\begin{eqnarray} \label{eq:dynvec}
 \comVar{d} &=& 
 \begin{bmatrix}
 \comVar{d}_{link}^\top & \comVar{d}_{joint}^\top
 \end{bmatrix}^\top
 \in \mathbb R^{12N_B + 7n} \label{d}
\end{eqnarray}
 
\begin{subequations}
 \begin{eqnarray}
 	\comVar{d}_{link} &=& \begin{bmatrix}
 	\comVar{\alpha}_{0}^g & {\comVar{\underline f}_{0}^x} &
 	\comVar{\alpha}_{1}^g & {\comVar{\underline f}_{1}^x} &
 	\hdots &
 	\comVar{\alpha}_{N_B-1}^g & {\comVar{\underline f}_{N_B-1}^x}
 	\end{bmatrix}
 	\in \mathbb R^{12N_B} \label{d_L}\\
 	\comVar{d}_{joint} &=& \begin{bmatrix}
 	\comVar{\underline f}_{1} & \comVar{\underline f}_{2} & \hdots & \comVar{\underline f}_{n} &
 	\ddot {s}_{1} & \ddot {s}_{2} & \hdots &
 	\ddot {s}_{n}
 	\end{bmatrix}
 	\in \mathbb R^{7n} \label{d_J}
 \end{eqnarray}
\end{subequations}
 
where, $\comVar{\alpha}^g \in \mathbb R^{6}$ is the proper sensor acceleration from the IMUs placed on human links \cite{latella2019simultaneous}. $\comVar{\underline f}_{i}^x$ denotes the external 6D force on the $i$-th link, $\comVar{\underline f}_{i}$ denotes the 6D force exchanged through the $i$-th joint, and $\ddot{s}_i$ denotes the acceleration of the $i$-th joint. The joint torques $\comVar{\tau}$ are retrieved by projecting the joint force and moments on the joint motion subspace as defined in Eq.~\eqref{eq:taui}. Given the definition of $\comVar{d}$, the Newton-Euler equations of motion Eq.\eqref{eq:ai}-\eqref{eq:taui} are rearranged in a matrix form, presenting a set of model constraints of the mechanical system given as,

\begin{equation} \label{eq:matRNEA} 
    \comVar{D}(\comVar{q}, \nu) \comVar{d} + \comVar{b}_D
 (\comVar{q}, \nu)= \comVar{0}
\end{equation}

where $\comVar{D}$ is a block matrix $\in \mathbb R^{(18 N_B+n) \times (12N_B+7n)}$ and $\comVar{b}_D$ is a vector $\in \mathbb R^{18 N_B+n}$ (refer to Chapter 4 of \cite{latella2018thesis}).

\subsection{Measurements Equation}
\label{sec:measurements-equations}

Consider that the human is equipped with $N_s$ number of sensors. Typically senors that are used for human dynamics estimation include inertial measurement units that provide the link acceleration or force-torque sensors that provide external forces acting on certain links of the human. The measurement equation is formulated as,

\begin{equation} \label{eq:measEquation}
     \comVar{Y}(\comVar{q}, \nu) \comVar{d} + \comVar{b}_Y (\comVar{q}, \nu)= \comVar{y}
\end{equation}

where, $\comVar{Y} \in \mathbb{R}^{N_s \times (12N_B + 7n)}$ is a block matrix defined as,

\begin{eqnarray}
    \comVar{Y} =
    \begin{bmatrix}
        \comVar{Y}_1 & \comVar{Y}_2 \quad \hdots \quad \comVar{Y}_{N_S}
    \end{bmatrix}^\top \in \mathbb R^{N_S \times (12N_B + 7n)}
\end{eqnarray}

and the bias vector $\comVar{b}_Y$ is defined as,

\begin{eqnarray}
    \comVar{b}_Y =
    \begin{bmatrix}
        \comVar{b}_{Y_1} & \comVar{b}_{Y_2} \quad \hdots \quad \comVar{b}_{Y_{N_S}}
    \end{bmatrix}^\top \in \mathbb R^{N_S}
\end{eqnarray}

The dimensions depend on the type of the sensor e.g., in the case of an IMU the size of sensor reading is $3$ for the linear acceleration and in the case of a force-torque sensor, it is $6$ for the six axis forces and torques. An illustrative example of a simple three link 2 DoF mechanical system with and IMU and external 6D force measurements is presented in Chapter 4 of \cite{latella2018thesis}.

\subsection{Stochastic Estimation}
\label{sec:HDE-stochastic-estimation}

On putting together the model constraints equation Eq.~\eqref{eq:matRNEA} and the measurements equation Eq.~\ref{eq:measEquation}, we obtain

\begin{eqnarray} \label{eq:systemEq}
    \begin{bmatrix}
    	\comVar{D}(\comVar{q}, \nu) \\ \comVar{Y}(\comVar{q}, \nu) 
     \end{bmatrix} \comVar{d} +
     \begin{bmatrix} 
     	\comVar{b}_D(\comVar{q}, \nu) \\
     	\comVar{b}_Y(\comVar{q}, \nu)
    \end{bmatrix} =
    \begin{bmatrix}  
    	\comVar{0} \\
     	\comVar{y}
     \end{bmatrix} \qquad
\end{eqnarray}

The central assumption is that the the number of available measurements in the vector $\comVar{y}$ ensure enough constraints on estimating the dynamic variables vector $\comVar{d}$. This condition is represented through the rank assumption as,

\begin{eqnarray}\label{rankAssumption}
rank\left(\begin{bmatrix}
	\comVar{D}(\comVar{q}, \nu) \\
    \comVar{Y}(\comVar{q}, \nu)  
     \end{bmatrix}\right) = 12N_B + 7n
\end{eqnarray}

Give these set of equations, under the assumptions of all the model constraints having equal relevance and all the sensor measurements having equal accuracy, one way to solve the system represented by Eq.~\eqref{eq:systemEq} is the least squares method using a Moore-Penrose pseudoinverse. Alternatively, a more pragmatic approach is to discard the equal relevance and accuracy assumptions and employ a weighted pseudoinverse to find a weighted least-squares solution. However, for a system of high dimension like the human considered here, it is a tedious task to find the proper weights.

We took an alternative approach and consider the dynamic variables $\comVar{d}$ and the measurements $\comVar{y}$ as stochastic variables with Gaussian distributions. The Maximum A-Posteriori \textsc{(map)} estimator tool is employed to estimate the dynamics variables $\comVar{d}$. The problem definition can be summarized as,
\vspace{0.5cm}
\begin{tcolorbox}[sharp corners, colback=white!30,
     colframe=white!20!black!30, 
     title=Stochastic Estimation Problem]
\emph{Given the measurements $\comVar{y}$ and the prior knowledge of the model
 constraints, estimate the vector $\comVar{d}$ by maximizing the conditional
  probability distribution of $\comVar{d}$ given $\comVar{y}$, such that}
\begin{equation}\label{d_map}
   \bm d^{\mbox{\tiny{MAP}}}=\arg \max_{\bm d}~p(\bm d| \bm y) \propto \arg
\max_{\bm d}~p(\bm d, \bm y)
\end{equation}
\end{tcolorbox}

The \textsc{map} solution coincides with the mean of the probability density function \textsc{(pdf)} $p(\comVar{d}|\comVar{y})$ yielding to:
\begin{eqnarray} \label{eq:d_gaussian}
    \comVar{d}^{\mbox{\tiny{MAP}}} = \comVar{\mu}_{d|y}
\end{eqnarray} 

The reader is advised to refer Appendix~\ref{app:appendix-HumanDynamicsEstimation} for the complete mathematical details in arriving at the \textsc{map} solution. In terms of optimization, the estimation problem can be presented as following,

\begin{equation}
	\comVar{d}^{\mbox{\tiny{MAP}}} = \arg\min_d 
	( ||\comVar{D} \comVar{d} + \comVar{b}_D||^2_{\comVar{\Sigma}_D^{-1} } + 
	  ||\comVar{Y} \comVar{d} + \comVar{b}_Y - \comVar{y}||^2_{\comVar{\Sigma}_y^{-1} } + 
	  || \comVar{d} - \comVar{\mu}_d||^2_{\comVar{\Sigma}_d^{-1} })
	  \label{eq:map-solution-optimization-problem}
\end{equation}

\subsection{Covariance Tuning}
\label{sec:HDE-covariance-tuning}

The solution to the estimation problem from eq.~\eqref{eq:map-solution-optimization-problem} is influenced by three covariances: $i)$ Prior covariance ${\comVar{\Sigma}}_d$, $ii)$ Measurement covariance ${\comVar{\Sigma}}_y$, and $iii)$ Model covariance ${\comVar{\Sigma}}_D$. The effect of these covariances on the estimator can be summarized as following:

\begin{itemize}
	\item Prior covariance ${\comVar{\Sigma}}_d$ enables the estimator to change the dynamic variable estimates. The prior mean $\comVar{\mu}_d$ for the dynamic variables is set to zero as we do not assume any prior information
	\item Measurement covariance ${\comVar{\Sigma}}_y$ influences the estimator to push some of the dynamic variable estimates towards the sensor measurements in $\comVar{y}$
	\item Model covariance ${\comVar{\Sigma}}_D$ guides the estimator to ensure all the dynamic variable estimates to be consistent with the model constraints
\end{itemize}

The combined contribution of this set of covariances affects the final estimation of the dynamic variables.

\section{Sensorless External Force Estimation}
\label{sec:sensorless-external-force-estimation}

Currently, our stochastic formulation of estimating the dynamic variables considers the external 6D force measurements (inside the measurement vector $\comVar{y}$) obtained from the sensorized \emph{ftShoes} described in section~\ref{sec:HDE-ftShoes}. Apart from the feet links on which ground reaction forces are present, we assume all the other links do not have any external 6D force on them. So, the external 6D force measurement values on all the other links are set to $\comVar{0}_6$. The measurement covariance values ${\bm {\Sigma}}_y$ for all the external 6D forces are set to a low value ($10^{-6}$) suggesting that the measurements are highly reliable. This choice implies that the estimation algorithm is forced to keep the feet external 6D force estimates to be as close as possible to the measurements coming from the \emph{ftShoes}, while the estimation of the external 6D force estimates on all the other links are towards $\comVar{0}_6$. 

\begin{figure}[ht!]
	\centering
	\begin{subfigure}[b]{0.5\textwidth}
		\includegraphics[width=0.925\textwidth]{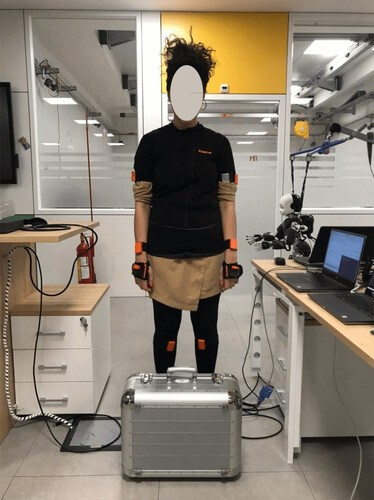}
		\caption{}
		\label{fig:subject_without_weight}
	\end{subfigure}%
	\begin{subfigure}[b]{0.5\textwidth}
		\includegraphics[width=0.925\textwidth]{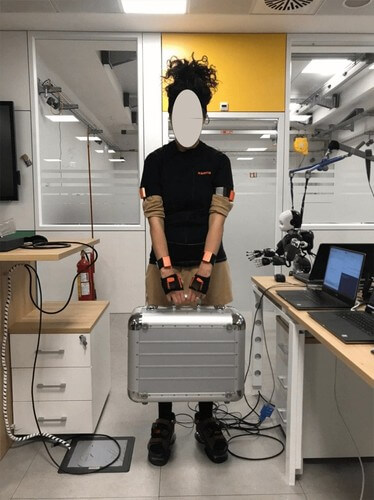}
		\caption{}
		\label{fig:subject_with_weight}
	\end{subfigure}
	\caption{Human subject lifting an external heavy object}
	\label{fig:subject_weight_lefting}
\end{figure}

Let us consider an example scenario where the human is lifting a heavy object as shown in Figure.~\ref{fig:subject_weight_lefting}. In particular, when the object is lifted, the additional weight of the object is measured by the \emph{ftShoes} as shown by the yellow arrows in Figure.~\ref{fig:rviz-weight-lifting-forces}. Now, rather than pushing the 6D force estimates at the hand links towards their set measurement of $\comVar{0}_6$, we consider the problem of reflecting the weight of the object as the 6D force estimates at the hand links. This problem is referred to as Non-collocated wrench estimation.  Considering the current formulation of the estimation problem from eq.~\eqref{eq:map-solution-optimization-problem}, the 6D force estimates at the hand links can be influenced by their measurement covariance value. However, the measurement covariance can only influence the estimator to push the estimates to be close to their set measurement of $\comVar{0}_6$ or away from that but we do not have any additional constraint that guides the estimates to reflect the object weight. Moreover, given that the dynamic variables vector $\comVar{d}$ from eq.~\eqref{d} is comprised of both the external link wrench ${\comVar{\underline f}^x}$ and the internal joint wrench ${\comVar{\underline f}}$, their estimation is coupled through the consideration of the model constraints through model covariance ${\comVar{\Sigma}}_D$. So, to achieve the goal of reflecting the object weight as the 6D force estimates at the hands, we need to decouple the estimation of external link wrench and the internal joint wrench quantities. This called for a new formulation of the estimation problem which will be discussed in detail in the following section.

\begin{figure}[H]
	\centering
	\begin{subfigure}[b]{0.5\textwidth}
		\centering
		\includegraphics[width=0.7\textwidth, trim=0 2cm 0 0, clip=true]{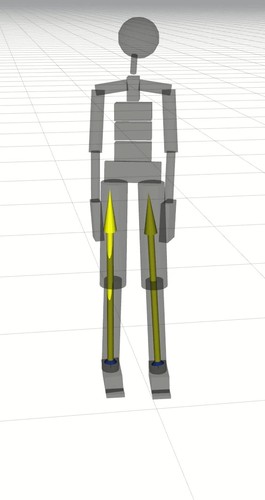}
		\caption{}
		\label{fig:rviz-forces-without-weight}
	\end{subfigure}%
	\begin{subfigure}[b]{0.5\textwidth}
		\centering
		\includegraphics[width=0.7\textwidth, trim=0 2cm 0 0, clip=true]{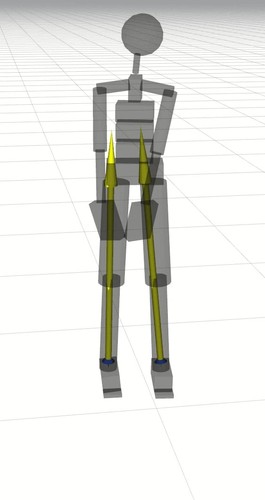}
		\caption{}
		\label{fig:rviz-forces-with-weight}
	\end{subfigure}
	\caption{Increase in ground reaction force measurements from \emph{ftShoes} while lifting a heavy object}
	\label{fig:rviz-weight-lifting-forces}
\end{figure}

\subsection{Stack of Tasks Stochastic Estimation}
\label{sec:stack-of-tasks}

The stochastic estimation problem for the floating base human model described in section~\ref{sec:HDE-stochastic-estimation} is reformulated as a two step stack of tasks estimation problem. We introduce a new model constraint in the first task that relates the rate of the change of momentum to the sum of all the external 6D force i.e. the principle of momentum conservation. To illustrate this a new frame, called \emph{centroidal} frame denoted by $\mathcal{G[I]}$. The centroidal frame is a unique coordinate frame with its origin at the center of mass of the system and its orientation similar to the orientation of the inertial frame of reference. Fig.~\ref{fig:sot_berdy_task1_frames} highlights the centroidal frame along with the other frame definitions of a mechanical system.

\begin{figure}[ht]
	\centering
    \includegraphics[width=0.65\textwidth]{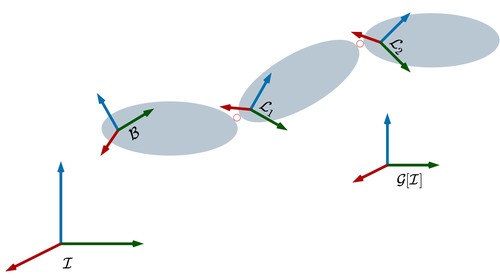}
	\caption{Centroidal frame definition}
	\label{fig:sot_berdy_task1_frames}
\end{figure}

Now the mathematical formulation of the new constraint when all the quantities are expressed in the centroidal frame~\cite{a9258435eeef4c5fb26a698d5db629f9, saccon2017centroidal} is defined as,

\begin{equation}
	^{\mathcal{G[I]}}\dot{\comVar{L}}:= \begin{bmatrix}
						m \ \ddot{\comVar{x}}_{com} \\
						^{\mathcal{G[I]}}\dot{\comVar{L}}_{ang}
					   \end{bmatrix} = 
					   \sum_{i=1}^{N_B} \ ^{\mathcal{G[I]}}{\comVar{\underline f}_{\mathcal{L}_{i}}^x} + 
					   \begin{bmatrix}
					       m \ \comVar{g} \\
					       \comVar{0}_{3 \times 1}
					   \end{bmatrix}
\label{eq:HDE-stack-of-tasks-momentum-constraint_centroidal}
\end{equation}

where, $m$ is the mass of the model, $\comVar{g} = \begin{bmatrix} 0 & 0 & -9.81\end{bmatrix}^\top$ is the gravity vector expressed in the inertial frame, $^{\mathcal{G[I]}}\dot{\comVar{L}}_{ang}$ is the rate of change of angular momentum. Now, the above constraint when expressed in the base frame $\mathcal{B}$ of the system, it becomes

\begin{equation}
	^{\mathcal{B}}\comVar{X}^{*}_{\mathcal{G},[\mathcal{I}]} \ ( \ ^{\mathcal{G},[\mathcal{I}]}\dot{\comVar{L}} - \comVar{w} \ ) = \sum_{i = 1}^{n} \ ^{\mathcal{B}}\comVar{X}^{*}_{\mathcal{G},[\mathcal{I}]} \ ^{\mathcal{G},[\mathcal{I}]}\comVar{f}_{\mathcal{L}_i}
	\label{eq:HDE-stack-of-tasks-momentum-constraint}
\end{equation}

The transformation matrix $^{\mathcal{B}}\comVar{X}_{\mathcal{G},[\mathcal{I}]}$ is computed as,

\begin{equation}
	^{\mathcal{B}}\comVar{X}_{\mathcal{G},[\mathcal{I}]} = \ ^{\mathcal{B}}\comVar{X}_{\mathcal{I}} \ ^{\mathcal{I}}\comVar{X}_{\mathcal{G},[\mathcal{I}]}
\end{equation}

where, 

\begin{equation}
	^{\mathcal{I}}\comVar{X}_{\mathcal{G},[\mathcal{I}]} = \begin{bmatrix} \comVar{I}_{3 \times 3} & \ \bm \skewOp(^{\mathcal{I}}\comVar{p}_{com}) \\ \comVar{0}_{3 \times 3} & \comVar{I}_{3 \times 3} \end{bmatrix}
\end{equation}

and, $^{\mathcal{I}}\comVar{p}_{com}$ is the position of the system center of mass expressed with respect to the inertial frame.

Eq.~\eqref{eq:HDE-stack-of-tasks-momentum-constraint} can be further represented in a compact form as,

\begin{equation}
	 ^{\mathcal{B}}\hat{\dot{\comVar{L}}} = \sum_{i = 1}^{n} \ ^{\mathcal{B}}\comVar{X}^{*}_{\mathcal{G},[\mathcal{I}]} \ ^{\mathcal{G},[\mathcal{I}]}\comVar{f}_{\mathcal{L}_i}
\end{equation}

The acceleration of center of mass is computed using multibody kinematics library iDynTree~\cite{nori2015}. However, the rate of change of angular momentum is set to $\comVar{0}_3$ as at the moment this quantity cannot be measured or estimated directly from a human subject. Furthermore, rather than considering the wrench on all the links, we consider only the feet and hand links as of interest and the new constraint simplifies to,

\begin{equation}
^{\mathcal{B}}\hat{\dot{\comVar{L}}} = \ ^{\mathcal{B}}\comVar{X}^{*}_{\mathcal{LF}} \ ^{\mathcal{LF}}\comVar{f}_{LF} \ + ^{\mathcal{B}}\comVar{X}^{*}_{\mathcal{RF}} \ ^{\mathcal{RF}}\comVar{f}_{RF} \ + ^{\mathcal{B}}\comVar{X}^{*}_{\mathcal{LH}} \ ^{\mathcal{LH}}\comVar{f}_{LH} \ + ^{\mathcal{B}}\comVar{X}^{*}_{\mathcal{RH}} \ ^{\mathcal{RH}}\comVar{f}_{RH}
\end{equation}

Where, $LF$ correspond to \emph{LeftFoot}, $RF$ corresponds to \emph{RightFoot}, $LH$ corresponds to \emph{LeftHand}, and $RH$ corresponds to \emph{RightHand}. Now, the estimation problem is separated as a two tasks problem.

\subsubsection{First Task}
\label{sec:HDE-stack-of-tasks-first-task}

The dynamic variables vector for the first task $\comVar{d}^{'}$ contains only the link external 6D force variables i.e.,

\begin{subequations}
	\begin{eqnarray}
	\comVar{d}^{'} &=& \begin{bmatrix}
	{\comVar{\underline f}_{0}^x} &
	{\comVar{\underline f}_{1}^x} &
	\hdots &
	{\comVar{\underline f}_{N_B-1}^x}
	\end{bmatrix}^\top
	\in \mathbb R^{6N_B} \label{task1_d}
	\end{eqnarray}
\end{subequations}

Similarly, the measurement vector of the first task $\comVar{y}^{'}$ is updated as,

\begin{subequations}
	\begin{eqnarray}
	\comVar{y}^{'} &=& \begin{bmatrix}
	{\comVar{\underline f}_{0}^{x (mes)}} &
	{\comVar{\underline f}_{1}^{x (mes)}} &
	\hdots &
	{\comVar{\underline f}_{N_B-1}^{x (mes)}} &
	^{\mathcal{B}}\hat{\dot{\comVar{L}}}
	\end{bmatrix}^\top
	\in \mathbb R^{6N_B + 6} \label{task1_y}
	\end{eqnarray}
\end{subequations}

Now, the new first set of measurements equation becomes,

\begin{eqnarray} \label{eq:task1-systemEq}
	\comVar{Y}^{'} \comVar{d}^{'} + \comVar{b}^{'}_Y = \comVar{y}^{'} \qquad
\end{eqnarray}

The maximum a-posteriori solution of the first task contains the estimates of the external 6D forces on all the links denoted as,

\begin{subequations}
	\begin{eqnarray}
	\comVar{d}^{ '(\mbox{\tiny{MAP}}) } &=& \begin{bmatrix}
	{\comVar{\underline f}_{0}^{x (est)}} &
	{\comVar{\underline f}_{1}^{x (est)}} &
	\hdots &
	{\comVar{\underline f}_{N_B-1}^{x (est)}}
	\end{bmatrix}^\top
	\in \mathbb R^{6N_B} \label{task1_d_map_solution}
	\end{eqnarray}
\end{subequations}

Coming to the choice of the covariances, the measurement covariance ${\bm {\Sigma}}_{y^{'}}$ for the rate of change of momentum~\eqref{eq:HDE-stack-of-tasks-momentum-constraint} is kept to a low value to place a high trust on these measurements. The measurement covariance ${\bm {\Sigma}}_{y^{'}}$ for the feet is kept low and for hands it is kept high. For covariance related to the dynamics variables ${\bm {\Sigma}}_{d^{'}}$, it is kept low for the feet and high for the hands. Such a choice guides the estimation algorithm to trust the forces and moments measured at the feet through the \emph{ftShoes} and ensure the estimates to be close to the measurements. On the other hand, the forces and moments measurement of $\comVar{0}_6$ at the hands is not given a high trust and the estimator is free to change the estimates guided by the new constraint of the net external forces in the measurement equation directed by Eq.~\eqref{eq:HDE-stack-of-tasks-momentum-constraint}. This task considers only the estimation of external link wrenches and it is decoupled from the internal wrench estimation. 

\subsubsection{Second Task}
\label{sec:HDE-stack-of-tasks-second-task}

The second task equations are exactly as described in Section~\ref{sec:recourse-on-stochastic-dynamics-estimation}, except for the measurements vector. The measurements vector for the second task $\comVar{y}^{''}$ is updated with the external 6D force \emph{estimates} Eq.~\eqref{task1_d_map_solution} from the first task instead of using the link external 6D force measurements directly.

A proper choice of covariances ${\bm{\Sigma}}_{y^{'}}$ and ${\bm{\Sigma}}_{d^{'}}$ in the first task results in the estimates that match the measurements of the forces and moments at the feet. On the other hand, the estimates of the forces and moments at the hands vary. Given these reasonable estimates from the first task will become the measurements for the second task, the measurement covariance ${\bm{\Sigma}}_{y}$ is kept low and the covariance ${\bm{\Sigma}}_{d^{'}}$ is also kept to a low value. This choice translates to placing high trust on the force and moment values passed as measurements for the second task and the estimator does not change them further during the second task.
 
\subsubsection{Optimization Problem}

The new optimization problem for the stack of tasks stochastic estimation is formulated as,

\begin{subequations}
	\begin{eqnarray}
	\comVar{d}^{\mbox{\tiny{MAP}}} & = & \arg\min_d 
	( ||\comVar{D} \comVar{d} + \comVar{b}_D||^2_{\comVar{\Sigma}_D^{-1} } + 
	||\comVar{Y} \comVar{d} + \comVar{b}_Y - \comVar{y}^{''}||^2_{\comVar{\Sigma}_y^{-1} } + 
	|| \comVar{d} - \comVar{\mu}_d||^2_{\comVar{\Sigma}_d^{-1} }) \notag     \\
	& s.t.& \nonumber \\
	&& \comVar{d}^{' \mbox{\tiny{MAP}}} = \arg\min_{d^{'}} 
	(|| \comVar{Y}^{'} \comVar{d}^{'} + \comVar{b}^{'}_Y - \comVar{y}^{'}||^2_{\comVar{\Sigma}_{y^{'}}^{-1} } + || \comVar{d}^{'} - \comVar{\mu}_{d^{'}}||^2_{\comVar{\Sigma}_{d^{'}}^{-1} })
	\end{eqnarray}
	\label{eq:sot-map-solution-optimization-problem}
\end{subequations}

\subsection{Human Joint Torques Visualization}
\label{sec:HDE-joint-torque-estimates}

Human joint torques are computed through the joint 6D force estimates that are a part of the dynamic variables solution vector $\comVar{d}^{\mbox{\tiny{MAP}}}$. At the current stage of our research we do not validate the joint torque estimates through any sensory measurements. Each human joint in the model is composed of up to three one degrees of freedom joint. The effort of each joint is computed as,

\begin{equation}
\label{eq:human-joint-effort}
{\tau}_{effort} = \sqrt{{\tau}_x^2 + {\tau}_y^2 + {\tau}_z^2}
\end{equation}

The human joint effort is visualized with the help of Rviz, a 3D visualizer popular in robotics community for displaying sensor data and state information using ROS~\cite{ros2018}. A sphere that ranges from \emph{green} color to \emph{red} color indicates the joint efforts from $[ minimum \ intensity, maximum \ intensity ]$ value. Depending on the task, the joint effort reflecting the articular stress changes for various joints of the human model. 

\subsection{Qualitative \& Quantitative Results}

We performed initial tests with human lifting a heavy object as shown in Fig.~\ref{fig:subject_weight_lefting} after modifying the estimation problem with the new constraint on the net external forces in the measurement equation directed by Eq.~\eqref{eq:HDE-stack-of-tasks-momentum-constraint}. The feet external force estimation (shown in blue arrows) and the joint torques estimation while the human is in neutral N-pose without lifting any heavy object is shown in Fig.~\ref{fig:rviz-full-HDE-without-weight}. On the other hand, under the influence of a heavy object lifted by the human, external force estimation at the hands is updated as shown in Fig.~\ref{fig:rviz-full-HDE-with-weight} and the associated joint torques are higher as expected. Also, note that in both the cases the feet force measurements (yellow arrows) and the estimates (blue arrows) are matching. This is a direct consequence of keeping a low value for the measurement covariance during the first task ${\bm {\Sigma}}_{y^{'}}$ and the second task ${\bm {\Sigma}}_{y}$ related to the feet 6D force which translates to having higher trust in the measurements from the \emph{ftShoes}. 

An offline data analysis is carried with a reduced human model (48 DoF~Chapter~\ref{cha:human-modeling}) is carried out and the Fig.~\ref{fig:HDE_force_estimation_fext_comparison} highlights the comparison between the measurements and the estimates of forces and moments at the feet and the hands. Under the influence of the weight of a heavy object, the ground reaction force measurements increases as seen on $f_z^x$ plot changes of the feet and accordingly, the estimates at the hands increase as seen on on $f_z^x$ plot changes of the hands. This change in the estimates at the hands is guided by choosing a high measurement covariance value ${\bm {\Sigma}}_{y^{'}}$ and a high covariance for the dynamic variables ${\bm {\Sigma}}_{d^{'}}$ during the first task. Such a choice of covariances translates to not trusting the measurement of $\comVar{0}_6$ for the forces and moments at the hands and allowing the estimates to vary through the new constraint on the net external forces in the measurement equation directed by Eq.~\eqref{eq:HDE-stack-of-tasks-momentum-constraint}. 

Although we do not validate the joint torque estimation through other sensory measurements for articular stress, the validation of the measurements with the estimates of the forces and moments at the feet presented in Fig.~\ref{fig:HDE_force_estimation_fext_comparison} is a guide to trust the joint torque estimates. Torque estimation changes at the joints of arms and legs as shown in Fig.~\ref{fig:HDE_force_estimation_arm_joint_torque_estimates} and Fig.~\ref{fig:HDE_force_estimation_leg_joint_torque_estimates} respectively.

\begin{figure}[H]
	\centering
	\begin{subfigure}[b]{0.5\textwidth}
		\centering
		\includegraphics[width=0.45\textwidth]{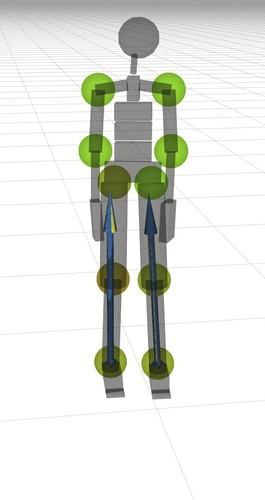}
		\caption{}
		\label{fig:rviz-full-HDE-without-weight}
	\end{subfigure}%
	\begin{subfigure}[b]{0.5\textwidth}
		\centering
		\includegraphics[width=0.45\textwidth]{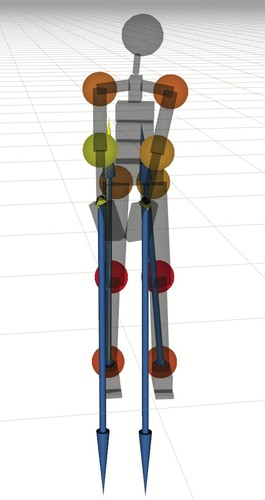}
		\caption{}
		\label{fig:rviz-full-HDE-with-weight}
	\end{subfigure}
	\caption{Rviz visualization of external force and joint torques estimates in neutral N-pose (\subref{fig:rviz-full-HDE-without-weight}) and while lifting a heavy object (\subref{fig:rviz-full-HDE-with-weight})}
	\label{rviz-full-HDE-weight-lifting-forces}
\end{figure}

\begin{figure}[H]
	\centering
	\begin{subfigure}[b]{\textwidth}
		\centering
		\includegraphics[width=.33\textwidth]{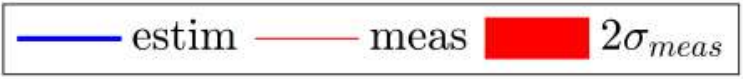}
	\end{subfigure}
	\begin{subfigure}[b]{\textwidth}
		\centering
		\includegraphics[width=0.875\textwidth]{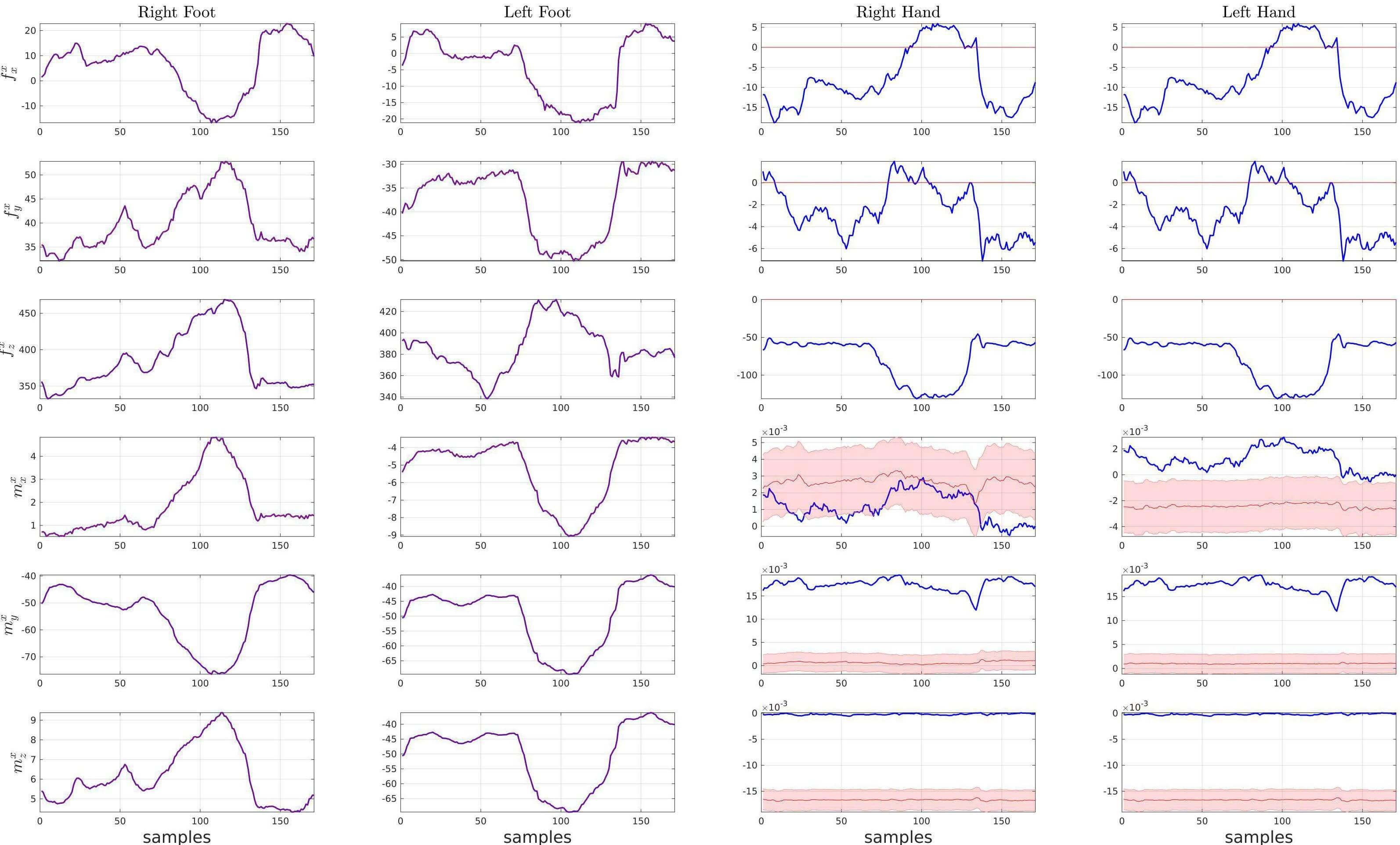}
	\end{subfigure}
	\caption{Comparison of the 6D force measurements (mean and standard deviation, in red) and estimates (mean, in blue) at the feet and the hands; The measurements at the hand links are zero while the estimates increase under the influence of the weight of a heavy object held by the human}
	\label{fig:HDE_force_estimation_fext_comparison}
\end{figure}

\begin{figure}[H]
	\centering
	\begin{subfigure}[b]{\textwidth}
		\centering
		\includegraphics[width=\textwidth]{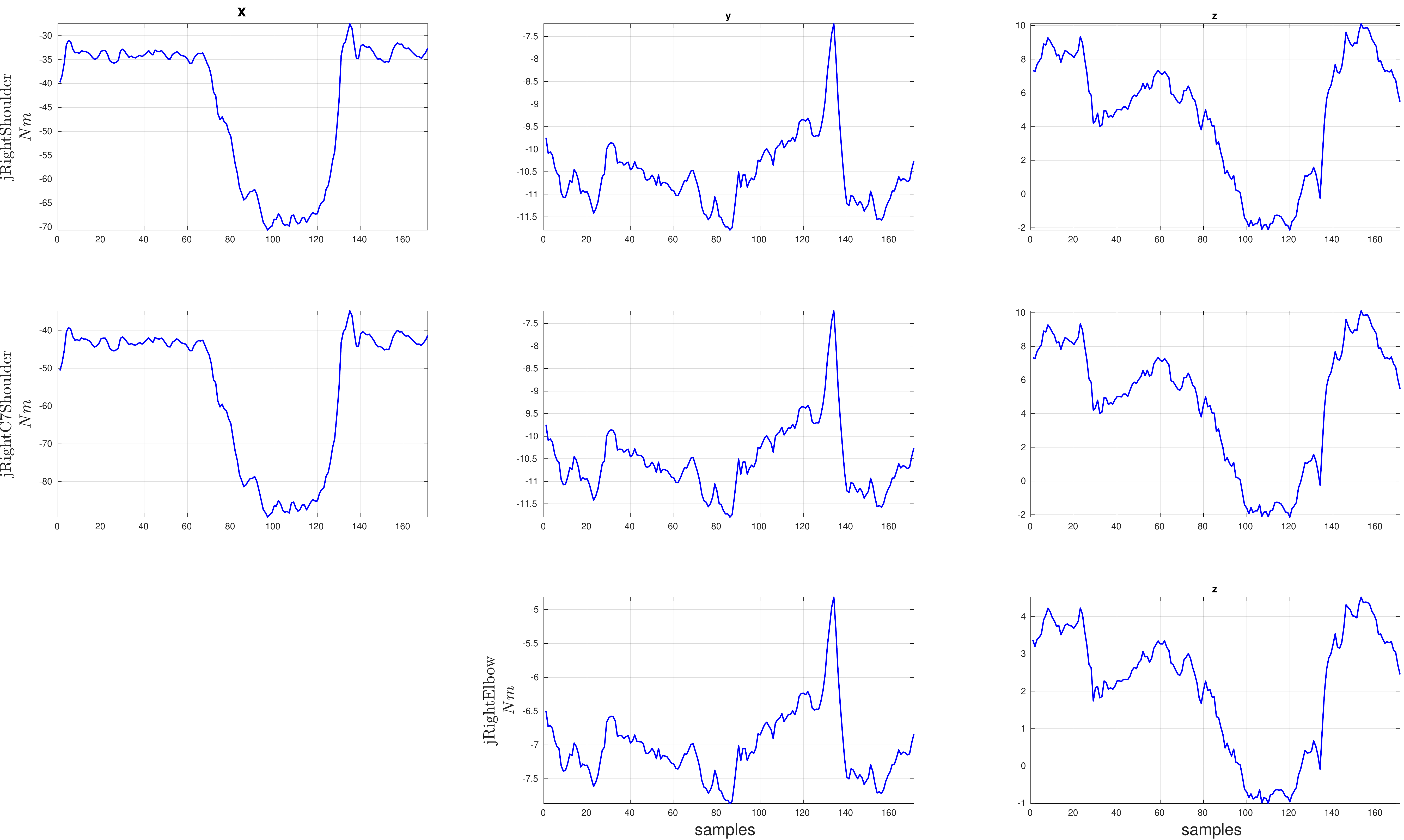}
		\label{fig:HDE_force_estimation_right_arm_joint_torque_estimates}
		\vspace{0.2cm}
	\end{subfigure}
	\begin{subfigure}[b]{\textwidth}
		\centering
		\includegraphics[width=\textwidth]{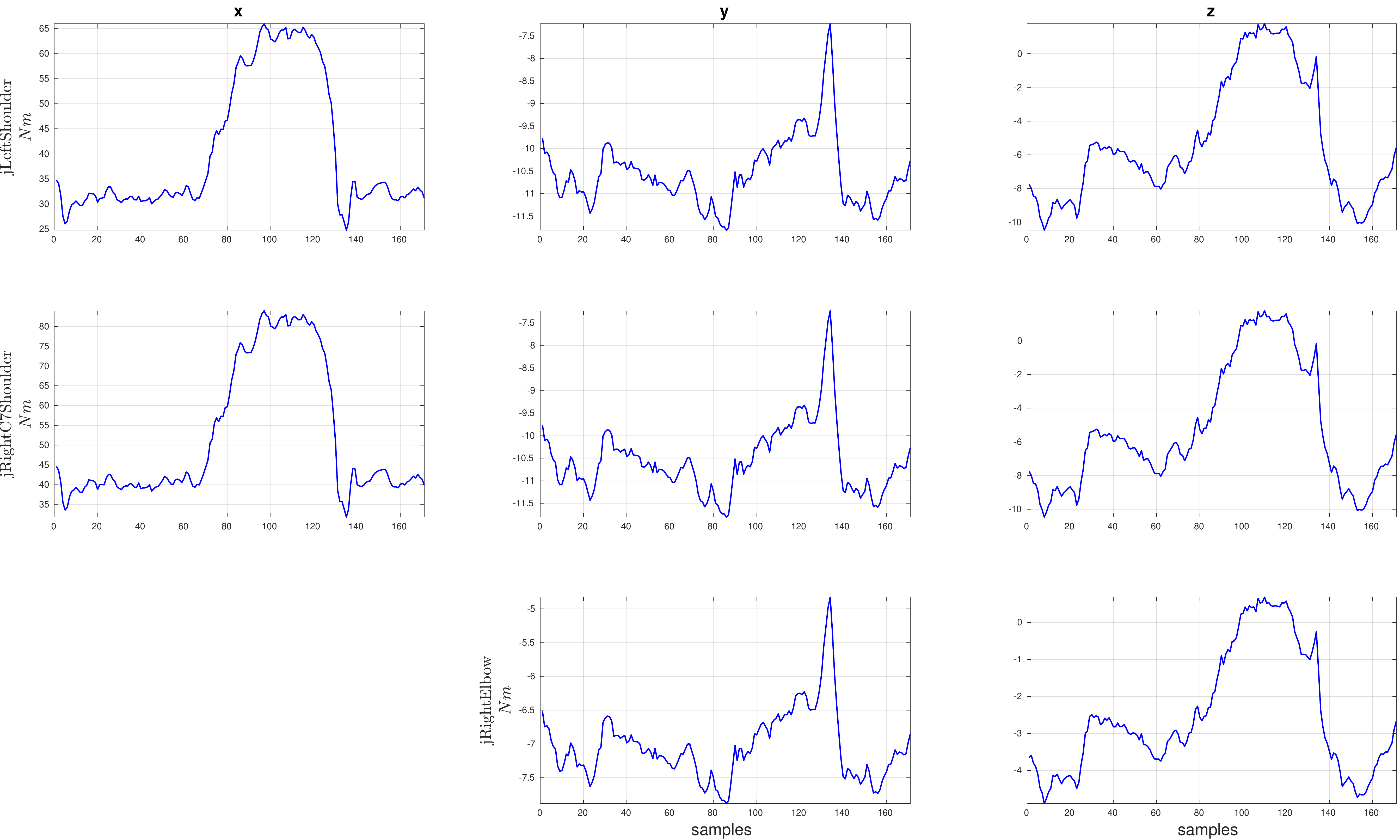}
		\label{fig:HDE_force_estimation_left_arm_joint_torque_estimates}
	\end{subfigure}
	\caption{Torques estimation changes of the arm joints under the influence of the weight of a heavy object}
	\label{fig:HDE_force_estimation_arm_joint_torque_estimates}
\end{figure}

\begin{figure}[H]
	\centering
	\begin{subfigure}[b]{\textwidth}
		\centering
		\includegraphics[width=\textwidth]{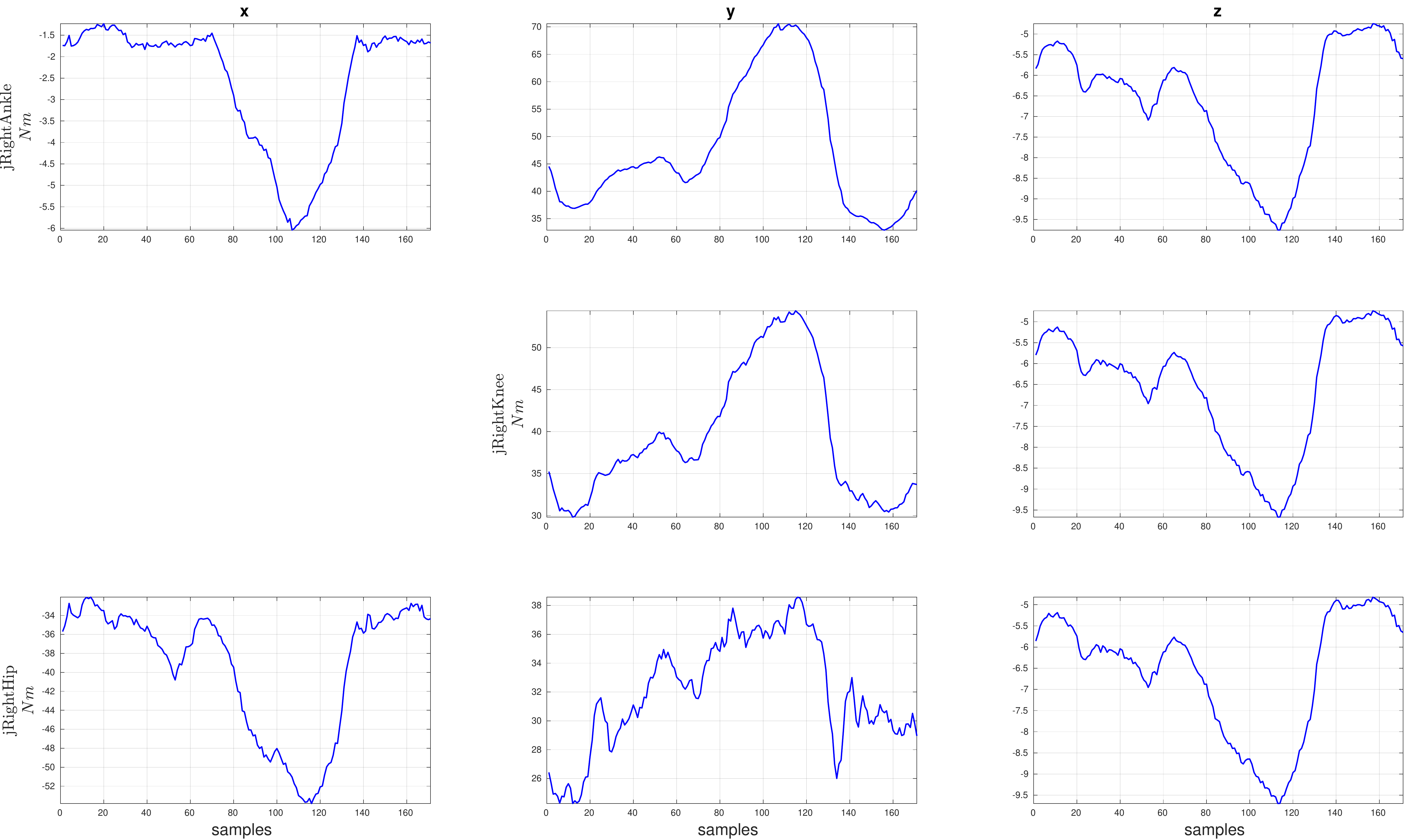}
		\label{fig:HDE_force_estimation_right_leg_joint_torque_estimates}
		\vspace{0.2cm}
	\end{subfigure}
	\begin{subfigure}[b]{\textwidth}
			\centering
		\includegraphics[width=\textwidth]{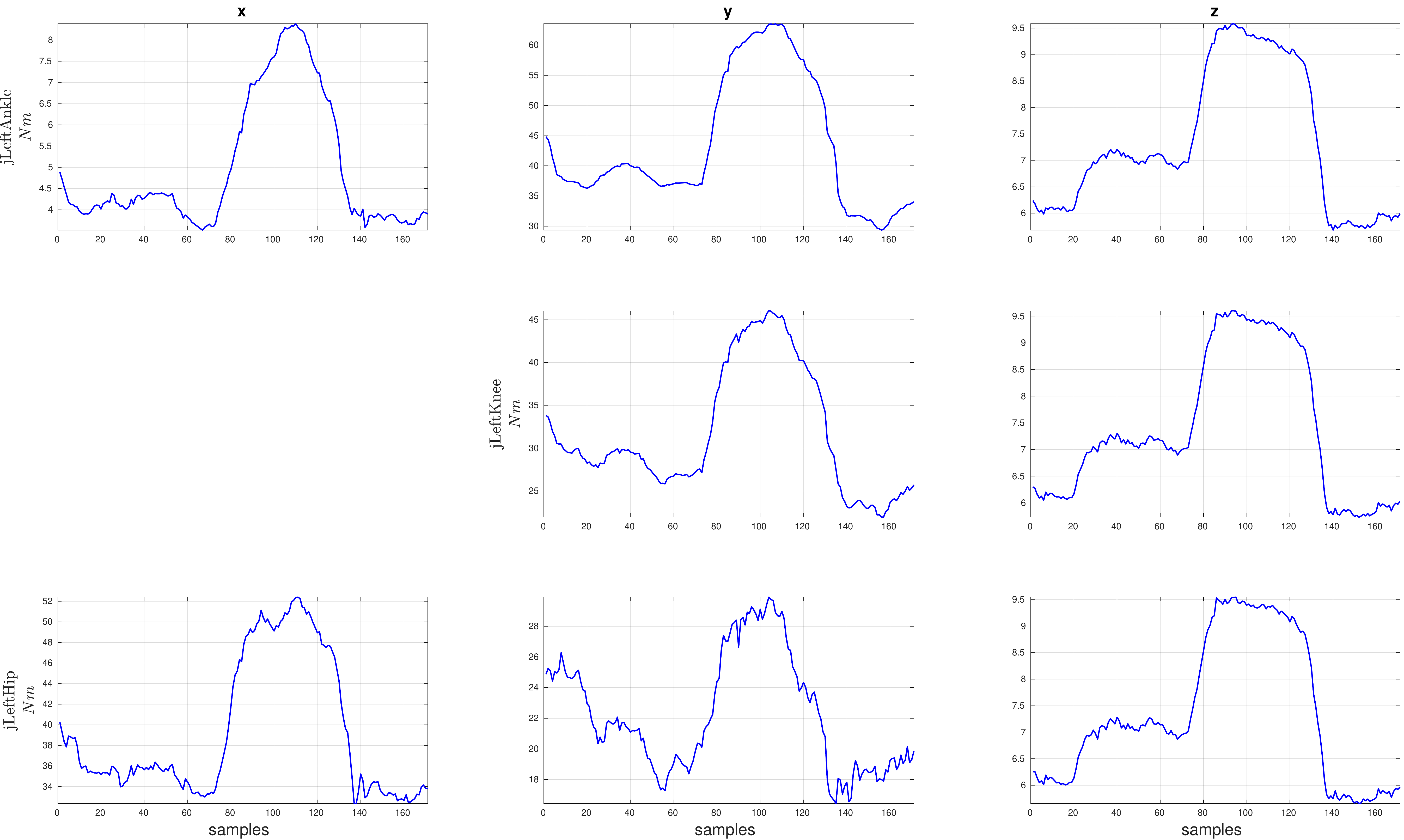}
		\label{fig:HDE_force_estimation_left_leg_joint_torque_estimates}
	\end{subfigure}
	\caption{Torques estimation changes of the leg joints under the influence of the weight of a heavy object}
	\label{fig:HDE_force_estimation_leg_joint_torque_estimates}
\end{figure}

\section{Floating Base Dynamics Estimation: \\ Experimental Validation}

The updated formulation for the floating base human dynamics estimation presented in section~\ref{sec:recourse-on-stochastic-dynamics-estimation} is validated through various experiments where a healthy male subject was asked to perform a set of tasks as listed in Table \ref{table:tasks_description}.

\begin{figure}[ht!]
  \centering
    \includegraphics[width=.6\textwidth]{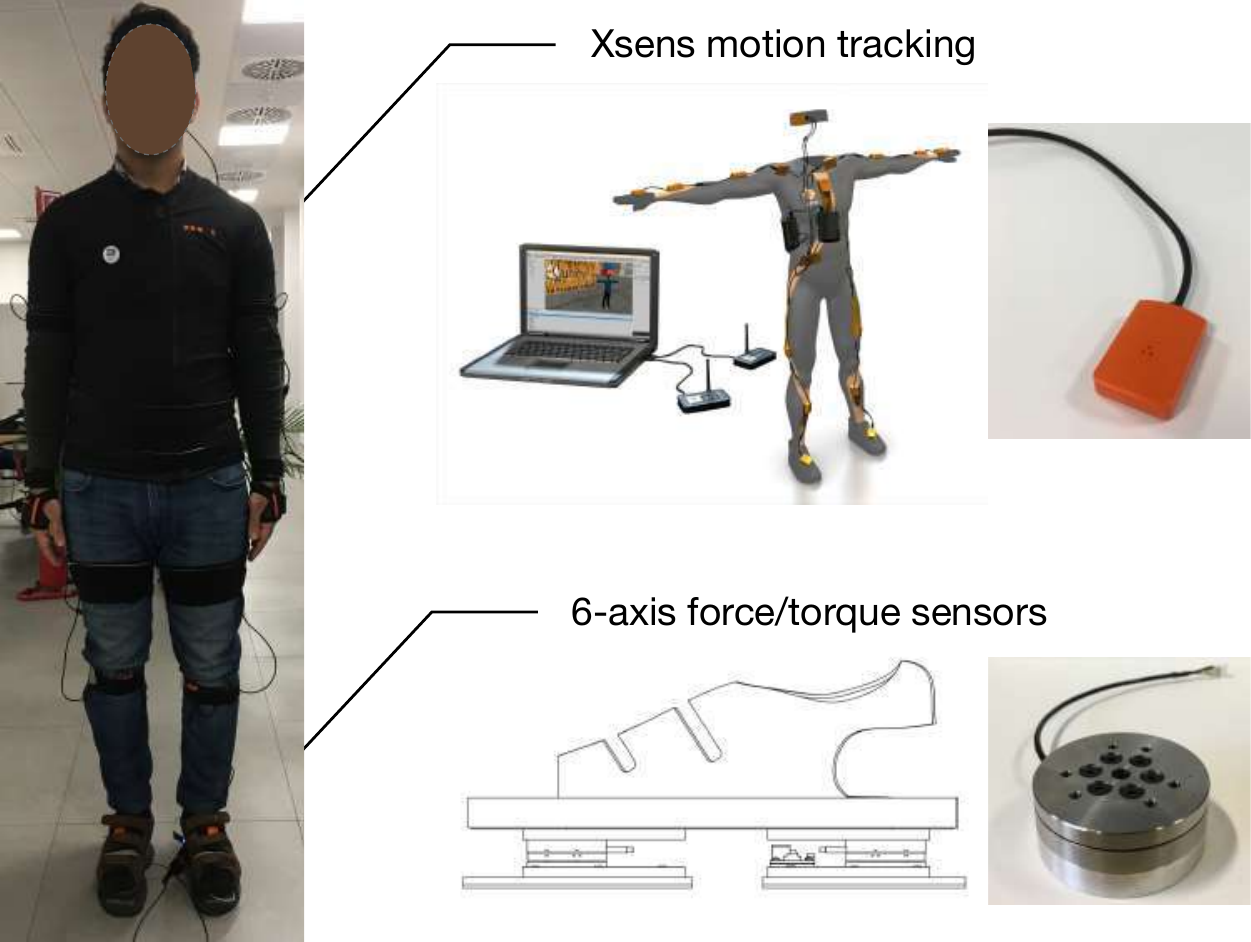}
  \caption{Subject equipped with the Xsens wearable motion tracking
   system and six axis force-torque shoes (\emph{ftShoes})}
  \label{fig:figs_suit_shoes_description}
\end{figure}

\begin{figure}[ht!]
  \centering
    \includegraphics[width=\textwidth]{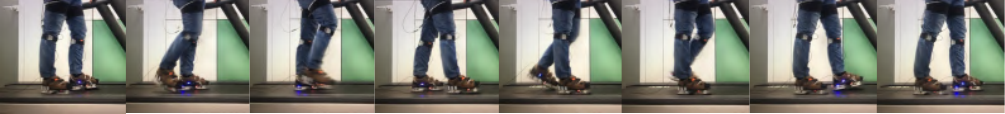}
  \caption{Task T4, Sequence $2$: walking on a treadmill}
  \label{fig:figs_walkingSequence}
\end{figure}

\begin{table*}[ht!]
\caption{Tasks performed for validation of floating base dynamics estimation}
\centering
\begin{tabular}{c l p{8cm}}
\toprule
\textbf{Task} & \textbf{Type}  & \textbf{Description}\\
\toprule
T1 & Static double support & Neutral pose, standing still\\
\midrule
T2 & Static right single support & Sequence $1$: static double support \\
   &                             & Sequence $2$: weight balancing on the right
    foot\\
\midrule
T3 & Static left single support & Sequence $1$: static double support \\
   &                            & Sequence $2$: weight balancing on the left
    foot\\
    \midrule
T4 & Static-walking-static &  Sequence $1$: static double support \\
   &                       &  Sequence $2$: walking on a treadmill (Figure
    \ref{fig:figs_walkingSequence})\\
   &                       &  Sequence $3$: static double support
\\
\bottomrule
\end{tabular}
\label{table:tasks_description}
\end{table*}

Data from the \emph{ftShoes} is analyzed to detect the feet contact switching based on a self-tuned threshold value for the force along the $z$-axis. The task representation with respect to the feet in contact with the ground is highlighted in Figure.~\ref{fig:figs_patternPerTask}.

\begin{figure}[ht!]
    \centering
    \includegraphics[width=\textwidth]{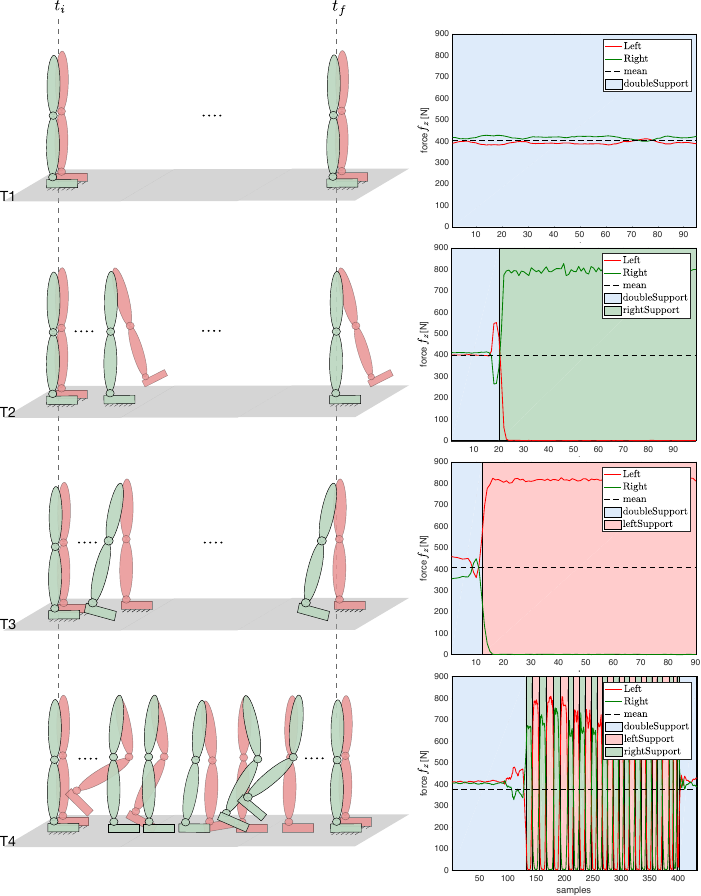}
    \caption{Tasks representation from initial time $t_i$ to final time $t_f$ (\emph{on the left}) and feet contact pattern classification (\emph {on the right})}
    \label{fig:figs_patternPerTask}
\end{figure}\textbf{}

The kinematic quantities are obtained through the human kinematics estimation presented in chapter~\ref{cha:human-kinematics-estimation}.

\subsection{Comparison between Measurements and Estimates}
\label{sec:HDE-Measurements-vs-Estimates}

The dynamic variables vector $\comVar{d}$ consists of link variables measurable via sensors i.e., linear proper sensor acceleration $\comVar{\alpha}_{lin}^g [\SI{}{\meter\per\second^2}]$, external force $\comVar{f}^x [\SI{}{\newton}]$ and external moment $\comVar{m}^x [\SI{}{\newton\meter}]$; and joint variables that are not measured via sensors i.e. joint 6D force $\comVar{f}$ and joint torques $\comVar{\tau}$. The proposed stochastic approach of solving for the human dynamics can be evaluated by comparing the variables that have sensor measurements with respect to their estimates. The validation has been performed along with a Root Mean Square Error (RMSE) metric for linear accelerations and external 6D force presented in Table~\ref{table:RMSE_comparisonMeasVSestim}. The covariance matrix associated with the sensor measurements is an important factor for the estimation problem. In this experimental analysis, covariances are chosen in a range $[10^{-6}$, $10^{-4}]$. A low covariance value reflects higher trust in the sensor measurements and the estimates are closer to the measurements leading to a lower RMSE for the associated variable.

\begin{table*}[ht!]
	\caption{RMSE analysis of the base linear proper sensor acceleration $\comVar{\alpha}_{lin}^g [\SI{}{\meter\per\second^2}]$, the external force $\comVar{f}^x [\SI{}{\newton}]$ and moment $\comVar{m}^x [\SI{}{\newton\meter}]$ floating-base algorithm estimations w.r.t. the measurements, for tasks T1, T2, T3 and T4.}
    \centering
    \resizebox{\linewidth}{!}{%
    \begin{tabular}{c l c c c c c c c c c}
        \toprule
        \textbf{Task} & \textbf{Link}  & \mcrot{1}{l}{15}{$\alpha_{lin,x}^g [\SI{}{\meter\per\second^2}]$} & \mcrot{1}{l}{15}{$\alpha_{lin,y}^g [\SI{}{\meter\per\second^2}]$} & \mcrot{1}{l}{15}{$\alpha_{lin,z}^g [\SI{}{\meter\per\second^2}]$} & \mcrot{1}{l}{15}{$f^x_x [\SI{}{\newton}]$} & \mcrot{1}{l}{15}{$f^x_y [\SI{}{\newton}]$} & \mcrot{1}{l}{15}{$f^x_z [\SI{}{\newton}]$} & \mcrot{1}{l}{15}{$m^x_x [\SI{}{\newton\meter}]$} & \mcrot{1}{l}{15}{$m^x_y [\SI{}{\newton\meter}]$} & \mcrot{1}{l}{15}{$ m^x_z [\SI{}{\newton\meter}]$}\\
        \toprule
        & Base (Pelvis) & $0.008$  & $0.014$ & $0.002$ & - & - & - & - & - & -\\
        T1 & Left foot & - & - & - & $0.050$ & $0.030$ & $2.514e^{-4}$ & $8.354e^{-4}$ & $0.002$  & $2.079e^{-5}$ \\
        & Right foot & - & - & - & $0.031$ & $0.048$ & $0.004$ & $0.0015$  & $0.001$  & $1.038e^{-5}$ \\
        \midrule
        & Base (Pelvis) & $0.003$  & $0.027$ & $0.018$ & - & - & - & - & - & -\\
        T2 & Left foot & - & - & - & $0.153$ & $0.071$ & $0.009$ & $0.002$ & $4.729e^{-4}$  & $1.683e^{-4}$ \\
        & Right foot & - & - & - & $0.013$ & $0.074$ & $0.005$ & $0.002$  & $4.247e^{-4}$  & $4.331e^{-5}$ \\
        \midrule
        & Base (Pelvis) & $0.012$  & $0.007$ & $0.007$ & - & - & - & - & - & -\\
        T3 & Left foot & - & - & - & $0.075$ & $0.019$ & $0.002$ & $5.968e^{-4}$ &
        $0.002$  & $5.218e^{-5}$ \\
        & Right foot & - & - & - & $0.065$ & $0.018$ & $0.003$ & $6.096e^{-4}$ & $0.002$  & $1.196e^{-4}$ \\
        \midrule
        & Base (Pelvis) & $0.011$  & $0.018$ & $0.033$ & - & - & - & - & - & -\\
        T4 & Left foot & - & - & - & $0.089$ & $0.056$ & $0.012$ & $0.002$ & $0.003$  & $1.322e^{-4}$ \\
        & Right foot & - & - & - & $0.084$ & $0.056$ & $0.019$ & $0.002$ & $0.003$  & $9.737e^{-5}$ \\
        \bottomrule
    \end{tabular}}
    \label{table:RMSE_comparisonMeasVSestim}
\end{table*}

The comparison between the base linear proper sensor acceleration $\comVar{\alpha}_{lin}^g [\SI{}{\meter\per\second^2}]$ measurement (mean and standard deviation, in red) and the estimation (mean, in blue) for tasks considered tasks is shown in Figure.~\ref{fig:figs_measVSestim_pelvisAccLin_total}. Similarly, the same comparison for the external force $\comVar{f}^x [\SI{}{\newton}]$ and external moment $\comVar{m}^x [\SI{}{\newton\meter}]$  is shown in Figures.~\ref{measVSestim_6Dfext_total}\subref{fig:figs_measVSestim_leftFoot6Dfext_total}
 and \ref{measVSestim_6Dfext_total}\subref{fig:figs_measVSestim_rightFoot6Dfext_total} for the left and right foot, respectively.
 
 \vspace{-0.5cm}

\begin{figure}[H]
	\centering
	\begin{subfigure}[b]{\textwidth}
		\centering
		\includegraphics[width=.33\textwidth]{images/legend}
	\end{subfigure}
	\begin{subfigure}[b]{\textwidth}
		\centering
		\includegraphics[width=\textwidth]{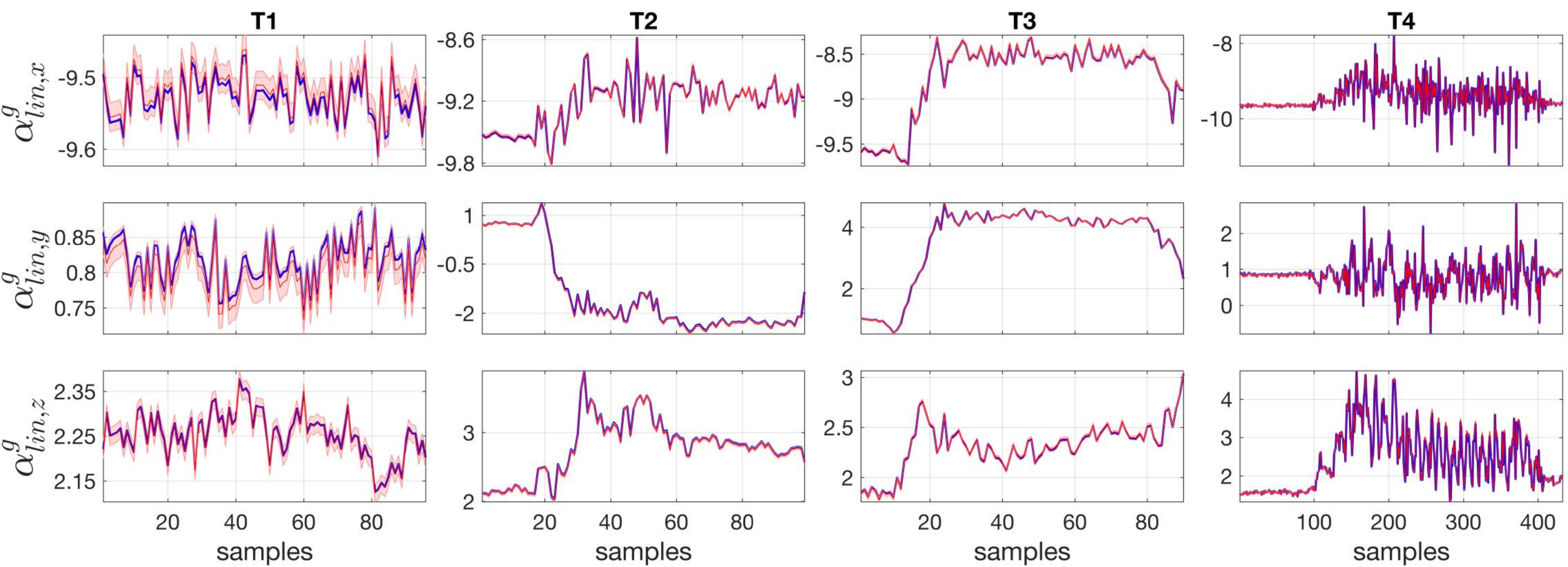}
	\end{subfigure}
	\caption{The base linear proper sensor acceleration $\comVar{\alpha}_{lin}^g [\SI{}{\meter\per\second^2}]$ comparison between measurement (mean and standard deviation, in red) and floating-base estimation (mean, in blue), for tasks T1, T2, T3 and T4}
	\label{fig:figs_measVSestim_pelvisAccLin_total}
\end{figure}

\newpage

\begin{figure}[H]
    \centering
    \begin{subfigure}[b]{\textwidth}
    	\centering
    	\includegraphics[width=.33\textwidth]{images/legend}
    \end{subfigure}
    \begin{subfigure}[b]{\textwidth}
        \centering
        \includegraphics[width=0.92\textwidth]{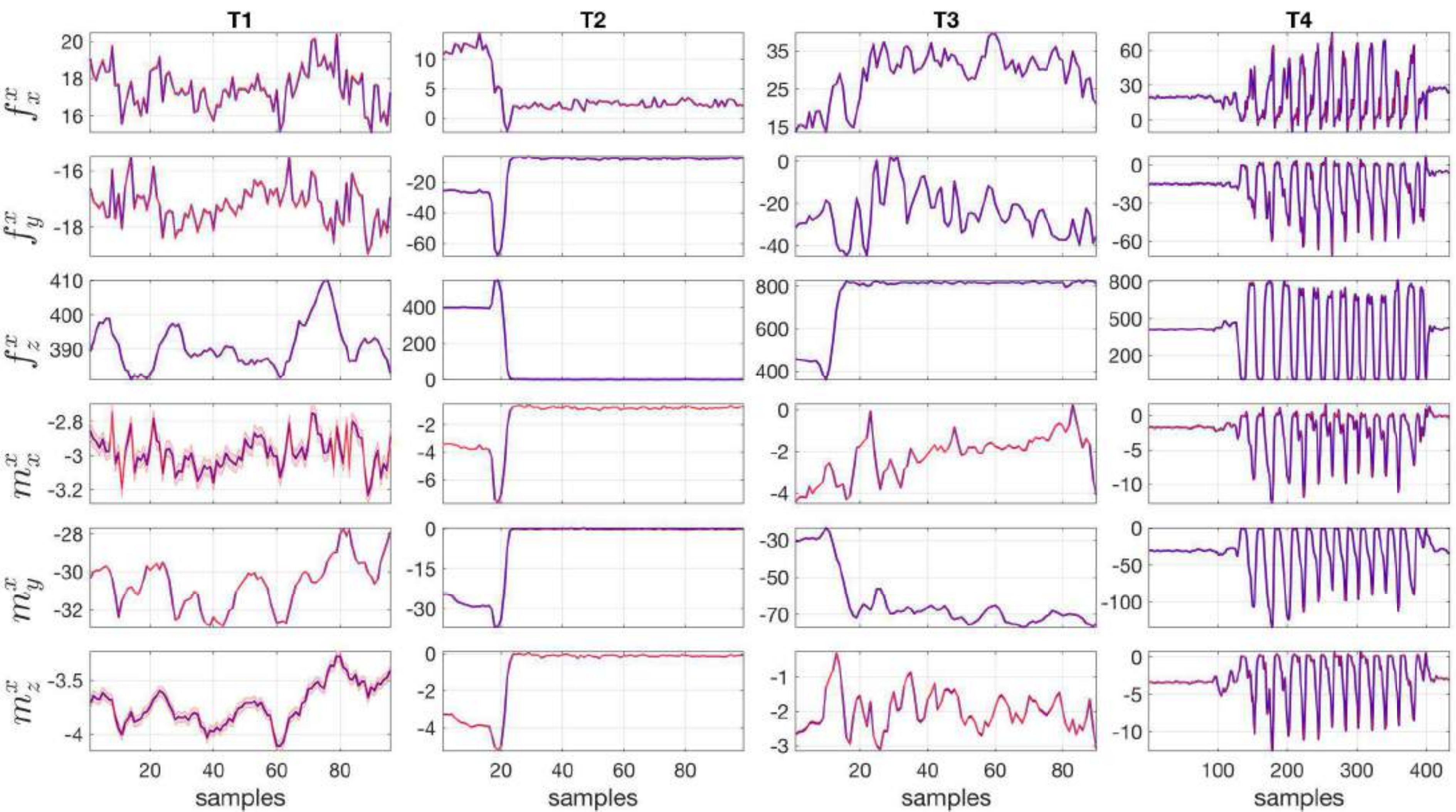}
        \caption{}
        \vspace{0.2cm}
        \label{fig:figs_measVSestim_leftFoot6Dfext_total}
         \vspace{0.05cm}
    \end{subfigure}
    \begin{subfigure}[b]{\textwidth}
        \centering
        \includegraphics[width=0.9\textwidth]{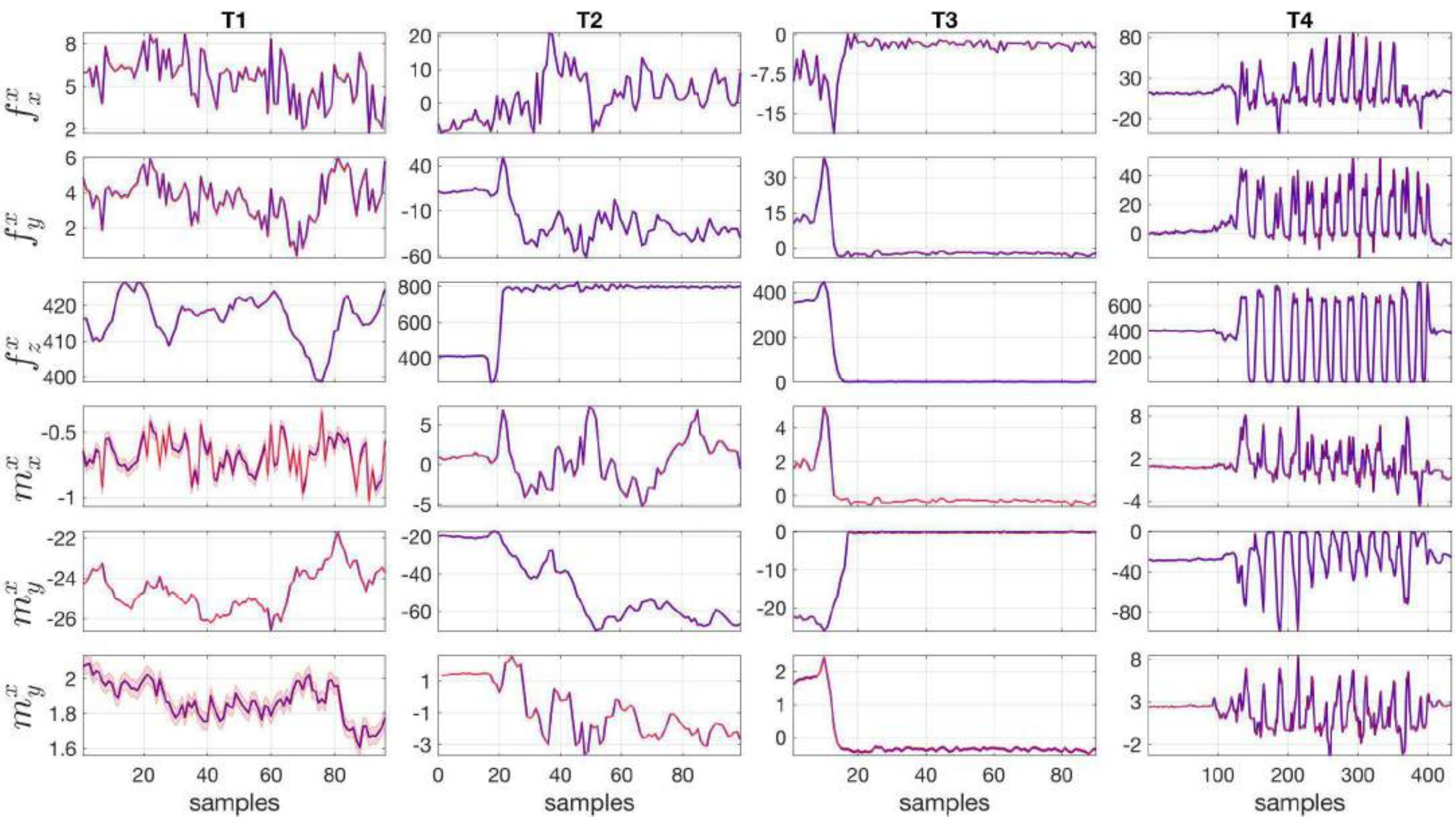}
        \caption{}
        \label{fig:figs_measVSestim_rightFoot6Dfext_total}
    \end{subfigure}
    \caption{The external force $\comVar{f}^x [\SI{}{\newton}]$  and external moment $\comVar{m}^x [\SI{}{\newton\meter}]$ comparison between measurement (mean and standard deviation, in red) and estimation via floating-base estimation (mean, in blue) for left (a) and right foot (b), for tasks T1, T2, T3 and T4}
\label{measVSestim_6Dfext_total}
\end{figure}

\subsection{Human Joint Torques Estimation during Walking}

Joint efforts during the balancing on single foot related to Tasks T2 (right foot \ref{fig:rviz_right_foot_balancing}) and T3 (left foot \ref{fig:rviz_left_foot_balancing}) are highlighted in Figure.~\ref{fig:rviz_balancing_efforts}. Similarly, joint effort changes during different phases of walking is shown in Figure.~\ref{fig:rviz_walking}. Furthermore, the ground reaction force measurements (yellow arrows) and the estimates (blue arrows) are shown. The joint torque estimates along with the joint angles for right leg (\subref{fig:rightLeg}) and left leg (\subref{fig:leftLeg}) during the walking Task T4 is highlighted in Figure.~\ref{fig:MAP_floatingTauVsAngles_T4}.

\begin{figure}[H]
	\centering
	\begin{subfigure}[b]{0.55\textwidth}
		\centering
		\includegraphics[width=\textwidth]{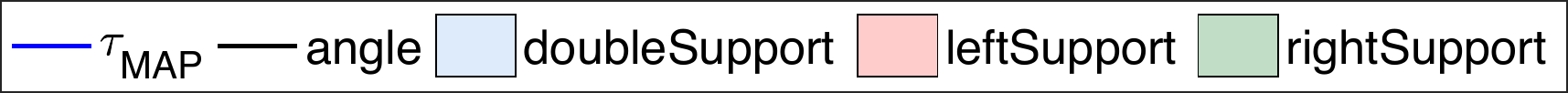}
	\end{subfigure}
	\begin{subfigure}[b]{\textwidth}
		\centering
		\includegraphics[width=0.825\textwidth]{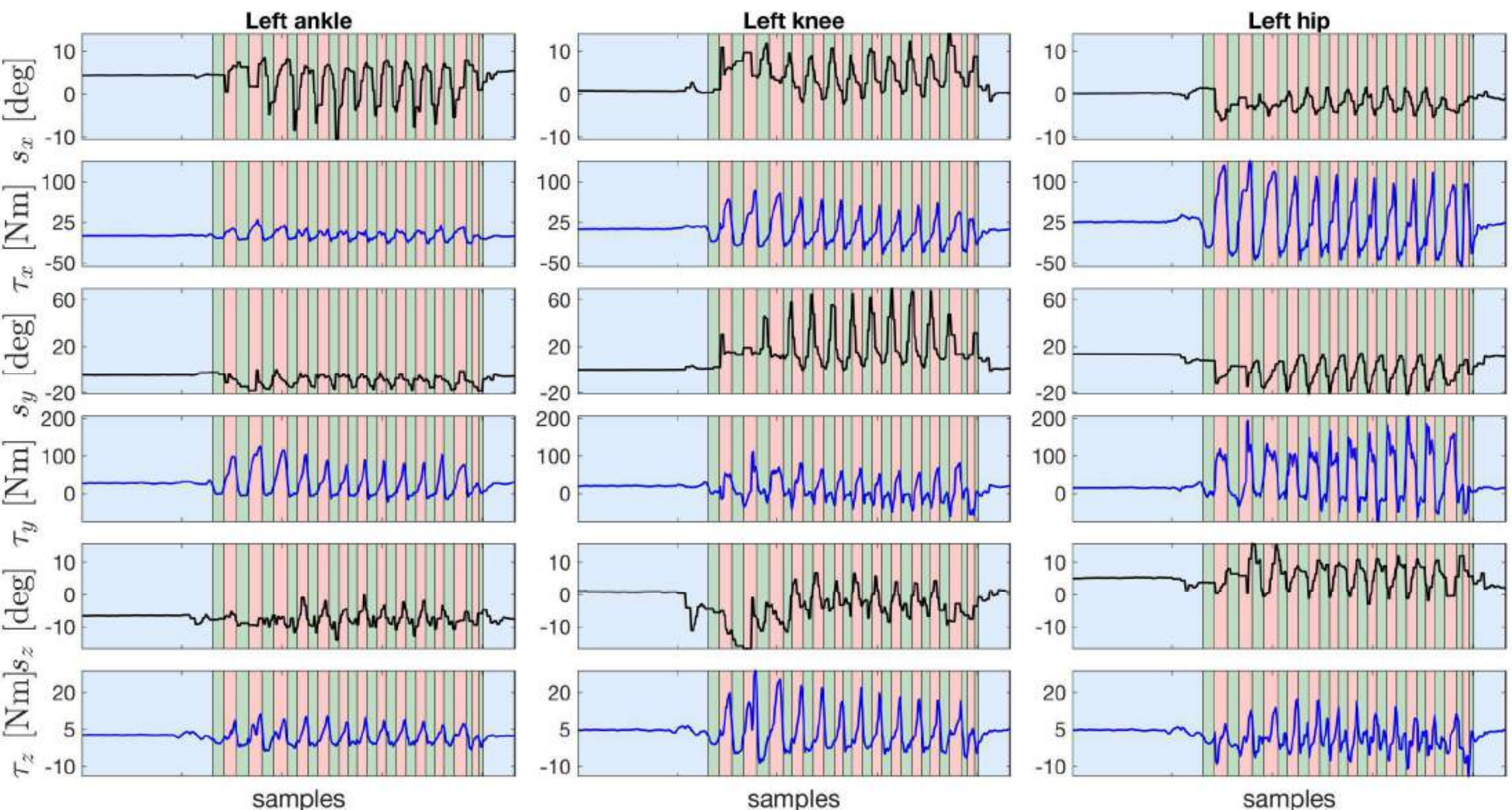}
		\caption{}
		\label{fig:leftLeg}
	\end{subfigure}
	\begin{subfigure}[b]{\textwidth}
		\centering
		\includegraphics[width=0.825\textwidth]{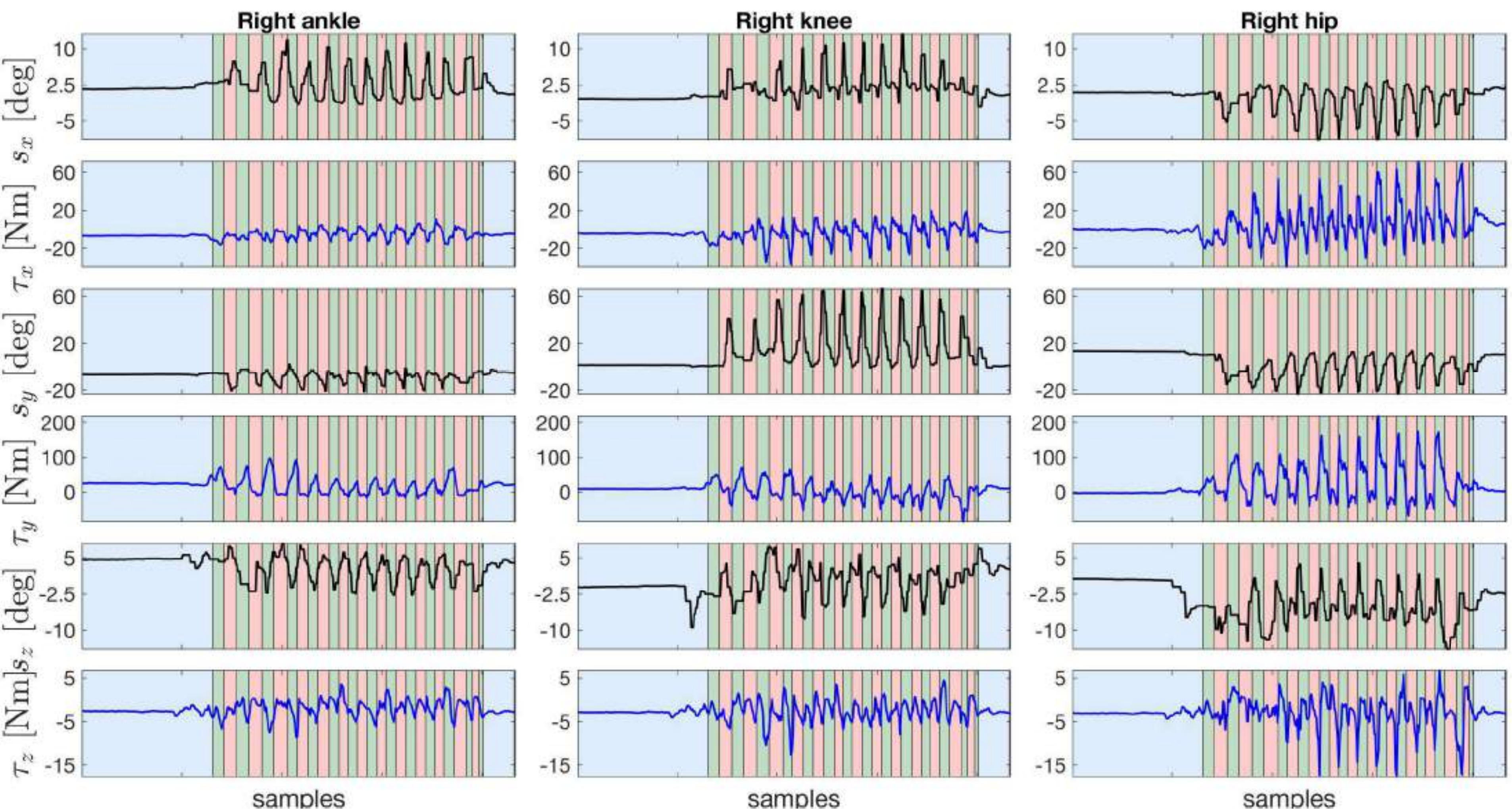}
		\caption{}
		\label{fig:rightLeg}
	\end{subfigure}
	\caption{Joint torques $\comVar{\tau} [\SI{}{\newton\meter}]$ changes (in blue) during Task T4 walking along with the joint angles (in black) of right leg (a) and left leg (b).}
	\label{fig:MAP_floatingTauVsAngles_T4}
\end{figure}

\begin{figure}[H]
  \centering
  \begin{subfigure}[b]{0.45\textwidth}
      \centering
      \includegraphics[scale=0.515]{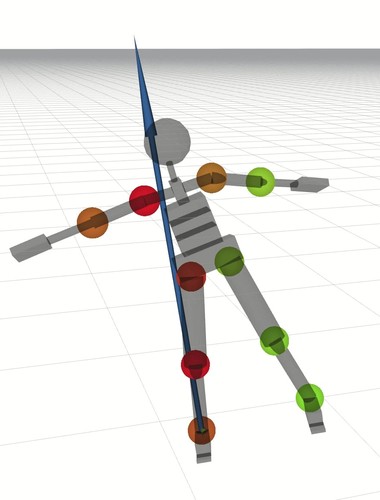}
      \caption{}
      \label{fig:rviz_right_foot_balancing}
  \end{subfigure}%
  \begin{subfigure}[b]{0.45\textwidth}
      \centering
      \includegraphics[scale=0.515]{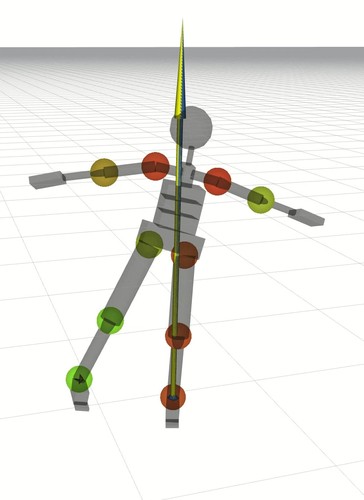}
      \caption{}
      \label{fig:rviz_left_foot_balancing}
  \end{subfigure}
  \begin{subfigure}[b]{0.35\textwidth}
      \centering
      \vspace{-1.5cm}
      \includegraphics[scale=0.25]{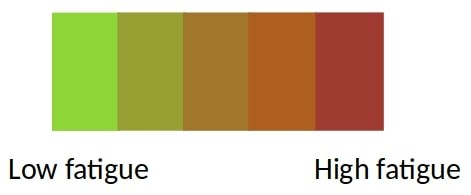}
      \label{fig:fatigue_legend_rotated}
  \end{subfigure}
  \caption{Rviz visualization of joint efforts for Task T2 balancing on the right foot (a) and Task T3 balancing on the left foot (b)}
  \label{fig:rviz_balancing_efforts}
\end{figure}

\begin{figure}[H]
    \centering
    \includegraphics[width=\textwidth]{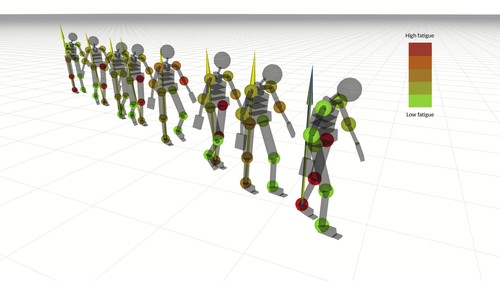}
    \caption{Rviz visualization of joint efforts during different phases of Task 4, walking}
    \label{fig:rviz_walking}
\end{figure}

\chapter{Software Architecture}
\label{cha:software-architecture}

\chapreface{Our framework of holistic human perception is aimed to be a platform for future extensions that can incorporate different sensory modalities such as hear rate monitoring, muscular fatigure, emotional fatigue for various aspects of human perception. Given the anticipated complexity, a sound software architecture is of utmost importance for our system to be robust. This chapter presents our software architecture and explains the main components that are currently functional.}

\section{Motivation}

Our ambition for holistic human perception is to develop a novel wearable whole body suit technology that integrates a lot of sensors and novel algorithms that processes the sensor inputs. Currently, at the hardware level we depend on commercial products like Xsens for human motion measurements and use our \emph{ftShoes} technology for force monitoring. In the future, we aim at using commercial platforms of \textsc{emg} sensors e.g., \textsc{bts} engineering for monitoring human muscles activation. So, a set of sensors can be considered as a system that provides a particular set of measurements, not particularly homogeneous\footnote{One example is an exoskeleton worn by the human that provides its joint states and torques}. Synchronizing the sensory information coming from different hardware is often daunting task and an ideal solution for achieving system integration of heterogeneous hardware components is to focus on the software level synchronization~\cite{samy2020fusion}. At the algorithmic level we currently perform real-time motion tracking and dynamics estimation. In the future, we aim at integrating algorithms for motion classification and prediction. Given the system complexity, it is crucial to have a modular software architecture that is extensible for future possibilities.  This chapter lays the details of our novel software architecture to support our vision of holistic human perception and we hope this can serve as a platform to develop new wearable technologies, both in research and industry, to enable effective human-robot collaboration.  

The software infrastructure is developed in \texttt{C++} using \textsc{yarp} middleware~\cite{paul2014middle}. We opted for a device based architecture based on \textsc{yarp} devices\footnote{\href{https://www.yarp.it/note_devices.html}{https://www.yarp.it/note\_devices.html}}. The three core components in this architecture are 1) Device Drivers or Devices, 2) Interfaces, and 3) Network Wrappers. A device in \textsc{yarp} is a \texttt{C++} class and interfaces are abstract base classes. A device can be thought of as a computation unit that processes some inputs and implement the methods defined in the interface. Two devices exchange information through the interfaces by an "attach" method which enables one device to view the interface implemented in the other device to which it is attached. Furthermore, multiple devices can be launched as a single process\footnote{All the devices need not be attached to one another}. Network wrappers are special cases of devices that provide network resources to send data to the network. A network wrapper attached to a device views the implemented interface and facilitates streaming of necessary data to the network.

\vspace{0.5cm}

At the highest level of abstraction we have two main layers: 1) Producer 2) Consumer

\vspace{-0.25cm}

\section{Producer Layer}

The producer layer is composed of all the wearable technology that provides various measurements. The technologies involved at this layer are heterogeneous such as Xsens and \emph{ftShoes}. An ideal design choice at the producer layer is to provide a unifying data structure down to the consumer layer. We developed an open source \texttt{C++} library called Wearables\footnote{\href{https://github.com/robotology/wearables}{https://github.com/robotology/wearables}} to combine data coming from the heterogeneous sources and provide a standard data output. The core components of the wearbles library are 1) Sensor and Wearable Interfaces, 2) Wearable Devices, and 3) Wearable Wrapper. Fig.~\ref{fig:software-architecture-wearables} provides the overview of the components in wearable library.

\begin{figure}[ht!]
	\centering
	\includegraphics[width=\textwidth]{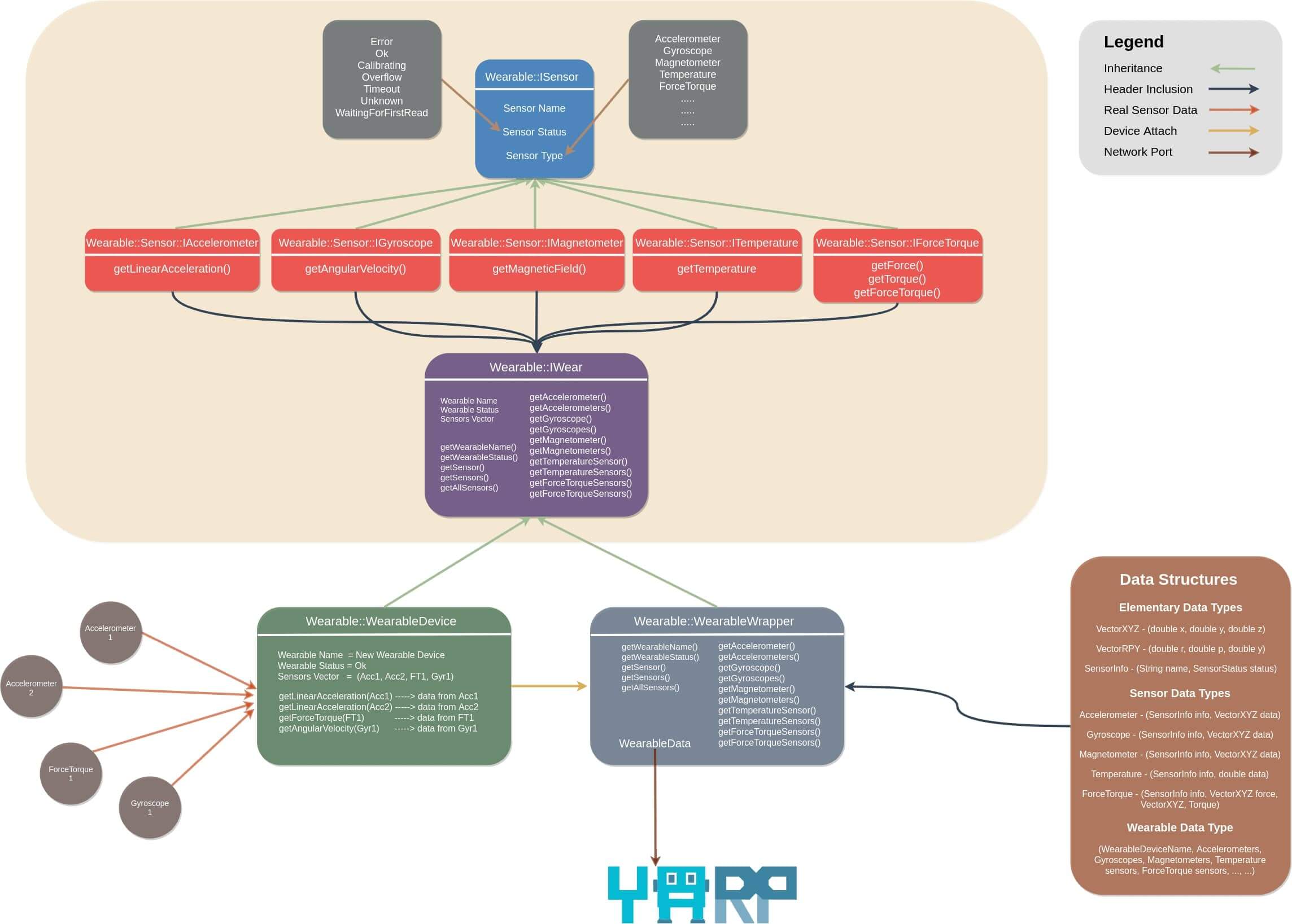}
	\caption{Software architecture of wearables library}
	\label{fig:software-architecture-wearables}
\end{figure}

\begin{sloppypar}

\subsection{Sensor \& Wearable Interfaces}

\subsubsection*{Sensor Interfaces}

The lowest level of abstract is sensor interface (\texttt{Wearable::ISensor}) containing the basic details that represent a sensor i.e., a) Sensor Name, b) Sensor Status, and c) Sensor Type. Sensor type represents the nature of the sensor and some of the examples are \texttt{accelerometers, gyroscopes, magnetometers, temperature sensors} and  \texttt{force torque sensors}. Sensor status indicates the state of the sensor and some of the possible states are \texttt{Error, Ok, Calibrating, Overflow, Timeout, Unknown} and \texttt{WaitingForFirstRead}.

\texttt{Wearable::ISensor} is the base abstract class from which particular type of sensor interfaces are inherited. Depending on the type of the sensor, relevant methods are present for accessing the data generated by the particular sensor. Some of the examples of the sensor type interfaces are:

\begin{itemize}
	\item \texttt{Wearable::Sensor::IAccelerometer} sensor interface that has \texttt{getLinearAcceleration()} method to access the linear acceleration data from accelerometers.
	\item \texttt{Wearable::Sensor::IGyroscope} sensor interface that has \texttt{getAngularVelocity()} method to access the angular velocity data from gyroscopes.
	\item \texttt{Wearable::Sensor::IMagnetometer} sensor interface that has \texttt{getMagneticField()} method to access the magnetic field data from magnetometers.
	\item \texttt{Wearable::Sensor::ITemperature} sensor interface that has \texttt{getTemperature()} method to access the temperature data from the sensor.
	\item \texttt{Wearable::Sensor::IForceTorque} sensor interface that has \texttt{getForce()}, \texttt{getTorque()}, and \texttt{getForceTorque()} methods to access the forces, moments, and forces and moments data from six axis force-torque sensors.	
\end{itemize}

\subsubsection*{Wearable Interface}

The next level of abstraction is the wearable interface that groups all the particular sensor type interfaces together. Similar to the \texttt{Wearable::ISensor} interface, some of the basic details are the name and the status along with the methods \texttt{getWearableName()} and \texttt{getWerableStatus()}. Additionally, there are methods like \texttt{getSensor()} to access a sensor based on the sensor name or \texttt{getSensors()} method to access all the sensors of a particular type. The \texttt{getAllSensors()} methods provides access to all the sensors. Furthermore, sensor specific methods like \texttt{getAccelerometer()} or \texttt{getAccelerometers()} provides access to sensors of a particular type. The \texttt{Wearable::IWear} interface is the base abstract class that forms the template for the rest of the components in the wearables library i.e., wearable device and wearable wrapper.

\subsection{Wearable Device}

A wearable device is a \texttt{C++} class that is inherited from the \texttt{Wearable::IWear} interface. The physical intuition of a wearable device is that it represents a system that constitute a set of sensors like Xsens or \emph{ftShoes}. A specific name is assigned to the wearable name parameter. The sensor measurements from real sensors of a particular type are given as inputs to a wearable device e.g., \texttt{Accelerometer 1, Accelerometer 2, Gyroscope 1, ForceTorque 1}. The status of a wearable device is determined by the status of all the sensors it encapsulates. Furthermore, a wearable device has an implementation of all the methods related to the sensors it encapsulates e.g.,  \texttt{getLinearAcceleration(Acc1), getLinearAcceleration(Acc2), getAngularVelocity(Gyr1), getForceTorque(FT1)}.

\subsection{Wearable Wrapper}

The wearable wrapper is  a \texttt{C++} class that is inherited from the \texttt{Wearable::IWear} interface. As stated before, it is also a device that can be attached to any wearable device. It implements methods to get sensors present in a particular wearable device to which the wearable wrapper is attached to e.g., \texttt{getAccelerometer(), getAccelerometers(), getGyroscope(), getGyroscopes(), getSensor(), getSensors(), getAllSensors()}. Furthermore, the data from different sensors present in a particular wearable device are serialized using a data structure template. The data structure template defines elementary data types that are composed to represent sensor specific data. Furthermore, the "Wearable Data" type is defined which has the fixed structure as (\texttt{WearableDeviceName::SensorName::SensorData}). As an example with respect to Fig.~\ref{fig:software-architecture-wearables} the wearable data output is composed as,

\begin{equation}
	\begin{matrix}
		\begin{array}{l}
			( \ (\texttt{WearableDeviceName::Accelerometer1::SensorData}), \\
			(\texttt{WearableDeviceName::Accelerometer2::SensorData}), \\
			(\texttt{WearableDeviceName::Gyroscope1::SensorData}), \\
			(), \text{---------> Mangetometer Data Placeholder} \\
			(), \text{---------> ForceTorque Data Placeholder} \\
			(\texttt{WearableDeviceName::ForceTorque1::SensorData}) \ ) \notag
		\end{array}
	\end{matrix}
\end{equation}

The wearable wrapper also implements a periodic thread\footnote{\href{http://www.icub.org/doc/icub-main/icub_periodic_thread.html}{http://www.icub.org/doc/icub-main/icub\_periodic\_thread.html}} that calls sensor methods like \texttt{getLinearAcceleration(Acc1)}, \texttt{getLinearAcceleration(Acc2)} of a wearable device to update the sensor readings from the associated sensors. Additionally. the wearable wrapper has the functionality of streaming the composed wearable data from a particular wearable device to the network through ports. Any other components can access the wearable data through the network ports.

\begin{figure}[ht!]
	\centering
	\includegraphics[width=\textwidth]{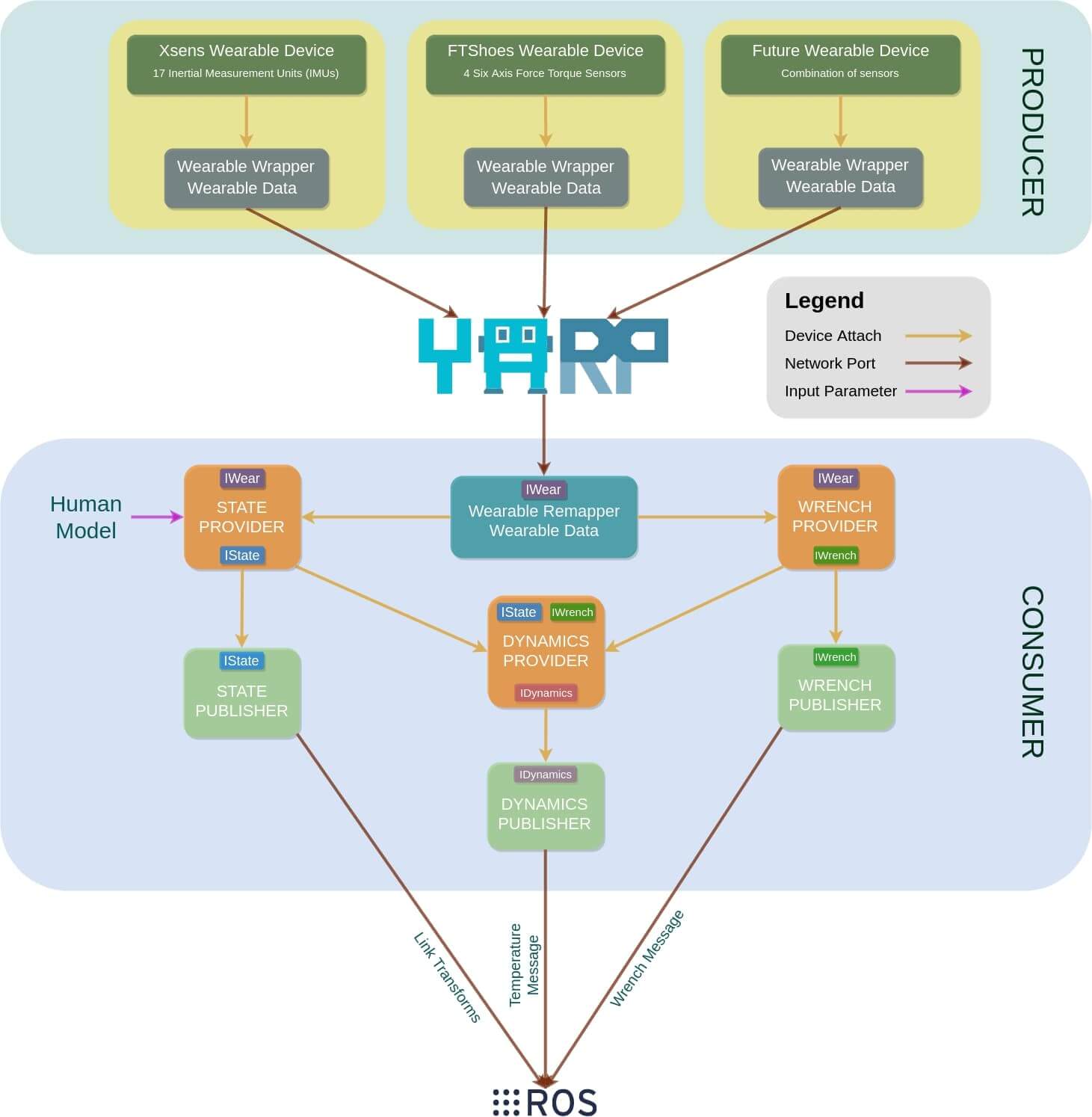}
	\caption{Software architecture for holistic human perception}
	\label{fig:software-architecture-human-dynamics-estimation}
\end{figure}

\subsection{Current Wearable Devices}

Our current architecture constitutes of two wearable devices that accesses data coming from the Xsens motion tracking system and \emph{ftShoes}. In the case of Xsens we rely on the software development kit \textsc{(sdk)} provided to access various human measurements such as sensor linear acceleration, link position and orientation. Similarly, in the case of \emph{ftShoes} we rely on interfaces developed to access data from six axis force-torque sensors developed at \textsc{iit} (Section~\ref{sec:force-torque-sensing}). For the sake of clarity, these details are not captured in Fig.~\ref{fig:software-architecture-human-dynamics-estimation}. Furthermore, a wearable device and an associated wearable wrapper are launched in a single process and they can be run on different machines connected to the same network.

\subsection{Future Wearable Devices}

Applications of human-robot collaboration may contain serial manipulators or humanoid robots or exoskeletons to augment humans. Under those scenarios, the robots or the exoskeleton can be considered as a new wearable device and implemented in the producer layer. So, various data like the joint quantities or interaction forces from these devices can be passed as wearable data for the consumer layer where new components can be implemented for different applications.

\section{Consumer Layer}

The consumer layer is composed of the components that utilizes the wearable data from the network. Fig.~\ref{fig:software-architecture-human-dynamics-estimation} presents a higher level of abstraction in the organization of producer and consumer layers. Similar to the components of the producer layer, the components of the consumer layer are also \textsc{yarp} device drivers. The main components present at the consumer layer are 1) Wearable Remapper, 2) Providers, and 3) Publishers. Another level of classification that can be done in the consumer layer is related to the aspect they deal with such as the state or the dynamics. Providers are devices that are attached to a wearable remapper device and access the wearable data through the wearable interface. Furthermore, providers are algorithmic units that implement a particular algorithm e.g. state estimation. On the other hand, publishers are devices that access output data from providers through relevant interfaces and publish the data to a \textsc{ros} network~\cite{ros2018}. This facilitates the use of \textsc{ros} tools like Rviz visualizer for real-time visualization of human quantities. All the components of the consumer layer that deals with the human quantities are implemented as an open source \texttt{C++} library called human-dynamics-estimation\footnote{\href{https://github.com/robotology/human-dynamics-estimation}{https://github.com/robotology/human-dynamics-estimation}}. The following section lays the details of the components at the consumer layer.

\subsection{Wearable Remapper}

A wearable remapper is a device that servers the dual purpose of a wearable wrapper. Analogous to the wearable wrapper, the remapper is also inherited from \texttt{Wearable::IWear} interface.  It connects to the network ports to receive the wearable data sent to the network by the wearable wrapper. So, the wearable remapper can be thought as a component that deserializes the wearable data into respective sensor data coming from several wearable devices.

\subsection{State Components}

The state provider is a device that is inherited from \texttt{Wearable::IWear} interface. It is attached to the wearable remapper and accesses the data from Xsens wearable device through \texttt{Wearable::IWear} interface. One of the input parameters is the human model described in Chapter~\ref{cha:human-modeling}. The whole-body inverse kinematics problem for real-time human motion tracking described in Chapter~\ref{cha:human-kinematics-estimation} is implemented inside the state provider. The state provider is inherited from \texttt{IState} interface and the state estimation quantities i.e. human base pose, joint positions and velocities data is handled through the implementation of \texttt{IState} interface methods.

The state publisher is a device that is attached to the state provider. It accesses the estimated human state quantities through the \texttt{IState} interface and computes the link tranforms information through forward kinematics with the given human model. This information is essential for Rviz visualizer to animate the human model as shown in the case of human walking Fig~\ref{fig:rviz_walking}. The computed link transforms are streamed to \textsc{ros} network through topics\footnote{\href{http://wiki.ros.org/Topics}{http://wiki.ros.org/Topics}}.   

\subsection{Wrench Components}

The wrench provider is a device that is inherited from \texttt{Wearable::IWear} interface. It is attached to the wearable remapper and accesses data from \emph{ftShoes} wearable device through \texttt{Wearable::IWear} interface. This device transforms the wrench data and expresses them with respect to the human links that are given as an input parameter. Currently, the wrench information is available only on the human feet. However, this can be expanded in the future for applications of \textsc{hrc}, where a humanoid robot or an exoskeleton can be considered as a wearable device and the wrench information from them is passed through the wearable data. The wrench provider is inherited from \texttt{IWrench} interface and the transformed wrench information coming various wearable devices is handled through the implementation of \texttt{IWrench} interface methods.

The wrench publisher is a device that is attached to the wrench provider. It accesses the transformed wrench through \texttt{IWrench} interface and expresses them as a wrench message data type to stream to \textsc{ros} network through topics. This information is essential for Rviz visualizer to display the arrows corresponding to the external wrench as shown in Fig.~\ref{fig:rviz_balancing_efforts}.

\subsection{Dynamics Components}

The dynamics provider is a device that is inherited from \texttt{IState} and \texttt{IWrench} interfaces. It is attached to the state provider and the wrench provider. The human state quantities are received through \texttt{IState} interface and the wrench quantities are received through \texttt{IWrench} interface. The human dynamics estimation algorithm described in Chapter~\ref{cha:human-dynamics-estimation} is implemented in the dynamics provider. It is also inherited from \texttt{IDynamics} interface and implements the methods to handle the human joint torque estimates.

The dynamics publisher is a device that is attached to the dynamics provider. It accesses the human joint torque estimates through \texttt{IDynamics} interface and computes the joint effort that is described in section~\ref{sec:HDE-joint-torque-estimates}. The joint effort information is sent as a temperature message to \textsc{ros} network through topics. This information is essential for Rviz visualizer to display the spheres at the human joints that show the joint effort as highlighted in Fig.~\ref{fig:rviz_balancing_efforts}. Furthermore, the dynamics publisher also expresses the external wrench estimates as a wrench message data type to stream to \textsc{ros} network through topics. This is essential to display the arrows (blue color) shown in Fig.~\ref{fig:rviz_balancing_efforts}.

\end{sloppypar}

\part{Reactive Robot Control} 
\chapter{Partner-Aware Control}
\label{cha:partner-aware-control}

\chapreface{Classical feedback linearization approach has been proved to be very robust in controlling complex robots. However, there are some clear limitations to it for collaborative tasks. This chapter presents those limitations first, followed by a coupled dynamics formalism for collaboration scenarios. Furthermore, the interaction characterization in terms of joint torques from an external agent assisting the robot are considered towards partner-aware control techniques validated through experiments with two humanoid  robots.}

\section{Recourse on Classical Robot Control}
\label{sec:recourse-on-classical-control}

Typically, one may be interested in controlling a robot related quantity, say $\comVar{x} \in \mathbb{R}^{p}$. In the case of joint space control, the interested task is to control a single robot joint or a group of robot joints along a pre-defined trajectory. Similarly, for the task space control, the interested task is to achieve a Cartesian trajectory tracking by a link of the robot where $\comVar{x}_d(t), \dot{\comVar{x}}_d(t), \ddot{\comVar{x}}_d(t) \in \mathbb{R}^6$ denote the desired pose, velocity and acceleration in the Cartesian space, parametrized in time $t$. The velocity tracking error to be minimized is denoted by $\dot{\widetilde{\comVar{x}}} = \dot{\comVar{x}}(t) - \dot{\comVar{x}}_d(t)$.

The robot link's actual velocity has a linear map to the robot's velocity through the Jacobian matrix $\comVar{J}(\comVar{q})_{\comVar{x}} \in \mathbb{R}^{6 \times (n+6) }$, i.e.

\begin{equation}
\dot{\comVar{x}}(t) = \comVar{J}_{\comVar{x}}(\comVar{q}) ~{\robnu}
\label{eq:linear-jacobian-mapping}
\end{equation}

The task of Cartesian trajectory tracking is achieved through a simple control objective as defined below, 

\begin{equation}
	\label{eq:cartesian-control-objective}
	\ddot{\comVar{x}} = \ddot{\comVar{x}}^* := \ddot{\comVar{x}}_d - \comVar{K}_D \ \dot{\widetilde{\comVar{x}}} - \comVar{K}_P \int_0^t \dot{\widetilde{\comVar{x}}} ~du, \quad \comVar{K}_D, \comVar{K}_P > 0
\end{equation}

where $\comVar{K}_P, \comVar{K}_D \in \mathbb{R}^{6 \times 6}$ are positive definite symmetric feedback matrices.

Classical feedback linearization approach \cite{khalil2004modeling} helps us in finding the robot joint torques ${\robtau}$ needed to realize the control objective \eqref{eq:cartesian-control-objective} for Cartesian trajectory tracking and keeping the tracking error minimum. The robot control torques necessary for trajectory tracking with the desired dynamics directed by Eq.~\eqref{eq:cartesian-control-objective} are obtained using Eq.~\eqref{eq:linear-jacobian-mapping}. On differentiating $\dot{\comVar{x}}(t)$ we get the following relation,

\begin{equation}
	\ddot{\comVar{x}}(t) = \comVar{J}_{\comVar{x}} ~\dot{\robnu} + \dot{\comVar{J}}_{\comVar{x}} ~{\robnu}
	\label{eq:velocity-differentiaion}
\end{equation}

The quantity $\dot{\robnu}$ in the above equation is the robot's acceleration that can be derived from the equations of motion \eqref{eq:equations-of-motion} as $\dot{\robnu} = \comVar{M}^{-1}[\comVar{B} {\robtau} + \comVar{J}_c^\top {\comVar{ f}} - \comVar{h}]$ where, $\comVar{h} = \comVar{C}(\comVar{q},{\robnu}) {\robnu} + \comVar{G}(\comVar{q})$. Using this relation in \eqref{eq:velocity-differentiaion}, we get

\begin{subequations}
	\begin{equation}
	\ddot{\comVar{x}}(t) = \comVar{J}_{\comVar{x}} \comVar{M}^{-1}[\comVar{B} {\robtau} + \comVar{J}_c^\top {\comVar{ f}} - \comVar{h}] + \dot{\comVar{J}}_{\comVar{x}} ~{\robnu} \notag
	\end{equation}
	\begin{equation}
	\ddot{\comVar{x}}(t) = \comVar{J}_{\comVar{x}} \comVar{M}^{-1} \comVar{B} {\robtau} + \comVar{J}_{\comVar{x}} \comVar{M}^{-1} \comVar{J}_c^\top {\comVar{ f}} - \comVar{J}_{\comVar{x}} \comVar{M}^{-1} \comVar{h} + \dot{\comVar{J}}_{\comVar{x}} ~{\robnu} \notag
	\end{equation}
	\begin{equation}
	\comVar{J}_{\comVar{x}} \comVar{M}^{-1} \comVar{B} {\robtau} = \ddot{\comVar{x}}(t)  - \comVar{J}_{\comVar{x}} \comVar{M}^{-1} \comVar{J}_c^\top {\comVar{ f}} + \comVar{J}_{\comVar{x}} \comVar{M}^{-1} \comVar{h} - \dot{\comVar{J}}_{\comVar{x}} ~{\robnu} \notag
	\end{equation}
	\begin{equation}
	{\robtau} = [\comVar{J}_{\comVar{x}} \comVar{M}^{-1} \comVar{B}]^{\dagger} [\ddot{\comVar{x}}(t)  - \comVar{J}_{\comVar{x}} \comVar{M}^{-1} \comVar{J}_c^\top {\comVar{ f}} + \comVar{J}_{\comVar{x}} \comVar{M}^{-1} \comVar{h} - \dot{\comVar{J}}_{\comVar{x}} ~{\robnu}] \notag
	\end{equation}
\end{subequations}

Now, using the desired dynamics from Eq.~\eqref{eq:cartesian-control-objective}, we compute the control torques as

\begin{equation}
{\robtau} = [\comVar{J}_{\comVar{x}} \comVar{M}^{-1} \comVar{B}]^{\dagger} [\ddot{\comVar{x}}^*  - \comVar{J}_{\comVar{x}} \comVar{M}^{-1} \comVar{J}_c^\top {\comVar{ f}} + \comVar{J}_{\comVar{x}} \comVar{M}^{-1} \comVar{h} - \dot{\comVar{J}}_{\comVar{x}} ~{\robnu}]
\label{eq:normal-control-torques}
\end{equation}

On putting in a compact form, we have:

\begin{equation}
	{\robtau} = \mathbold{\Delta}^{\dagger} [\ddot{\comVar{x}}^*  - \mathbold{\Omega} {\comVar{ f}} + \mathbold{\Lambda}] + \comVar{N}_{\mathbold{\Delta}} {\robtau}_0
	\label{eq:normal-control-torques-compact}
\end{equation}

where 

\begin{itemize}
	\item $ \mathbold{\Delta} = \comVar{J}_{\comVar{x}} \comVar{M}^{-1} \comVar{B} \in \mathbb{R}^{6 \times n} $
	\item $ \mathbold{\Omega} = \comVar{J}_{\comVar{x}} \comVar{M}^{-1} \comVar{J}_c^\top \in \mathbb{R}^{6 \times 6n_c}$
	\item $ \mathbold{\Lambda} = \comVar{J}_{\comVar{x}} \ \comVar{M}^{-1} \comVar{h} - \dot{\comVar{J}}_{\comVar{x}} ~{\robnu} \in \mathbb{R}^{6}$
	\item $ \comVar{N}_{\mathbold{\Delta}} \in \mathbb{R}^{n \times n}$ is the nullspace projector of $\mathbold{\Delta}$ 
	\item $ {\robtau}_0 \in \mathbb{R}^{n}$ represent torques required to satisfy lower priority tasks in case of redundancy in joint torques
\end{itemize}

The control torques generated using the relation \eqref{eq:normal-control-torques-compact} eventually facilitate good trajectory tracking based on the proper choice of the gains $\comVar{K}_P, \comVar{K}_D \in \mathbb{R}^{6 \times 6}$. Another key detail to observer here is that the control torques generated completely cancel out any external wrench applied to the robot. Although it proves to be robust to unwanted external perturbations, classical feedback linearization control approach is fundamentally limiting for human-robot collaboration applications that require active collaboration between an external agent and a robot through intentional physical interactions. Unlike unexpected collisions, intentional physical interactions during a collaborative task are often helpful to achieve the common goal. Under such situations, it makes sense not to be agnostic to the external interaction but rather exploit when it is helpful. However, two important questions to be addressed mathematically are: 1) How to classify the interaction as helpful, and 2) How to exploit the helpful interaction. The following subsections answer these two important questions.

\subsection{Helpful Interaction}
\label{sec:helpful-interaction}

To explore the idea of mathematically characterizing an interaction to be helpful, let us consider a simple toy problem with an object of negligible mass $m$ as shown in Fig.~\ref{fig:mass-1d-case}. Assume a control input $u$ that is applied to the object and an external interaction force $f^{x}$ acting on the object.

\begin{figure}[ht]
	\centering
	\includegraphics[width=0.9\textwidth]{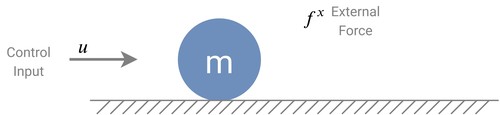}
	\caption{A object of negligible mass}
	\label{fig:mass-1d-case}
\end{figure}

The equations of motion for the system relates the object acceleration $\ddot{x}$ to the control input $u$ and the external force $f^{x}$, given by the following relation,

\begin{equation}
	\ddot{x} = u + f^{x}
	\label{eq:1d-equations-of-motion}
\end{equation}

Let us consider a task of moving the object along a 1D trajectory. A simple trajectory tracking control objective can be defined as,

\begin{equation}
\ddot{{x}} = \ddot{{x}}^* := \ddot{{x}}_d - {K}_D \ \dot{{\widetilde{x}}} - {K}_P \int_0^t \dot{{\widetilde{x}}} ~du, \quad {K}_D, {K}_P > 0
\end{equation}  

To understand the stability of the system, under the influence of the control input $u$ and under the influence of the external force $f^{x}$, let us consider a simple Lyapunov function as below,

\begin{equation}
\mathrm{V} = \frac{1}{2} \norm{\dot{{x}} - \dot{{x}}_d}^{2} + \frac{{K}_P}{2} \norm{\int_{0}^{t}(\dot{{x}} - \dot{{x}}_d) ~du}^{2}
\end{equation}

On time differentiating the above function we get the relation,

\begin{equation}
\dot{\mathrm{V}} = \dot{\widetilde{{x}}} \ [ \ddot{\widetilde{{x}}} +  {K}_P  \int_{0}^{t}\dot{\widetilde{{x}}} ~du ] \notag
\end{equation}

Using the equation of acceleration $\ddot{x}$ in the above equation we get,

\begin{equation}
\dot{\mathrm{V}} = \dot{\widetilde{{x}}} \ [ u + f^{x} - \ddot{x}_d +  {K}_P  \int_{0}^{t}\dot{\widetilde{{x}}} ~du ] \notag
\end{equation}

Given the task of trajectory tracking, we choose the control input $u$ to be the desired dynamics i.e., $u = \ddot{{x}}^*$. Based on this choice for the control input, the derivative of the Lyapunov function becomes, 

\begin{equation}
\dot{\mathrm{V}} = \dot{\widetilde{{x}}} f^{x} - \dot{\widetilde{{x}}} K_D \dot{\widetilde{{x}}} \notag
\end{equation}

The stability of the system is ensured as long as the derivative of the Lyapunov function is negative i.e., $\dot{\mathrm{V}} \le 0$. The term - $\dot{\widetilde{{x}}} K_D \dot{\widetilde{{x}}} \le 0$. So, the stability of the system under the influence of the external force $f^{x}$ is ensured as long as the term $\dot{\widetilde{{x}}} f^{x} \le 0$, and the interaction force can be characterized as a helpful interaction.

An intuitive understanding of characterizing an interaction force to be helpful can be better understood through the Fig.~\ref{fig:1d-helpful-interaction}. Consider the Case I, when the current velocity is lagging behind the desired velocity. The velocity error $\dot{\widetilde{x}} = \dot{x} - \dot{x}_d < 0$. Under such conditions, an external interaction force $f^{x} > 0$, applied in the positive direction, pushes the object towards the desired velocity. This interaction force $f^{x}$ is considered to be helpful as the term $\dot{\widetilde{x}} f^{x} < 0$. Similarly, in Case II, when the current velocity is leading the desired velocity. The velocity error $\dot{\widetilde{x}} = \dot{x} - \dot{x}_d > 0$. Under such conditions, an external interaction force $f^{x} < 0$, applied in the negative direction, pushes the object back towards the desired velocity. This interaction force $f^{x}$ is considered to be helpful as the term $\dot{\widetilde{x}} f^{x} < 0$.

\begin{figure}[ht]
	\centering
	\includegraphics[width=0.95\textwidth]{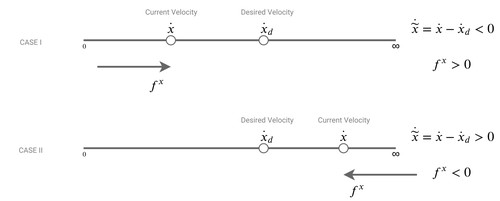}
	\caption{Helpful interaction intuition}
	\label{fig:1d-helpful-interaction}
\end{figure}

\subsection{Interaction Exploitation}
\label{sec:interaction-exploitation}

This section presents the idea and the intuition behind the exploitation of helpful interaction. Let us consider the Cartesian trajectory tracking task and consider the following control objective, 

\begin{equation}
\label{eq:updated-cartesian-control-objective}
\ddot{\comVar{x}} = \ddot{\comVar{x}}^* := \ddot{\comVar{x}}_d - \comVar{K}_D \ \dot{\widetilde{\comVar{x}}} - \comVar{K}_P \int_0^t \dot{\widetilde{\comVar{x}}} ~du + \mathbold{\Omega} {\comVar{ f}}
\end{equation}

Similar to the Lyapunov stability analysis presented in subsection~\ref{sec:helpful-interaction}, let us consider a simple Lypunov function as below,

\begin{equation}
\label{eq:classical-control-LyapunovFunction}
\mathrm{V} = \frac{1}{2} \norm{\dot{\comVar{x}} - \dot{\comVar{x}}_d}^{2} + \frac{\comVar{K}_P}{2} \norm{\int_{0}^{t}(\dot{\comVar{x}} - \dot{\comVar{x}}_d) ~du}^{2}
\end{equation}

On taking the time derivation of eq.~\eqref{eq:classical-control-LyapunovFunction}, and using the desired dynamics following the control objective of eq.~\eqref{eq:updated-cartesian-control-objective}, we get

\begin{equation}
\dot{\mathrm{V}} = - \dot{\widetilde{\comVar{x}}} \ \comVar{K}_D \ \dot{\widetilde{\comVar{x}}} + \dot{\widetilde{\comVar{x}}} \ \mathbold{\Omega} {\comVar{ f}} 
\label{eq:updated-lyapunov-function-time-derivative}
\end{equation}

Now, the stability of the system under the influence of the external interaction wrench $\comVar{f}$ depends on the second term being negative, i.e., $\dot{\widetilde{\comVar{x}}} \ \mathbold{\Omega} {\comVar{ f}} \le 0$. Let us decompose the term $\mathbold{\Omega} {\comVar{ f}}$ along the parallel and perpendicular directions of the velocity error as following,

\begin{subequations}
	
	\begin{equation}
	\mathbold{\Omega} {\comVar{ f}} =
	\alpha \ \dot{\widetilde{\comVar{x}}}^{\parallel} +
	\beta \ \dot{\widetilde{\comVar{x}}}^{\perp}
	\end{equation}
	
	\begin{equation}
	\dot{\widetilde{\comVar{x}}}^{\parallel} = \frac{\dot{\widetilde{\comVar{x}}}}{\norm{\dot{\widetilde{\comVar{x}}}}}, \
	\alpha = \frac{\dot{\widetilde{\comVar{x}}}^\top \mathbold{\Omega} \ {\comVar{f}}}{\norm{\dot{\widetilde{\comVar{x}}}}}
	\end{equation}
	\label{eq:wrench-decomposition}
\end{subequations}

Following the helpful interaction characterization idea of section~\ref{sec:helpful-interaction}, the parallel wrench component $\alpha$ can be classified as a helpful interaction when $\alpha \le 0$. Now, the idea of exploiting helpful interaction is achieved through considering the desired dynamics with the helpful interaction as following, 

\[
\ddot{\comVar{x}}^* :=
\begin{cases}

 \ddot{\comVar{x}}_d - \comVar{K}_D \ \dot{\widetilde{\comVar{x}}} - \comVar{K}_P \int_0^t \dot{\widetilde{\comVar{x}}} ~du & \text{if} \quad \alpha > 0 \\
  \ddot{\comVar{x}}_d - \comVar{K}_D \ \dot{\widetilde{\comVar{x}}} - \comVar{K}_P \int_0^t \dot{\widetilde{\comVar{x}}} ~du + \alpha \norm{\dot{\widetilde{\comVar{x}}}}  & \text{if} \quad \alpha \le 0
\end{cases}
\]

So, even when the feedback linearized robot torques generated through the relation from eq.~\eqref{eq:normal-control-torques-compact} cancel any external interaction wrench, we can consider the helpful interaction ($\alpha \le 0$) to be part of the desired dynamics and generate the robot torques by exploiting the helpful interaction. A preliminary investigation of the helpful interaction exploitation to generate robot control torques was carried in~\cite{8093992}. However, the external interaction wrench considered is not directly measured through the force-torque sensors, but are estimated quantities through the robot control torques. This results in an algebraic loop, when the estimated interaction wrench is considered to generate the robot control torques by exploiting the helpful interaction. This chapter presents a more integrated formulation to achieve partner-aware control by leveraging the interaction in terms of the external agent joint torques.

\section*{Notational Nuances}

The reader is advised to pay close attention to the notational nuances presented here in addition to the notion presented in the background notation~\eqref{sec:background-notation}. This notation is followed for the rest of this chapter and the associated Appendix~\ref{app:whole-body-momentum-control}.

\[
  \left[
      \begin{tabular}{@{\quad}m{.05\textwidth}@{\quad}m{.83\textwidth}}
        {\Huge \faInfoCircle} & \raggedright \textbf{} \par
          \begin{tabular}{@{}p{0.18\textwidth}p{0.60\textwidth}@{}}
            $^{r}(.)$                 & Robot quantity (non-bold small letter)\\
            $^{ea}(.)$                & External Agent quantity (non-bold small letter)\\
            \rowcolor{Gray} 
            \multicolumn{2}{l}{\large \textbf{Robot Agent Quantities}} \\
            \rowcolor{White}
            $\robVar p$                 & Vector (straight bold small letter)\\
            $\robVar A$                 & Matrix, tensor (straight bold capital letter)\\
            \rowcolor{Gray} 
            \multicolumn{2}{l}{\large \textbf{External Agent Quantities}} \\
            \rowcolor{White}
            $\ageVar p$                 & Vector (straight double-bold small letter)\\
            $\ageVar A$                 & Matrix, tensor (straight double-bold capital letter)\\
            \rowcolor{Gray} 
            \multicolumn{2}{l}{\large \textbf{Common/Coupled/Composite Quantities}} \\
            \rowcolor{White}
            $\comVar p$        & vector (slanted-bold small letter) \\
            $\comVar A$        & Matrix, tensor (slanted-bold capital letter) \\
          \end{tabular}
      \end{tabular}
    \right]
\]

\section{Coupled Modeling}
 
The external agent related terms are denoted with ''$\ageVar{straight \ double-bold}$'' notation, e.g., $\ageVar{M}$; robot related terms are denoted with ``$\robVar{straight \ bold}$'' notation, e.g., $\robVar{M}$; and terms that are common/coupled/composite are denoted with $\comVar{slanted \ bold}$ notation, e.g., $\comVar{M}$.

A typical physical agent-robot interaction scenario where the external agent is a human is shown in Fig. \ref{pHRI}. There are two agents: the external agent, and the robot. Both agents are physically interacting with the environment and, besides, are also engaged in physical interaction with each other.

\begin{figure}[ht]
	\centering
	\includegraphics[scale=0.4]{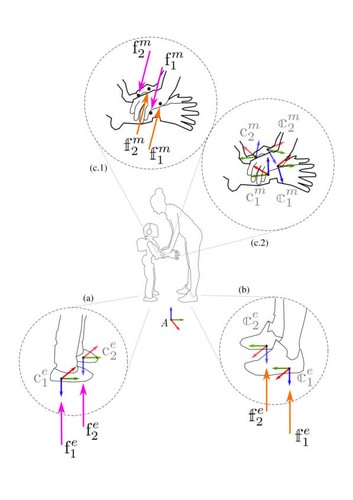}
	\caption{A typical agent-robot dynamic interaction scenario}
	\label{pHRI}
\end{figure}

In the first approximation, both the external agent (\emph{ea}) and the robot agent (\emph{r}) can be considered as multi rigid-body mechanical systems composed of  $^{ea}n+1$ and $^{r}n+1$ rigid bodies respectively, called links, connected through $^{ea}n \in \mathbb{N}$ and $^{r}n \in \mathbb{N}$ joints with one degree of freedom. In case of the human being an external agent, even though the assumption of a human body being modeled as rigid bodies is far from reality, it serves as a rough approximation when formulating physical human-robot interaction dynamics and allows us to synthesize robot controllers optimizing both human and robot variables. Further, we consider both the external agent and the robot to be \textit{free-floating} systems, i.e. none of the links have an \textit{a priori} constant pose with respect to the inertial frame.

To the purpose of finding mathematical models for both the external agent and the robot, we apply the Euler-Poincar\'{e} formalism to both multi-body systems~\cite{Marsden2010}. The equations of motion describing the dynamics of the external agent and the robotic agents respectively are:

\begin{subequations}
\begin{equation}
	\ageVar{M}(\ageVar{q}) \dot{\agenu} + \ageVar{C}(\ageVar{q},{\agenu}) {\agenu} + \ageVar{G}(\ageVar{q}) =
	 \ageVar{B} {\agetau} + \ageVar{J}^\top_{c} \ageVar{ f}
	\label{NEHuman2}
\end{equation}
\begin{equation}
	\robVar{M}(\robVar{q}) \dot{\robnu} + \robVar{C}(\robVar{q},\robnu) \robnu + \robVar{G}(\robVar{q}) =
	\robVar{B} {\robtau} + \robVar{J}^\top_{c} \robVar{ f}
	\label{NERobot2}
\end{equation}
\label{NE}
\end{subequations}

The number of degrees of freedom of all the joints for the external agent is denoted by  $^{ea}n \in \mathbb{N}$ and $^{r}n \in \mathbb{N}$ for the robot agent. For the sake of clarity, the superscript is ignored when referring to a quantity that corresponds to both the agents. Another assumption is that each agent is subject to a total of $n_c = n_m+n_e \in \mathbb{N}$ distinct wrenches. These wrenches are composed of two subsets: the wrenches due to \textit{mutual} interaction, denoted with the subscript $m$ and the wrenches exchanged between the \textit{external agent} and the \textit{environment}, denoted with the subscript $e$ respectively, In either case, the contact wrenches are represented by:

\begin{equation}
	\comVar{ f} = \begin{bmatrix}
		 	\comVar{ f}^{\scalebox{\hrsup_size}{\textit{m}}}_1; &&
		 	\comVar{ f}^{\scalebox{\hrsup_size}{\textit{m}}}_2; && .... && \comVar{ f}^{\scalebox{\hrsup_size}{\textit{m}}}_{n_m}; &&
		 	\comVar{ f}^{\scalebox{\hrsup_size}{\textit{e}}}_{{n_m}+1}; &&
		 	\comVar{ f}^{\scalebox{\hrsup_size}{\textit{e}}}_{{n_m}+2}; && .... && \comVar{ f}^{\scalebox{\hrsup_size}{\textit{e}}}_{n_m}
	      \end{bmatrix} \in \mathbb{R}^{6{n_c}} \notag
\end{equation}

Accordingly,
$\ageVar{ f} = \begin{bmatrix}
			        \ageVar{ f}_m && \ageVar{ f}_e
                 \end{bmatrix}^\top$
with $\ageVar{ f}_m$ the external wrenches applied on the external agent by the robotic agent and $\ageVar{ f}_e$ the external wrenches applied on the external agent by the environment. 
Similarly, $\robVar{ f} = \begin{bmatrix}
				    \robVar{ f}_m && \robVar{ f}_e
			   \end{bmatrix}^\top$
with $\robVar{ f}_m$ the external wrenches applied on the robotic agent by the external agent and $\robVar{ f}_e$ the external wrenches applied on the robotic agent by the environment.

A set of frames are defined: 
$\mathscr{C} = \{\mathcal{C}_1,\mathcal{C}_2,....\mathcal{C}_{n_m},\mathcal{C}_{n_{m+1}},\mathcal{C}_{n_{m+2}},....,\mathcal{C}_{n_e} \}$
and assume that the application point of the $k$-th external wrench on an agent is associated with a frame $\mathcal{C}_k \in \mathscr{C}$, attached to the agent's link on which the wrench acts, and has \textit{z}-axis pointing in the direction normal to the contact plane. Furthermore, the external wrench $\comVar{ f}^{\scalebox{\hrsup_size}{m/e}}_k$ is expressed in a frame whose orientation is that of the inertial frame $\mathcal{I}$, and whose origin is that of the frame $\mathcal{C}_k$.

Following the relation in Eq.~\eqref{eq:linear-jacobian-mapping}, the jacobian $\comVar{J}_{\mathcal{C}_k} = \comVar{J}_{\mathcal{C}_k}(\comVar{q})$ is the map between the agent's velocity $\nu = (^{\mathcal{I}}{}{\comVar{v}}_{\mathcal{F}}, \dot{\comVar{s}}) \in \mathbb{R}^{n+6}$ and the velocity of the frame $\mathcal{C}_k$ given by $^{\mathcal{I}}{\comVar{v}}_{\mathcal{C}_k} = [ ^{\mathcal{I}}\dot{\comVar{p}}_{\mathcal{C}_k}; \ ^{\mathcal{I}}{{\comVar{\omega}}_{\mathcal{C}_k}} ]$:

\begin{equation}
	^{\mathcal{I}}{\comVar{v}}_{\mathcal{C}_k} = \comVar{J}_{\mathcal{C}_k} \ {\nu}
	\label{EEVelocity}
\end{equation}

For a floating base system, the jacobian matrix has the following structure~\cite{Featherstone2007}:

\begin{subequations}
\begin{equation}
	{\comVar{J}_{\mathcal{C}_k}}(\comVar{q}) = \begin{bmatrix}
						{\comVar{J}^b_{\comVar{C}_k}}(\comVar{q}) && {\comVar{J}^j_{\mathcal{C}_k}}(\comVar{q})
				   \end{bmatrix} \in \mathbb{R}^{6 \times n+6}
\end{equation}

\begin{equation}
	{\comVar{J}^b_{\mathcal{C}_k}}(\comVar{q}) = \begin{bmatrix}
						\comVar{1}_3 && -\bm \skewOp({^{\mathcal{I}}{\comVar{p}}_{\mathcal{C}_k}} - {^{\mathcal{I}}{\comVar{p}}_{\mathcal{F}}}) \\
						\comVar{0}_{3 \times 3} && \comVar{1}_3
					\end{bmatrix} \in \mathbb{R}^{6 \times 6}
\end{equation}
\end{subequations}

\subsection{Contact Constraints}

We assume that holonomic constraints of the form $c(\comVar{q}) = 0$ act on both the external agent and the robot during their interaction with the environment. The links that are in contact with the ground can be considered as end-effector links that are rigidly fixed to the ground for the duration of the contact and hence have zero velocity. Following equation (\ref{EEVelocity}), this can be represented as follows for the human and the robot respectively:

\begin{subequations}
	\begin{equation}
		\ageVar{J}_{\ageVar{c}_k}	{\agenu} = 0
	\end{equation}
	\begin{equation}
		\robVar{J}_{\robVar{c}_k} \robnu = 0
	\end{equation}
\end{subequations}
Differentiating the above kinematic constraints yields:
\begin{subequations}
	\begin{equation}
		\begin{bmatrix}
			\ageVar{J}^b_{\ageVar{c}_k} && \ageVar{J}^j_{\ageVar{c}_k}
		\end{bmatrix}
		\begin{bmatrix}
			\dot{\ageVar{v}}_B \\
			\ddot{\ageVar{s}}
		\end{bmatrix} +
		\begin{bmatrix}
			\dot{\ageVar{J}}^b_{\ageVar{c}_k} && \dot{\ageVar{J}}^j_{\ageVar{c}_k}
		\end{bmatrix}
		\begin{bmatrix}
			\ageVar{v}_B \\
			\dot{\ageVar{s}}
		\end{bmatrix} = 0
		\label{holcon1}
	\end{equation}
	\begin{equation}
		\begin{bmatrix}
			\robVar{J}^b_{\robVar{c}_k} && \robVar{J}^j_{\robVar{c}_k}
		\end{bmatrix}
		\begin{bmatrix}
			\dot{\robVar{v}}_B \\
			\ddot{\robVar{s}}
		\end{bmatrix} +
		\begin{bmatrix}
			\dot{\robVar{J}}^b_{\robVar{c}_k} && \dot{\robVar{J}}^j_{\robVar{c}_k}
		\end{bmatrix}
		\begin{bmatrix}
			\robVar{v}_B \\
			\dot{\robVar{s}}
		\end{bmatrix} = 0
		\label{holcon2}
	\end{equation}
	\label{holcon}
\end{subequations}

Now, during physical agent-robot interaction, there is a contact between the external agent and the robot. These contacts can be modeled as holonomic constraints of the form $c(\ageVar{q},\robVar{q}) = 0$. To this purpose, a frame $\ageVar{c}_m \in {^{ea}\mathscr{C}}$ attached to the external agent link which is in contact with the robot is considered. More precisely, let $^{\mathcal{I}}H_{\ageVar{c}_m}(\ageVar{q})$ denote the homogeneous transformation from $\ageVar{c}_m$ to the inertial frame. Similarly, another frame $\robVar{c}_m \in ^{r}\mathscr{C}$ attached to the robot link in contact with the external agent is considered. Let $^{\mathcal{I}}H_{\robVar{c}_m}(\robVar{q})$ denote the homogeneous transformation from $\robVar{c}_m$ to the inertial frame. The relative transformation between the frames $\ageVar{c}_m$ and $\robVar{c}_m$ is given by:

\begin{equation}
	^{\ageVar{c}_m}H_{\robVar{c}_m} =
	{^{\mathcal{I}}H^{-1}_{\ageVar{c}_m}}(\ageVar{q}) \
	^{\mathcal{I}}H_{\robVar{c}_m}({\robVar{q}})
	\label{HRcon}
\end{equation}

When $^{\ageVar{c}_m}H_{\robVar{c}_m}$ (or a part of it) is constant, it means that the external agent and the robot are in contact. By setting $^{\ageVar{c}_m}H_{\robVar{c}_m}$  to a constant, the aforementioned holonomic constraint of the form $c(\ageVar{q},\robVar{q}) = 0$ is obtained. In general, there can be two kinds of contact between the external agent and the robot, either a point contact (Figure: \ref{point_contact}) or a plane contact (Figure: \ref{plane_contact}).

\begin{figure}[!hbt]
	\centering
	\begin{subfigure}{0.5 \textwidth}
		\centering
		\includegraphics[width = 0.75 \textwidth]{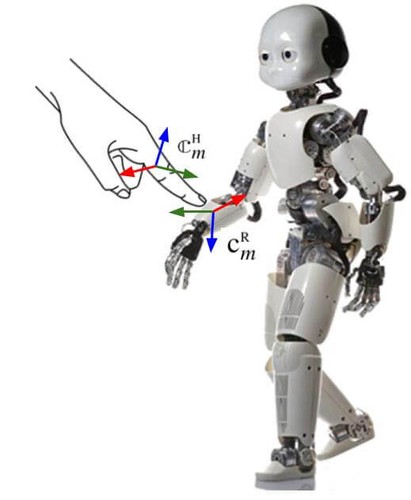}
		\caption{Point Contact}
		\label{point_contact}
	\end{subfigure}
	\begin{subfigure}{0.39\textwidth}
		\centering
		\includegraphics[width = 0.95 \textwidth]{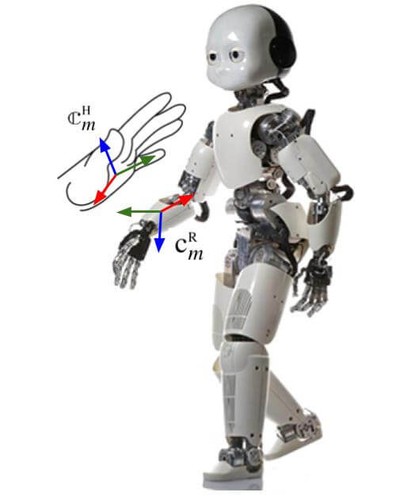}
		\caption{Plane Contact}
		\label{plane_contact}
	\end{subfigure}
	\caption{Different contact types between agent-robot partners}
\label{fig:mutual_contacts}
\end{figure}

A stable contact between the external agent and the robot during physical agent-robot interaction is considered, which leads to the condition that the relative velocity between the two frames $\ageVar{c}_m$ and $\robVar{c}_m$ is zero, i.e., the two contact frames move with the same velocity with respect to the inertial frame as given by the following relation:

\begin{equation}
	^{\mathcal{I}}{\ageVar{v}}_{\ageVar{c}_m} = \
	{^{{\ageVar{c}_m}}{X}_{\robVar{c}_m}} \ ^{\mathcal{I}}{\robVar{v}}_{\robVar{c}_m}
	\label{HRholcongeneral}
\end{equation}
where ${^{\ageVar{c}_m}{X}_{\robVar{c}_m}}$ is an adjoint transformation matrix for 6D motion vectors. We consider the contact frames to be coinciding and hence, the transformation matrix is Identity i.e. ${^{{\ageVar{c}_m}}{X}_{\robVar{c}_m}} = I_{6 \times 6}$

\vspace{0.3cm}
In light of the above, equation (\ref{HRholcongeneral}) can be written as:

\begin{equation}
	^{\mathcal{I}}{\ageVar{v}}_{\ageVar{c}_m} = \
	^{\mathcal{I}}{\robVar{v}}_{\robVar{c}_m},
	\label{HRholcon-specific}
\end{equation}

which can be represented as follows
\begin{equation}
	\ageVar{J}_{\ageVar{c}_m} {\agenu} =
	\robVar{J}_{\robVar{c}_m} {\robnu}
	\label{HRholcon}
\end{equation}

Differentiating the equation (\ref{HRholcon}) we get,
\begin{subequations}
\begin{equation}
	\begin{bmatrix}
		\ageVar{J}^b_{\ageVar{c}_m} && \ageVar{J}^j_{\ageVar{c}_m}
	\end{bmatrix}
	\begin{bmatrix}
		\dot{\ageVar{v}}_B \\
        \ddot{\ageVar{s}}
	\end{bmatrix} +
	\begin{bmatrix}
		\dot{\ageVar{J}}^b_{\ageVar{c}_m} && \dot{\ageVar{J}}^j_{\ageVar{c}_m}
	\end{bmatrix}
	\begin{bmatrix}
		\ageVar{v}_B \\
        \dot{\ageVar{s}}
	\end{bmatrix}  = \notag
\end{equation}

\begin{equation}
	\begin{bmatrix}
		\robVar{J}^b_{\robVar{c}_m} && \robVar{J}^j_{\robVar{c}_m}
	\end{bmatrix}
	\begin{bmatrix}
		\dot{\robVar{v}}_B \\
        \ddot{\robVar{s}}
	\end{bmatrix} +
	\begin{bmatrix}
		\dot{\robVar{J}}^b_{\robVar{c}_m} && \dot{\robVar{J}}^j_{\robVar{c}_m}
	\end{bmatrix}
	\begin{bmatrix}
		\robVar{v}_B \\
        \dot{\robVar{s}}
	\end{bmatrix}
	\label{HRholcon1}
\end{equation}

\begin{equation}
	\begin{bmatrix}
		\dot{\ageVar{J}}^b_{\ageVar{c}_m} && \dot{\ageVar{J}}^j_{\ageVar{c}_m} &&
		-\dot{\robVar{J}}^b_{\robVar{c}_m} && -\dot{\robVar{J}}^j_{\robVar{c}_m}
	\end{bmatrix}
	\begin{bmatrix}
		\ageVar{v}_B \\
        \dot{\ageVar{s}} \\
        \robVar{v}_B \\
        \dot{\robVar{s}}
	\end{bmatrix} + \notag
\end{equation}

\begin{equation}
	\begin{bmatrix}
		\ageVar{J}^b_{\ageVar{c}_m} && \ageVar{J}^j_{\ageVar{c}_m} &&
		-\robVar{J}^b_{\robVar{c}_m} && -\robVar{J}^j_{\robVar{c}_m}
	\end{bmatrix}
	\begin{bmatrix}
		\dot{\ageVar{v}}_B \\
        \ddot{\ageVar{s}} \\
        \dot{\robVar{v}}_B \\
        \ddot{\robVar{s}}
	\end{bmatrix} = 0
	\label{HRholcon2}
\end{equation}

\end{subequations}

Furthermore, the constraint equations~\eqref{holcon1}~\eqref{holcon2}~\eqref{HRholcon1} and \eqref{HRholcon2} can be represented in a compact form as follows:

\begin{equation}
    \comVar{P}
	\dot{\comVar{V}} +
	\dot{\comVar{P}}
	\comVar{V} 
	 = 0
	\label{HRholcon3}
\end{equation}

where,
\begingroup
    \fontsize{8pt}{15pt}\selectfont
    \begin{itemize}
    	\item $\comVar{P} =
    		   \begin{bmatrix}
    				\ageVar{J}^b_{\ageVar{c}_k} && \ageVar{J}^j_{\ageVar{c}_k} &&
    				0 && 0 \\
    					0 && 0 &&
    				\robVar{J}^b_{\robVar{c}_k} && \robVar{J}^j_{\robVar{c}_k} \\
    					\ageVar{J}^b_{\ageVar{c}_m} && \ageVar{J}^j_{\ageVar{c}_m} &&
    				-\robVar{J}^b_{\robVar{c}_m} && -\robVar{J}^j_{\robVar{c}_m}
    		   \end{bmatrix} \in \mathbb{R}^{6 \times ( {^{ea}n} + {^{r}{n}} + 12)}$ \vspace{0.2 cm}
    	\item  $\comVar{V} =
    			\begin{bmatrix}
    				\agenu && \robnu
    			\end{bmatrix}^\top \in \mathbb{R}^{ {^{ea}{n}} + {^{r}{n}}+12}$
    \end{itemize}
\endgroup

\subsection{Contact and Interaction Wrench}

First, observe that Eq.~\eqref{NEHuman2} and Eq.~\eqref{NERobot2} can be combined to obtain a single equation of motion for the composite system as presented in Eq.~\eqref{NE-HR1}

\begin{equation}
    \small
	\begin{bmatrix}
		\ageVar{M} && 0 \\
		0 && \robVar{M}
	\end{bmatrix}
	\begin{bmatrix}
		\dot{\agenu} \\
		\dot{\robnu}
	\end{bmatrix} +
	\begin{bmatrix}
		\ageVar{h} \\
		\robVar{h}
	\end{bmatrix} =
	\begin{bmatrix}
		\ageVar{B} && 0 \\
		0 && \robVar{B}
	\end{bmatrix}
	\begin{bmatrix}
		{\agetau} \\
		{\robtau}
	\end{bmatrix} +
	\begin{bmatrix}
		\ageVar{J}^\top_c && 0 \\
		0 && \robVar{J}^\top_c
	\end{bmatrix}
	\begin{bmatrix}
		\ageVar{ f} \\
		\robVar{ f}
	\end{bmatrix}
	\label{NE-HR1}
\end{equation}

\vspace{0.1 cm}
where, $\ageVar{h} = \ageVar{C}(\ageVar{q},{\agenu}){\agenu} + \ageVar{G}(\ageVar{q})$, $\robVar{h} = \robVar{C}(\robVar{q},{\robnu}){\robnu} + \robVar{G}(\robVar{q})$

According to Newtonian mechanics, in the case of rigid contacts, the perturbations exerted by the robot on the external agent is equal and opposite to the perturbation exerted by the external agent on the robot. As a consequence, when the external wrenches are expressed with respect to the inertial frame $\mathcal{I}$, the interaction wrenches $\comVar{ f}_m$ can be written as follows:

\begin{equation}
	\comVar{ f}_m \ = \ \ageVar{ f}_m \ = -\robVar{ f}_m
	\label{mutual-wrenches}
\end{equation}

Furthermore, Eq.~\eqref{NE-HR1} can be written in a compact form as follows:
\begin{equation}
	\comVar{M} \dot{\comVar{V}} + \comVar{h} = \comVar{B} \tau + \comVar{J}^\top_c \comVar{ f}
	\label{NE-HR2}
\end{equation}

where $\tau = \begin{bmatrix} \agetau && \robtau \end{bmatrix}^\top \in \mathbb{R}^{ {^{ea}n} + {^{r}n} }$; $\comVar{ f} = \begin{bmatrix}
							\comVar{ f}_m && \ageVar{ f}_e && \robVar{ f}_e
					    \end{bmatrix}^\top \in \mathbb{R}^{6 \times ({n}_m + {^{ea}{n}_e} + {^{r}{n}_e} )}$ and
$\comVar{J}_c$ is a proper contact jacobian  matrix. This equation implies that

\begin{equation}
	\dot{\comVar{V}} = {\comVar{M}^{-1}}[\comVar{B} {\tau}+\comVar{J}^\top_c \comVar{ f} - \comVar{h}]
	\label{HR-vel}
\end{equation}

Making use of Eq.~\eqref{HR-vel} in the constraint equation Eq.~\eqref{HRholcon3}

\begin{subequations}
	\begin{equation}
		\dot{\comVar{P}} \comVar{V} + \comVar{P} {\comVar{M}^{-1}}[\comVar{B} {\tau}+\comVar{J}^\top_c \comVar{ f} - \comVar{h}] = 0 \notag
	\end{equation}
	\begin{equation}
		\Rightarrow \comVar{P} {\comVar{M}}^{-1} \comVar{J}^\top_c \comVar{ f} = - [ \comVar{P} {\comVar{M}}^{-1} [\comVar{B} {\tau} - \comVar{h}] + \dot{\comVar{P}} \comVar{V}] \notag
	\end{equation}
\end{subequations}

\begin{equation}
		\Rightarrow \comVar{ f} = -\comVar{\Gamma}^{\dagger}[ \comVar{P} {\comVar{M}}^{-1} [\comVar{B} {\tau} - \comVar{h}] + \dot{\comVar{P}} \comVar{V}] \notag
\end{equation}
\noindent where, $\comVar{\Gamma} = \comVar{P} {\comVar{M}}^{-1} \comVar{J}^\top_c$
\vspace*{0.2 cm} \\
Furthermore,
\begin{equation}
	\mathbold{f} = - \comVar{\Gamma}^{\dagger} \comVar{P} {\comVar{M}}^{-1} \comVar{B} {\tau} +
					   \comVar{\Gamma}^{\dagger} \comVar{P} {\comVar{M}}^{-1} \comVar{h} -
					   \comVar{\Gamma}^{\dagger} \dot{\comVar{P}} \comVar{V}
	\label{external-wrenches}
\end{equation}

\vspace*{0.2 cm}
Through coupled-dynamics, Eq.~\eqref{external-wrenches} shows that the external wrenches are a function of the agent's configuration $\ageVar{q}$, $\robVar{q}$, velocity ${\agenu}$, ${\robnu}$, and joint torques ${\agetau}$, ${\robtau}$. This can be represented as a function $
	\comVar{ f} = g(\ageVar{q},\robVar{q},{\agenu},{\robnu},{\agetau}, {\robtau}) \notag
$. This relation can be further decomposed as,

\begin{equation}
	\comVar{ f} = \comVar{G}_1 {\agetau} + \comVar{G}_2 {\robtau} + {\comVar{G}_3}(\ageVar{q},\robVar{q},{\agenu},{\robnu})
	\label{external-wrenches-compact}
\end{equation}

\noindent where,
\begin{itemize}
	\item $\comVar{G}_1 \in \mathbb{R}^{ 6 ( {n}_m + {^{ea}{n}_e} + {^{r}{n}_e} ) \times {^{ea}{n}} }$
	\item $\comVar{G}_2 \in \mathbb{R}^{6 ( {n}_m + {^{ea}{n}_e} + {^{r}{n}_e} ) \times {^{r}{n}} }$
	\item $\comVar{G}_3 \in \mathbb{R}^{6 ( {n}_m + {^{ea}{n}_e} + {^{r}{n}_e} )}$
\end{itemize}

\section{Partner-Aware Robot Control}

As presented in section~\ref{sec:recourse-on-classical-control}, the robot joint torques from Eq.~\eqref{eq:normal-control-torques-compact} are agnostic to the interaction wrench from the external agent which are eventually compensated by considering the system dynamics. Hence, the robot behavior is invariant with respect to the external agent interaction. Certainly, instead of completely canceling out any external interaction by the feedback control action, it is gainful and desirable to \emph{exploit} external interaction to accomplish the robot's task. This poses, however, the question of characterizing and quantifying help from the external agent with respect to the robot's task. In a previous work \cite{8093992}, the interaction wrench \textit{estimates} from the robot is used as intent information from the external agent who is a human. However, in a coupled system, wrench information introduces an algebraic loop in the control design as the wrench estimates from the robot are computed using the robot joint torques \cite{Frontiers2015}. Instead, another beneficial approach is to leverage the joint torques of the external agent. In the case of human they are largely self-generated and self-regulated. Additionally, considering the joint torques of a human agent opens new possibilities for investigating and optimizing human ergonomics. Also, considering the joint torques of an external agent that is a robot will help formulate robust control techniques for the combined system to achieve a collaborative task.

The following lemma proposes a partner-aware control law that exploits the external agent contribution towards the achievement of the robot control objective, thus actively taking into account the physical agent-robot interaction. The associated analysis is based on the \emph{energy} of the robot's control task.

\begin{lemma}
	
	Assume that the control objective is to asymptotically stabilize the following point
	\begin{equation}
		\left(\widetilde{\comVar{\chi}},\int\widetilde{\comVar{\chi}} ~ds\right) = (0,0)
		\label{eq:partner-aware-control-equilibrium-point}
	\end{equation}
	Apply the following robot torques to the robot system~\eqref{NERobot2}
	\begin{equation}
		\robtau = - \comVar{\Delta}^{\dagger} \ [ \ \comVar{\Lambda} \ + \
										 \robVar{K}_D \ \widetilde{\comVar{\chi}} \ + \ max(0,\alpha) \ \widetilde{\comVar{\chi}}^{\parallel} \ ] + \robVar{N}_{\comVar{\Delta}} {\robtau}_0
		\label{eq:partner-aware-control-law}
	\end{equation}
	with
		\begin{itemize}\setlength\itemsep{0.8em}
		    \item $ \comVar{\Delta} = \robVar{K}_d \ \robVar{J}_{\comVar{\chi}} \robVar{M}^{-1} [\robVar{B} + \robVar{J}_c^\top \bar{\comVar{G}}_2 ] \in \mathbb{R}^{6 \times \mathrm{n}}$
		    \item $ \robVar{N}_{\comVar{\Delta}}$ the null-space projector of the matrix $\comVar{\Delta}$
		    \item ${\robtau}_0$ a free $^{r}n-$dimensional vector
		    \item $ \comVar{\Lambda} = \robVar{K}_d \ [ \ [ \ \robVar{J}_{\comVar{\chi}} \robVar{M}^{-1} \robVar{J}_c^\top \ ] {\bar{\comVar{G}}_3}(\ageVar{q},\robVar{q},{\agenu},{\robnu}) -     					  		    \robVar{J}_{\comVar{\chi}} \robVar{M}^{-1} \robVar{h} \ + \
						  			\dot{\robVar{J}}_{\comVar{\chi}} \ {\robnu}  - \dot{\comVar{\chi}}_d \ ] + \robVar{K}_p \ \int_{0}^{t}(\comVar{chi} - \comVar{\chi}_d) ~ds \in \mathbb{R}^{6}$
		    \item $\robVar{K}_D \in \mathbb{R}^{6 \times 6}$ is a symmetric, positive-definite matrix
		    \item $\alpha \in \mathbb{R}$ is a component proportional to the external agent joint torques $\agetau$ projected along $\widetilde{\comVar{\chi}}^{\parallel}$ i.e., the direction parallel to $\widetilde{\comVar{\chi}}$
	\end{itemize}
	Assume that the matrix $\comVar{\Delta}$ is full rank matrix $\forall \ t \in \mathbb{R}^{+}$,
	\begin{itemize}\setlength\itemsep{1em}
		\item The trajectories $(\widetilde{\comVar{\chi}},\int\widetilde{\comVar{\chi}} ~ds)$ are globally bounded
		\item The equilibrium point \eqref{eq:partner-aware-control-equilibrium-point} is stable
	\end{itemize}
    \label{partner-aware-control-lemma}
\end{lemma}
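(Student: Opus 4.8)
The plan is to prove the claim by an energy-based Lyapunov argument that lifts the scalar reasoning of Sections~\ref{sec:helpful-interaction} and~\ref{sec:interaction-exploitation} to the coupled system~\eqref{NE-HR2}, the essential new ingredient being that the external agent now enters through its joint torques $\agetau$ rather than through a measured interaction wrench. I take as Lyapunov candidate the control energy
\[
\mathrm{V} = \tfrac{1}{2}\,\widetilde{\comVar{\chi}}^\top \widetilde{\comVar{\chi}} + \tfrac{1}{2}\Big(\textstyle\int_0^t \widetilde{\comVar{\chi}}\,ds\Big)^{\!\top} \robVar{K}_P \Big(\textstyle\int_0^t \widetilde{\comVar{\chi}}\,ds\Big),
\]
which is positive definite and radially unbounded in the state $(\widetilde{\comVar{\chi}},\int\widetilde{\comVar{\chi}}\,ds)$ for $\robVar{K}_P\succ 0$. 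Consequently, once I establish $\dot{\mathrm V}\le 0$, the sublevel sets of $\mathrm V$ are invariant, which gives global boundedness of the trajectories, and Lyapunov stability of the equilibrium~\eqref{eq:partner-aware-control-equilibrium-point} then follows in the standard way.

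First I would derive the closed-loop error dynamics. Differentiating the task map $\dot{\comVar{\chi}} = \robVar{J}_{\comVar{\chi}}\robnu$ (cf.~\eqref{eq:linear-jacobian-mapping}), substituting the robot acceleration read off from~\eqref{NERobot2}, and replacing the contact/interaction wrench by its coupled-dynamics expression~\eqref{external-wrenches-compact} restricted to the robot (with $\bar{\comVar{G}}_1,\bar{\comVar{G}}_2,\bar{\comVar{G}}_3$ the robot-relevant blocks), the task acceleration becomes
\[
\dot{\comVar{\chi}} = \robVar{J}_{\comVar{\chi}}\robVar{M}^{-1}[\robVar{B}+\robVar{J}_c^\top \bar{\comVar{G}}_2]\,\robtau + \comVar{\eta} + \robVar{J}_{\comVar{\chi}}\robVar{M}^{-1}\robVar{J}_c^\top \bar{\comVar{G}}_3 - \robVar{J}_{\comVar{\chi}}\robVar{M}^{-1}\robVar{h} + \dot{\robVar{J}}_{\comVar{\chi}}\robnu,
\]
where the coefficient of $\robtau$ equals $\robVar{K}_d^{-1}\comVar{\Delta}$ and $\comVar{\eta} := \robVar{J}_{\comVar{\chi}}\robVar{M}^{-1}\robVar{J}_c^\top\bar{\comVar{G}}_1\,\agetau$ is precisely the external agent's contribution to the task acceleration. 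Inserting the control law~\eqref{eq:partner-aware-control-law} and using the full-rank hypothesis on $\comVar{\Delta}$ (so that $\comVar{\Delta}\comVar{\Delta}^{\dagger}=\comVar{I}$ and $\comVar{\Delta}\robVar{N}_{\comVar{\Delta}}=\comVar{0}$, whence $\robtau_0$ drops out), the model-dependent terms collected in $\comVar{\Lambda}$ cancel while the feedforward $\dot{\comVar{\chi}}_d$ and the integral action are recovered, leaving
\[
\dot{\widetilde{\comVar{\chi}}} + \robVar{K}_P\!\int_0^t \widetilde{\comVar{\chi}}\,ds = -\robVar{K}_D\,\widetilde{\comVar{\chi}} + \comVar{\eta} - \max(0,\alpha)\,\widetilde{\comVar{\chi}}^{\parallel}.
\]
The key point is that $\comVar{\eta}$ survives untouched, because the control law deliberately refrains from cancelling the partner's action.

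Finally I would compute $\dot{\mathrm V} = \widetilde{\comVar{\chi}}^\top\big(\dot{\widetilde{\comVar{\chi}}} + \robVar{K}_P\int_0^t\widetilde{\comVar{\chi}}\,ds\big)$; the integral cross-terms cancel against the integral action above, and decomposing $\comVar{\eta}$ along and across the velocity error exactly as in~\eqref{eq:wrench-decomposition}, $\comVar{\eta} = \alpha\,\widetilde{\comVar{\chi}}^{\parallel} + \beta\,\widetilde{\comVar{\chi}}^{\perp}$ with $\alpha = \widetilde{\comVar{\chi}}^\top\comVar{\eta}/\norm{\widetilde{\comVar{\chi}}}$, the perpendicular part is annihilated by $\widetilde{\comVar{\chi}}^\top$ and $\widetilde{\comVar{\chi}}^\top\widetilde{\comVar{\chi}}^{\parallel} = \norm{\widetilde{\comVar{\chi}}}$, so that
\[
\dot{\mathrm V} = -\,\widetilde{\comVar{\chi}}^\top\robVar{K}_D\,\widetilde{\comVar{\chi}} + \big(\alpha - \max(0,\alpha)\big)\norm{\widetilde{\comVar{\chi}}} = -\,\widetilde{\comVar{\chi}}^\top\robVar{K}_D\,\widetilde{\comVar{\chi}} + \min(\alpha,0)\,\norm{\widetilde{\comVar{\chi}}} \le 0,
\]
since $\robVar{K}_D\succ 0$ and $\min(\alpha,0)\le 0$. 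This is the crux of the argument: the clipping term $\max(0,\alpha)$ exactly removes the \emph{harmful} ($\alpha>0$) part of the partner's action while leaving the \emph{helpful} ($\alpha\le 0$) part to reinforce the energy decay, reproducing the helpful-interaction criterion of Section~\ref{sec:helpful-interaction}. From $\dot{\mathrm V}\le 0$ I conclude global boundedness of $(\widetilde{\comVar{\chi}},\int\widetilde{\comVar{\chi}}\,ds)$ and stability of~\eqref{eq:partner-aware-control-equilibrium-point}.

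The main obstacle is not the Lyapunov step itself but the bookkeeping that precedes it: reducing the coupled wrench relation~\eqref{external-wrenches-compact} to the robot-only blocks $\bar{\comVar{G}}_1,\bar{\comVar{G}}_2,\bar{\comVar{G}}_3$, and checking that the pseudoinverse cancellation with $\comVar{\Delta}=\robVar{K}_d\,\robVar{J}_{\comVar{\chi}}\robVar{M}^{-1}[\robVar{B}+\robVar{J}_c^\top\bar{\comVar{G}}_2]$ closes cleanly. This requires $\robVar{K}_d$ invertible so that $\comVar{\Delta}^{\dagger}$ factors through $\robVar{K}_d^{-1}$, the full-rank assumption so that $\comVar{\Delta}\comVar{\Delta}^{\dagger}=\comVar{I}$, and the proportional, derivative and integral gains chosen compatibly so that the integral cross-terms genuinely cancel in $\dot{\mathrm V}$; note that a spurious algebraic loop is avoided precisely because the feedback uses $\agetau$ rather than a wrench estimate built from $\robtau$. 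I would also emphasise that the conclusion is only stability plus boundedness, not asymptotic stability, consistent with $\dot{\mathrm V}$ being merely negative semidefinite (it does not dominate the integral state); upgrading to asymptotic convergence would require a LaSalle-type invariance argument, which the lemma does not assert.
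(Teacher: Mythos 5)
Your proof follows essentially the same route as the paper's: the same energy-type Lyapunov candidate, differentiation of the task map, substitution of the robot acceleration with the coupled-dynamics wrench decomposition $\robVar{f} = \bar{\comVar{G}}_1\agetau + \bar{\comVar{G}}_2\robtau + \bar{\comVar{G}}_3$, the parallel/perpendicular split of the agent's contribution, and cancellation of $\comVar{\Lambda}$ by the pseudoinverse under the full-rank hypothesis. Two remarks are worth making. First, your final expression $\dot{\mathrm{V}} = -\widetilde{\comVar{\chi}}^\top\robVar{K}_D\widetilde{\comVar{\chi}} + \min(\alpha,0)\norm{\widetilde{\comVar{\chi}}}$ differs from the paper's displayed result $\dot{\mathrm{V}} = -\widetilde{\comVar{\chi}}^\top\robVar{K}_D\widetilde{\comVar{\chi}} - \max(0,\alpha)\norm{\widetilde{\comVar{\chi}}}$: yours is the algebraically correct consequence of the stated control law, since the surviving agent term $\alpha\norm{\widetilde{\comVar{\chi}}}$ combines with the injected $-\max(0,\alpha)\norm{\widetilde{\comVar{\chi}}}$ to give $\alpha - \max(0,\alpha) = \min(\alpha,0)$, whereas the paper appears to drop the surviving term in the substitution. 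Your reading (help means $\alpha \le 0$, harm is clipped away) is also the one consistent with the helpful-interaction criterion of the earlier sections and with the paper's own experimental discussion, even though it contradicts the sentence in the paper's proof claiming that a positive $\alpha$ accelerates the decrease. Either expression is nonpositive, so the boundedness and stability conclusions are unaffected. Second, a small bookkeeping caveat: the paper weights the velocity-error term of the Lyapunov function by $\robVar{K}_d$, which is what makes the lemma's definitions of $\comVar{\Delta}$ and $\comVar{\Lambda}$ (both carrying a left factor of $\robVar{K}_d$) cancel exactly against $\robVar{K}_d\,\dot{\comVar{\chi}}$ in $\dot{\mathrm{V}}$; with your unweighted candidate you must push a $\robVar{K}_d^{-1}$ through $\comVar{\Delta}^{\dagger}$ and through the integral term inside $\comVar{\Lambda}$, which only closes cleanly if the gain matrices commute (or are scalar multiples of the identity). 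Adopting the paper's weighted candidate removes that condition; otherwise your argument is sound.
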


\begin{proof}

The stability of $\widetilde{\comVar{\chi}}$ from Eq.~\eqref{eq:partner-aware-control-equilibrium-point} can be analyzed by considering the following Lyapunov function:

\begin{equation}
\label{eq:partner-aware-control-LyapunovFunction}
\mathrm{V} = \frac{\robVar{K}_d}{2} \norm{\comVar{\chi} - \comVar{\chi}_d}^{2} + \frac{\robVar{K}_p}{2} \norm{\int_{0}^{t}(\comVar{\chi} - \comVar{\chi}_d) ~ds}^{2}
\end{equation}
where $\robVar{K}_d, \robVar{K}_p \in \mathbb{R}^{p \times p}$ are two symmetric, positive-definite matrices. Now, on differentiating $\mathrm{V}$, we get:
\begin{subequations}
	\begin{equation}
	\dot{\mathrm{V}} = \robVar{K}_d \ (\comVar{\chi} - \comVar{\chi}_d)^\top \ (\dot{\comVar{\chi}} - \dot{\comVar{\chi}}_d) + \robVar{K}_p \ \int_{0}^{t}(\comVar{\chi} - \comVar{\chi}_d)^\top ds \ (\comVar{\chi} - \comVar{\chi}_d) \notag
	\end{equation}
	\begin{equation}
	\Rightarrow \dot{\mathrm{V}} =  \robVar{K}_d \ \widetilde{\comVar{\chi}}^\top \ (\dot{\comVar{\chi}} - \dot{\comVar{\chi}}_d) + \robVar{K}_p \ (\comVar{\chi} - \comVar{\chi}_d)^\top \ \int_{0}^{t}(\comVar{\chi} - \comVar{\chi}_d) ~ds \  \notag
	\end{equation}
	\begin{equation}
	\Rightarrow \dot{\mathrm{V}} = \widetilde{\comVar{\chi}}^\top \ [ \robVar{K}_d \ (\dot{\comVar{\chi}} - \dot{\comVar{\chi}}_d) + \robVar{K}_p \ \int_{0}^{t}(\comVar{\chi} - \comVar{\chi}_d) ~ds] \  \notag
	\end{equation}
\end{subequations}
\begin{equation}
\Rightarrow \dot{\mathrm{V}} = \widetilde{\comVar{\chi}}^\top \ [\robVar{K}_d \ \dot{\comVar{\chi}} + \robVar{K}_p \ \int_{0}^{t}(\comVar{\chi} - \comVar{\chi}_d) ~ds - \robVar{K}_d \ \dot{\comVar{\chi}}_d]
\label{Vdot}
\end{equation}

Following the relation in Eq.~\eqref{eq:linear-jacobian-mapping}, on differentiating the link velocity $\comVar{\chi}$ we have the following relation:
\begin{equation}
\dot{\comVar{\chi}} = \robVar{J}_{\comVar{\chi}} ~\dot{\robnu} + \dot{\robVar{J}}_{\comVar{\chi}} ~\robnu
\label{chi_jacobian_diff}
\end{equation}

Using equation (\ref{chi_jacobian_diff}) in equation (\ref{Vdot}) we can write:
\begin{equation}
\dot{\mathrm{V}} = \widetilde{\comVar{\chi}}^\top \ [ \robVar{K}_d \ [\robVar{J}_{\comVar{\chi}} ~\dot{\robnu} +\dot{\robVar{J}}_{\comVar{\chi}} ~\robnu] + \robVar{K}_p \ \int_{0}^{t}(\comVar{\chi} - \comVar{\chi}_d) ~ds - \robVar{K}_d \ \dot{\comVar{\chi}}_d]
\label{Vdot_final}
\end{equation}

The quantity $\dot{\robnu}$ in equation (\ref{Vdot_final}) is the robot's acceleration. Following the equation~\eqref{NERobot2} $\dot{\robnu}$ can be written as follows:
\begin{equation}
\dot{\robnu} = \robVar{M}^{-1}[\robVar{B} {\robtau} + \robVar{J}_c^\top \robVar{ f} - \robVar{h}]
\label{robot_acc}
\end{equation}
In equation (\ref{robot_acc}) $\robVar{ f}$ can be extracted from the equation (\ref{external-wrenches-compact}) by selecting the first and the last rows which can be represented as $\robVar{f} = \bar{\comVar{G}}_1 {\agetau} + \bar{\comVar{G}}_2 {\robtau} + {\bar{\comVar{G}}}_3(\ageVar{q},\robVar{q},{\agenu},{\robnu})$. Now, equation (\ref{robot_acc}) can be written as,

\begin{equation}
\dot{\robnu} = \robVar{M}^{-1}[\robVar{B} {\robtau} +
\robVar{J}_c^\top [\bar{\comVar{G}}_1 {\agetau} + \bar{\comVar{G}}_2 {\robtau} + 								{\bar{\comVar{G}}}_3(\ageVar{q},\robVar{q},{\agenu},{\robnu})] - \robVar{h}]
\end{equation}

\begin{equation}
\Rightarrow	\dot{\robnu} = \robVar{M}^{-1}
[\robVar{B} {\robtau} +
\robVar{J}_c^\top \bar{\comVar{G}}_1 {\agetau} + \robVar{J}_c^\top \bar{\comVar{G}}_2 {\robtau} + \robVar{J}_c^\top {\bar{\comVar{G}}}_3(\ageVar{q},\robVar{q},{\agetau},{\robtau}) - \robVar{h}] \notag
\end{equation}

\begin{equation}
\Rightarrow	\dot{\robnu} = \robVar{M}^{-1}
[\robVar{J}_c^\top \bar{\comVar{G}}_1 {\agetau} +
[\robVar{B} + \robVar{J}_c^\top \bar{\comVar{G}}_2 ] {\robtau} +
\robVar{J}_c^\top {\bar{\comVar{G}}}_3(\ageVar{q},\robVar{q},{\agenu},{\robnu}) - \robVar{h}] \notag
\end{equation}

\begin{equation}
\Rightarrow	\dot{\robnu} = [\robVar{M}^{-1}\robVar{J}_c^\top \bar{\comVar{G}}_1] {\agetau} +
[\robVar{M}^{-1} [\robVar{B} + \robVar{J}_c^\top \bar{\comVar{G}}_2 ]] {\robtau} +
[\robVar{M}^{-1} \robVar{J}_c^\top] {\bar{\comVar{G}}}_3(\ageVar{q},\robVar{q},{\agenu},{\robnu}) - \robVar{M}^{-1} \robVar{h}
\label{robot_acc_final}
\end{equation}

We have $\dot{\robnu}$ in equation (\ref{Vdot_final}) which can be replaced with equation (\ref{robot_acc_final}) leading to the following relation:

\begin{equation}
\begin{split}
\dot{\mathrm{V}} = \widetilde{\comVar{\chi}}^\top  \
[ \robVar{K}_d \ \robVar{J}_{\comVar{\chi}} [ \ [\robVar{M}^{-1}\robVar{J}_c^\top \bar{\comVar{G}}_1] {\agetau} +
[ \ \robVar{M}^{-1} [\robVar{B} + \robVar{J}_c^\top \bar{\comVar{G}}_2 ] \ ] {\robtau} + \\
[\robVar{M}^{-1} \robVar{J}_c^\top] {\bar{\comVar{G}}}_3(\ageVar{q},\robVar{q},{\agenu},{\robnu}) - \robVar{M}^{-1} \robVar{h} \ ] \\
+ \robVar{K}_d \ \dot{\robVar{J}}_{\comVar{\chi}} {\robnu} + \robVar{K}_p \ \int_{0}^{t}(\comVar{\chi} - \comVar{\chi}_d) ~ds - \robVar{K}_d \ \dot{\comVar{\chi}}_d] \notag
\end{split}
\end{equation}

\begin{equation}
\begin{split}
\Rightarrow \dot{\mathrm{V}} = \widetilde{\comVar{\chi}}^\top \
[\robVar{K}_d \ [\robVar{J}_{\comVar{\chi}} \robVar{M}^{-1}\robVar{J}_c^\top \bar{\comVar{G}}_1] {\agetau} +
[\robVar{K}_d \ \robVar{J}_{\comVar{\chi}} \robVar{M}^{-1} [\robVar{B} + \robVar{J}_c^\top \bar{\comVar{G}}_2 ] \ ] {\robtau} + \\
[ \robVar{K}_d \ \robVar{J}_{\comVar{\chi}} \robVar{M}^{-1} \robVar{J}_c^\top]  {\bar{\comVar{G}}}_3(\ageVar{q},\robVar{q},{\agenu},{\robnu}) - \\
\robVar{K}_d \ \robVar{J}_{\comVar{\chi}} \robVar{M}^{-1} \robVar{h} + \\
\robVar{K}_d \ \dot{\robVar{J}}_{\comVar{\chi}} \ {\robnu} + K_p \ \int_{0}^{t}(\comVar{\chi} - \comVar{\chi}_d) ds - \robVar{K}_d \ \dot{\comVar{\chi}}_d] \notag
\end{split}
\end{equation}

This can be written in a compact form as follows:

\begin{equation}
\Rightarrow \dot{\mathrm{V}} =
\widetilde{\comVar{\chi}}^\top \ \comVar{\Omega} \ {\agetau} +
\widetilde{\comVar{\chi}}^\top \ [ \ \comVar{\Delta} \ {\robtau} +
\ \comVar{\Lambda} \ ]
\label{Vdot_final_compact}
\end{equation}
where,
\begin{itemize}
	\item $ \comVar{\Omega} = \robVar{K}_d \ \robVar{J}_{\comVar{\chi}} \robVar{M}^{-1}\robVar{J}_c^\top \bar{\comVar{G}}_1 \in \mathbb{R}^{p \times {^{ea}{n}}} $
	\item $ \comVar{\Delta} = \robVar{K}_d \ \robVar{J}_{\comVar{\chi}} \robVar{M}^{-1} [\robVar{B} + \robVar{J}_c^\top \bar{\comVar{G}}_2 ] \in \mathbb{R}^{p \times {^{r}n}}$
	\item $ \comVar{\Lambda} = \robVar{K}_d \ [[\robVar{J}_{\comVar{\chi}} \robVar{M}^{-1} \robVar{J}_c^\top] {\bar{\comVar{G}}}_3(\ageVar{q},\robVar{q},{\agenu},{\robnu}) - \robVar{J}_{\comVar{\chi}} \robVar{M}^{-1} \robVar{h} + \dot{\robVar{J}}_{\comVar{\chi}} \ {\robnu} - \ \dot{\comVar{\chi}}_d] + \
	\robVar{K}_p \ \int_{0}^{t}(\comVar{\chi} - \comVar{\chi}_d) ~ds \in \mathbb{R}^{p}$
\end{itemize}

The stability of $\comVar{\chi}$ is ensured when $\dot{\mathrm{V}} \ \leq \ 0$, .i.e.,
\begin{equation}
\widetilde{\comVar{\chi}}^\top \ \comVar{\Omega} \ {\agetau} +
\widetilde{\comVar{\chi}}^\top \ [ \ \comVar{\Delta} \ {\robtau} +
\ \comVar{\Lambda} \ ] \leq 0
\label{Vdot_final_compact_relation}
\end{equation}

Now, the quantity $\mathbold{\Omega} \ \tau^{\scalebox{\hrsup_size}{H}}$ comprising human joint torques can be decomposed into two orthogonal components according to the following relation:

\begin{subequations}
	
	\begin{equation}
	\comVar{\Omega} \ {\agetau} =
	\alpha \ \widetilde{\comVar{\chi}}^{\parallel} +
	\beta \ \widetilde{\comVar{\chi}}^{\perp}
	\end{equation}
	
	\begin{equation}
	\widetilde{\comVar{\chi}}^{\parallel} = \frac{\widetilde{\comVar{\chi}}}{\norm{\widetilde{\comVar{\chi}}}}, \
	\alpha = \frac{\widetilde{\comVar{\chi}}^\top \comVar{\Omega} \ {\agetau}}{\norm{\widetilde{\comVar{\chi}}}}
	\end{equation}
	\label{humanT_decomp}
\end{subequations}
where $\alpha$ and $\beta$ are the components of $\comVar{\Omega} \ {\agetau}$. On using the decomposition of equation (\ref{humanT_decomp}), we can write equation (\ref{Vdot_final_compact}) as:

\begin{equation}
\dot{\mathrm{V}} =
\widetilde{\comVar{\chi}}^\top \ \alpha \ \widetilde{\comVar{\chi}}^{\parallel} +
\widetilde{\comVar{\chi}}^\top \ [ \ \comVar{\Delta} \ {\robtau} +
\ \comVar{\Lambda} \ ]
\label{Vdot_decomp}
\end{equation}

Note that the component $\beta$ is projected in the perpendicular direction of $\widetilde{\comVar{\chi}}$ and hence it does not contribute any value to $\dot{\mathrm{V}}$. Using the above control law~\eqref{eq:partner-aware-control-law}) in equations~\eqref{Vdot_decomp} leads to the following relation:

\begin{equation}
\dot{\mathrm{V}} =
- \
\widetilde{\comVar{\chi}}^\top \ \robVar{K}_D \ \widetilde{\comVar{\chi}} \ - \
\widetilde{\comVar{\chi}}^\top \ max(0,\alpha) \ \widetilde{\comVar{\chi}}^{\parallel}
\label{Vdot_control}
\end{equation}

The fact that the human joint torques help the robot to perform a control action is encompassed in the right hand side of above equation: a positive $\alpha$ makes the Lyapunov function decrease faster. Thus the control law~\eqref{eq:partner-aware-control-law} ensures that $\dot{\mathrm{V}} \leq 0$ which proves that the trajectories are globally bounded. From lyapunov theory, as $\dot{\mathrm{V}} \leq 0 $ in the neighborhood of the point $(0,0)$ the equilibrium point~\eqref{eq:partner-aware-control-equilibrium-point} is stable.

It is important to observe that Eq.~\eqref{eq:partner-aware-control-law} depends upon the external agent joint torques. In fact, the scalar $\alpha$ represents the projection of the external agent joint torques into the desired robot direction. As a consequence, all human joint torques that will make the energy function~\eqref{eq:partner-aware-control-LyapunovFunction} decrease faster are not canceled out from the feedback control action.

Observe also that the results in Lemma 1 do not include the convergence to zero of the robot equilibrium point. This additional property of the control laws~\eqref{eq:partner-aware-control-law} is currently being investigated, and requires additional properties on the human control system for the application of the Barbalat's lemma. In fact, to ensure this convergence, the next step of the proof is to show that the Lyapunov function~\eqref{eq:partner-aware-control-LyapunovFunction} has a bounded second order time derivative, i.e. $|\ddot{V}|<c$, which requires additional assumptions on the human motion currently being investigated.

Finally, observe that the control laws~\eqref{eq:partner-aware-control-law} include a degree of redundancy under the assumption that the matrix $\comVar{\Delta}$ is fat, i.e. the dimension of the robot control task is  lower than the robot actuation. As a consequence, the free vector ${\robtau}_0$ can be used for other control purposes.

\end{proof}

\section{Whole-Body Standup Controller}
\label{sec:whole-body-standup-controller}

The whole-body standup controller is based on momentum control that is briefly described in Appendix~\ref{app:whole-body-momentum-control}. The main control variable is the momentum of the robot. So, Eq.~\eqref{eq:linear-jacobian-mapping} becomes,

\begin{equation}
\robVar{L} = \robVar{J}_{cmm}(\robVar{q}) \ \robnu
\label{Robot-momentum}
\end{equation}

\noindent where, $\robVar{J}_{cmm}$ is the centroidal momentum matrix. The primary task of the robot we considered is to perform a standup motion by moving its center of mass through the sit-to-stand transition with momentum control as the primary control objective. A Matlab Simulink controller using a whole-body toolbox\footnote{\href{https://github.com/robotology/wb-toolbox}{https://github.com/robotology/wb-toolbox}}~\citep{RomanoWBI17Journal} is implemented with the external agent and robot as the main sub systems as shown in Fig.~\ref{fig:standup_controller_full}.

\begin{figure}[H]
	\centering
	\includegraphics[width=0.85\textwidth]{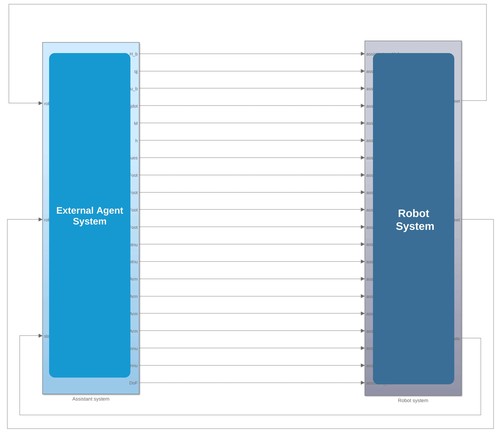}
	\caption{Whole-body standup controller implemented in Matlab Simulink}
	\label{fig:standup_controller_full}
\end{figure}

\subsection{Robot Subsystem}

The robot subsystem is composed of three main components as highlighted in Fig.~\ref{fig:standup_controller_robot}.

\begin{figure}[H]
	\centering
	\includegraphics[width=0.85\textwidth]{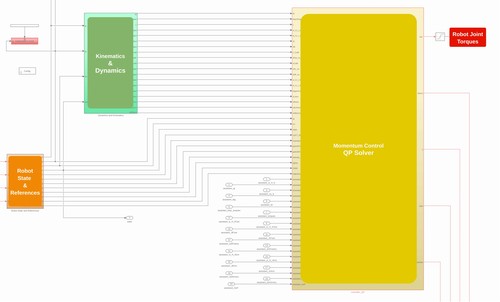}
	\caption{Main components of whole-body standup controller robot subsystem implemented in Matlab Simulink}
	\label{fig:standup_controller_robot}
\end{figure}

\subsubsection{Robot State \& References}

This subsystem constitutes of a state machine that guides the robot behavior. There are four states for the robot as highlighted in  Fig. \ref{standup-states}. At each state a reference trajectory to the center of mass is generated through a minimum-jerk trajectory generator~\cite{Pattacini2010}. During the state 1, the robot balances on a chair and enters to state 2 when an interaction wrench of a set threshold is detected at the hands indicating the start of pull-up assistance from an external agent. During state 2, the robot moves its center of mass forwards and enters state 3 when the external wrench experienced at the feet of the robot is above a set threshold. During state 3, the robot moves its center of mass both in forward and upward directions and enters state 4 when the external wrench experienced at the feet of the robot are above another set threshold. Finally, during state 4 the robot moves its center of mass further upward to stand fully erect on both the feet. Accordingly, during the states 1 and 2 the contacts with the environment (chair), accounted in the controller, are at the \emph{upper legs} of the robot. Similarly, for the states 3 and 4 the \emph{feet} contacts of the robot with the environment (ground) are accounted in the controller.

\begin{figure}[H]
	\centering
	\includegraphics[width=\textwidth]{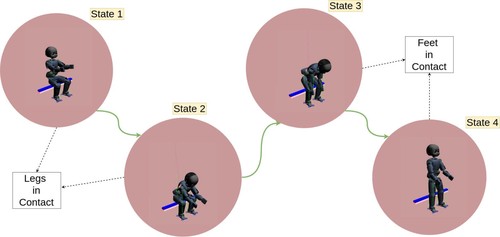}
	\caption{State machine of whole-body standup controller}
	\label{standup-states}
\end{figure}

\subsubsection{Kinematics \& Dynamics}

This subsystem computes the contact kinematics described by Eq.~\ref{HRholcon3} and dynamic quantities like mass matrix $\robVar{M}$ and bias forces $\robVar{h}$ depending on the state of the robot.

\subsubsection{Momentum Control QP Solver}

This subsystem implements the momentum control objective as an optimization problem represented by the system of equations~\eqref{optTorque}-\eqref{optPost} that is solved through an open source quadratic programming solver qpOASES~\cite{Ferreau2014}. The objective is to find the forces at the feet of the robot~\eqref{qpforces} while subject to the constraints of the feet friction cones~\eqref{frictionCones} and the momentum control objective~\eqref{momentumControl}. Having obtained the optimal contact forces $\robVar{f}^*_{feet}$ at the feet of the robot, the robot control torques $\robtau^*$ are evaluated according to~\eqref{optPost} subject to the constraints of the contact~\eqref{constraintsRigid}, robot dynamics~\eqref{robotDynamics} and the postural task~\eqref{posturalTask}.

\begin{IEEEeqnarray}{RCL}
	\IEEEyesnumber
	\label{optTorque}
	\robVar{f}^*_{feet} &=& \argmin_{\robVar{f}_{feet}}  |\robtau^*(\robVar{f})| \IEEEyessubnumber  \label{qpforces} \\
	&s.t.& \nonumber \\
	&& \robVar{A} \robVar{f} < \robVar{b} \IEEEyessubnumber  \label{frictionCones} \\
	&& \dot{\robVar{L}}(\robVar{f}, \alpha) = \dot{\robVar{L}}^*  \IEEEyessubnumber \label{momentumControl} \\
	&&\robtau^*(\robVar{f}) = \argmin_{\robtau}  |\robtau(\robVar{f}) - \robtau_0(\robVar{f})| 	\label{optPost} 
	\\
	&& \quad s.t.  \nonumber \\
	&& \quad \quad \ \comVar{P} \dot{\comVar{V}} + \dot{\comVar{P}} \comVar{V}  = 0
	\IEEEyessubnumber 	\label{constraintsRigid} \\
	&& \quad \quad \ \dot{\robnu} = \robVar{M}^{-1}[\robVar{B} {\robtau} + \robVar{J}_c^\top \robVar{ f} - \robVar{h}] \IEEEyessubnumber \label{robotDynamics} \\
	&& \quad \quad \ 	{\robtau}_0 = 
	\robVar{h}_j - \robVar{J}_j^\top \robVar{ f} {-}  \robVar{K}^j_{p}(\robVar{s}- \robVar{s}^d) {-} \robVar{K}^j_{d}\dot{\robVar{s}}		    \IEEEyessubnumber \label{posturalTask}
	\yesnumber
\end{IEEEeqnarray}

\section{Experiments}
\label{experiments}

Considering the human model as a multi-body mechanical system of rigid links allows us to use another humanoid robot in place of a human as an external agent for the experiment without the loss of integrity of the experiment to validate our framework. So, we designed an experimental scenario with two iCub humanoid robots as shown in Fig. \ref{fig:two-icubs}. 

\begin{figure}[!hbt]
	\centering
	\begin{subfigure}{0.49\textwidth}
		\centering
		\includegraphics[width=0.95\textwidth]{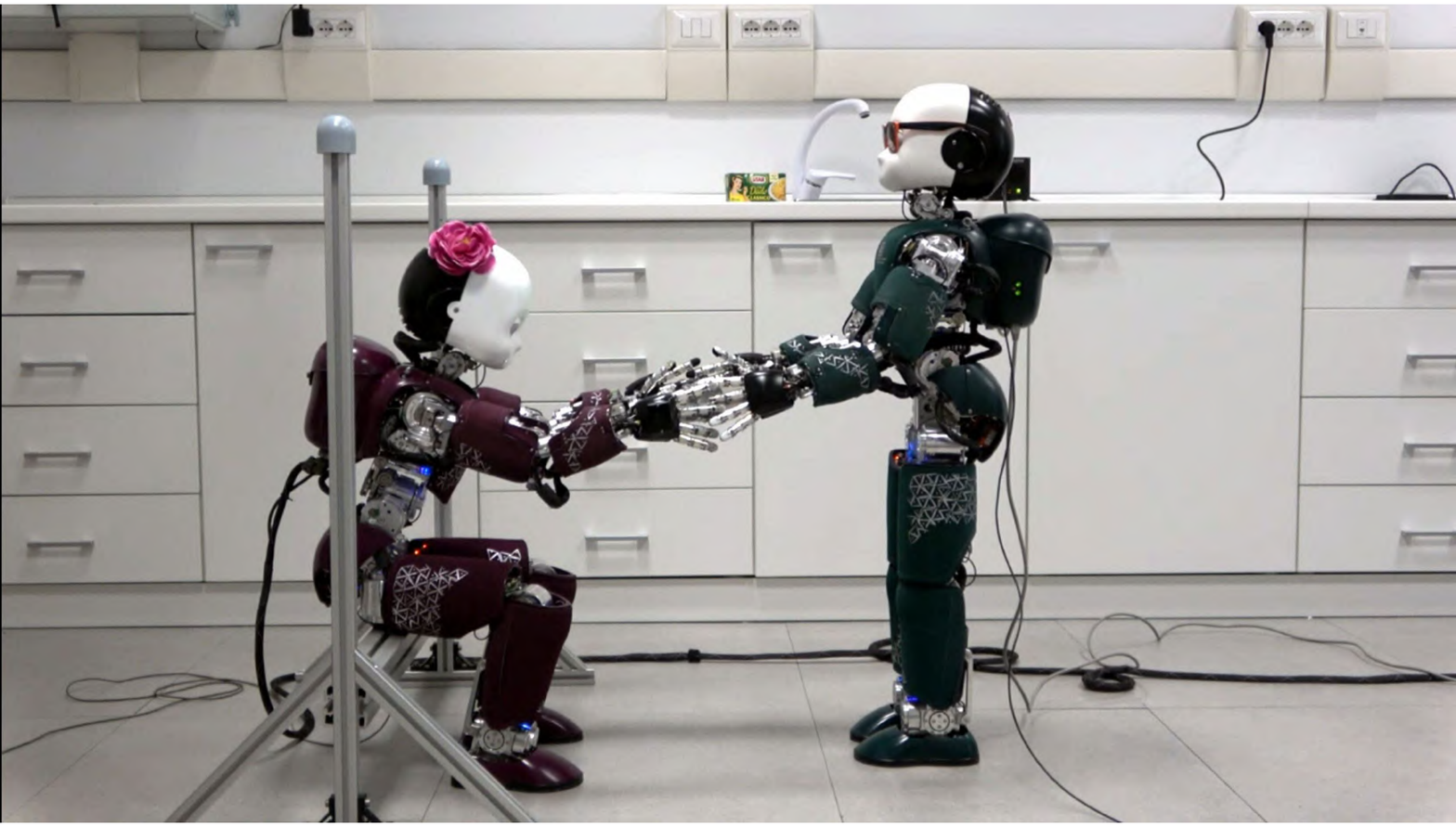}
		\label{fig:state1}
	\end{subfigure}%
	\begin{subfigure}{0.49\textwidth}
		\centering
		\includegraphics[width=0.95\textwidth]{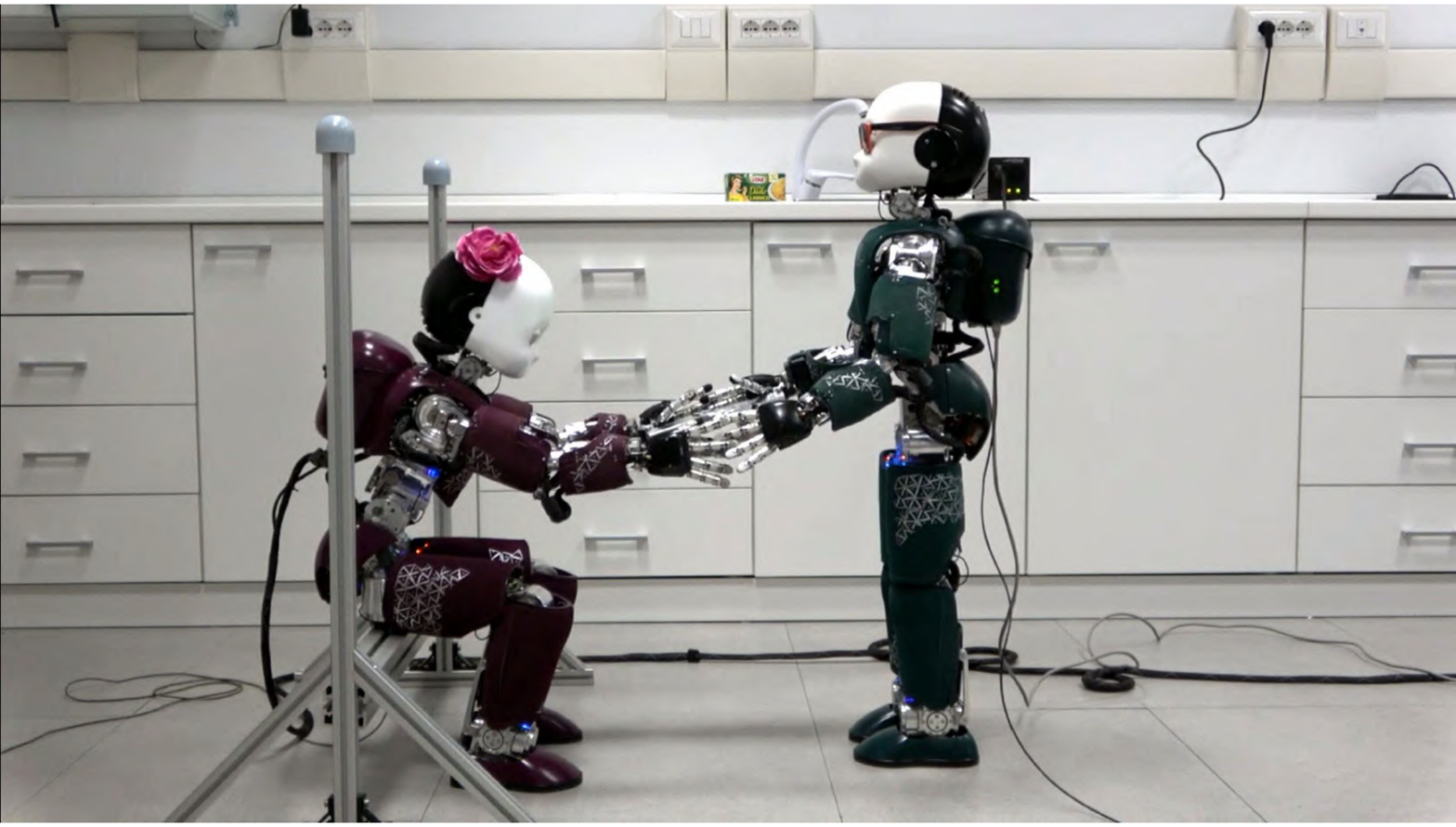}
		\label{fig:state2}
	\end{subfigure}
	\vspace{-0.5cm}
	\begin{subfigure}{0.49\textwidth}
		\centering
		\vspace{-1.25cm}\includegraphics[width=0.95\textwidth]{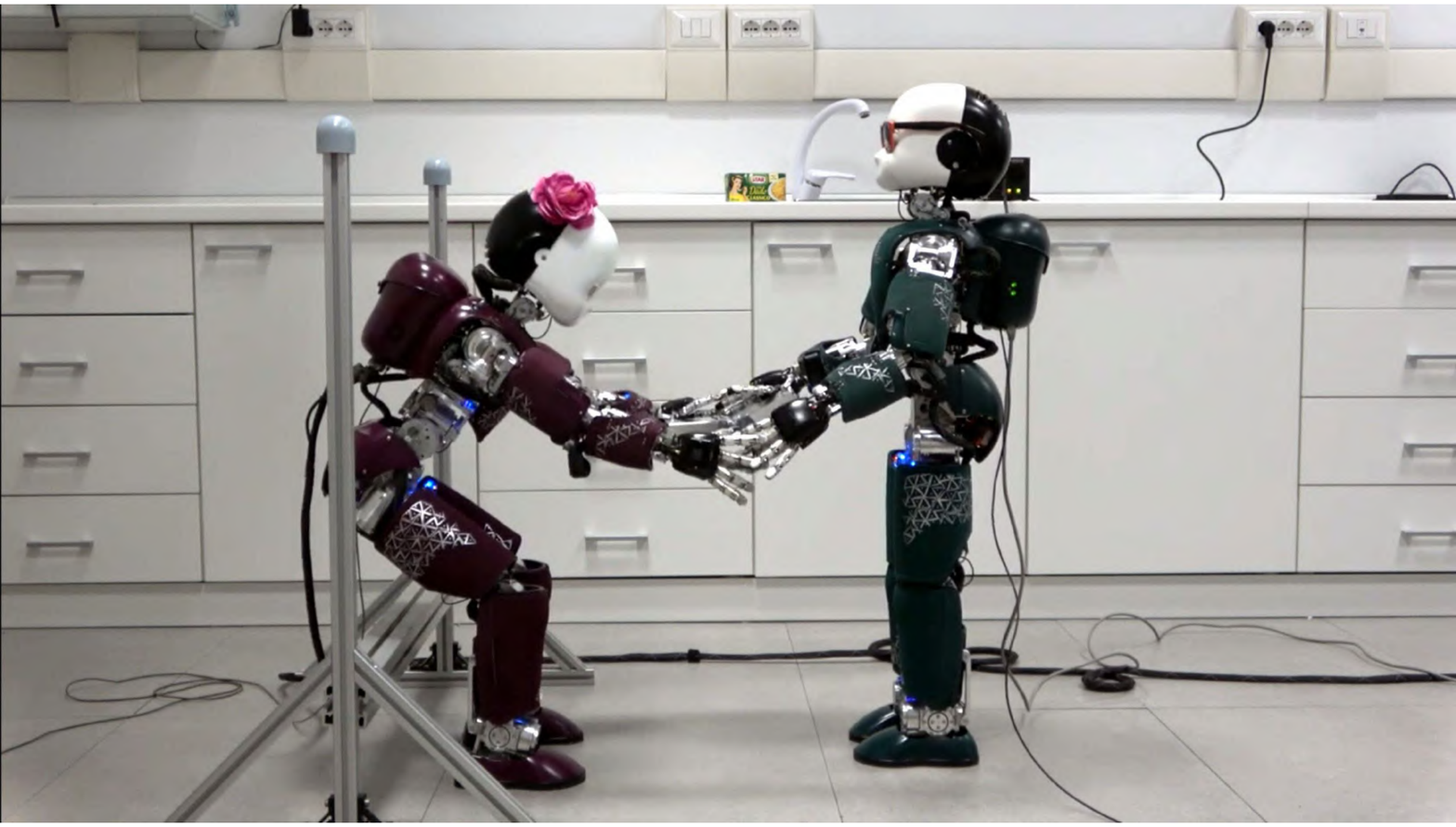}
		\label{fig:state3}
	\end{subfigure}%
	\begin{subfigure}{0.49\textwidth}
		\centering
		\vspace{-1.25cm}\includegraphics[width=0.95\textwidth]{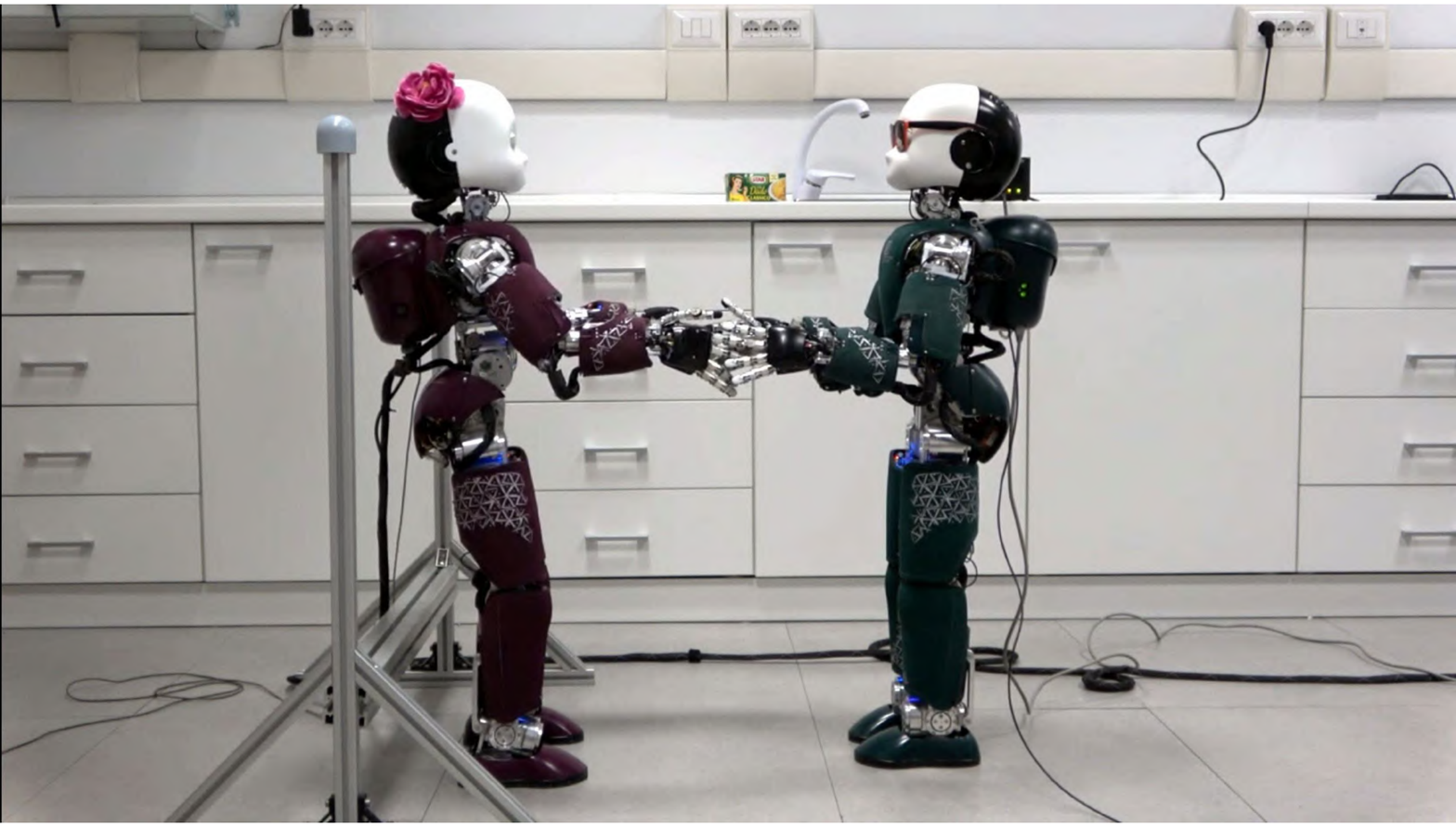}
		\label{fig:state4}
	\end{subfigure}
	\caption{Standup experimental scenario with two iCub robots involved in physical interaction}
	\label{fig:two-icubs}
\end{figure}

The \emph{purple} iCub robot is run in torque control mode and receives torque inputs from the controller for the standup task. The \emph{green} iCub robot is in position control mode. A predetermined trajectory generated using a minimum-jerk trajectory generator \cite{5650851} is given as a reference to the torso pitch, shoulder pitch and elbow joints of the \emph{green} iCub robot. The resulting motion mimics the pull-up assistance to the \emph{purple} iCub robot for performing the standup task. Hence, the \emph{green} iCub robot is considered as an external interacting agent whose joint motion is indicated in Fig. \ref{fig:humanQ} and the associated joint torques are shown in Fig. \ref{fig:humanTau}.

\begin{figure}[H]
	\centering
	\includegraphics[scale=0.2]{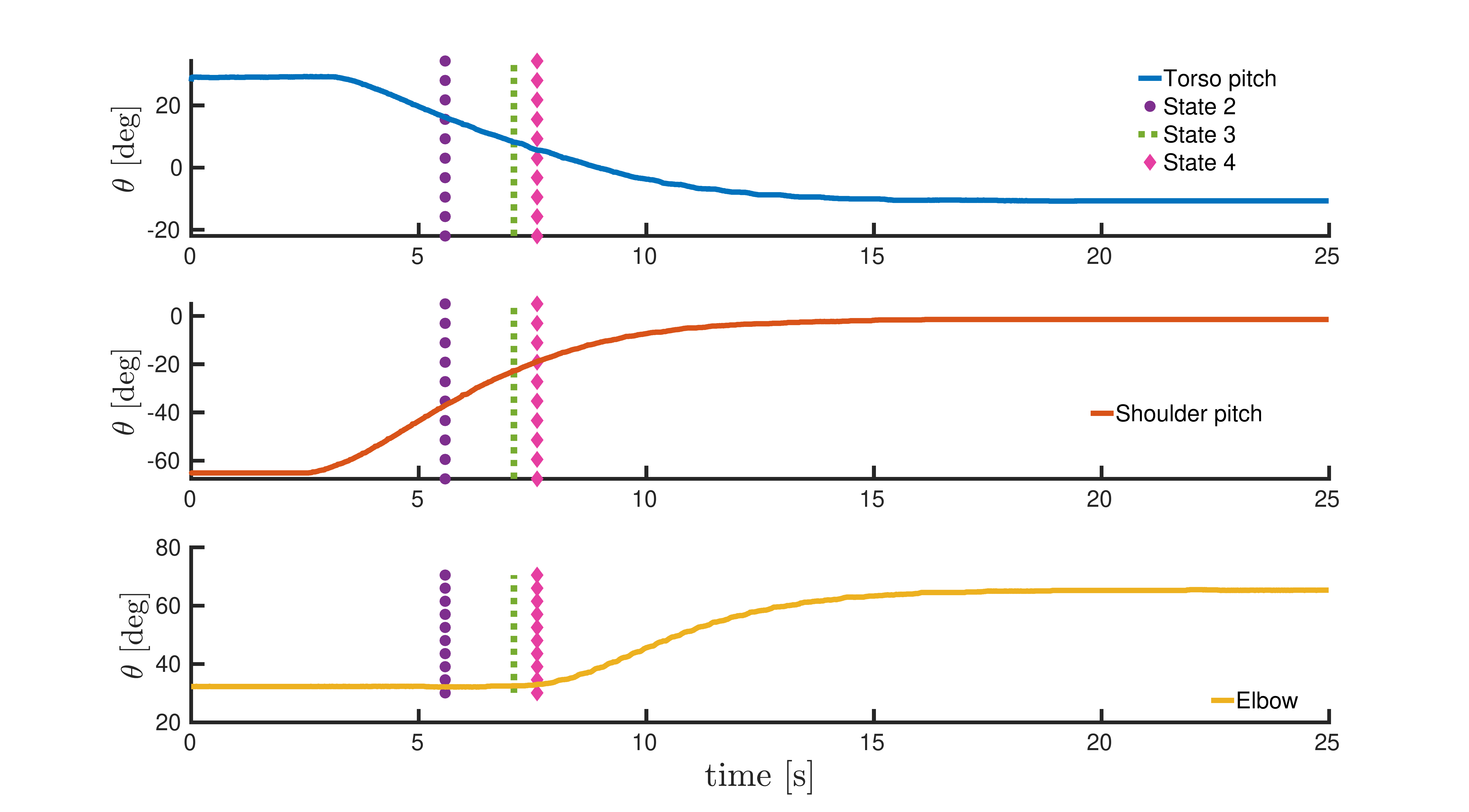}
	\caption{Interacting agent joint trajectories}
	\label{fig:humanQ}
\end{figure}

\begin{figure}[H]
	\centering
	\includegraphics[scale=0.2]{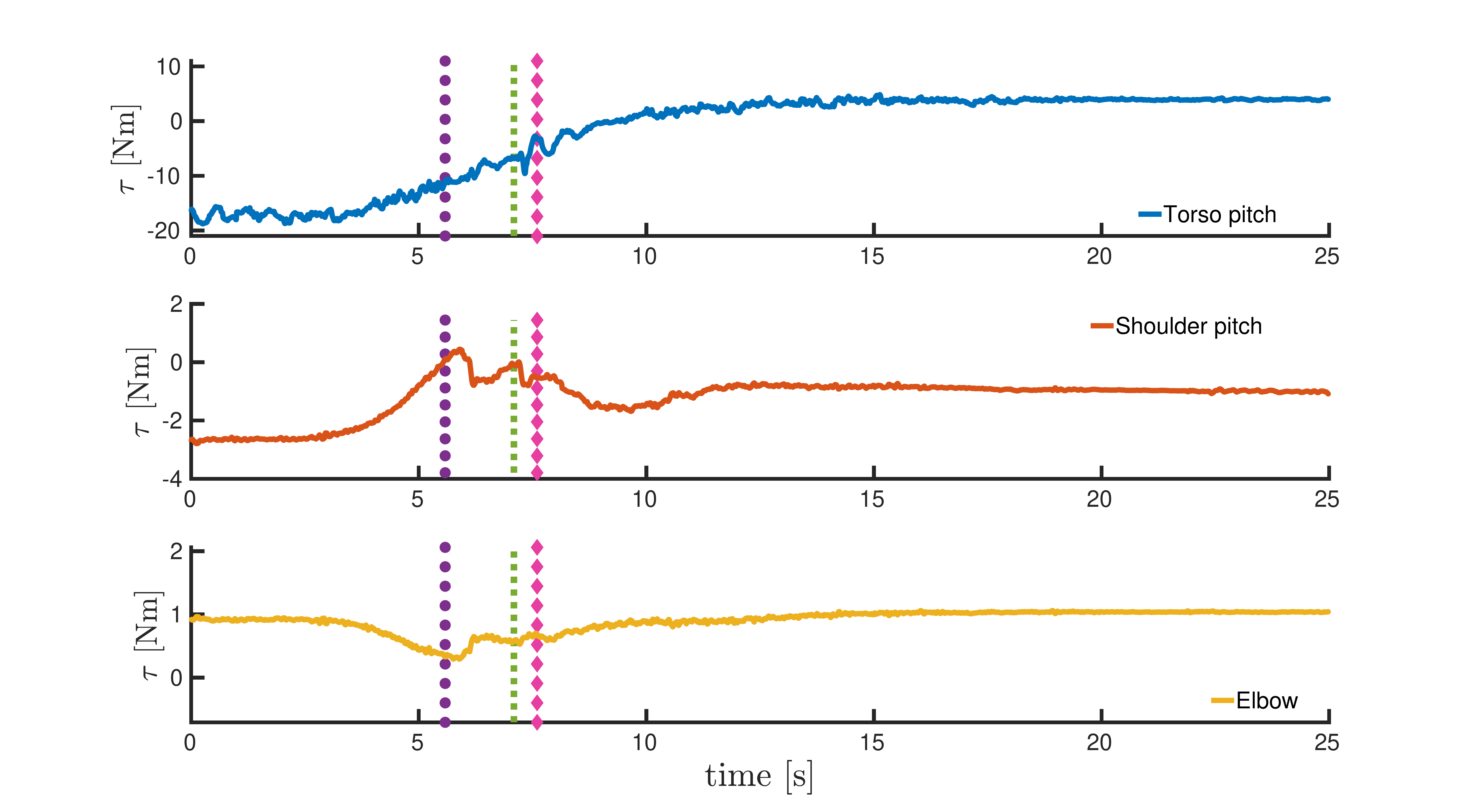}
	\caption{Interacting agent joint torques}
	\label{fig:humanTau}
\end{figure}

The hands of the iCub robot are quite fragile and are not designed to make \emph{sustained} power grasps. This posed quite a challenge during our experiments. So, we designed a new mechanical part highlighted in Figure.~\ref{fig:icub-hand-wrist-attachment} that is attached at the wrists of the iCub hands. Additionally, we designed new mechanical contraption shown in Figure.~\ref{fig:icub-hand-mechanical-contraption} that can be attached rigidly to the wrists of the robots ensuring rigid contacts between the hands during the entire duration of the experiment.

\begin{figure}[H]
	\centering
	\begin{subfigure}{0.65\textwidth}
		\centering
		\includegraphics[width=0.65\textwidth, angle=180]{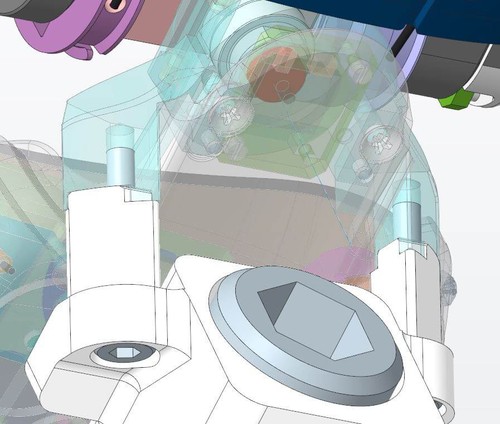}
		\caption{CAD Design}
	\end{subfigure}
	\begin{subfigure}{0.325\textwidth}
		\centering
		\includegraphics[width=0.825\textwidth]{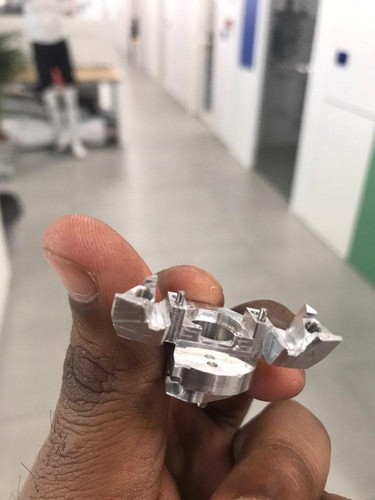}
		\caption{CNC Machined Part}
	\end{subfigure}
	\caption{New mechanical component (cyan) added at the wrist of iCub hand}
	\label{fig:icub-hand-wrist-attachment}
\end{figure}

\begin{figure}[H]
	\centering
	\begin{subfigure}{\textwidth}
		\centering
		\includegraphics[width=0.9\textwidth]{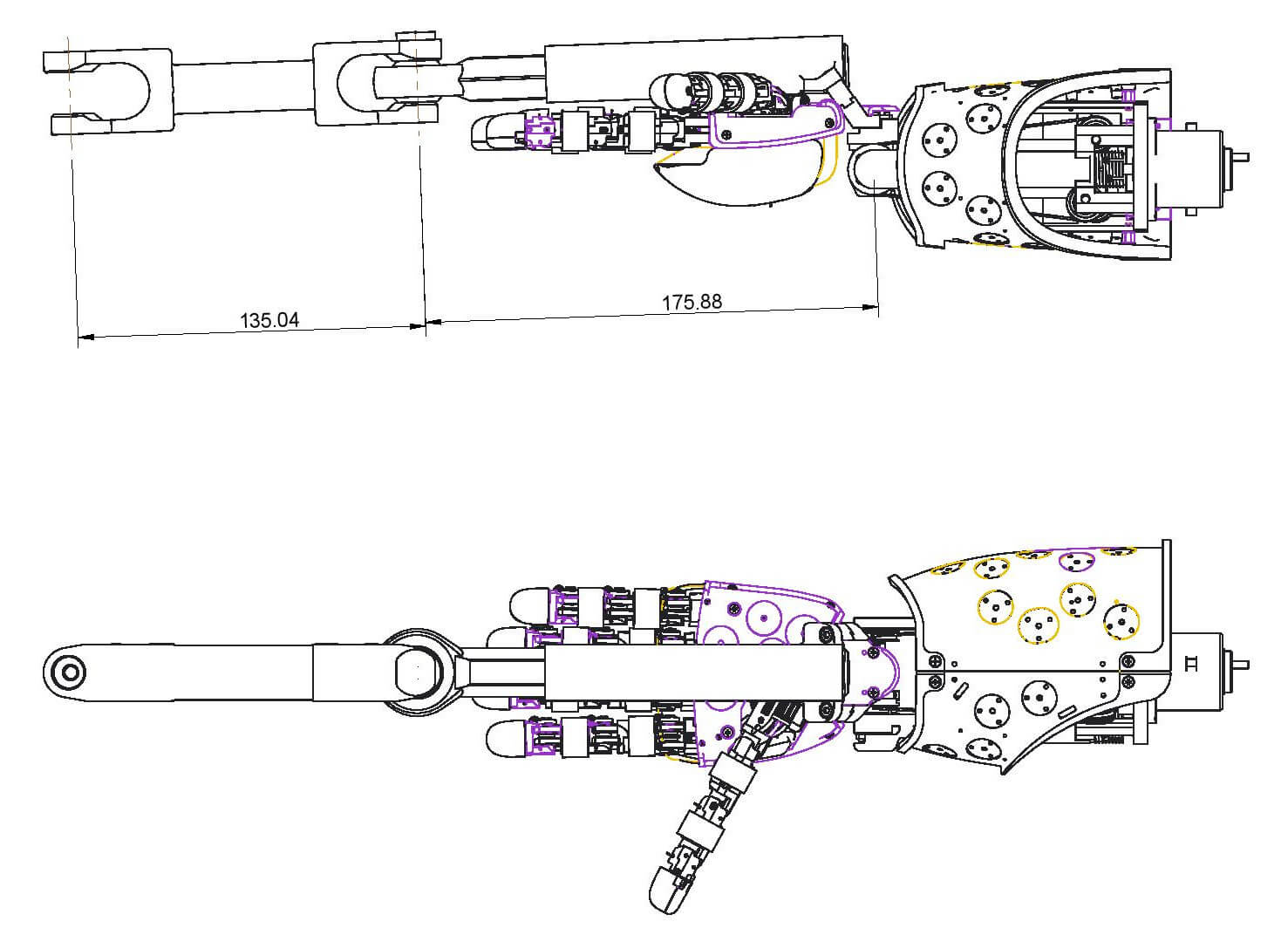}
	\end{subfigure}
	\begin{subfigure}{0.65\textwidth}
		\centering
		\includegraphics[width=0.85\textwidth]{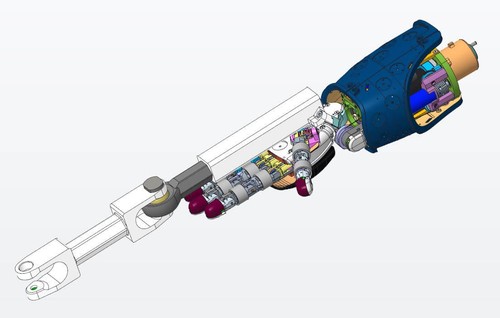}
	\end{subfigure}
	\begin{subfigure}{0.325\textwidth}
		\centering
		\includegraphics[width=0.95\textwidth]{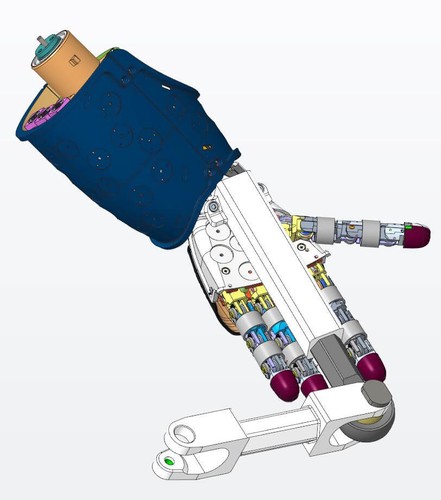}
	\end{subfigure}
	\begin{subfigure}{0.65\textwidth}
		\centering
		\includegraphics[width=0.64\textwidth, angle=90]{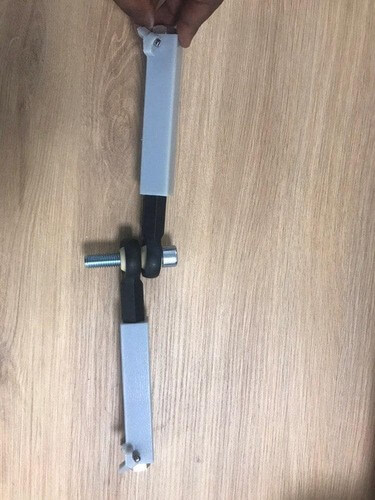}
	\end{subfigure}
	\begin{subfigure}{0.325\textwidth}
		\centering
		\includegraphics[width=0.95\textwidth]{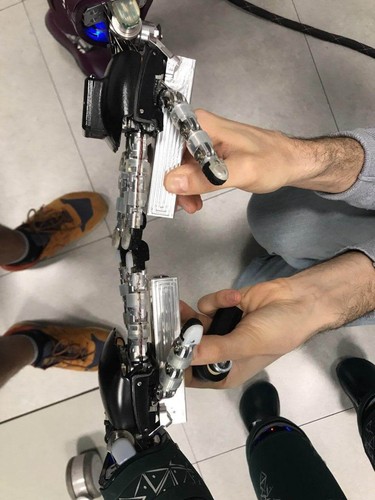}
	\end{subfigure}
	\caption{Mechanical contraption to ensure rigid contacts between iCub hands}
	\label{fig:icub-hand-mechanical-contraption}
\end{figure}

\section{Results}
\label{results}

A predetermined trajectory generated using a minimum-jerk trajectory generator \cite{5650851} is given as a reference to the center of mass of the \emph{purple} iCub robot to perform the sit-to-stand transition. At the start of the Simulink controller the \emph{purple} iCub robot is seated on the metallic structure that serves as a chair. Once the controller is started, it receives joint quantities as inputs from both the robots and actively generates joint torque inputs for the \emph{purple} iCub robot to maintain its momentum and track the center of mass.

The time evolution of the center of mass tracking is shown in Fig. \ref{comErr}. The vertical lines indicate the time instance at which a new state begins. Between states 2 and 3, the \emph{purple} iCub robot is seated on the chair with its upper leg as contact constraints. This seriously limits the robot motion along the $x$-direction and hence the tracking error of the center of mass along the $x$-direction is not negligible. Similarly, between states 3 and 4, the robot has to standup relative quickly as the trajectory smoothing is kept very low and the contact constraints change from the upper legs to the feet. This contact switching, along with unmodeled phenomena such as joint friction limits the robot motion along the $z$-direction and hence the tracking error of the center of mass along the $z$-direction is not negligible. Apart from these two instances, the overall center of mass tracking is good.

\begin{figure}[!hbt]
	\centering
	\includegraphics[scale=0.2]{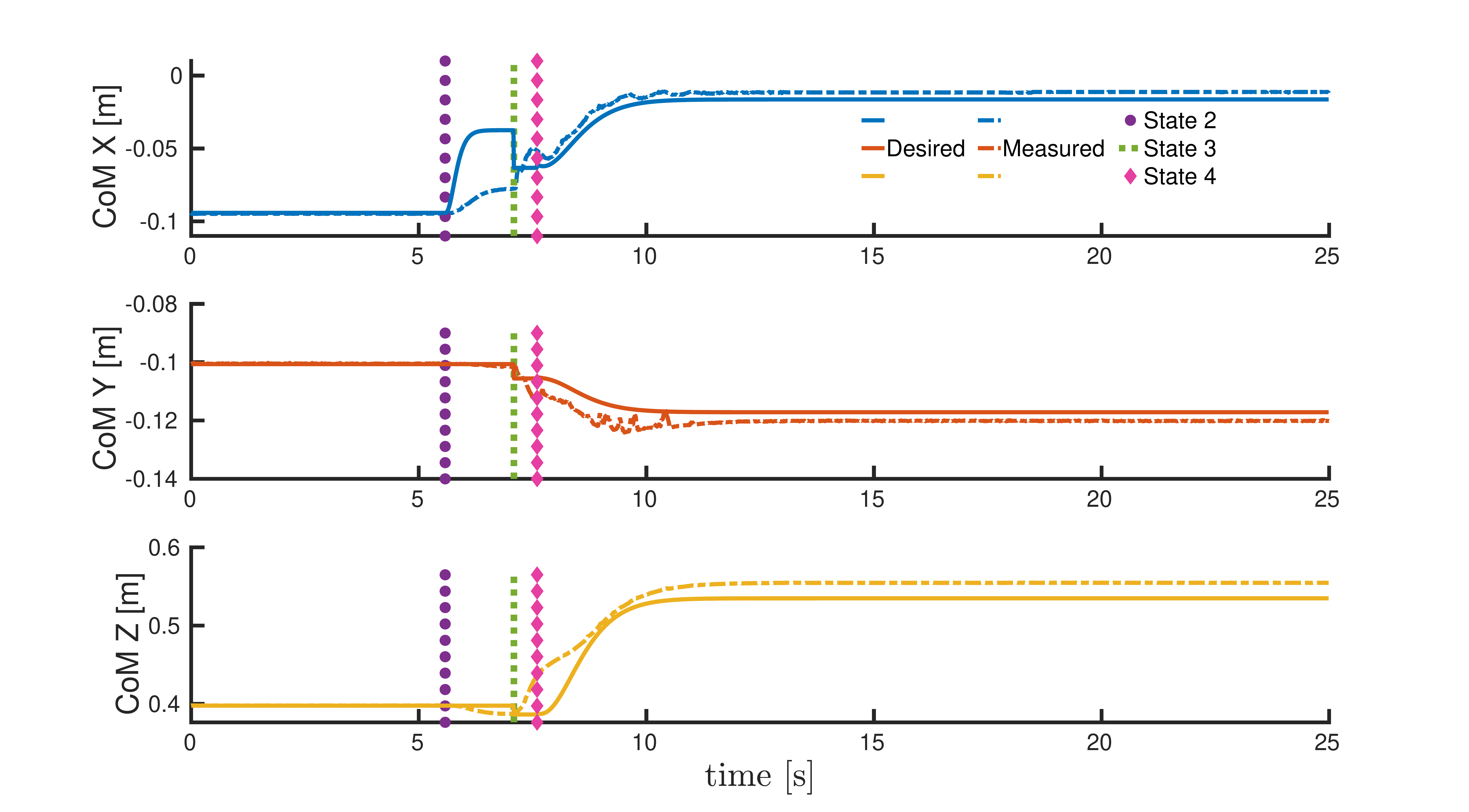}
	\caption{Time evolution of the desired and measured CoM trajectory while performing standup motion on application of the control law \eqref{eq:partner-aware-control-law}}
	\label{comErr}
\end{figure}

The primary control objective of momentum control is also realized well as highlighted by the time evolution of the linear and angular momentum of the robot as shown in Fig. \ref{momentumErr}. Between states 3 and 4, both the linear and angular momentum error increased momentarily. Understandably this results from the impact at the contact switching that occurs at the beginning of state 3. However, the overall robot momentum is maintained close to zero and eventually, the momentum error converges to zero when the robot becomes stable after standing fully erect.

\begin{figure}[!hbt]
	\centering
	\includegraphics[scale=0.2]{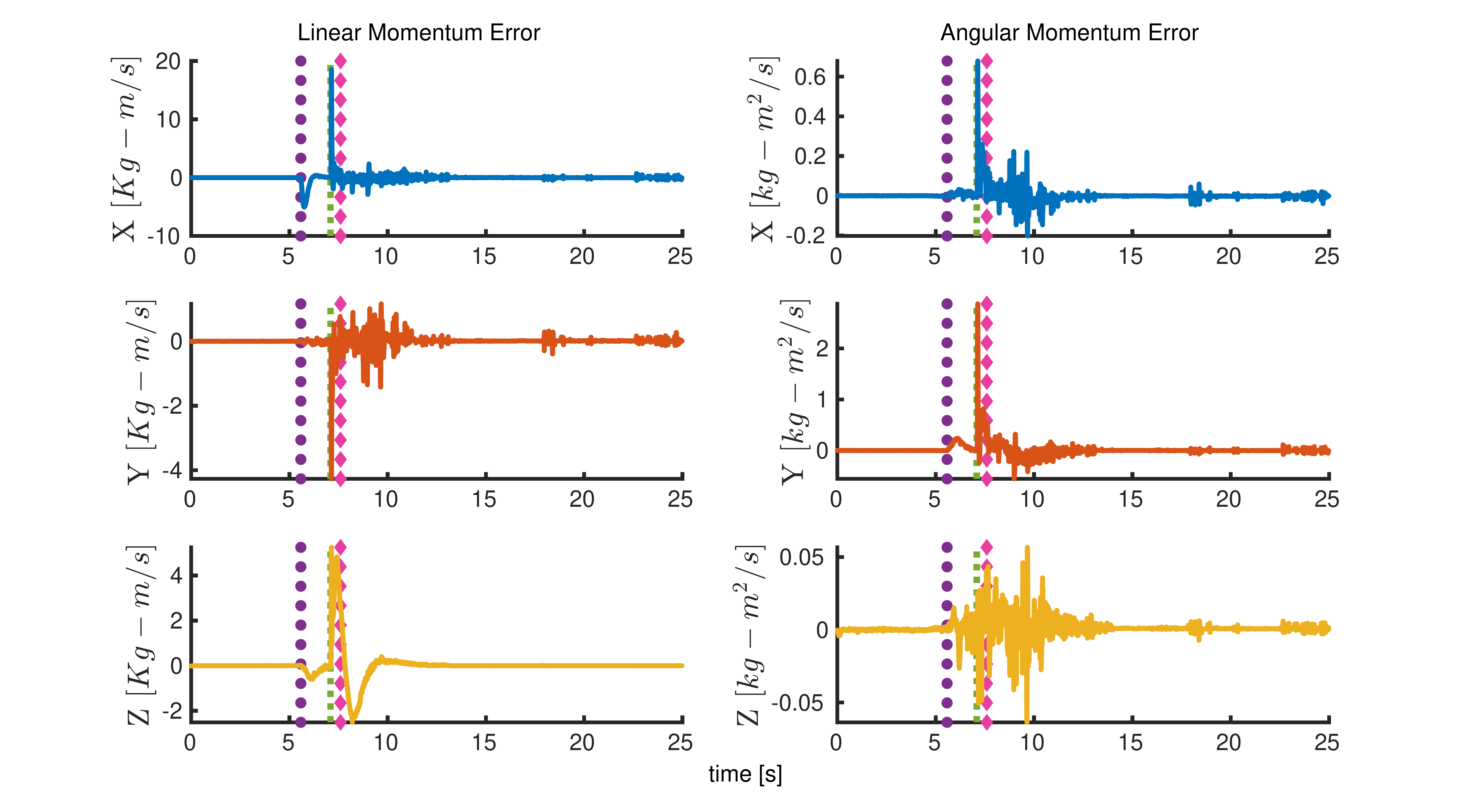}
	\caption{Time evolution of the linear and angular momentum while performing standup motion on application of the control law \eqref{eq:partner-aware-control-law}}
	\label{momentumErr}
\end{figure}

The time evolution of $\alpha$ i.e. the component of the interaction agent joint torques projected in the direction parallel to the task is shown in Fig. \ref{alpha}. The instantaneous values of $\alpha$ change throughout the experiment according to the joint torque values of the \emph{green} iCub robot. Between the states 2 and 3, the negative values of $\alpha$ contribute towards making the Lyapunov function decrease faster as indicated in Fig. \ref{vLyap}. This highlights the fact that the physical interaction with an external agent is exploited (in terms of the joint torques) by the \emph{purple} iCub robot to perform the standup task. 

\begin{figure}[!hbt]
	\centering
	\includegraphics[scale=0.195]{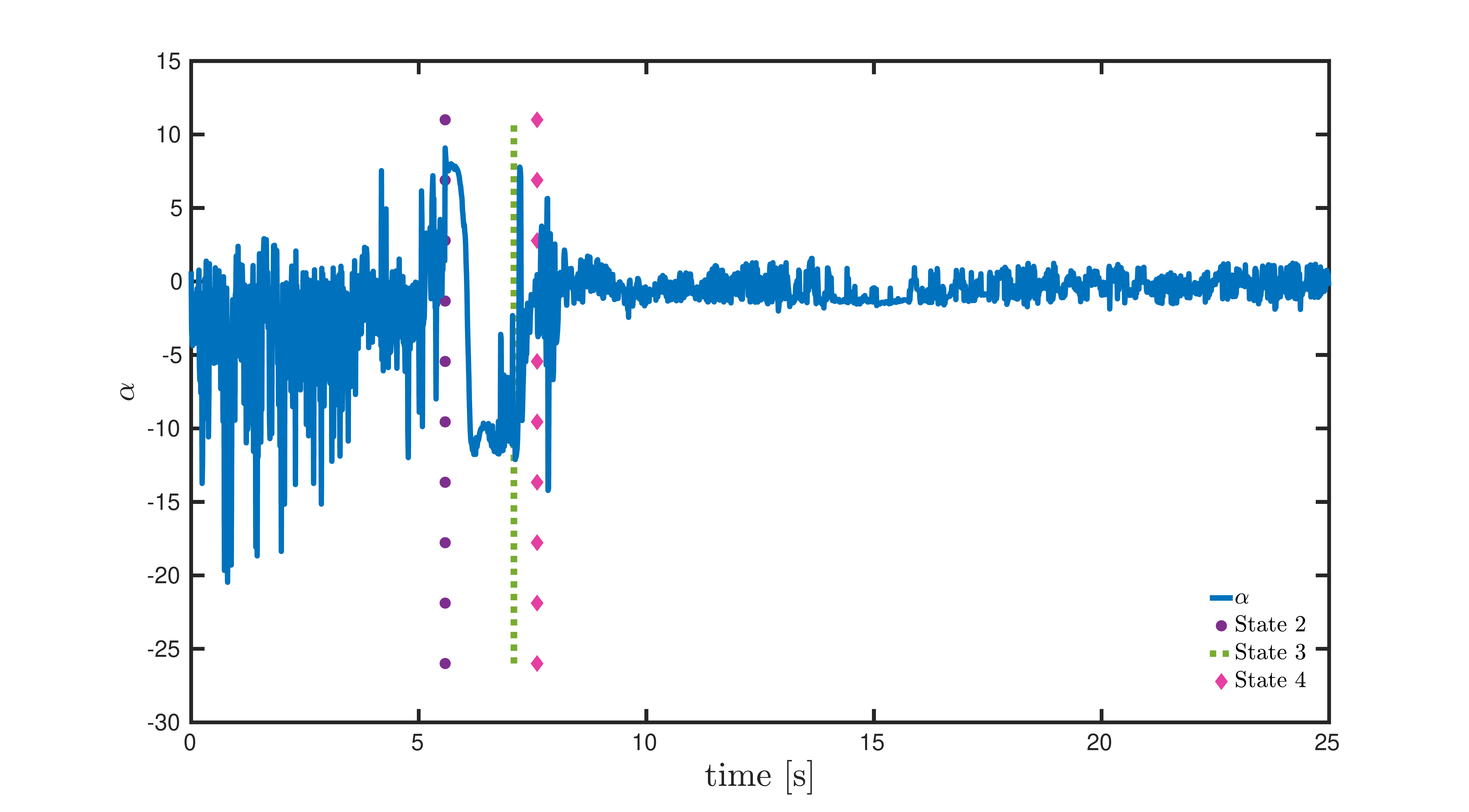}
	\caption{Time evolution of $\alpha$ under the influence of physical interaction}
	\label{alpha}
\end{figure}

The time evolution of the Lyapunov function $\mathrm{V}$ from equation (\ref{eq:partner-aware-control-LyapunovFunction}) is shown in Fig. \ref{vLyap}. After the controller is started, during state 1 the system has small energy while the robot is seated on the chair. This is highlighted in the inset plot shown for the duration between 1-2 seconds. Starting from state 2 as the robot starts moving, the total energy of the systems starts to increase as shown between states 2 and 3. As the robot enters state 3 the energy quickly drops during the contact switching from upper legs to feet. This is a direct reflection of exploiting the physical interaction with the \emph{green} iCub robot. Between states 3 and 4 while the robot is moving to a fully erect stance the energy rises slightly and eventually settles to a stable value. The inset plot during the duration between 16-17 seconds highlights the system energy when the robot is in a stable fully erect position.

\begin{figure}[!hbt]
	\centering
	\includegraphics[scale=0.27]{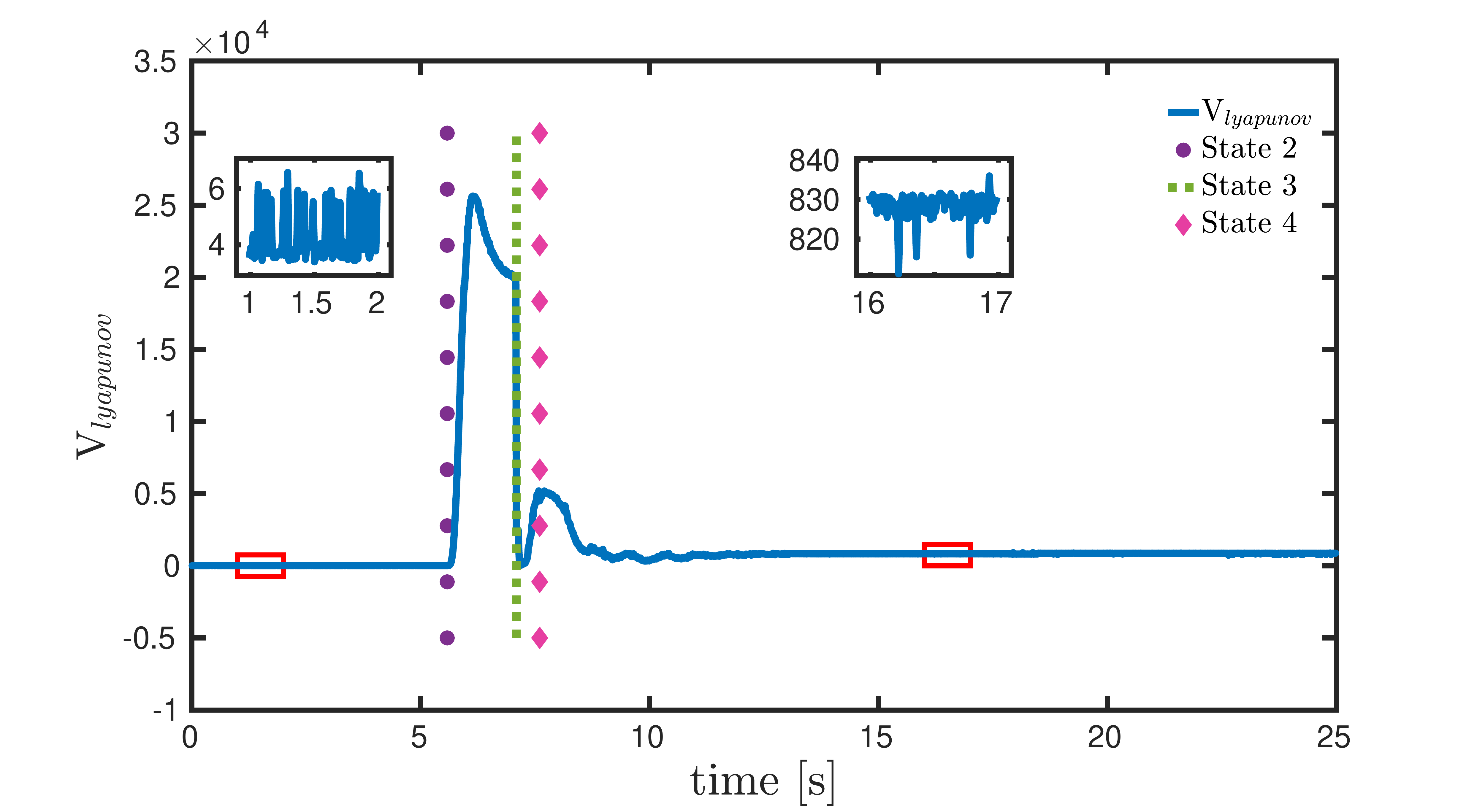}
	\caption{Time evolution of lyapunov function considered \eqref{eq:partner-aware-control-LyapunovFunction} on application of the control law \eqref{eq:partner-aware-control-law}}
	\label{vLyap}
\end{figure}

\chapter{Trajectory Advancement}
\label{cha:trajectory-advancement}

\chapreface{The previous chapter on partner-aware control proved to provide a stable robot behavior while following a given reference Cartesian trajectory for the robot's center of mass. Although the proposed partner-aware control law exploits help in terms of joint torques, the reference trajectory for the center of mass is not modified or updated under the influence of the help from an external agent. This chapter presents the concept of trajectory advancement through which the robot can advance along the reference trajectory leveraging assistance from physical interactions with a human partner.}
	
Considering the particular experiment of robot standing up~(Fig.~\ref{fig:example-icub-help}) using the assistance, an intuitive behavior one would expect for the robot is to stand up quicker by leveraging the assistance. Consider another example case of a manipulator robot moving along a given Cartesian reference trajectory performing a pick and place task~(Fig.~\ref{fig:example-cobot-help}). An intuitive interaction of a human with the intention to speed up the robot motion is to apply forces in the robot's desired direction. Under such circumstances, traditionally, the robot can either render a compliant behavior through impedance/admittance control or switch to gravity compensation mode that allows the human to move the robot freely (compromising task accuracy). Instead, a more intuitive behavior is to advance further along the reference trajectory and complete the task quicker.

\begin{figure}[H]
	\centering
	\begin{subfigure}{0.49\textwidth}
		\centering
		\includegraphics[clip, trim=0cm 2cm 0cm 0cm, scale=0.375]{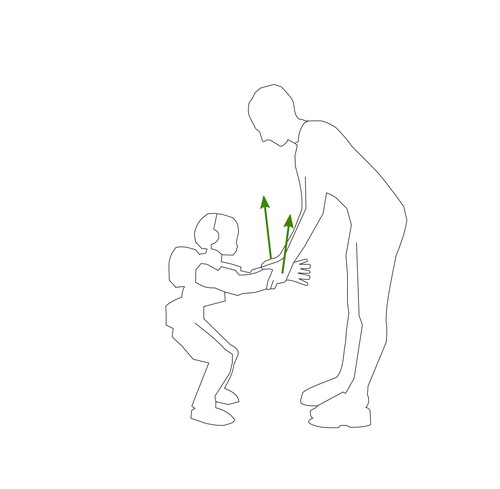}
		\caption{}
		\label{fig:example-icub-help}
	\end{subfigure}%
	\begin{subfigure}{0.49\textwidth}
		\centering
		\includegraphics[clip, trim=0cm 0cm 0cm 0cm, scale=0.35]{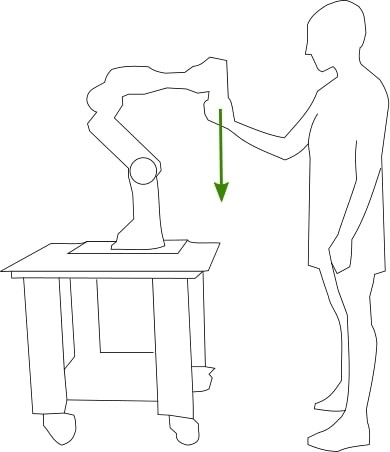}
		\caption{}
		\label{fig:example-cobot-help}
	\end{subfigure}
	\caption{Example scenarios of HRC with human and robot engaged in physical interactions}
	\label{fig:HRC-scenarios}
\end{figure}

\section{Parametrized Reference Trajectory}

Traditionally, motion control problems involving tracking of a reference trajectory has both spatial dimension, encapsulated in a geometric path, and temporal dimension, encapsulated in the dynamic evolution of the geometric path \citep{aguiar2004path}. Accordingly, the reference trajectory is a time ($t$) parametrized curve and the control design drives the system to a specific point in space at a specific predefined time. In contrast, the path following the problem involves converging to and following a geometric path without any temporal constraints \citep{breivik2005principles}. In this work, we bank on the concepts of path following and design a parametric curve parametrized with a \textit{free parameter} $\psi \in [0, \infty)$. The choice of $\mit{\psi}$ becomes clear in the subsequent sections. The resulting parametric curve $\comVar{x}_d(\psi)$ is the desired geometric path to be followed spatially by a link of the robot. Assuming that the free parameter is time-dependent i.e., $\psi = \psi(t)$, the first and second time derivatives of the path are given as, 

\begin{subequations}
    \begin{equation}
        \dot{\comVar{x}}_d(\psi, \dot{\psi}) = \partial_\psi \comVar{x}_d(\psi) \ \dot{\psi}
        \label{eq:trajectory-first-derivative}
    \end{equation}
    \begin{equation}
        \ddot{\comVar{x}}_d(\psi, \dot{\psi}, \ddot{\psi}) = \partial_{\psi}^2 \comVar{x}_d(\psi) \ \dot{\psi}^2 + \partial_\psi \comVar{x}_d(\psi) \ \ddot{\psi}
        \label{eq:trajectory-second-derivative}
    \end{equation}
\end{subequations}

\section{Interaction Exploitation}

Consider the control objective of trajectory tracking where at each time instant the reference position ($\comVar{x}_d(\psi))$, velocity $(\dot{\comVar{x}}_d(\psi, \dot{\psi}))$, and acceleration $(\ddot{\comVar{x}}_d(\psi, \dot{\psi}, \ddot{\psi}))$ are taken from the reference trajectory parametrized in $\psi$. The term $\mathbold{\Omega} {\comVar{ f}}$ in the robot control torques equation Eq.~\eqref{eq:normal-control-torques-compact} represents the Cartesian resultant acceleration that results under the influence of external interaction wrench ${\comVar{ f}}$. Instead of completely canceling out the effects of external interaction wrench, it is desirable to exploit any \textit{helpful} components to advance along the desired reference trajectory making an active collaboration possible during \textsc{hrc}. More specifically, let us define the \emph{helpful} interaction by decomposing the resultant acceleration into \textit{parallel} and \textit{perpendicular} components along the desired velocity as,

\begin{subequations}
    \begin{equation}
        \mathbold{\Omega} {\comVar{ f}} = \alpha \ \dot{\comVar{x}}_d^{\parallel} + \beta \ \dot{\comVar{x}}_d^{\perp}
        \label{eq:trajectory-advancement-wrench-decomposition}
    \end{equation}
    \begin{equation}
        \dot{\comVar{x}}_d^{\parallel} = \frac{\dot{\comVar{x}}_d}{\norm{\dot{\comVar{x}}_d}}, \quad \alpha = \frac{\dot{\comVar{x}}_d^T \mathbold{\Omega} {\comVar{ f}}}{\norm{\dot{\comVar{x}}_d}} \notag
    \end{equation}
\end{subequations}

where $\dot{\comVar{x}}_d^{\parallel} \in \mathbb{R}^{6}$ is the unit vector along the direction of the desired velocity, $\alpha \in \mathbb{R}$ is the resultant acceleration component projected along the direction parallel to the direction of the desired velocity. An intuitive choice for the component $\alpha$ is in the direction of the desired velocity i.e. $\alpha > 0$. Accordingly, we define a \textit{correction wrench}\footnote{The name \textit{correction wrench} is an abuse of notation but has an intuitive meaning in conveying the notion of helpful interaction wrench. Also, the units of wrench $[\si{\newton},\si{\newton\meter}]$ are used.} term given by $\alpha \ \dot{\comVar{x}}_d^{\parallel} \in \mathbb{R}^{6} \ \forall \ \alpha > 0$, which represents the \textit{helpful} interaction mathematically.

\section{Trajectory Advancement}

\begin{proposition}
\label{proposiiton-update-law}
The time evolution of the free parameter $\psi$ for trajectory advancement leveraging assistance is given by the following update rule,
\begin{equation}
    \dot{\psi} = min \left\{ \dot{\psi}_{upper}, max \left\{ 1, \frac{\dot{\comVar{x}}(t)^\top \ \partial_{\psi} \comVar{x}_d(\psi)}{\norm{\partial_{\psi} \comVar{x}_d(\psi)}^2}  \right\}\right\}
    \label{eq:update-rule}
\end{equation}

\end{proposition}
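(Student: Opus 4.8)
The plan is to derive the update rule~\eqref{eq:update-rule} from the single requirement that the velocity of the parametrized reference path track the \emph{actual} link velocity $\dot{\comVar{x}}(t)$, which under a helpful interaction is accelerated along the tangent of the path. From~\eqref{eq:trajectory-first-derivative} the parametrized reference velocity is $\dot{\comVar{x}}_d(\psi,\dot{\psi}) = \partial_\psi \comVar{x}_d(\psi)\,\dot{\psi}$, i.e. a scalar multiple of the tangent vector $\partial_\psi \comVar{x}_d(\psi) \in \mathbb{R}^{6}$. To make the reference advance at exactly the rate at which the robot is actually progressing along the path, I would require this reference velocity to match $\dot{\comVar{x}}(t)$ as closely as possible, solving the overdetermined scalar-in / vector-out system $\partial_\psi \comVar{x}_d(\psi)\,\dot{\psi} = \dot{\comVar{x}}(t)$ in the least-squares sense:
\[
    \dot{\psi}^{\star} = \argmin_{\dot{\psi} \in \mathbb{R}} \norm{\partial_\psi \comVar{x}_d(\psi)\,\dot{\psi} - \dot{\comVar{x}}(t)}^2 .
\]

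Next I would solve this scalar least-squares problem. Differentiating with respect to $\dot{\psi}$ and setting the result to zero yields the normal equation $\norm{\partial_\psi \comVar{x}_d(\psi)}^2 \, \dot{\psi} = \partial_\psi \comVar{x}_d(\psi)^\top \dot{\comVar{x}}(t)$, equivalently the Moore--Penrose solution
\[
    \dot{\psi}^{\star} = \big(\partial_\psi \comVar{x}_d(\psi)\big)^{\dagger} \dot{\comVar{x}}(t) = \frac{\dot{\comVar{x}}(t)^\top \, \partial_\psi \comVar{x}_d(\psi)}{\norm{\partial_\psi \comVar{x}_d(\psi)}^2} ,
\]
which is precisely the inner argument of~\eqref{eq:update-rule}. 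I would also remark that this projection extracts exactly the tangential (parallel) part of $\dot{\comVar{x}}(t)$, so that it is the helpful component $\alpha \, \dot{\comVar{x}}_d^{\parallel}$ of the interaction, and not the perpendicular part $\beta \, \dot{\comVar{x}}_d^{\perp}$, that drives the advancement, consistent with the decomposition~\eqref{eq:trajectory-advancement-wrench-decomposition}.

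Finally I would impose the saturation that turns $\dot{\psi}^{\star}$ into the stated rule. The lower clip at $1$ enforces that the free parameter never evolves slower than its nominal unit rate, so that in the absence of assistance, or against it, the robot still follows the original time parametrization and $\psi$ never retreats; the upper clip at $\dot{\psi}_{upper}$ bounds the maximal advancement to keep the commanded reference within the feasibility and actuation limits of the tracking controller. Composing these two bounds with $\dot{\psi}^{\star}$ gives exactly~\eqref{eq:update-rule}.

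The main obstacle is well-posedness and the qualitative guarantee rather than the algebra. I must assume the tangent is non-degenerate, $\partial_\psi \comVar{x}_d(\psi) \neq \comVar{0}$ along the path, so that $\norm{\partial_\psi \comVar{x}_d(\psi)}^2 > 0$ and the pseudoinverse is defined; and I should verify that saturation preserves the monotone-advancement property. The latter holds because $\max\{1,\cdot\}$ keeps $\dot{\psi} \geq 1 > 0$, which guarantees that $\psi$ is strictly increasing, so the path is always traversed forward while assistance can only ever increase the traversal rate, up to the ceiling $\dot{\psi}_{upper}$.
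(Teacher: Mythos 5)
Your derivation reaches the correct inner expression, but by a genuinely different route than the paper. The paper obtains $\dot{\psi} \ge \dot{\comVar{x}}(t)^\top \partial_{\psi} \comVar{x}_d(\psi)\,/\,\norm{\partial_{\psi} \comVar{x}_d(\psi)}^2$ as a \emph{one-sided stability condition}: it injects the helpful-interaction term $\alpha\,\dot{\comVar{x}}_d^{\parallel}$ into the desired dynamics \eqref{eq:control-objective-updated}, differentiates the Lyapunov function \eqref{eq:lyapunov-function}, and demands $\dot{\widetilde{\comVar{x}}}^\top \alpha\, \dot{\comVar{x}}_d^{\parallel} \le 0$; factoring $(\dot{\comVar{x}}^\top \partial_{\psi}\comVar{x}_d - \norm{\partial_{\psi}\comVar{x}_d}^2\dot{\psi})\,\dot{\psi} \le 0$ under the assumption $\dot{\psi}\ge 1$ then yields the threshold. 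You instead obtain the same quantity as the exact least-squares/pseudoinverse solution of $\partial_\psi \comVar{x}_d\,\dot{\psi} = \dot{\comVar{x}}(t)$. The two coincide algebraically, but they buy different things. The paper's inequality form explains why the whole saturation structure is coherent: any $\dot{\psi}$ at or above the projection keeps $\dot{\mathrm{V}} \le 0$, so clipping from below at $1$ (when the projection is small or negative) costs nothing, and the rule is simply the smallest admissible rate. Under your reading the projection is a unique optimum, so the clip at $1$ is a departure from optimality that you can only justify heuristically ("never retreat"), and nothing in your argument connects the rule to boundedness of the tracking error once the controller stops cancelling the helpful wrench --- which is the actual content of the paper's proof. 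Your non-degeneracy remark ($\partial_\psi \comVar{x}_d \neq \comVar{0}$) is a valid observation the paper leaves implicit ($\norm{\dot{\comVar{x}}_d} > 0$), and your projection interpretation is a nice complementary intuition, but if you want your derivation to stand in for the paper's you still need the Lyapunov step showing that this choice of $\dot{\psi}$ renders $\dot{\widetilde{\comVar{x}}}^\top \alpha\,\dot{\comVar{x}}_d^{\parallel}$ non-positive under the modified control objective.
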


The update rule in Eq.~\eqref{eq:update-rule} reflects the time evolution of the free parameter $\psi$ which helps in advancing along the desired reference trajectory exploiting the external interaction wrenches with the robot. The lower bound value $1$ signifies that the new parametrization is exactly equal to the time parametrized trajectory i.e. $\psi = t$ until any external wrench ${\comVar{ f}}$ is applied such that it will help the robot's task.

\begin{proof}
Considering the task of Cartesian reference trajectory tracking, the desired dynamics for the control objective can be written as directed by Eq.~\eqref{eq:cartesian-control-objective}. Given the correction wrench term Eq.~\eqref{eq:trajectory-advancement-wrench-decomposition}, the desired dynamics for the trajectory tracking task is updated as,

\begin{equation}
\label{eq:control-objective-updated}
\ddot{\comVar{x}} = \ddot{\comVar{x}}^* := \ddot{\comVar{x}}_d - \comVar{K}_D \ \dot{\widetilde{\comVar{x}}} - \comVar{K}_P \int_0^t \dot{\widetilde{\comVar{x}}} ~du + \alpha \ \dot{\comVar{x}}_d^{\parallel} \ \ \forall \ \alpha > 0
\end{equation}

Using the above choice of the desired dynamics, the robot control torques defined in Eq.~\eqref{eq:normal-control-torques-compact} will only compensate for external wrench that is not helpful. Now, consider the following Lyapunov function candidate, 

\begin{equation}
\mathrm{V} = \frac{1}{2} \norm{\dot{\comVar{x}}(t) - \dot{\comVar{x}}_d(\psi, \dot{\psi})}^{2} + \frac{\comVar{K}_P}{2} \norm{\int_{0}^{t}(\dot{\comVar{x}}(t) - \dot{\comVar{x}}_d(\psi, \dot{\psi})) du}^{2}
\label{eq:lyapunov-function}
\end{equation}

On differentiating $\mathrm{V}$, we get: 

\begin{subequations}
	\begin{equation}
	\dot{\mathrm{V}} = \ \dot{\widetilde{\comVar{x}}}^\top \ddot{\widetilde{\comVar{x}}} + \int_{0}^{t} \dot{\widetilde{\comVar{x}}}^\top du \ \comVar{K}_p \dot{\widetilde{\comVar{x}}} \notag		
	\end{equation}
	\begin{equation}
	\dot{\mathrm{V}} =  \dot{\widetilde{\comVar{x}}}^\top \ddot{\widetilde{\comVar{x}}} +  \dot{\widetilde{\comVar{x}}}^\top \ \comVar{K}_p \int_{0}^{t}\dot{\widetilde{\comVar{x}}} \ du \  \notag
	\end{equation}
	\begin{equation}
	\dot{\mathrm{V}} = \dot{\widetilde{\comVar{x}}}^\top [\ddot{\widetilde{\comVar{x}}} + \comVar{K}_p \int_{0}^{t}\dot{\widetilde{\comVar{x}}} \ du] \notag
	\end{equation}
\end{subequations}

Given the updated desired dynamics in Eq.~\eqref{eq:control-objective-updated} we rearrange it as $\ddot{\widetilde{\comVar{x}}} + \comVar{K}_p \int_{0}^{t}\dot{\widetilde{\comVar{x}}} \ du = - \comVar{K}_D \ \dot{\widetilde{\comVar{x}}} + \alpha \ \dot{\comVar{x}}_d^{\parallel}$ and use it in the derivative of the Lyapunov function to obtain the following relation,

\begin{subequations}
	\begin{equation}
	\dot{\mathrm{V}} = \dot{\widetilde{\comVar{x}}}^\top [ - \comVar{K}_D \ \dot{\widetilde{\comVar{x}}} + \alpha \ \dot{\comVar{x}}_d^{\parallel}] \  \notag
	\end{equation}
	\begin{equation}
	\dot{\mathrm{V}} = - \dot{\widetilde{\comVar{x}}}^\top \comVar{K}_D \ \dot{\widetilde{\comVar{x}}} + \dot{\widetilde{\comVar{x}}}^T \alpha \ \dot{\comVar{x}}_d^{\parallel}\  \notag
	\end{equation}
\end{subequations}

According to Lyapunov theory, the stability of the system is ensured when $\dot{\mathrm{V}} \le 0$. Given that $\comVar{K}_D$ is a positive symmetric matrix, the term $- \dot{\widetilde{\comVar{x}}}^\top \ \comVar{K}_D \ \dot{\widetilde{\comVar{x}}} \le 0$. So, to ensure the stability of the system i.e. $\dot{\mathrm{V}} \le 0$, the following condition has to be satisfied,

\begin{equation}
\dot{\widetilde{\comVar{x}}}^\top \alpha \ \dot{\comVar{x}}_d^{\parallel} \le 0 \notag
\end{equation}

Considering that $\alpha > 0$ and $\norm{\dot{\comVar{x}}_d} > 0$, the above inequality is equivalent to $\dot{\widetilde{\comVar{x}}}^\top \dot{\comVar{x}}_d(\psi, \dot{\psi}) \le 0$,

\begin{subequations}
	\begin{equation}
	(\dot{\comVar{x}}(t) - \dot{\comVar{x}}_d(\psi, \dot{\psi}))^\top \dot{\comVar{x}}_d(\psi, \dot{\psi}) \le 0 \notag
	\end{equation}
	\begin{equation}
	\dot{\comVar{x}}(t)^\top \dot{\comVar{x}}_d(\psi, \dot{\psi}) - \norm{\dot{\comVar{x}}_d(\psi, \dot{\psi})}^2 \le 0 \notag
	\end{equation}
	\begin{equation}
	\dot{\comVar{x}}(t)^\top \partial_{\psi} \comVar{x}_d(\psi)  \dot{\psi} - \norm{\partial_{\psi} \comVar{x}_d(\psi)}^2 \dot{\psi}^2 \le 0 \notag
	\end{equation}
	\begin{equation}
	(\dot{\comVar{x}}(t)^\top \partial_{\psi} \comVar{x}_d(\psi) - \norm{\partial_{\psi} \comVar{x}_d(\psi)}^2 \dot{\psi}) \dot{\psi} \le 0 \notag
	\end{equation}
\end{subequations}

Assuming the lower bound $\dot{\psi} \ge 1$, we obtain

\begin{equation}
\dot{\psi} \ge \frac{\dot{\comVar{x}}(t)^\top \partial_{\psi} \comVar{x}_d(\psi)}{\norm{\partial_{\psi} \comVar{x}_d(\psi)}^2}
\label{eq:sdot-condition}
\end{equation}

The condition in Eq.~\eqref{eq:sdot-condition} reflects the time evolution of the free parameter $\psi$ which helps in advancing along the desired reference trajectory exploiting the external interaction wrenches with the robot. The lower bound value $1$ signifies that the new parametrization is exactly equal to the time parametrized trajectory i.e. $\psi = t$ until any external wrench $\comVar{ f}$ is applied such that it will help the robot's task. Under the influence of \textit{helpful} external wrench, the value of $\dot{\psi}$ becomes greater than $1$. On integrating/differentiating $\dot{\psi}$ we determine the advancement along the desired reference trajectory,
\begin{equation}
\psi^* = \int_{t_1}^{t_2} \dot{\psi} \ du, \quad \ddot{\psi}^* = \frac{d\dot{\psi}}{dt} \nonumber
\end{equation}

Now, the updated references for trajectory tracking becomes,

\begin{equation}
\comVar{x}_d(\psi^*), \dot{\comVar{x}}_d(\psi^*,\dot{\psi}), \ddot{\comVar{x}}_d(\psi^*,\dot{\psi},\ddot{\psi}^*) \notag
\end{equation}

Besides, an upper limit $\dot{\psi}_{upper}$ is set to bound the length of advancement along the reference trajectory ensuring safe physical interactions. 

The updated equation of $\dot{\psi}$ becomes,

\begin{equation}
\dot{\psi} = min \left\{ \dot{\psi}_{upper}, max \left\{ 1, \frac{\dot{\comVar{x}}(t)^\top \ \partial_{\psi} \comVar{x}_d(\psi)}{\norm{\partial_{\psi} \comVar{x}_d(\psi)}^2}  \right\}\right\}
\label{eq:sdot-final-equation-with-both-limits}
\end{equation}

\textbf{Remark:} Strictly speaking, the choice of $\dot{\psi}$ as stated in Eq.~\eqref{eq:sdot-final-equation-with-both-limits} induces an algebraic loop when applied with the control law Eq.~\eqref{eq:control-objective-updated}. In fact, the updated reference acceleration $\ddot{\comVar{x}}_d(\psi^*,\dot{\psi},\ddot{\psi}^*)$ does depend on the Cartesian acceleration $\ddot{\comVar{x}}$, and, consequently, on the joint torques ${\robtau}$. For this reason, no formal stability statement was claimed in Proposition~\ref{proposiiton-update-law}. From the theoretical point of view, the algebraic loop can be avoided by designing an update rule for $\ddot{\psi}$ rather than $\dot{\psi}$, and by modifying the control law \eqref{eq:control-objective-updated} so that the reference Cartesian acceleration is not compensated anymore. This choice, however, would imply the calculation of $\psi^*$ through double numerical integration of  $\ddot{\psi}$, which may lead to a fast divergence of the reference trajectory due to numerical drifts. For this reason, the proposed control solution \eqref{eq:sdot-final-equation-with-both-limits}-\eqref{eq:control-objective-updated}, despite not being fully theoretically sound, resulted to be more robust when applied in practice. Furthermore, the algebraic loop can be resolved at the implementation level by computing the numerical derivative $\ddot{\psi}^* = \frac{d\dot{\psi}}{dt}$ with one time step of delay, and/or by low-pass filtering the signal to also attenuate the effect of numerical noise. Driven by these motivations we used Eq.~\eqref{eq:sdot-final-equation-with-both-limits}-\eqref{eq:control-objective-updated} for controlling the robot and verified the closed-loop system stability numerically.

\end{proof}

\section{End-Effector Trajectory Advancement Experiments}

The robotic platform considered for the experiment is the iCub humanoid robot described in Section~\eqref{sec:background-icub}. The control objective is to move the \textit{ right foot} of the robot along the desired reference trajectory. The leg of the robot has $3$ joints at the hip, $1$ joint at the knee, and $2$ joints at the ankle. For the sake of intuition, only one dimensional (1D) trajectory in the $x$-direction is considered. The reference trajectory is a sinusoidal function of amplitude $0.05 \si{\meter}$ with frequency $0.1 \si{\hertz}$ and is designed to have a minimum jerk profile \citep{kyriakopoulos1988minimum}. Concerning the task of trajectory tracking with the leg, the robot base is fixed on a pole as shown in Fig.~\ref{fig:icub-on-pole}. The link frame associated with the right foot of the robot and the inertial frame of reference (shown under the base/pelvis of the robot) is highlighted in Fig.~\ref{fig:icub-foot-frames}.

\begin{figure}[!hbt]
	\centering
	\begin{subfigure}{0.5\textwidth}
		\centering
		\includegraphics[clip, trim=0 0.5cm 0 0cm, scale=0.475]{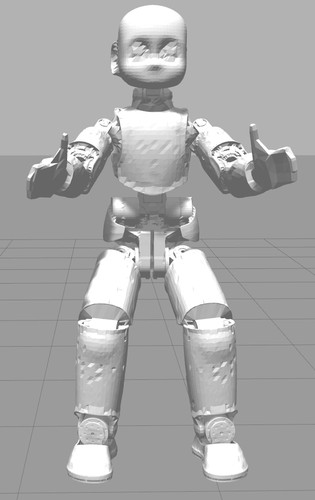}
		\caption{}
		\label{fig:icub-on-pole}
	\end{subfigure}%
	\begin{subfigure}{0.5\textwidth}
		\centering
		\includegraphics[clip, trim=0 0.5cm 0 0cm, scale=0.475]{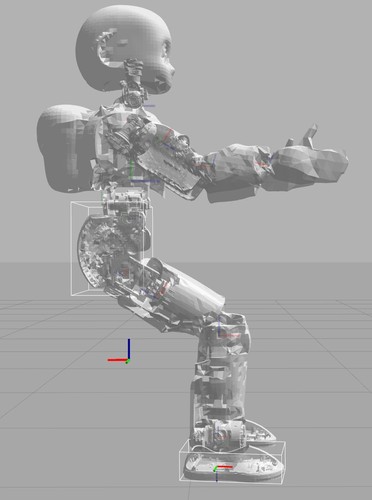}
		\caption{}
		\label{fig:icub-foot-frames}
	\end{subfigure}
	\caption{iCub humanoid robot with base fixed on a pole in gazebo simulation}
	\label{fig:icub}
\end{figure}

Experiments are carried out in both Gazebo simulation and on the real robot. The controller is implemented in Matlab Simulink, using whole-body toolbox\footnote{\href{https://github.com/robotology/wb-toolbox}{https://github.com/robotology/wb-toolbox}}~\citep{RomanoWBI17Journal}, as a stack-of-tasks controller with trajectory tracking as the primary objective. The controller gains are tuned to achieve good trajectory tracking both in simulation and on the real robot as highlighted in Fig.~\ref{fig:normal-trajectory-tracking-1d-x}. The trajectory tracking error in the case of simulation is very small and can be attributed to numerical instability of the dynamics integration in Gazebo simulation and numerical noise in measurements. On the other hand, the trajectory tracking error on the real robot is certainly higher than in simulations owing to several unmodeled effects such as joint friction which are prominent on the real robot. Additionally, friction induces phase delays in following the desired trajectory resulting in higher tracking error. The upper limit $\dot{\psi}_{upper}$ is set to $10$ for experiments both in simulation and on the real robot.

\begin{figure}[!hbt]
	\centering
	\begin{subfigure}{0.49\textwidth}
		\centering
		\includegraphics[clip, trim=0.75cm 2.75cm 4.75cm 2.75cm, scale=0.105]{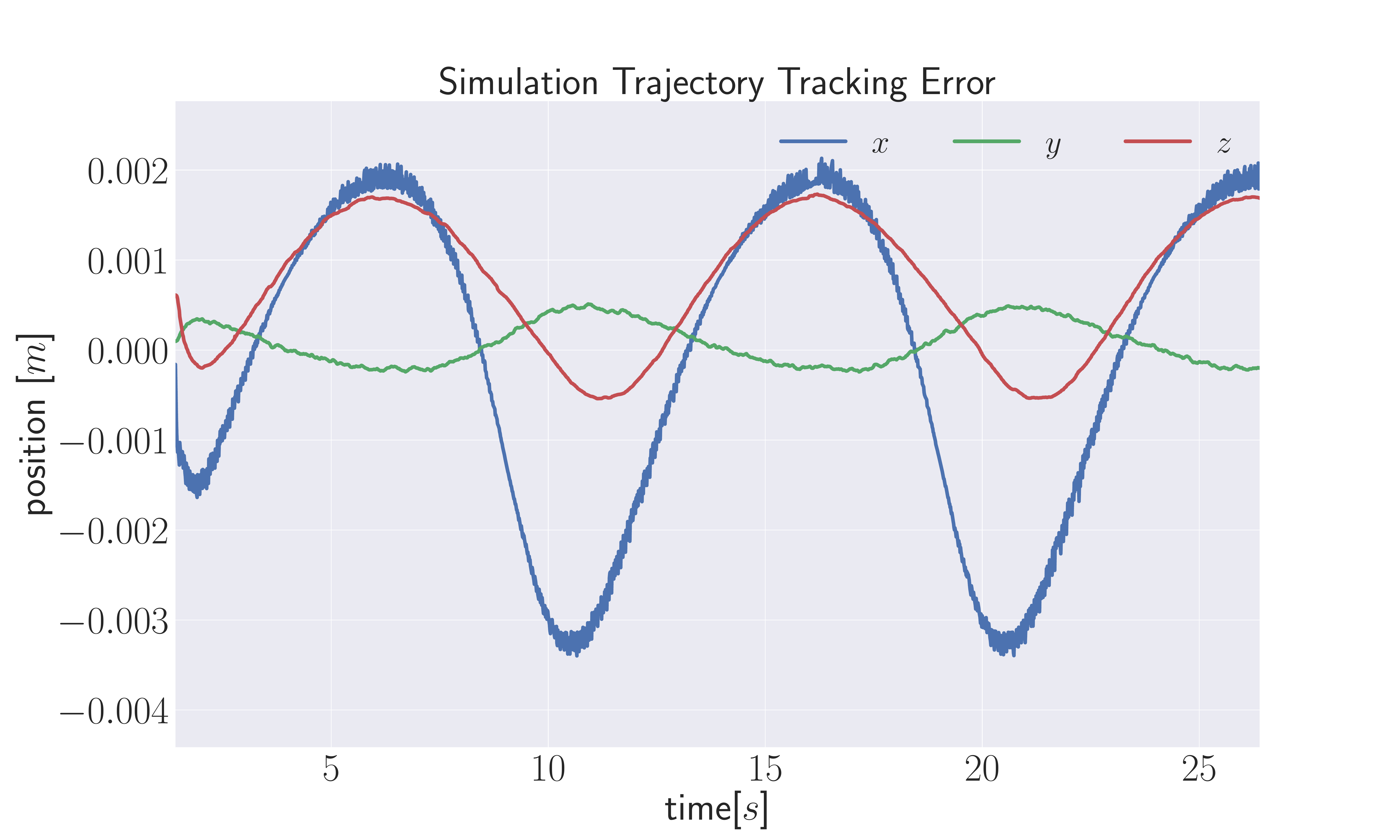}
	\end{subfigure}%
	\begin{subfigure}{0.49\textwidth}
		\centering 
		\includegraphics[clip, trim=0.75cm 0.5cm 4.75cm 2.75cm, scale=0.105]{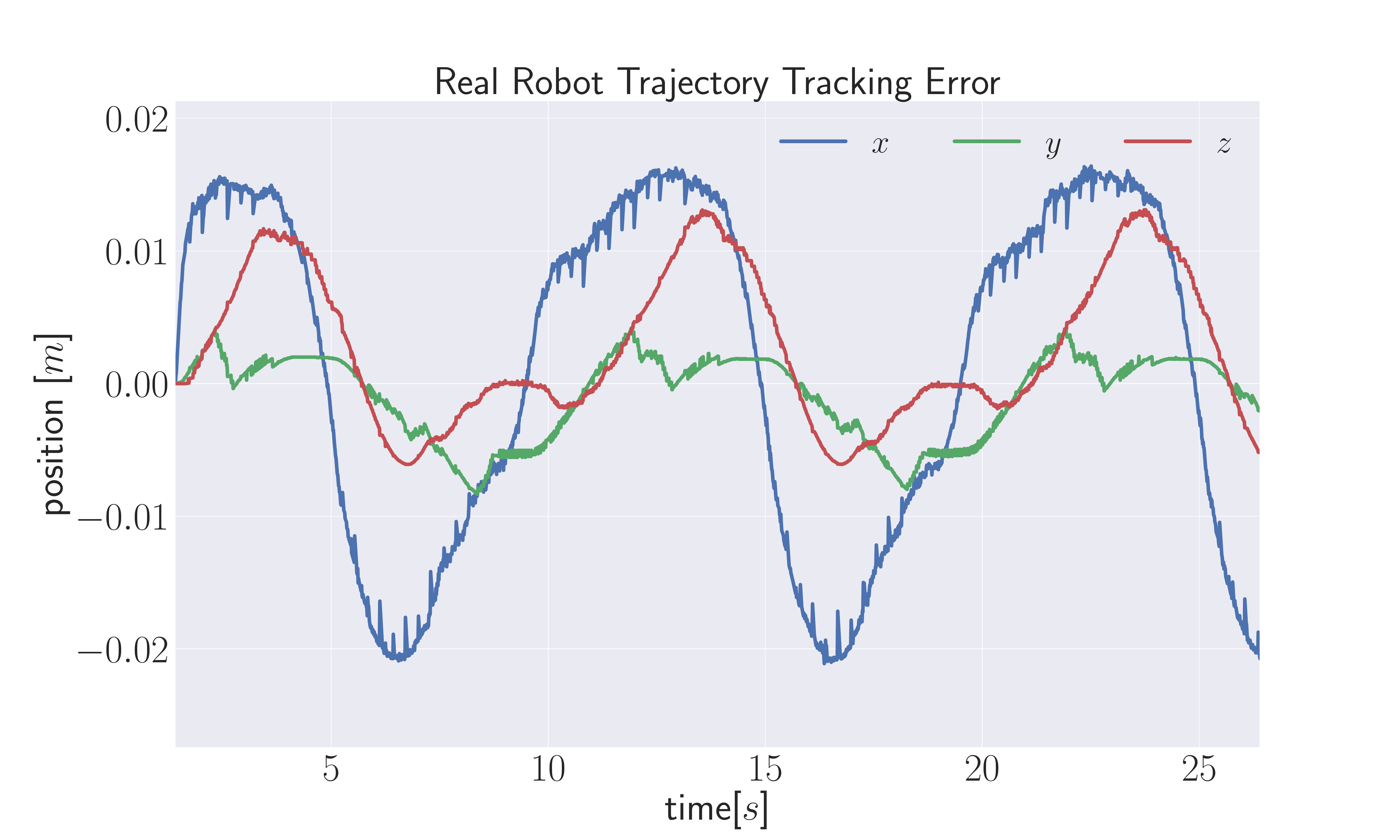}
	\end{subfigure}
	\caption{Trajectory tracking error under no external wrenches}
	\label{fig:normal-trajectory-tracking-1d-x}
\end{figure}

\subsection{Wrench Classification}

The iCub robot has a force-torque sensor embedded at the end-effector considered for the experiments i.e. the right foot. Instead of reading the sensor measurements directly in the sensor frame, the wrench measurements are expressed with a frame that has the origin of the end-effector frame and the orientation of the inertial frame of reference \citep{nori2015}. An external wrench applied to the link of the robot is classified, in this work, in two ways: 

\begin{itemize}
	\item \textit{Assistive wrench} if the external wrench has a vector component along the desired direction of motion
	\item \textit{Agnostic wrench} if the external wrench does \textit{not} have vector components along the desired direction of motion
\end{itemize}

Examples of external wrench classification are highlighted in Fig.~\ref{fig:wrench-classification}. Considering that the desired direction of motion for the foot is in \textit{positive} $x$-direction with respect to the inertial frame of reference, the external wrenches shown in Fig.~\ref{fig:assistive-wrench-1}~\ref{fig:assistive-wrench-2}~\ref{fig:assistive-wrench-3} are assistive wrenches as they have a vector component along the positive $x$-direction. Similarly, the external wrenches shown in Fig.~\ref{fig:agnostic-wrench-1}~\ref{fig:agnostic-wrench-2}~\ref{fig:agnostic-wrench-3} are agnostic wrenches as they do \textit{not} have any vector component along the positive $x$-direction.

\begin{figure}[H]
	\centering
	\begin{subfigure}{0.33\textwidth}
		\centering
		\includegraphics[width=0.55\textwidth]{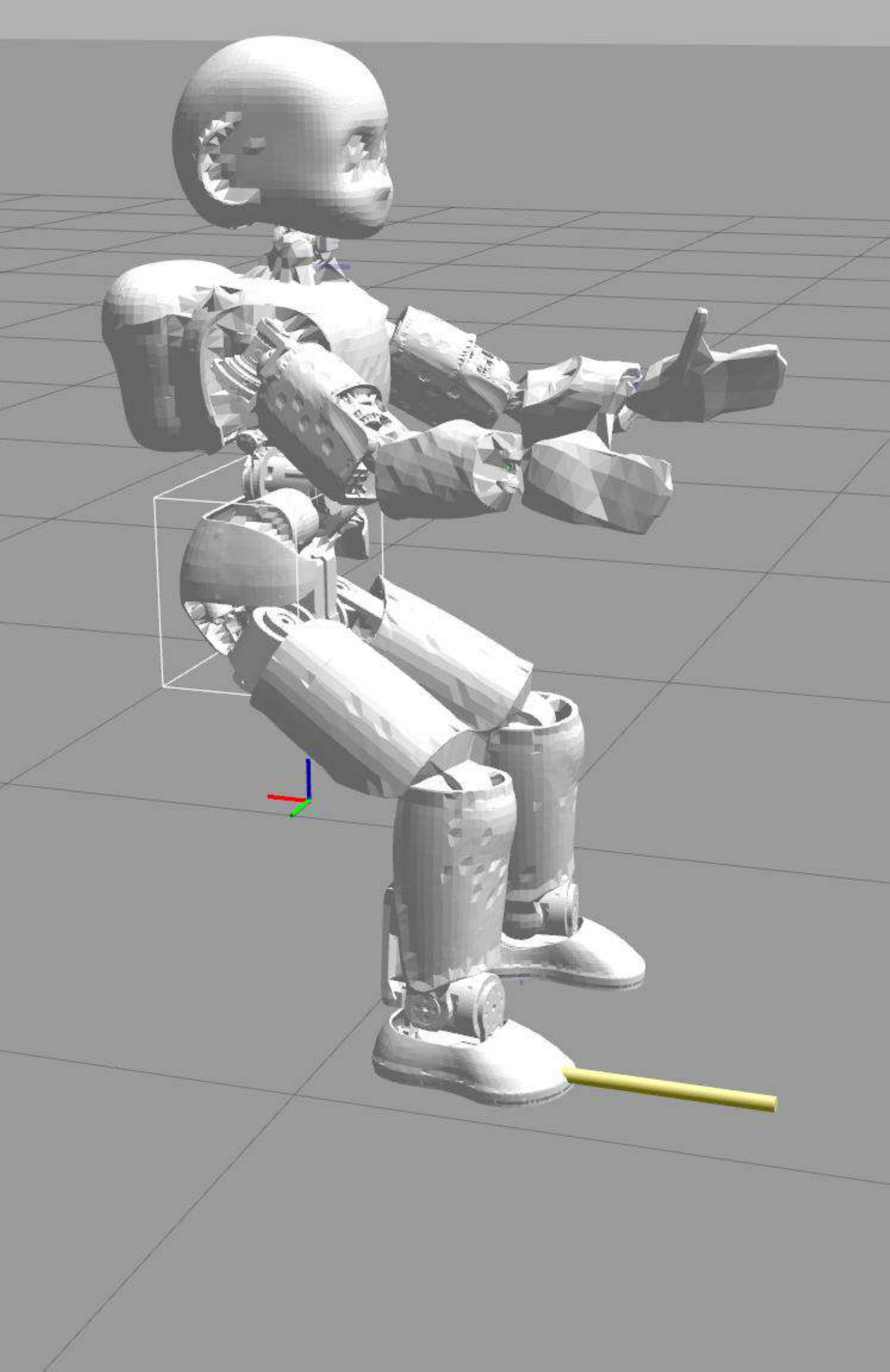}
		\caption{}
		\label{fig:assistive-wrench-1}
	\end{subfigure}%
	\begin{subfigure}{0.33\textwidth}
		\centering
		\includegraphics[width=0.55\textwidth]{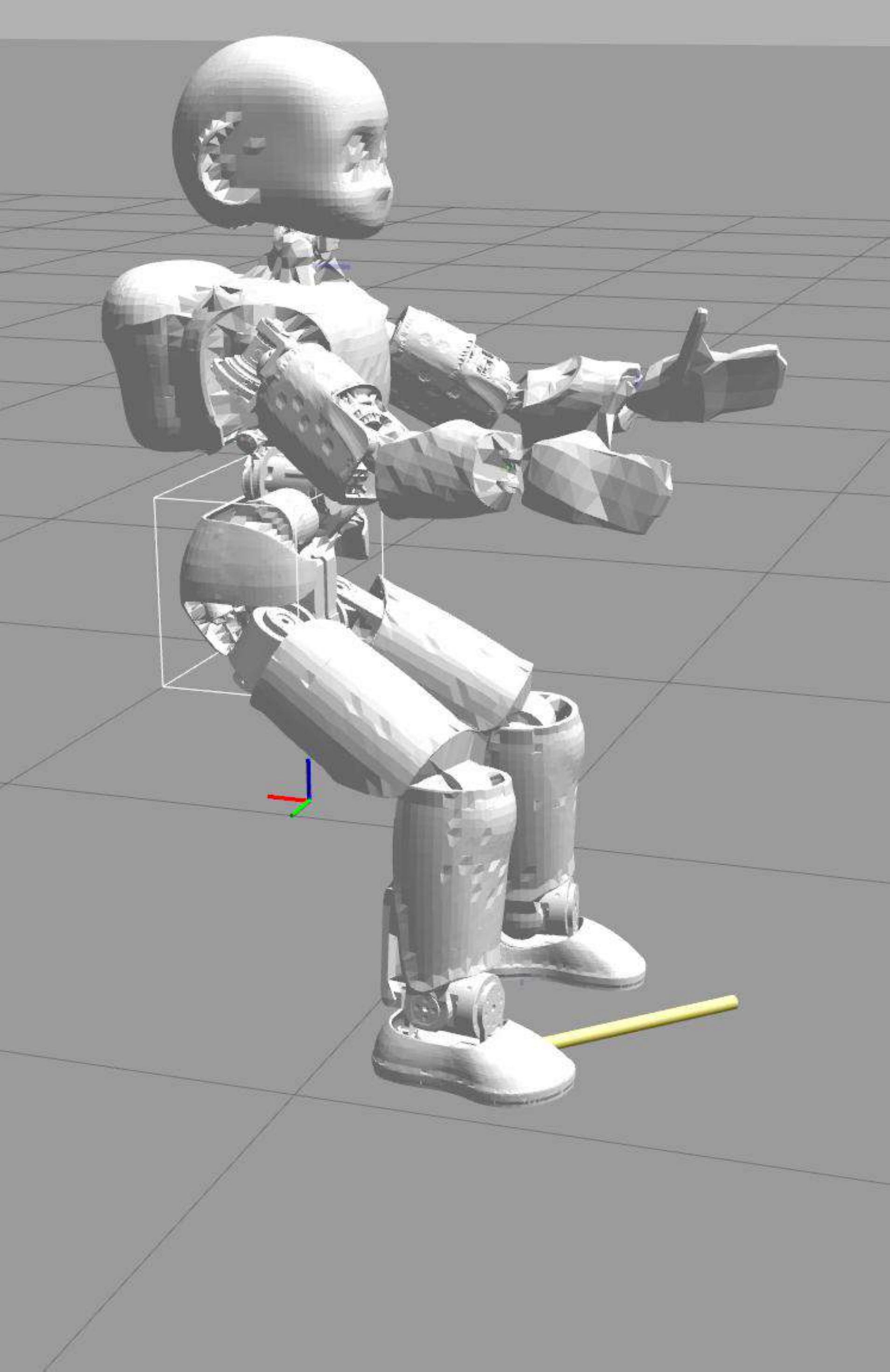}
		\caption{}
		\label{fig:assistive-wrench-2}
	\end{subfigure}%
	\begin{subfigure}{0.33\textwidth}
		\centering
		\includegraphics[width=0.55\textwidth]{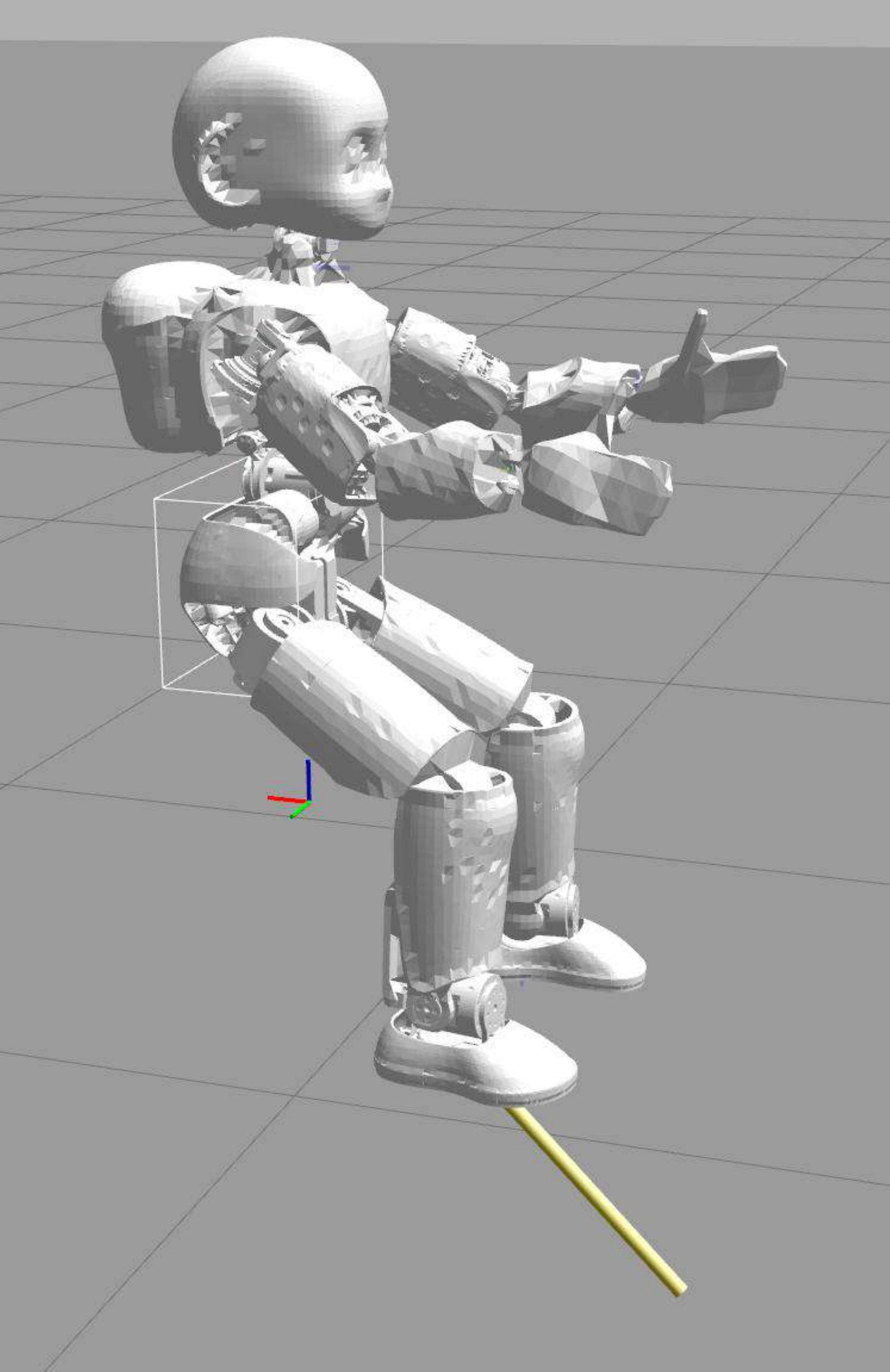}
		\caption{}
		\label{fig:assistive-wrench-3}
	\end{subfigure}
	\begin{subfigure}{0.33\textwidth}
		\centering
		\includegraphics[width=0.55\textwidth]{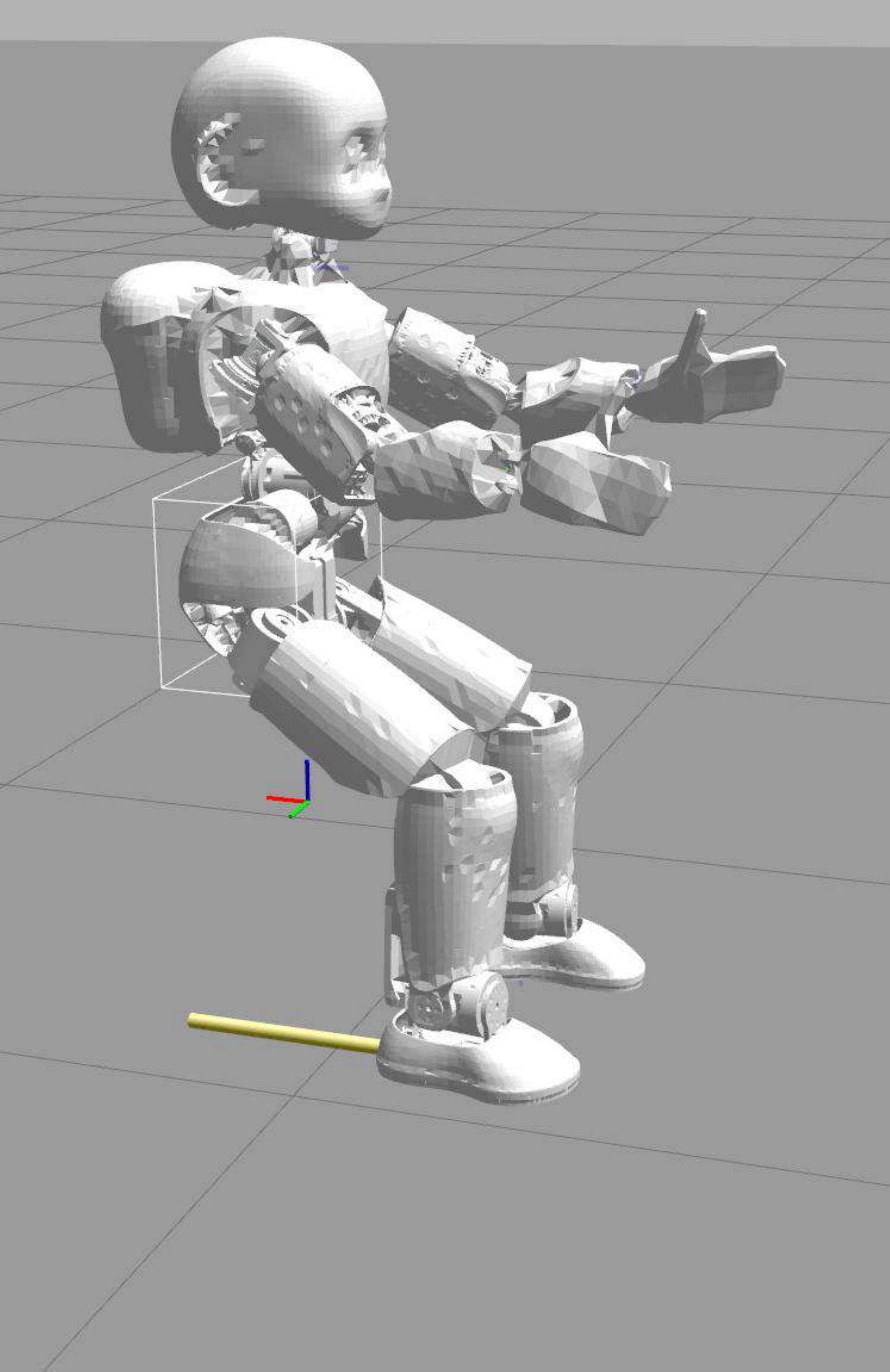}
		\caption{}
		\label{fig:agnostic-wrench-1}
	\end{subfigure}%
	\begin{subfigure}{0.33\textwidth}
		\centering
		\includegraphics[width=0.55\textwidth]{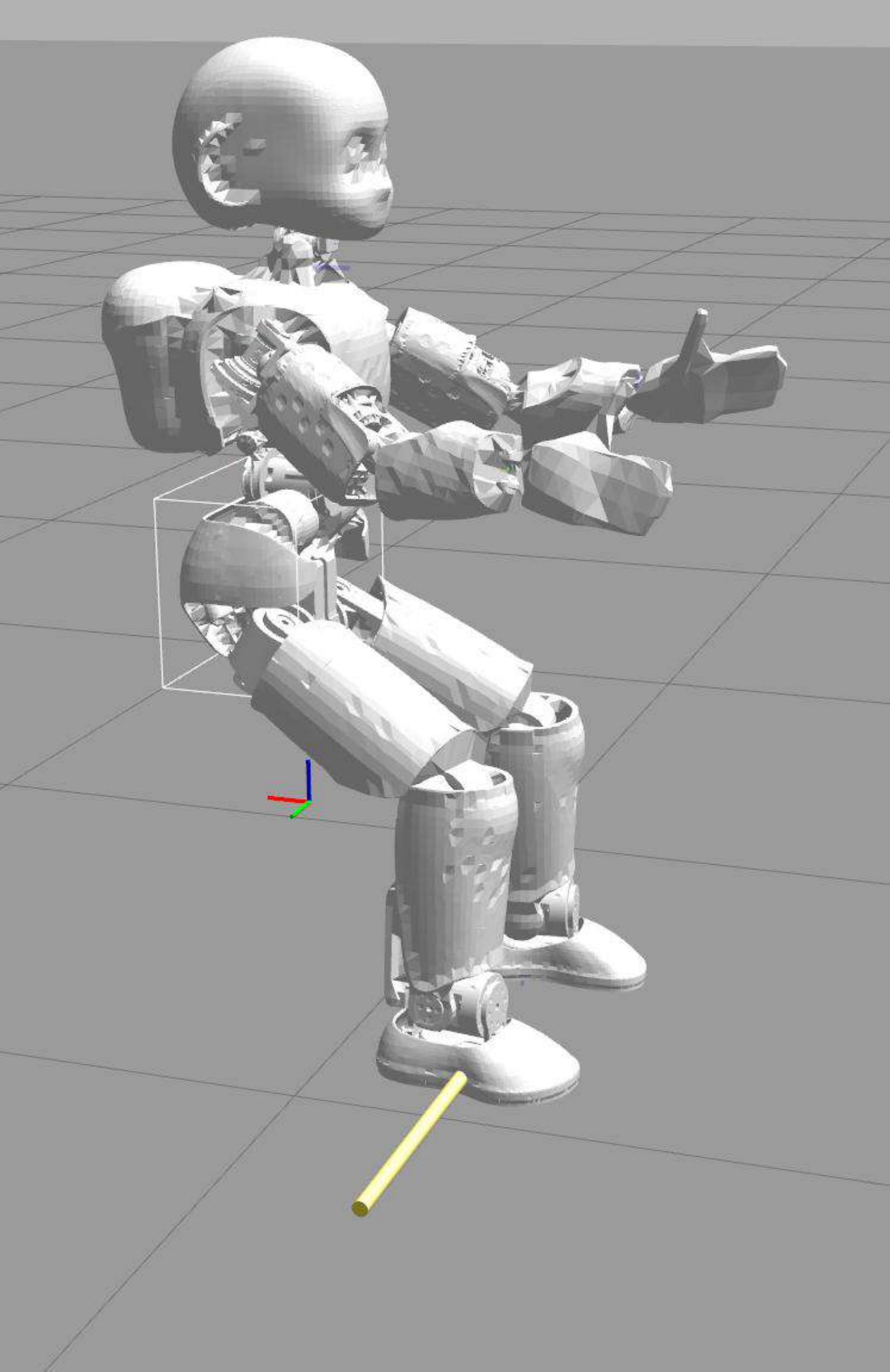}
		\caption{}
		\label{fig:agnostic-wrench-2}
	\end{subfigure}%
	\begin{subfigure}{0.33\textwidth}
		\centering
		\includegraphics[width=0.55\textwidth]{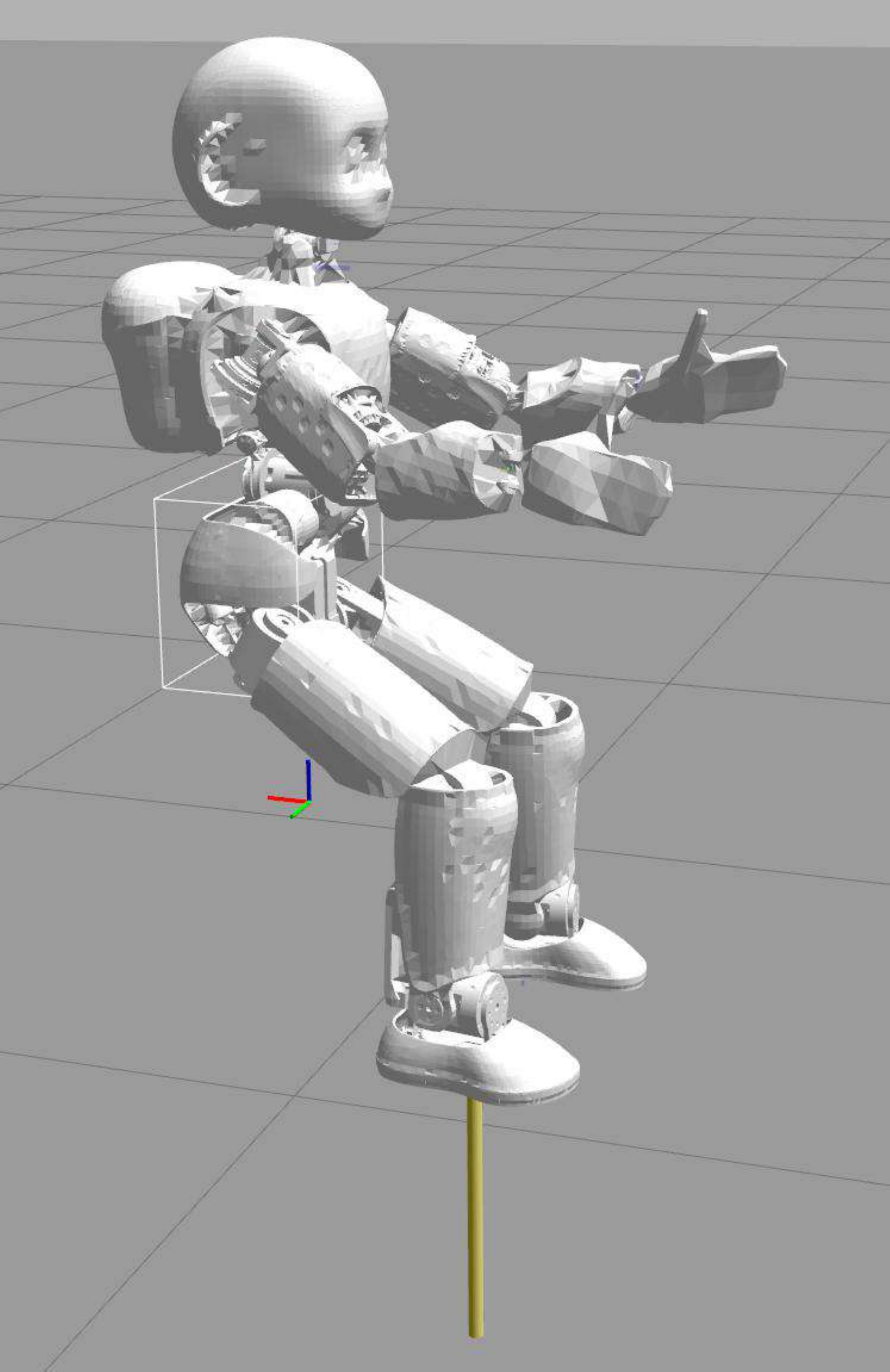}
		\caption{}
		\label{fig:agnostic-wrench-3}
	\end{subfigure}
	\caption{External interaction wrench classification examples when the desired direction of motion in positive $x$-direction}
	\label{fig:wrench-classification}
\end{figure}

Concerning the experiments conducted in Gazebo simulation environment, wrench is applied through a plugin\footnote{\href{https://github.com/robotology/gazebo-yarp-plugins}{https://github.com/robotology/gazebo-yarp-plugins}} \citep{hoffman2014yarp}. Due to the limited operational space of the robot we chose a fixed duration of $0.75 \si{\second}$ for the wrench. Furthermore, the wrench applied has a smooth profile rather than an impulse profile. This is an experimental design choice made to mimic the intentional interaction wrench applied by a human on the real robot during HRC scenarios.

\subsection{Simulation Results}

The first set of experiments are performed in Gazebo simulation environment. A set of test wrenches listed in the table \ref{table:simulation-test-wrenches} are applied when the desired direction of motion is along the positive $x$-direction. These test wrench vectors are similar in \textit{direction} to the wrench vectors highlighted in the wrench classification examples Fig.~\ref{fig:wrench-classification}. The first three wrench vectors (a)(b)(c) are classified as assistive wrenches as they have a vector component (highlighted in \textit{blue}) along the desired direction of motion i.e. positive $x$-direction. The next three wrench vectors (d)(e)(f) are classified as agnostic wrenches as they do not have any vector component along the desired direction of motion.

\begin{table}[!ht]
	\centering
	\begin{tabular}{|P{0.5cm}|P{0.75cm}|P{0.75cm}|P{0.75cm}|P{0.75cm}|P{0.75cm}|P{0.75cm}|}
		\cline{2-7}
		\multicolumn{1}{c|}{\cellcolor{white}} & $f_x$ & $f_y$  & $f_z$  & $\tau_x$  & $\tau_y$  & $\tau_z$ \\
		\hline
		\rowcolor{GreenYellow} (a) & \colorbox{Periwinkle}{$10$} & $0$  & $0$  & $0$  & $0$  & $0$ \\
		\hline
		\rowcolor{GreenYellow} (b) & \colorbox{Periwinkle}{$5$} & $10$  & $0$  & $0$  & $0$  & $0$ \\
		\hline
		\rowcolor{GreenYellow} (c) & \colorbox{Periwinkle}{$5$} & $0$  & $10$  & $0$  & $0$  & $0$ \\
		\hline
		\rowcolor{Apricot} (d) & $-10$ & $0$  & $0$  & $0$  & $0$  & $0$ \\
		\hline
		\rowcolor{Apricot} (e) & $0$ & $-10$  & $0$  & $0$  & $0$  & $0$ \\
		\hline
		\rowcolor{Apricot} (f) & $0$ & $0$  & $10$  & $0$  & $0$  & $0$ \\
		\hline
	\end{tabular}
	\caption{Test wrenches applied in gazebo simulation for end-effector trajectory advancement experiments}
	\label{table:simulation-test-wrenches}
\end{table}

The results of experiments in Gazebo simulation under the application of the test wrenches listed in table \ref{table:simulation-test-wrenches} are highlighted in Fig.~\ref{fig:simulation-velocity-modulation-1d-x}. The external interaction wrench experienced by the right foot link of the robot are shown in Fig.~\ref{fig:simulation-external-interaction-wrench-x} and the \textit{correction wrench} that is considered towards trajectory advancement is shown in Fig.~\ref{fig:simulation-correction-wrench-x}. In the case of agnostic wrenches, the correction wrench terms are insignificant and they are present due to the noise in the wrench estimation \cite{nori2015}. The reference trajectory is similar to a time parametrized trajectory i.e., $\psi = t$ until any helpful wrench is applied to the end-effector. Under the influence of assistive wrenches, the derivative of the trajectory free parameter $\dot{\psi}$ changes as highlighted in Fig.~\ref{fig:simulation-sdotvalue-x} and the corresponding trajectory advancement is reflected as an increase in $\psi$ as seen in Fig.~\ref{fig:simulation-svalue-x}. Accordingly, the reference is advanced further along the reference trajectory as shown in Fig.~\ref{fig:simulation-reference-trajectory-x}.

The trajectory tracking error is slightly more when the reference trajectory is updated under the influence of the assistive wrenches however the error magnitude is of low order as highlighted in Fig.~\ref{fig:simulation-trajectory-tracking-error-x} proving that the task of trajectory tracking is achieved reliably by the controller. Another important observation is that the magnitude of change in $\dot{\psi}$ is related to the magnitude of the interaction wrench. The length of advancement under the influence of assistive wrench vector (a) is more than under the influence of assistive wrench vector (b) or (c) from table \ref{table:simulation-test-wrenches}. The time evolution of the desired leg joint torques generated by the controller for the duration of the experiment is highlighted in Fig.~\ref{fig:ee_simulation_leg_toqures_plot}.

\subsection{Real Robot Results}

The results of experiments on the real iCub robot with 1D reference trajectory along the $x$-axis are shown in Fig.~\ref{fig:real-robot-velocity-modulation-1d-x}. The external interaction wrenches experienced by the right foot of the robot are highlighted in Fig.~\ref{fig:real-robot-external-interaction-wrench-x} and the \textit{correction wrench} that is considered towards trajectory advancement is shown in Fig.~\ref{fig:real-robot-correction-wrench-x}. The reference trajectory is similar to a time parametrized trajectory i.e., $\psi = t$ until any helpful wrench is applied to the end-effector. Under the influence of assistive wrenches, the derivative of the trajectory free parameter $\dot{\psi}$ changes as shown in Fig.~\ref{fig:real-robot-sdotvalue-x} and the corresponding trajectory advancement is reflected as an increase in $\psi$ as seen in Fig.~\ref{fig:real-robot-svalue-x}. Accordingly, the reference is advanced further along the reference trajectory as shown in Fig.~\ref{fig:real-robot-reference-trajectory-x}. Furthermore, starting from $t = 30 \si{\second}$ wrench is applied in the positive $x$-direction continuously. While the reference trajectory is in the positive $x$-direction, this wrench is considered assistive but as the reference trajectory is changed to the negative $x$-direction the wrench becomes agnostic and the reference trajectory is unchanged. Although there are some noisy wrenches that are considered to be correction wrench, they are tuned out by a regularization parameter in computing $\dot{\psi}$ to not have any direct effect on trajectory advancement.

The trajectory tracking error on the real robot is highlighted in Fig.~\ref{fig:real-robot-trajectory-tracking-error-x}. Although the tracking error is higher due to phase delays induced by joint friction, the desired amplitude of the reference trajectory is reached. The time evolution of the leg desired joint torques generated by the controller for the duration of the experiment is highlighted in Fig.~\ref{fig:ee_real_robot_leg_toqures_plot}.

\begin{figure}[H]
	\centering
	\begin{subfigure}{0.5\textwidth}
		\centering
		\includegraphics[clip, trim=1cm 0.5cm 4.5cm 1.5cm, scale=0.105]{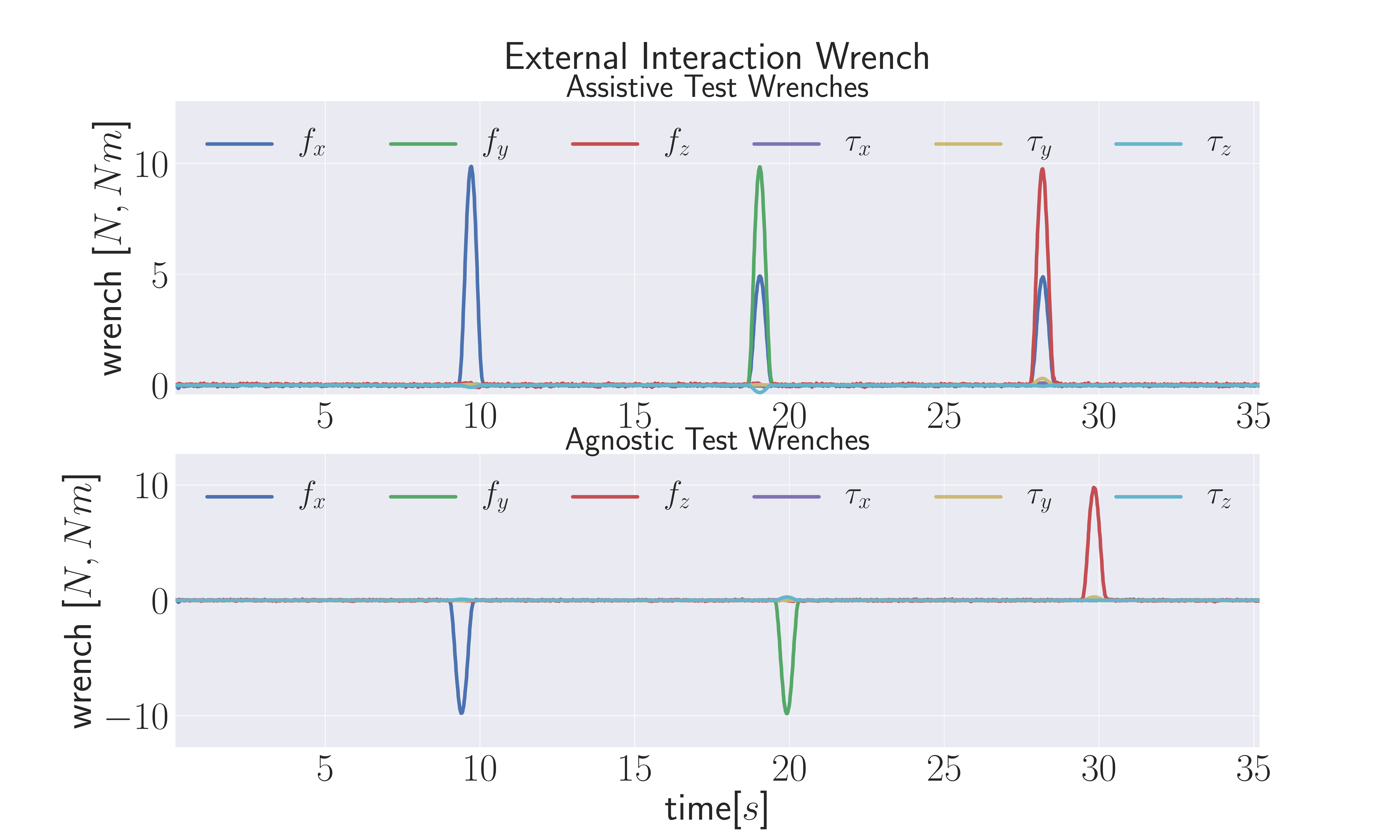}
		\caption{\hspace*{-7.5mm}}
		\label{fig:simulation-external-interaction-wrench-x}
	\end{subfigure}%
	\begin{subfigure}{0.5\textwidth}
		\centering
		\includegraphics[clip, trim=1cm 0.5cm 4.5cm 1.5cm, scale=0.105]{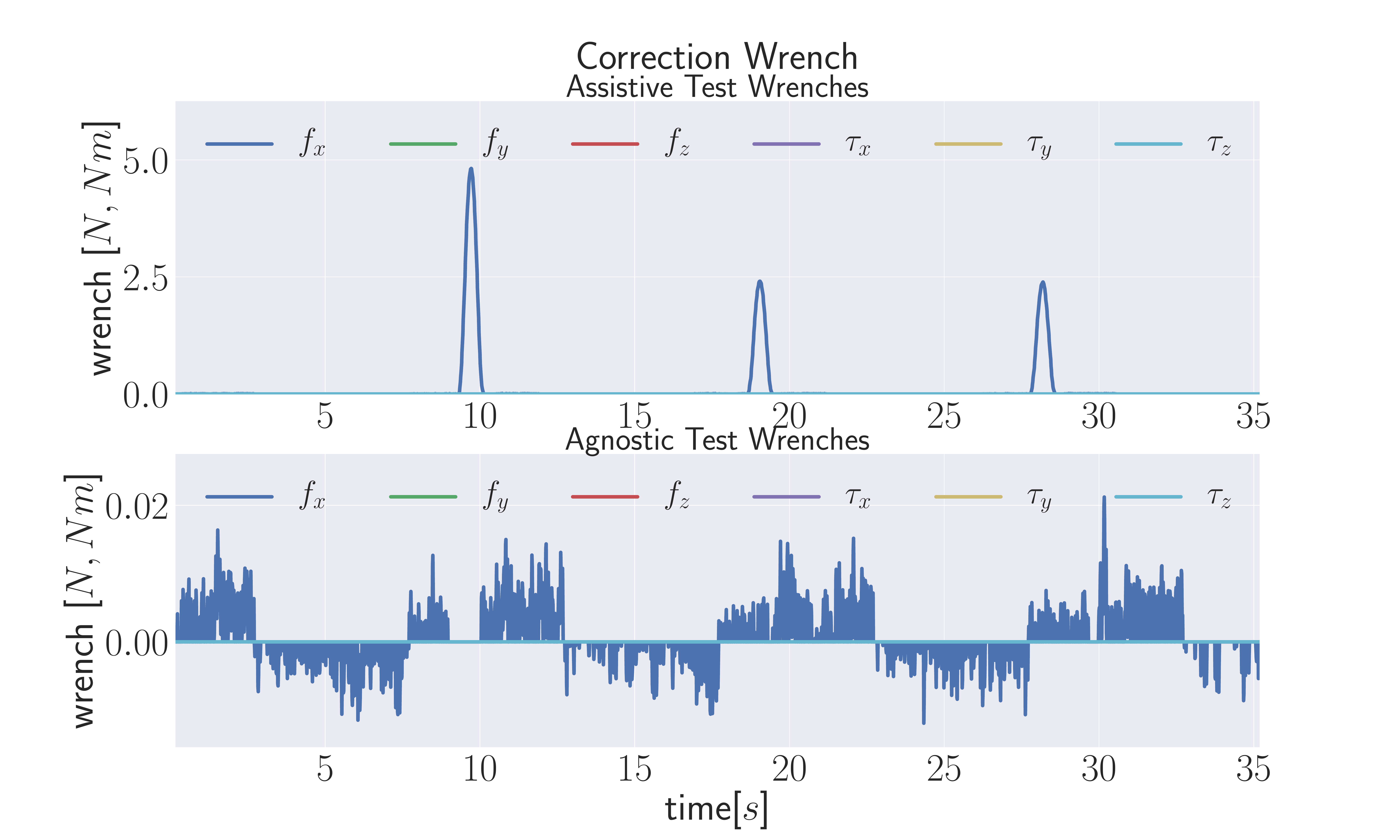}
		\caption{\hspace*{-7.5mm}}
		\label{fig:simulation-correction-wrench-x}
	\end{subfigure}
	\begin{subfigure}{0.5\textwidth}
		\centering
		\includegraphics[clip, trim=1cm 0.5cm 4.5cm 2.5cm, scale=0.105]{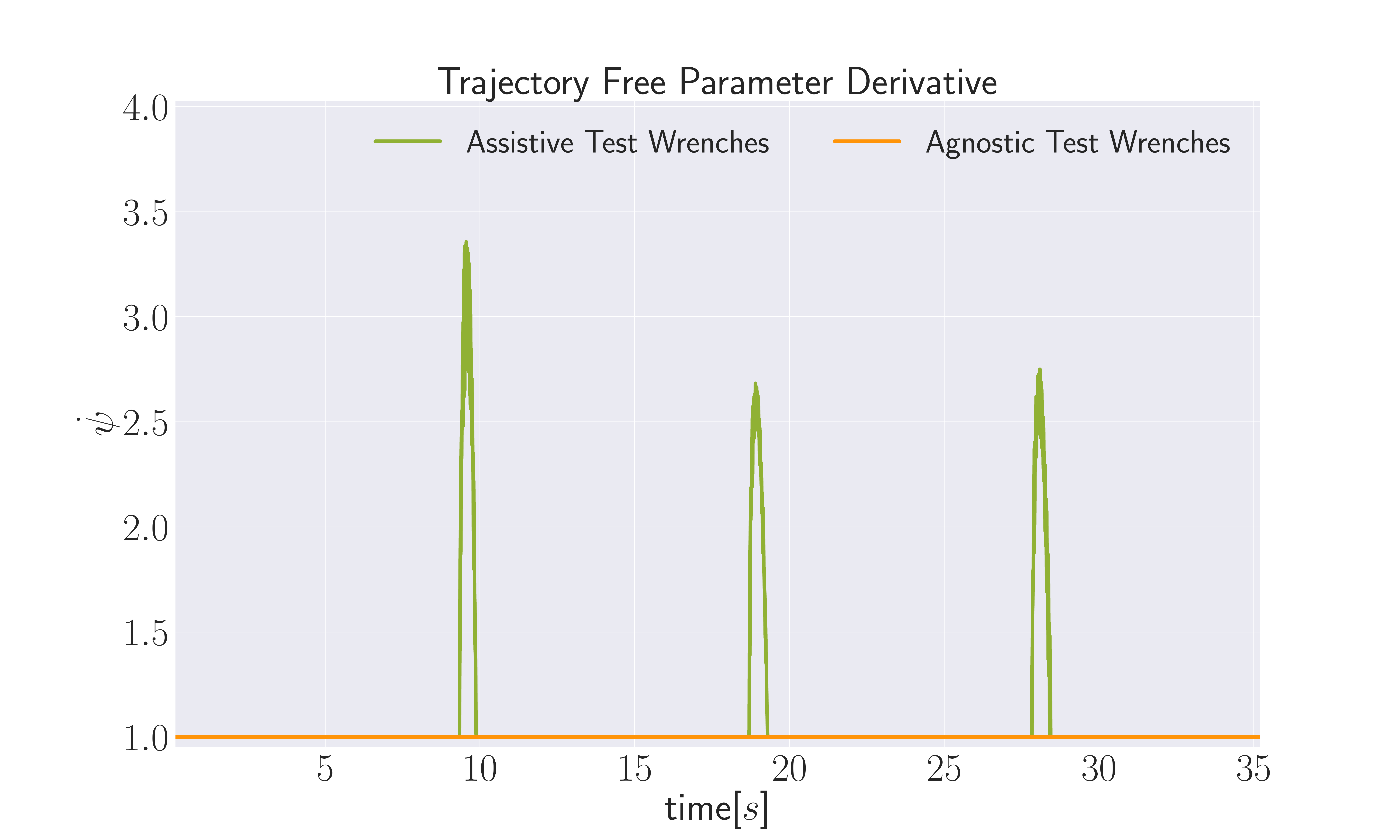}
		\caption{\hspace*{-7.5mm}}
		\label{fig:simulation-sdotvalue-x}
	\end{subfigure}%
	\begin{subfigure}{0.5\textwidth}
		\centering
		\includegraphics[clip, trim=1cm 0.5cm 4.5cm 2.5cm, scale=0.105]{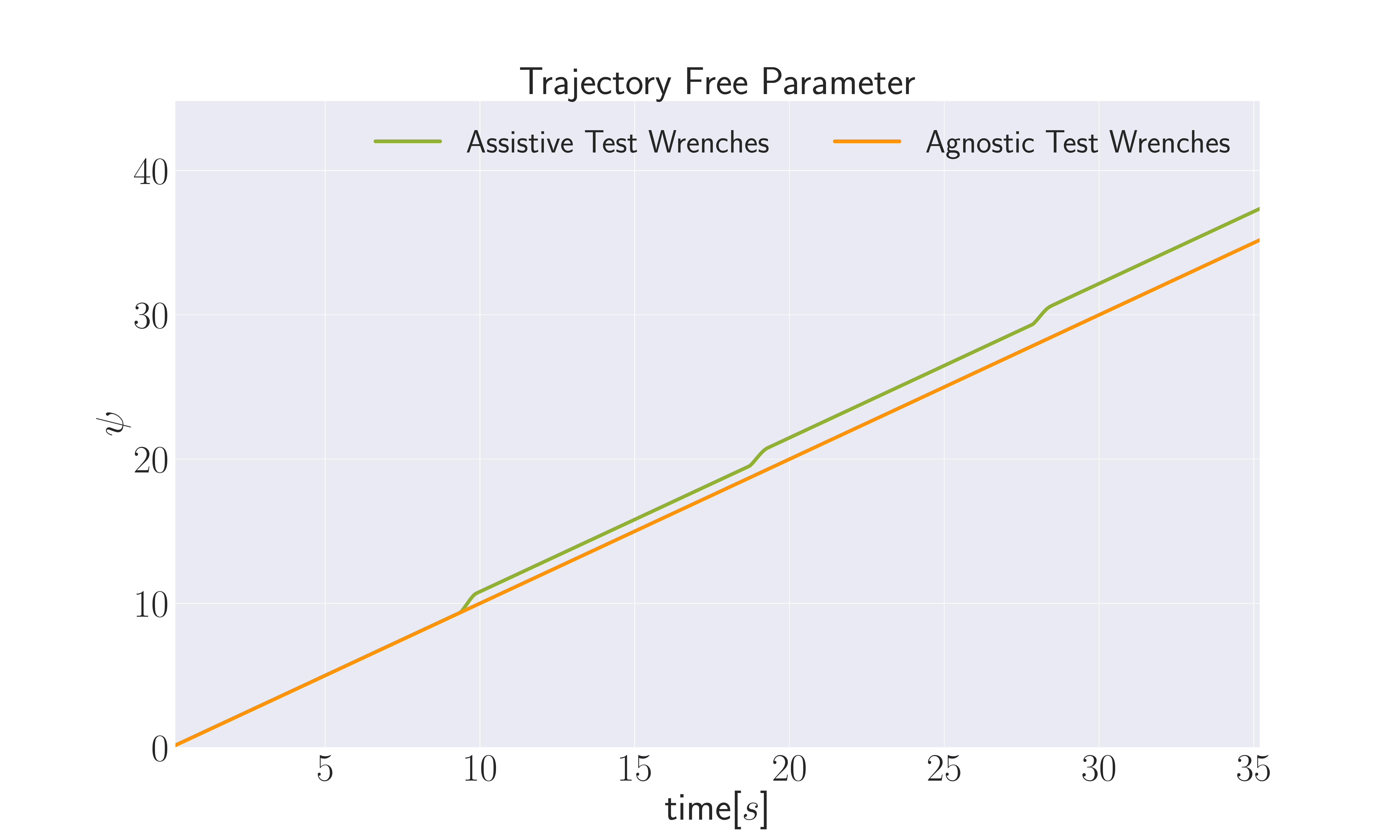}
		\caption{\hspace*{-7.5mm}}
		\label{fig:simulation-svalue-x}
	\end{subfigure}
	\begin{subfigure}{0.5\textwidth}
		\centering
		\includegraphics[clip, trim=1cm 0.5cm 4.5cm 2.5cm, scale=0.105]{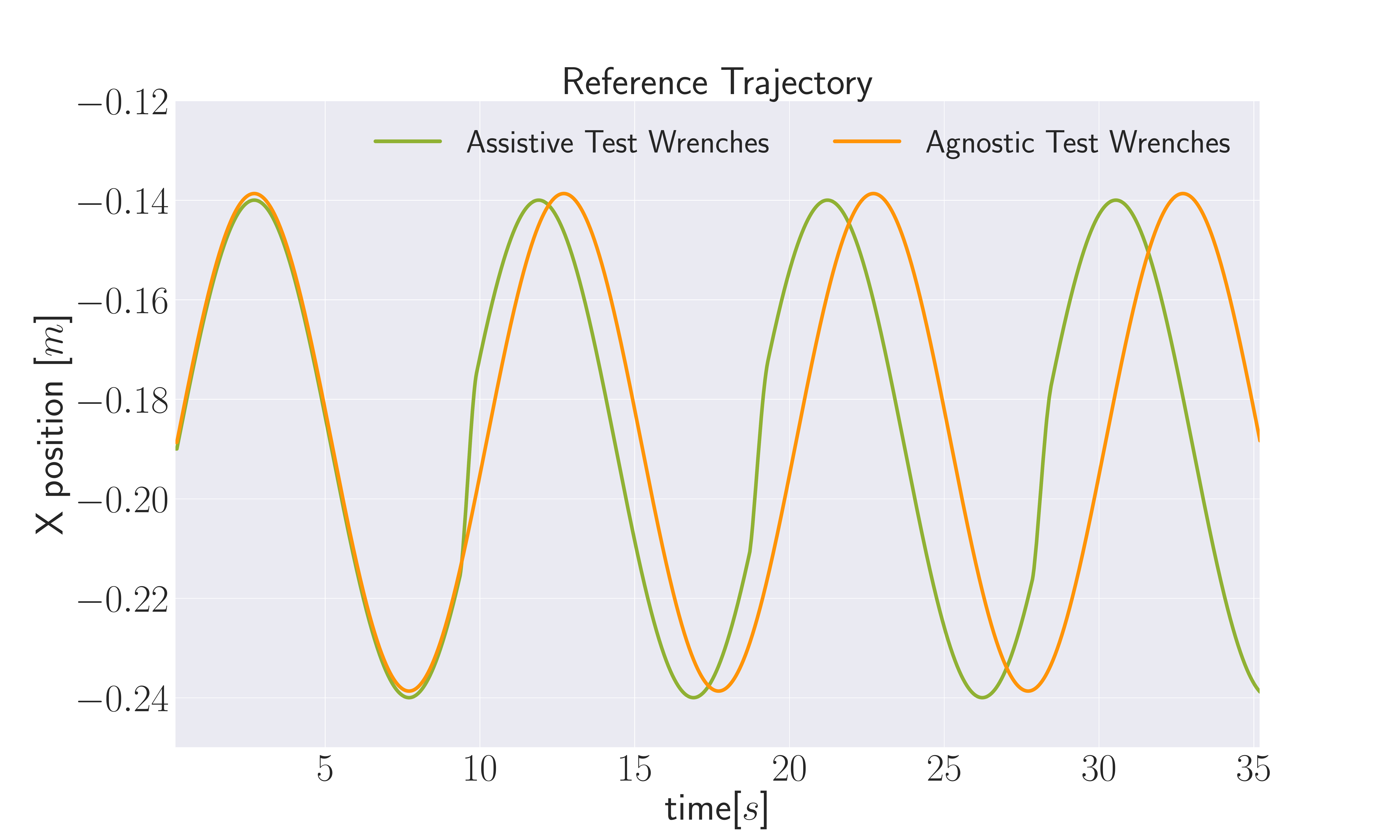}
		\caption{\hspace*{-7.5mm}}
		\label{fig:simulation-reference-trajectory-x}
	\end{subfigure}%
	\begin{subfigure}{0.5\textwidth}
		\centering
		\includegraphics[clip, trim=1cm 0.5cm 4.5cm 2.5cm, scale=0.105]{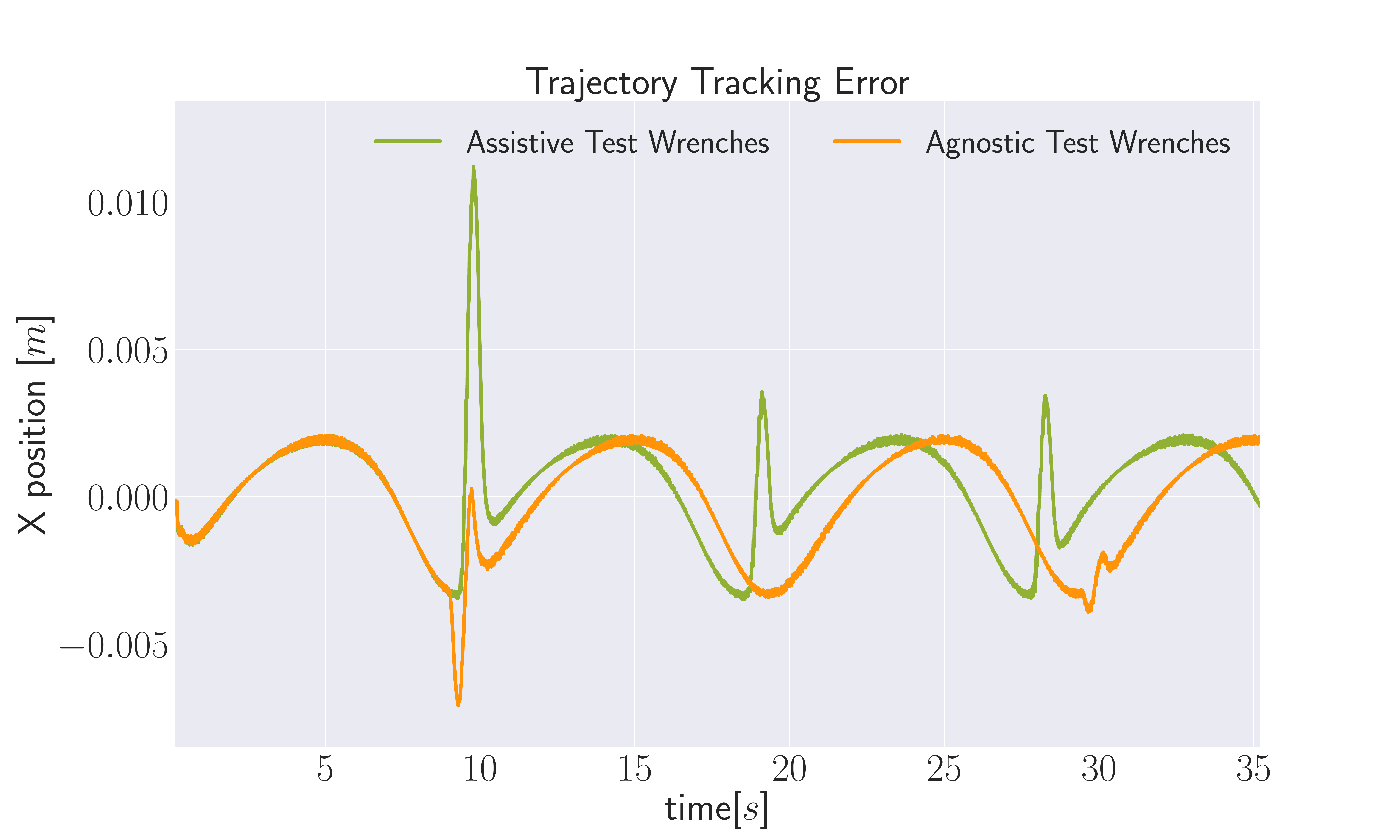}
		\caption{\hspace*{-7.5mm}}
		\label{fig:simulation-trajectory-tracking-error-x}
	\end{subfigure}
	\caption{Gazebo simulation 1D trajectory advancement along $X$-direction}
	\label{fig:simulation-velocity-modulation-1d-x}
\end{figure}

\begin{figure}[H]
	\hspace{-0.5cm}
	\includegraphics[width=\textwidth]{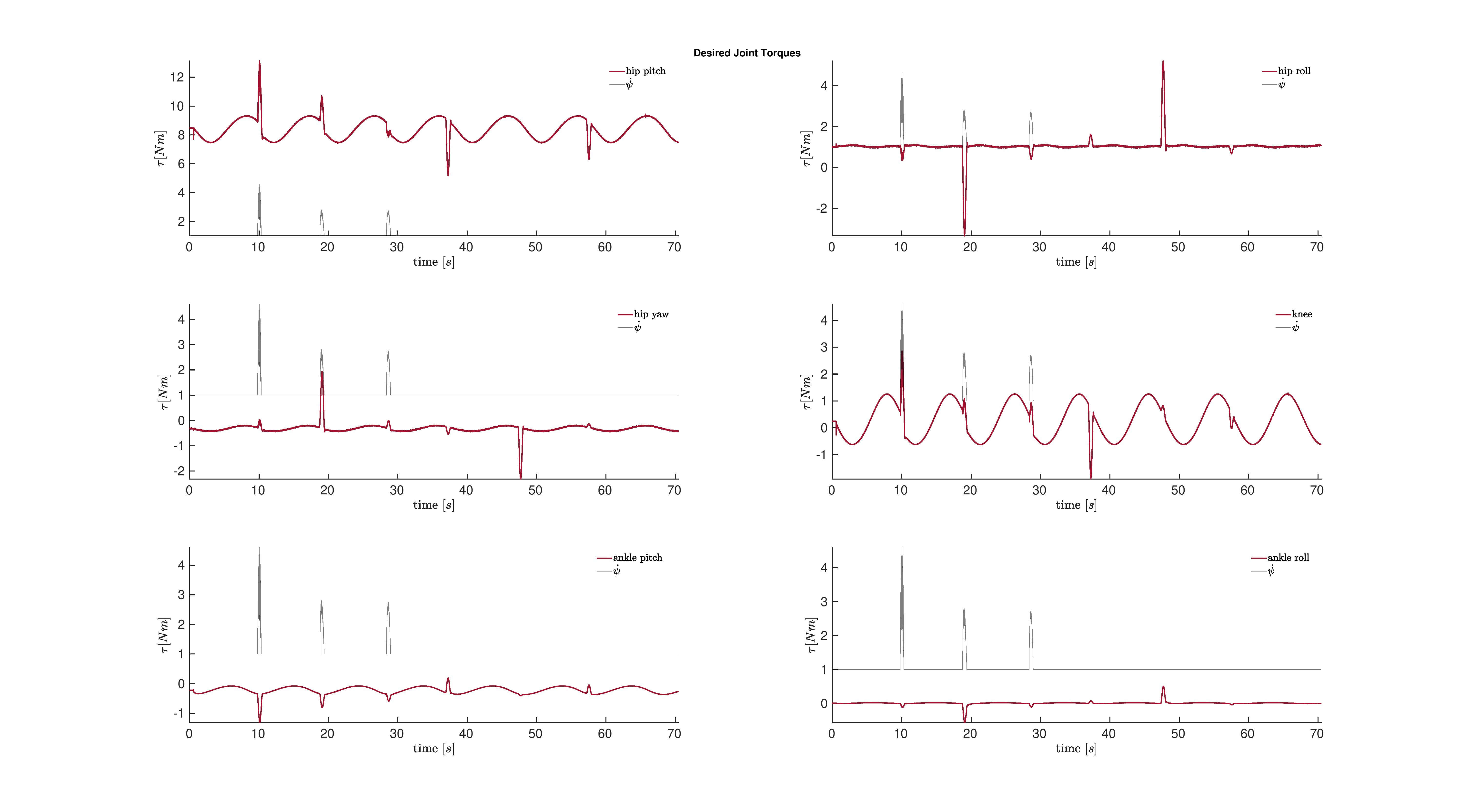}
	\caption{Gazebo simulation 1D trajectory advancement leg joint torques}
	\label{fig:ee_simulation_leg_toqures_plot}
\end{figure}

\begin{figure}[H]
	\centering
	\begin{subfigure}{0.5\textwidth}
		\centering
		\includegraphics[clip, trim=1cm 0.5cm 4.5cm 3.5cm, scale=0.105]{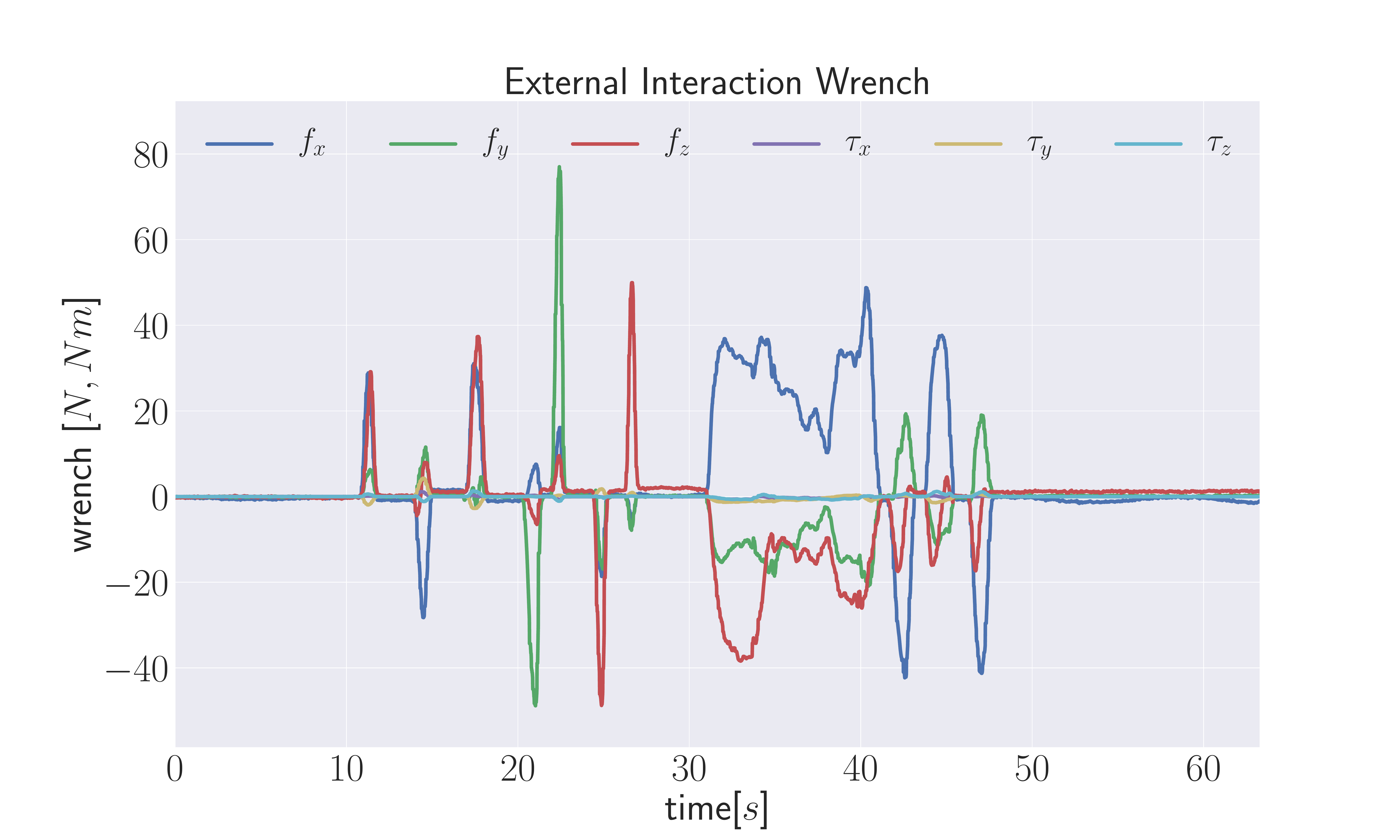}
		\caption{\hspace*{-7.5mm}}
		\label{fig:real-robot-external-interaction-wrench-x}
	\end{subfigure}%
	\begin{subfigure}{0.5\textwidth}
		\centering
		\includegraphics[clip, trim=1cm 0.5cm 4.5cm 3.5cm, scale=0.105]{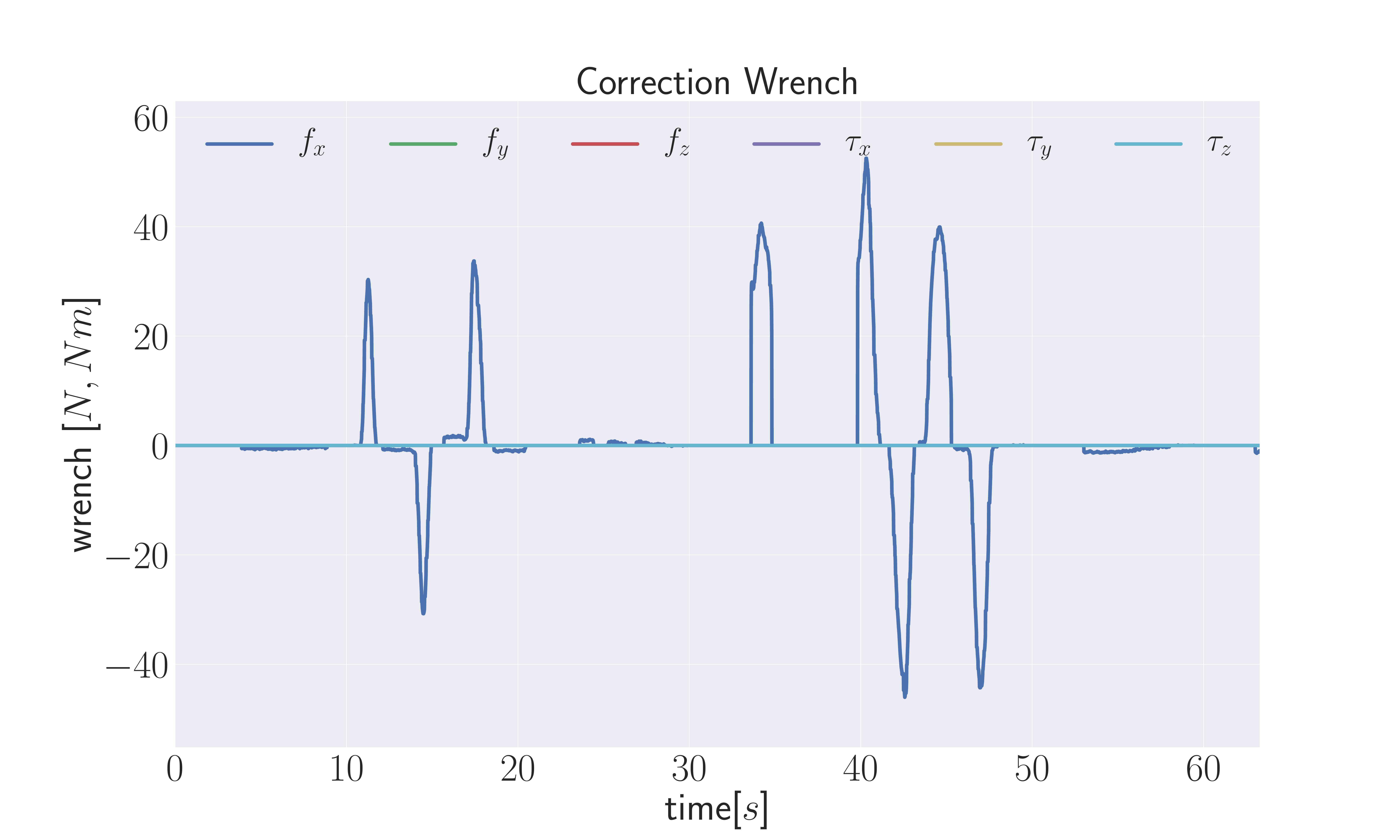}
		\caption{\hspace*{-7.5mm}}
		\label{fig:real-robot-correction-wrench-x}
	\end{subfigure}
	\begin{subfigure}{0.5\textwidth}
		\centering
		\includegraphics[clip, trim=1cm 0.5cm 4.5cm 2.5cm, scale=0.105]{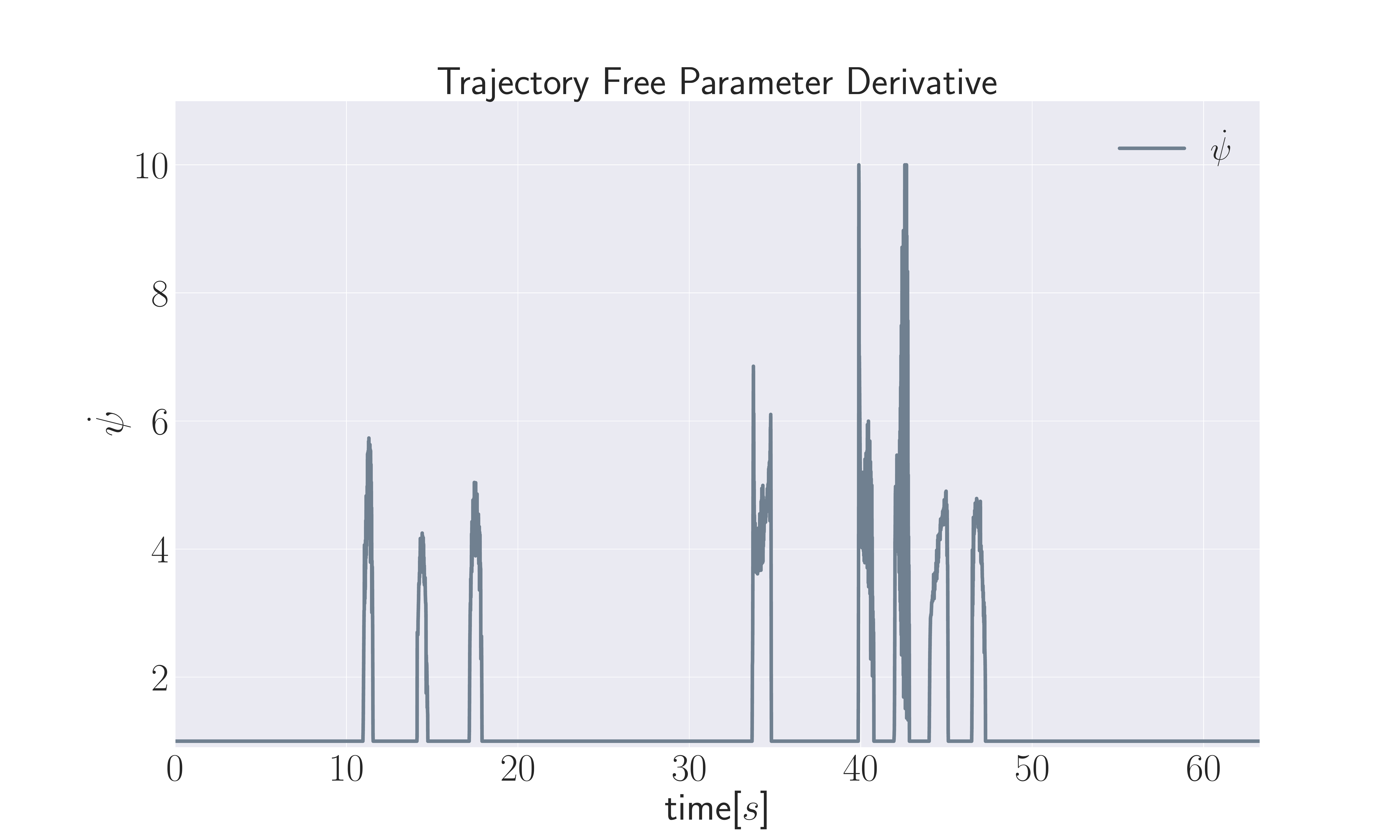}
		\caption{\hspace*{-7.5mm}}
		\label{fig:real-robot-sdotvalue-x}
	\end{subfigure}%
	\begin{subfigure}{0.5\textwidth}
		\centering
		\includegraphics[clip, trim=1cm 0.5cm 4.5cm 2.5cm, scale=0.105]{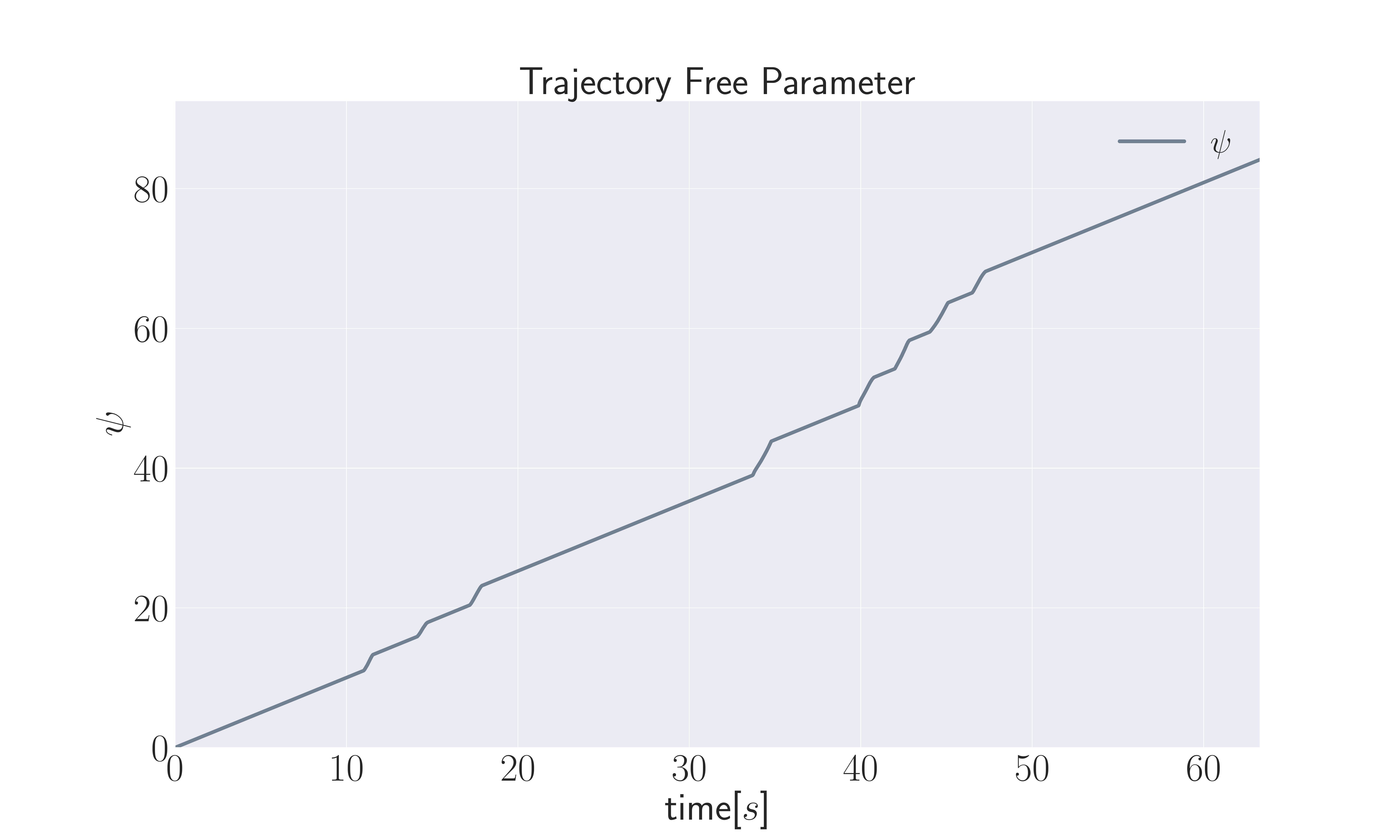}
		\caption{\hspace*{-7.5mm}}
		\label{fig:real-robot-svalue-x}
	\end{subfigure}
	\begin{subfigure}{0.5\textwidth}
		\centering
		\includegraphics[clip, trim=1cm 0.5cm 4.5cm 2.5cm, scale=0.105]{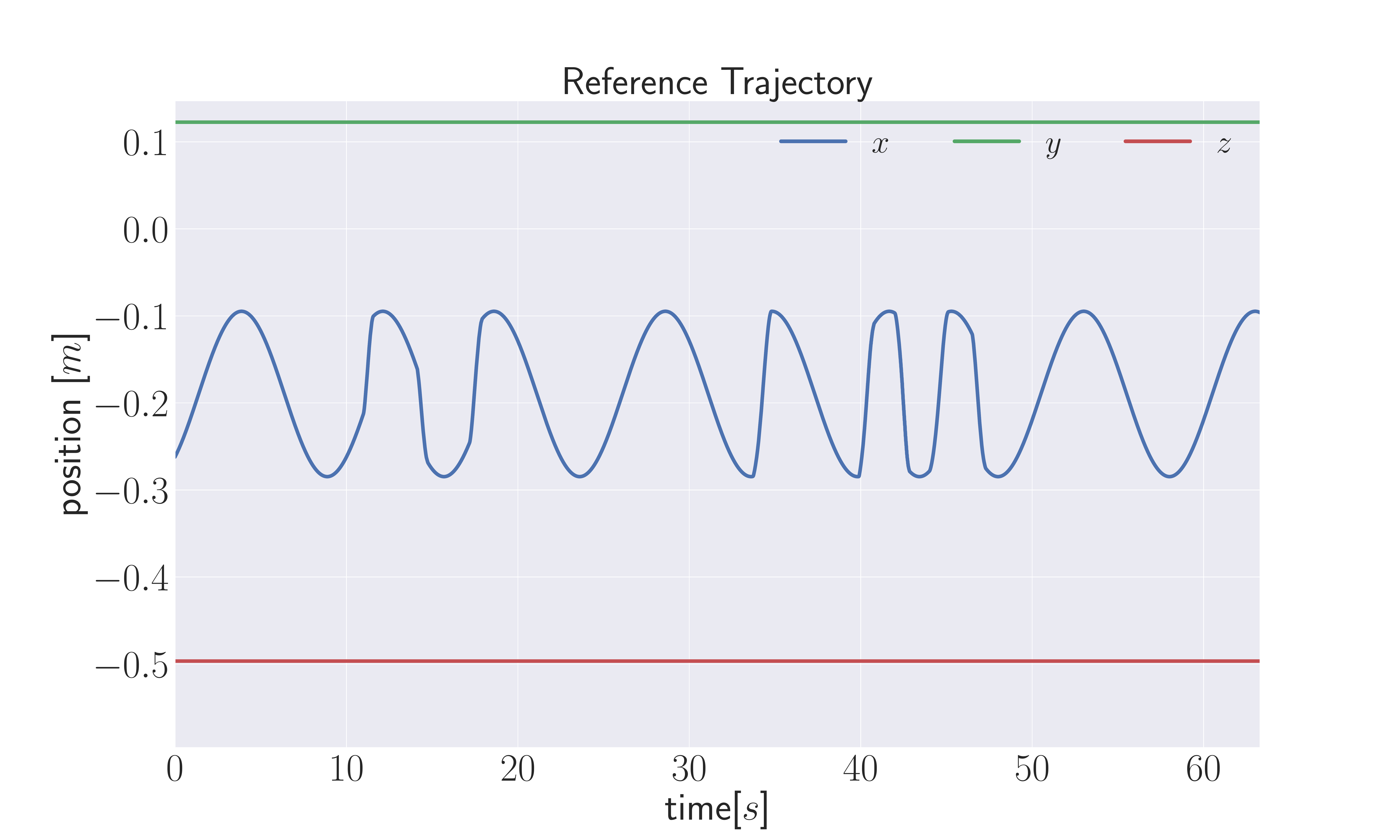}
		\caption{\hspace*{-7.5mm}}
		\label{fig:real-robot-reference-trajectory-x}
	\end{subfigure}%
	\begin{subfigure}{0.5\textwidth}
		\centering
		\includegraphics[clip, trim=1cm 0.5cm 4.5cm 2.5cm, scale=0.105]{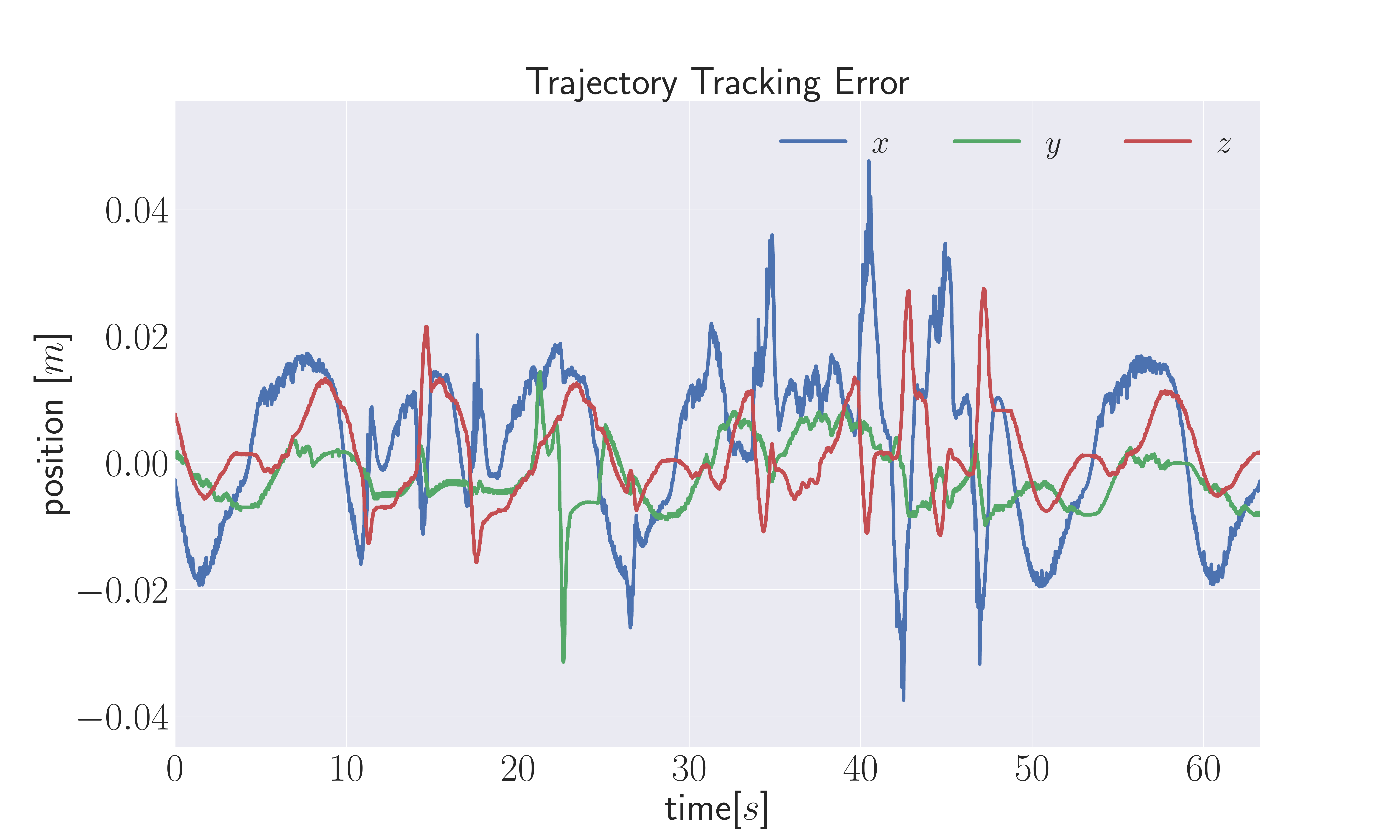}
		\caption{\hspace*{-7.5mm}}
		\label{fig:real-robot-trajectory-tracking-error-x}
	\end{subfigure}
	\caption{Real robot 1D trajectory advancement along $X$-direction}
	\label{fig:real-robot-velocity-modulation-1d-x}
\end{figure}

\begin{figure}[H]
	\hspace{-2cm}
	\includegraphics[scale=0.18]{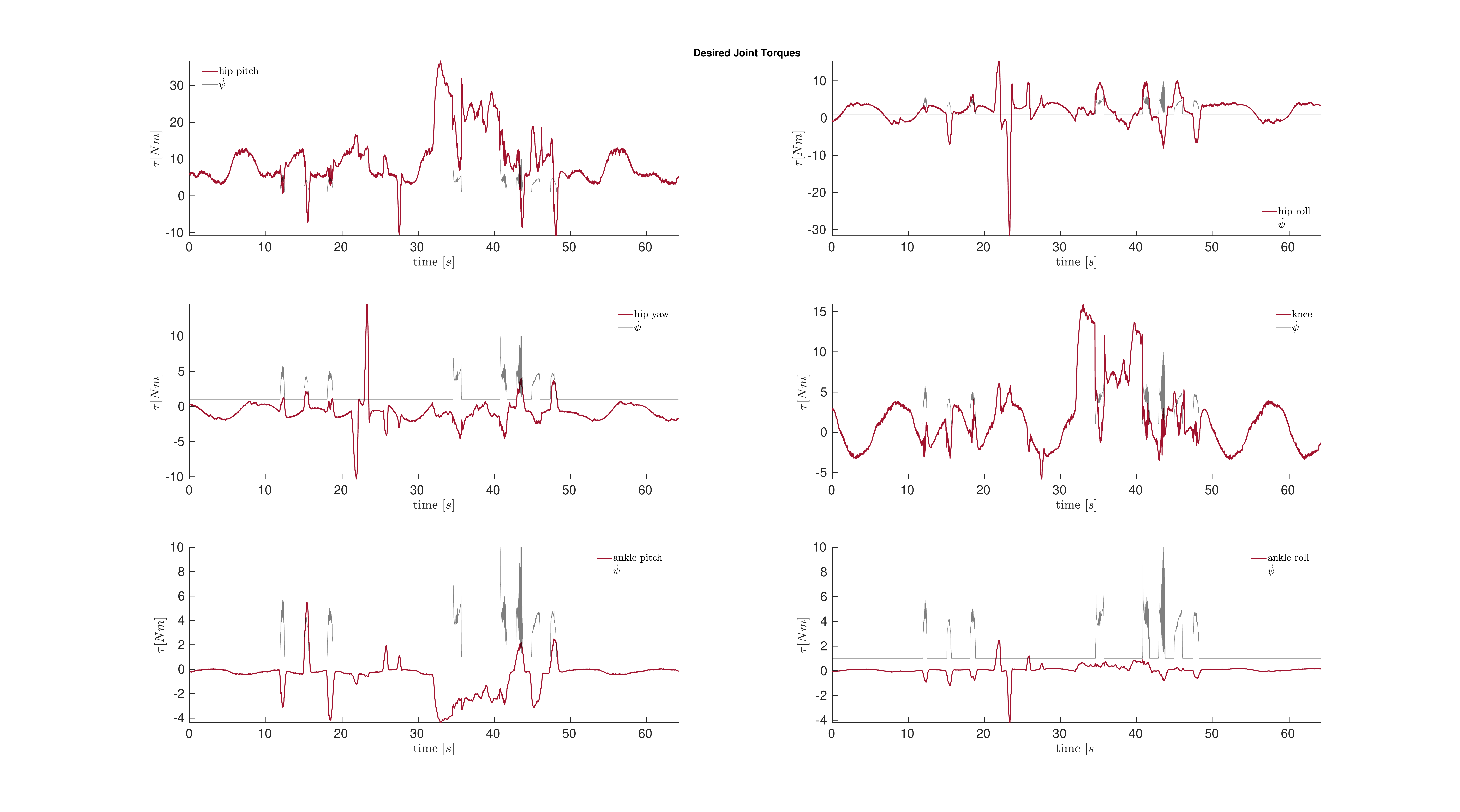}
	\caption{Real robot 1D trajectory advancement leg joint torques}
	\label{fig:ee_real_robot_leg_toqures_plot}
\end{figure}

\newpage

\section{Whole-Body Standup Trajectory Advancement Experiments}
\label{sec:whole-body-standup-trajectory-advancement-experiments}

Extended experiments of trajectory advancement are carried with the whole-body standup controller that is described in section~\ref{sec:whole-body-standup-controller}. Unlike the use of external agent joint torques, the controller is updated to use the interaction wrench estimates from the hands of the robot to advance along the trajectory. The reference trajectory to be followed by the robot is given for the center of mass. Intuitively using the approach of trajectory advancement, when a human is helping the robot to standup, the interaction wrench is used to stand faster.

\subsection{Simulation Results}

Simulation experiments are carried using Gazebo, a popular 3D dynamics simulator. Different states of the standup controller are highlighted in Fig.~\ref{fig:trajectory-advancement-standup-simulation-standup-states}. External interaction is mimicked through an application of wrench at the hands of the robot a plugin \cite{hoffman2014yarp}. The cylinders at the hands in Fig.~\ref{fig:trajectory-advancement-standup-simulation-standup-states} indicate the interaction wrench applied at the hands and the plots in Fig.~\ref{fig:standup-simulation-left-external-wrench} and Fig.~\ref{fig:standup-simulation-right-external-wrench} indicate the left and right hands wrench estimates respectively~\cite{nori2015}.

\begin{figure}[H]
	\centering
	\begin{subfigure}{0.25\textwidth}
		\centering
		\includegraphics[width=0.85\textwidth]{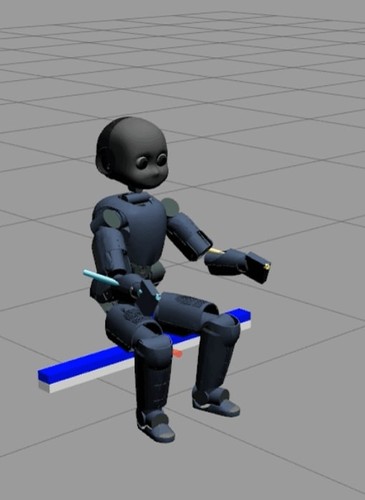}
		\caption{}
		\label{fig:trajectory-advancement-standup-simulation-state-1.png}
	\end{subfigure}%
	\begin{subfigure}{0.25\textwidth}
		\centering
		\includegraphics[width=0.85\textwidth]{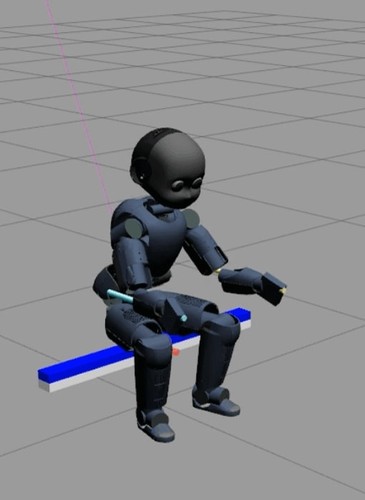}
		\caption{}
		\label{fig:trajectory-advancement-standup-simulation-state-2.png}
	\end{subfigure}%
	\begin{subfigure}{0.25\textwidth}
		\centering
		\includegraphics[width=0.85\textwidth]{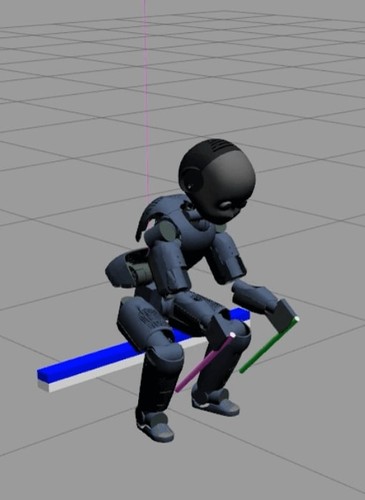}
		\caption{}
		\label{fig:trajectory-advancement-standup-simulation-state-3.png}
	\end{subfigure}%
	\begin{subfigure}{0.25\textwidth}
		\centering
		\includegraphics[width=0.85\textwidth]{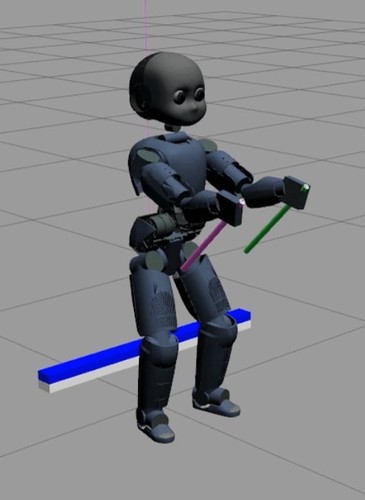}
		\caption{}
		\label{fig:trajectory-advancement-standup-simulation-state-4.png}
	\end{subfigure}
	\caption{iCub at different states during sit-to-stand transition with the whole-body standup controller~(Sec~\ref{sec:whole-body-standup-controller})}
	\label{fig:trajectory-advancement-standup-simulation-standup-states}
\end{figure}

\vspace{-0.75cm}

The results of trajectory advancement in simulation are shown in Fig.~\ref{fig:standup-simulation-trajectory-advancement}. The wrench estimated at the hands of the robot under the influence of simulated assistance is highlighted in Fig.~\ref{fig:standup-simulation-left-external-wrench}~\ref{fig:standup-simulation-right-external-wrench}. The external wrench that is helpful to achieve the task is shown in Fig.~\ref{fig:standup-simulation-correction-wrench}. The reference trajectory is similar to a time parametrized trajectory i.e., $\psi = t$ until any helpful wrench is applied at the hands of the robot. Under the influence of helpful wrenches, the derivative of the trajectory free parameter $\dot{\psi}$ changes as shown in Fig.~\ref{fig:standup-simulation-sdotvalue} and the corresponding trajectory advancement is reflected as an increase in $\psi$ as seen in Fig.~\ref{fig:standup-simulation-svalue}. Accordingly, the reference is advanced further along the  CoM reference trajectory as shown in Fig.~\ref{fig:standup-simulation-reference-trajectory}. The original CoM reference trajectory without trajectory advancement is shown in the same figure with reduced transparency. The time evolution of the leg desired joint torques generated by the controller for the duration of the experiment is highlighted in Fig.~\ref{fig:standup_simulation_leg_toqures_plot}.

As highlighted in section~\ref{sec:whole-body-standup-controller}, the robot behavior is guided through a state machine and the center of mass trajectory is generated between the states through a smoothing parameter. The experiments for trajectory advancement are conducted using the same state machine of the controller and as the time duration between the sates is small, the amount of trajectory advancement is also limited. As soon as the end of a trajectory between the states is reached, the trajectory advancement is disabled to keep the robot stabilized.

\begin{figure}[H]
	\centering
	\begin{subfigure}{0.49\textwidth}
		\centering
		\includegraphics[width=\textwidth]{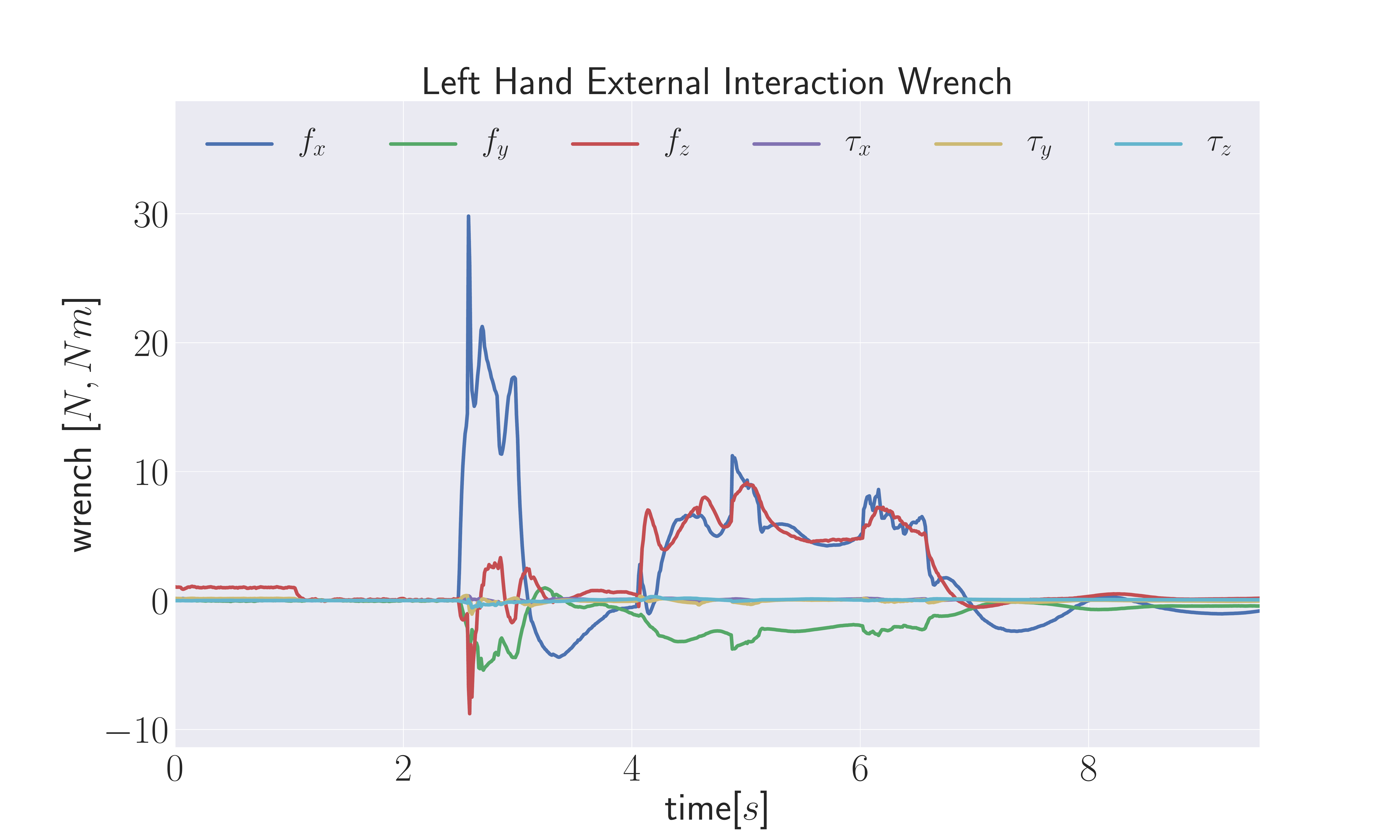}
		\caption{\hspace*{-4mm}}
		\label{fig:standup-simulation-left-external-wrench}
	\end{subfigure}%
	\begin{subfigure}{0.49\textwidth}
		\centering
		\includegraphics[width=\textwidth]{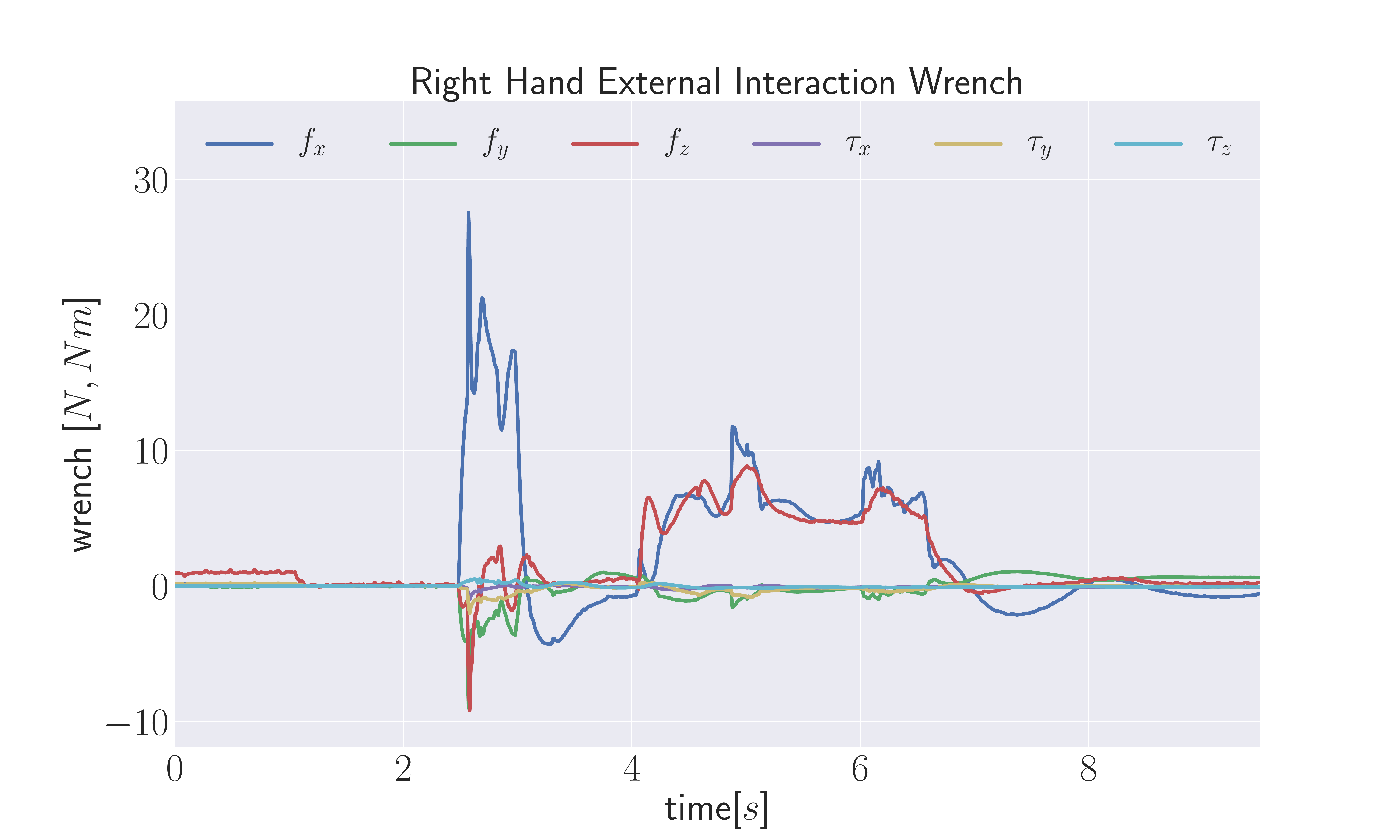}
		\caption{\hspace*{-4mm}}
		\label{fig:standup-simulation-right-external-wrench}
	\end{subfigure}
	~
	\begin{subfigure}{0.85\textwidth}
		\centering
		\includegraphics[width=0.65\textwidth]{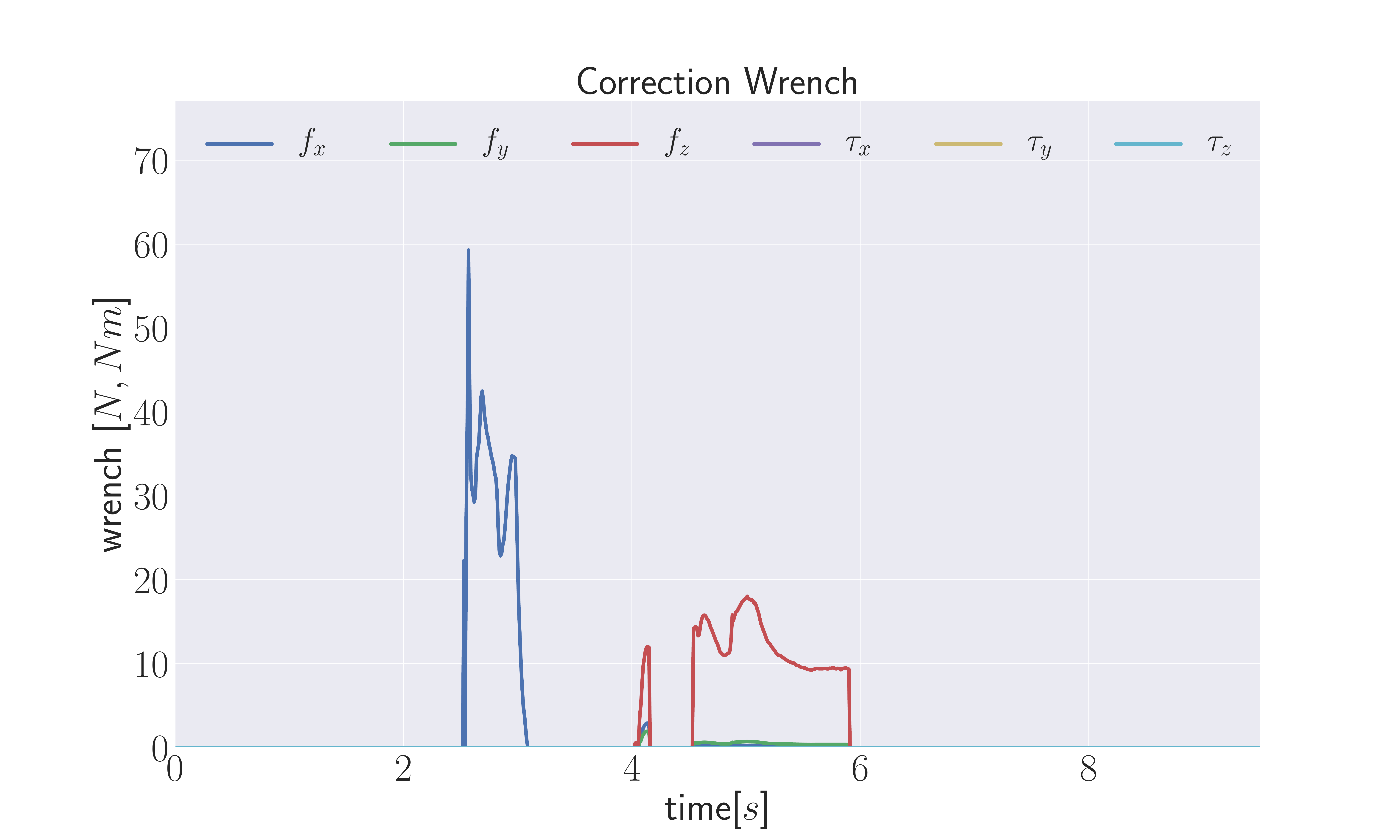}
		\caption{\hspace*{-4mm}}
		\label{fig:standup-simulation-correction-wrench}
	\end{subfigure}
	\begin{subfigure}{0.49\textwidth}
		\centering
		\includegraphics[width=\textwidth]{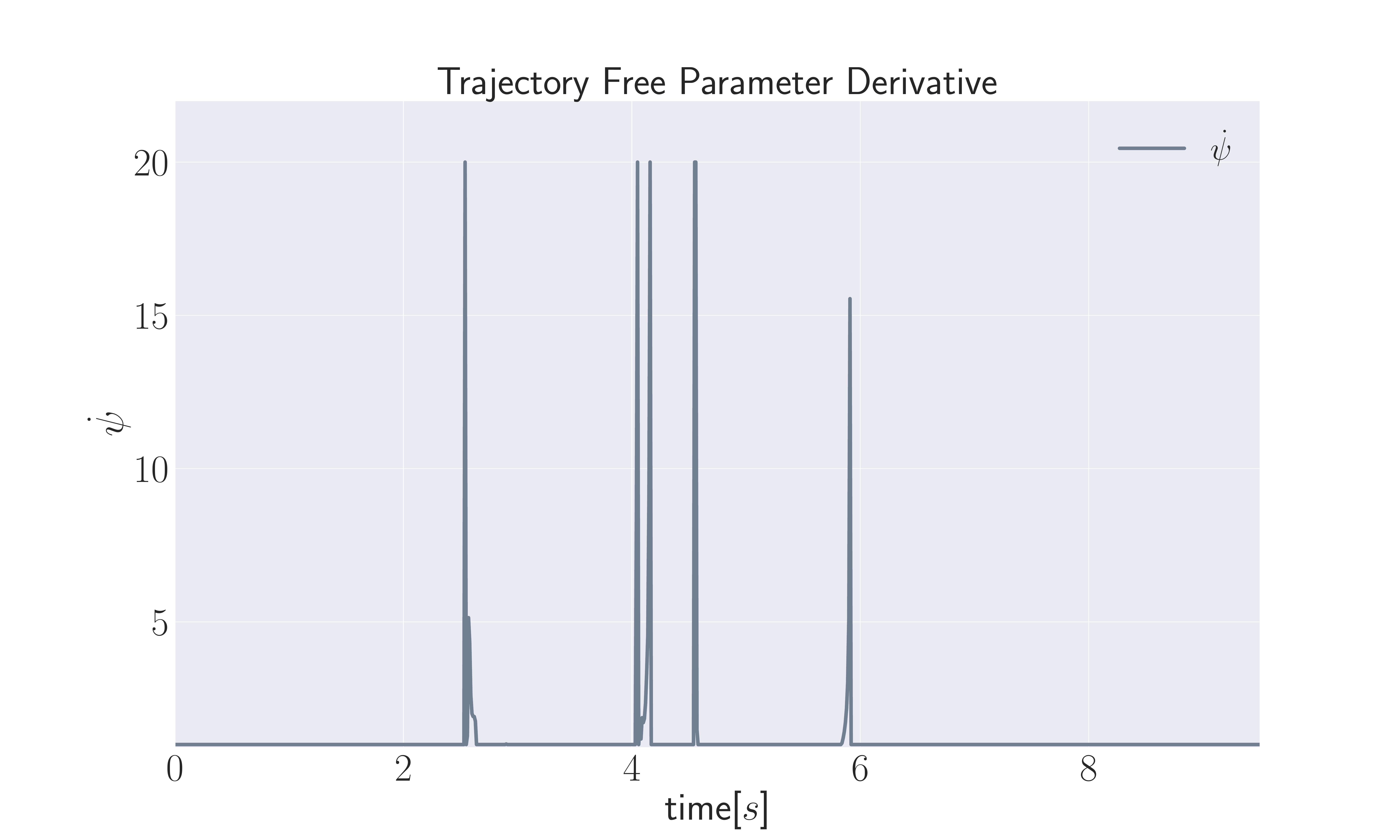}
		\caption{\hspace*{-4mm}}
		\label{fig:standup-simulation-sdotvalue}
	\end{subfigure}%
	\begin{subfigure}{0.49\textwidth}
		\centering
		\includegraphics[width=\textwidth]{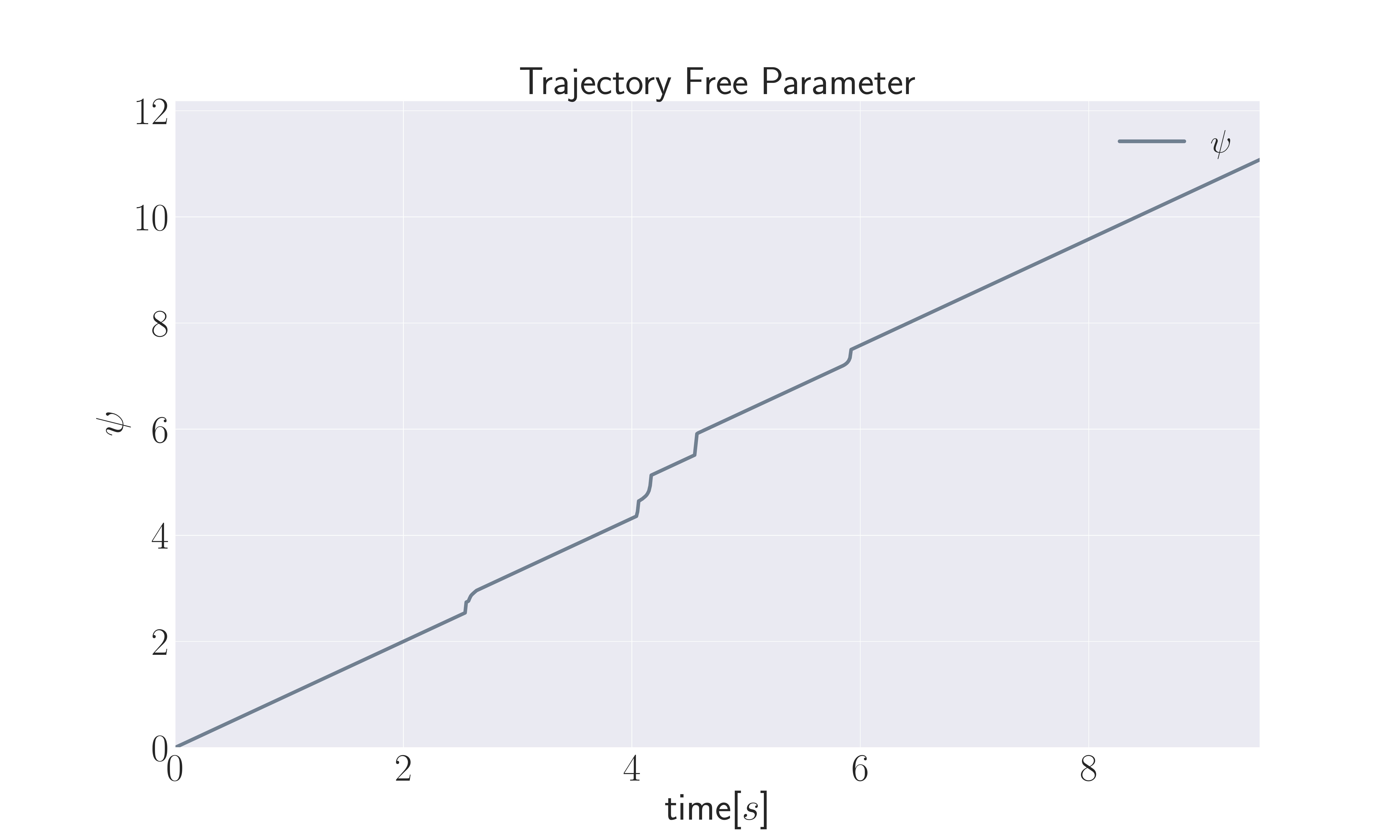}
		\caption{\hspace*{-4mm}}
		\label{fig:standup-simulation-svalue}
	\end{subfigure}
	\begin{subfigure}{0.85\textwidth}
		\centering
		\includegraphics[width=0.65\textwidth]{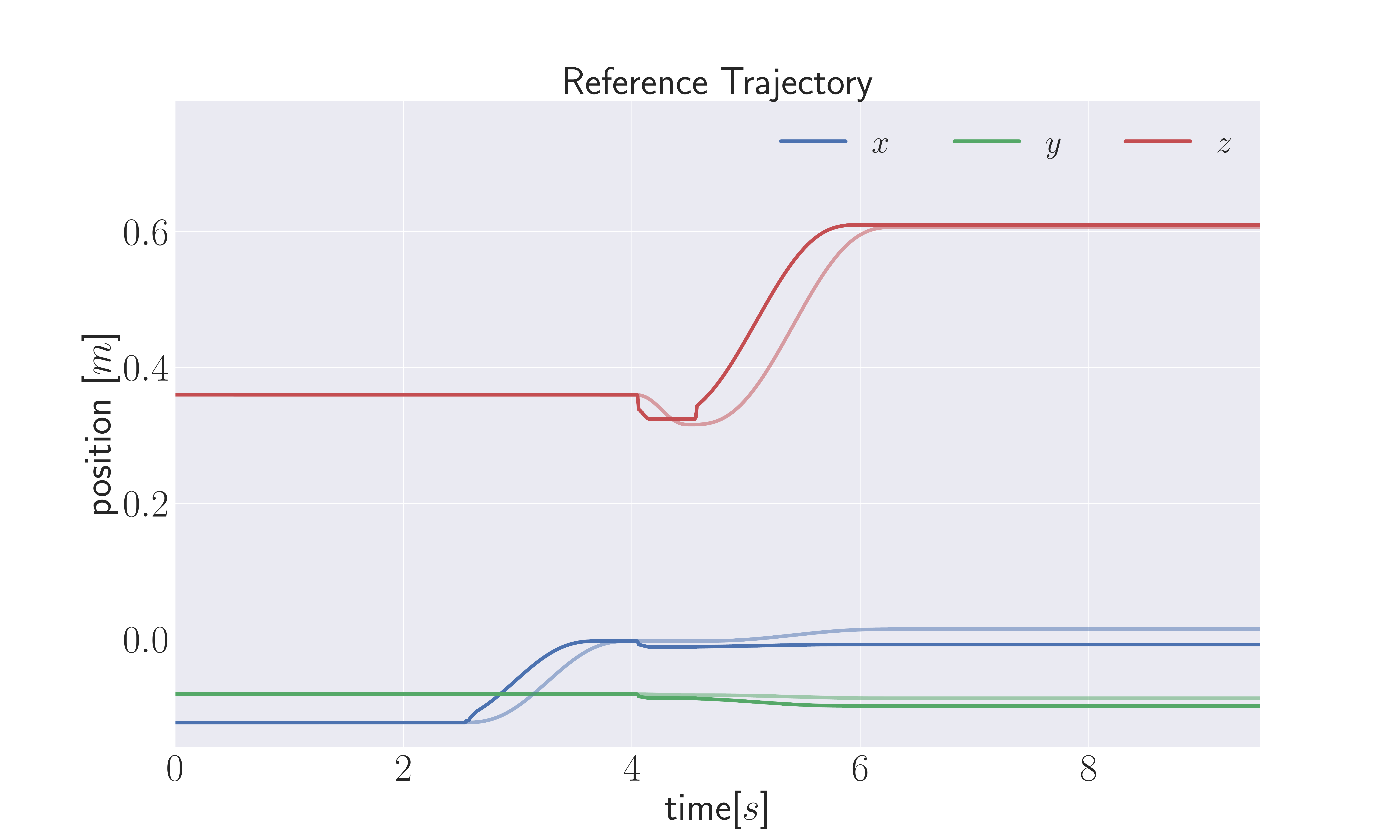}
		\caption{\hspace*{-4mm}}
		\label{fig:standup-simulation-reference-trajectory}
	\end{subfigure}
	\caption{Gazebo simulation trajectory advancement during sit-to-stand transition with whole-body standup controller}
	\label{fig:standup-simulation-trajectory-advancement}
\end{figure}

\begin{figure}[H]
	\begin{subfigure}{\textwidth}
		\hspace{-2cm}
		\includegraphics[scale=0.18]{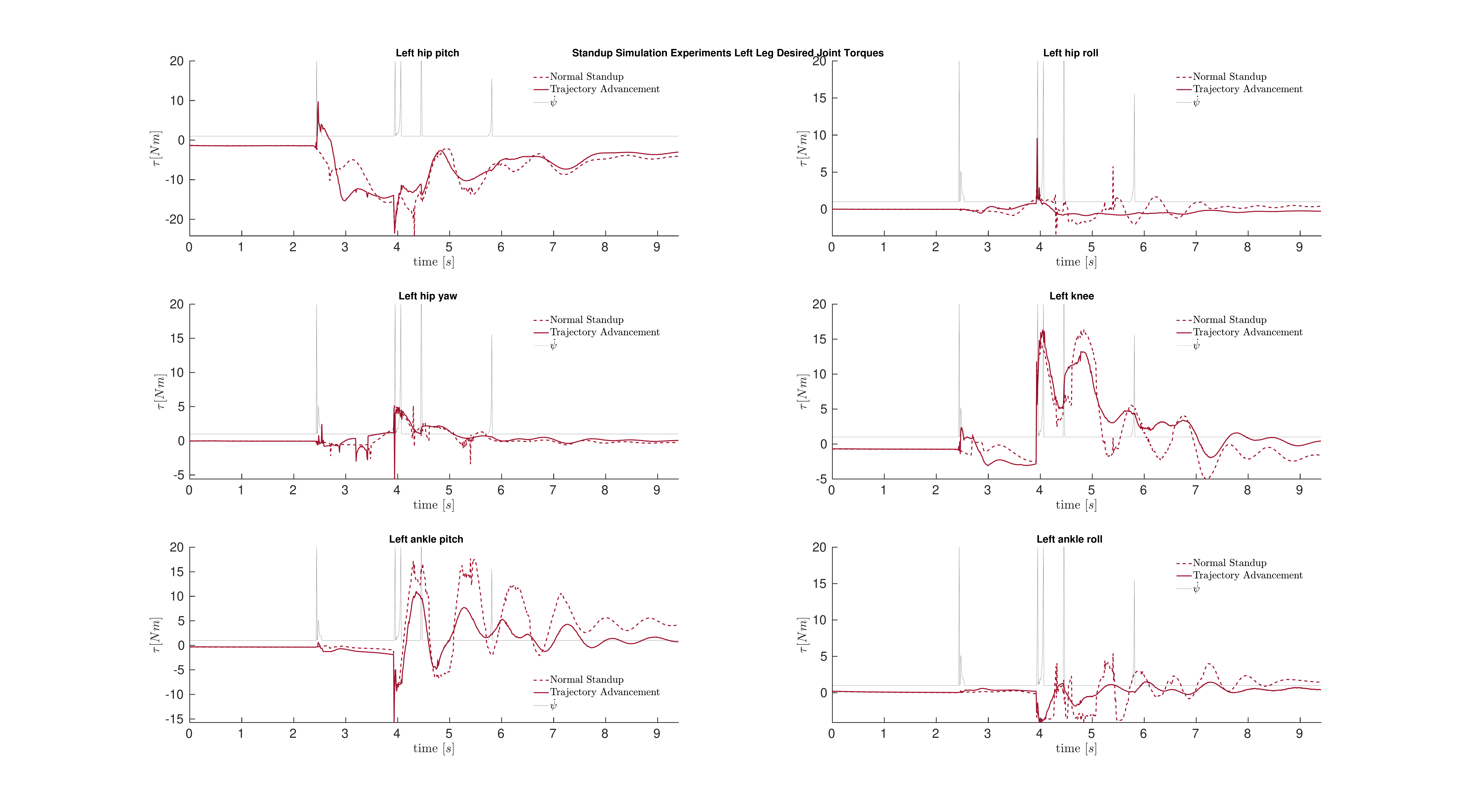}
		\caption{}
		\label{fig:standup_simulation_left_leg_toqures_plot}
	\end{subfigure}
	\begin{subfigure}{\textwidth}
		\hspace{-2cm}
		\includegraphics[scale=0.18]{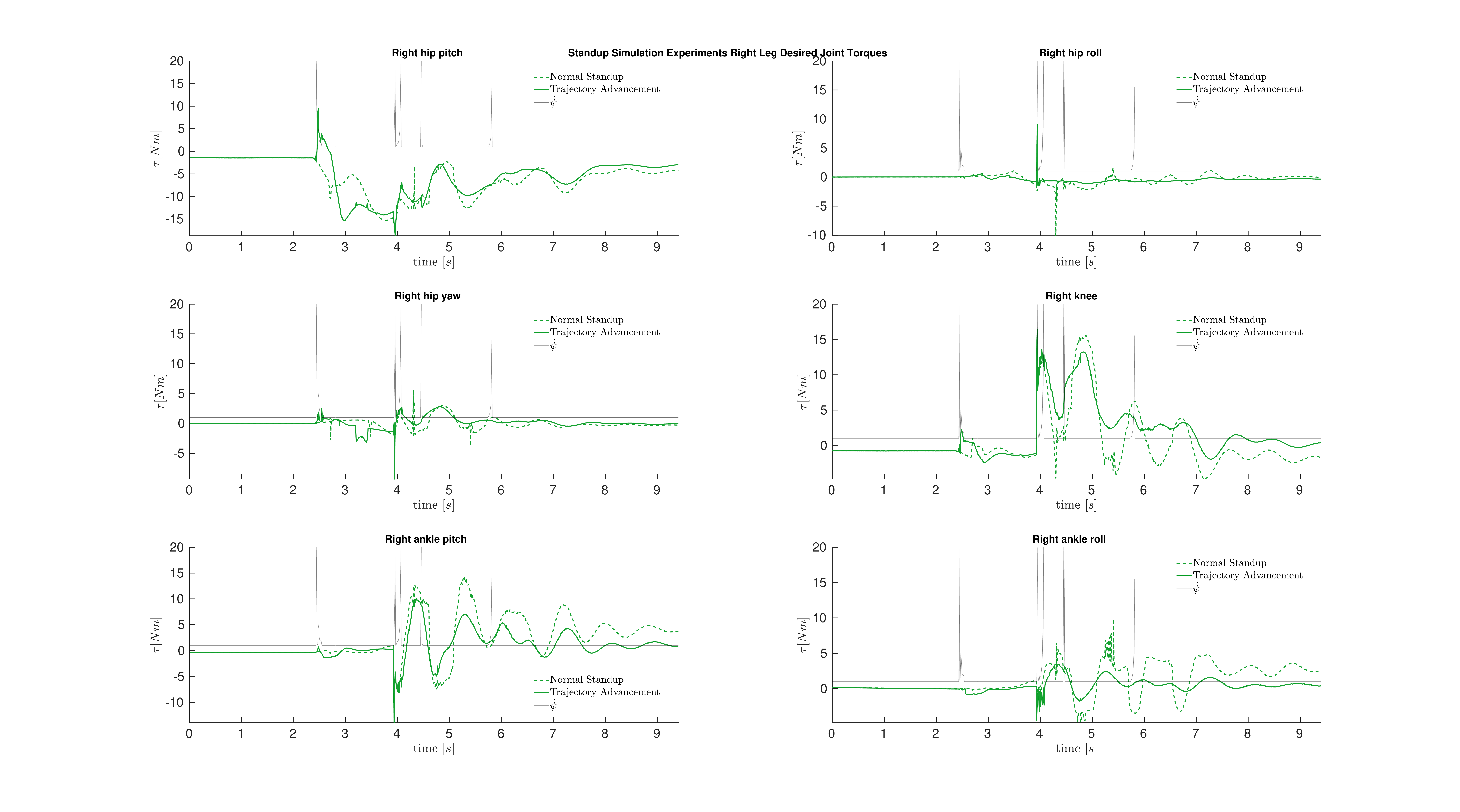}
		\caption{}
		\label{fig:standup_simulation_right_leg_toqures_plot}
	\end{subfigure}
    \caption{Gazebo simulation standup experiment leg joint torques}
    \label{fig:standup_simulation_leg_toqures_plot}
\end{figure}

\newpage

\subsection{Real Robot Results}

The trajectory advancement experiments with the whole body standup controller are also carried with a real iCub humanoid robot as shown in Fig.~\ref{fig:trajectory-advancement-standup-real-robot-standup-states}. A human partner assists the robot by pulling it up and forward while the robot is performing the sit-to-stand transition.

\begin{figure}[ht!]
	\centering
	\begin{subfigure}{0.33\textwidth}
		\centering
		\includegraphics[width=0.975\textwidth]{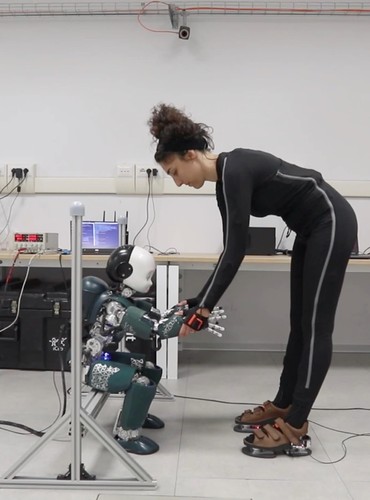}
		\label{fig:trajectory-advancement-standup-real-robot-state-2.png}
	\end{subfigure}%
	\begin{subfigure}{0.33\textwidth}
		\centering
		\includegraphics[width=0.975\textwidth]{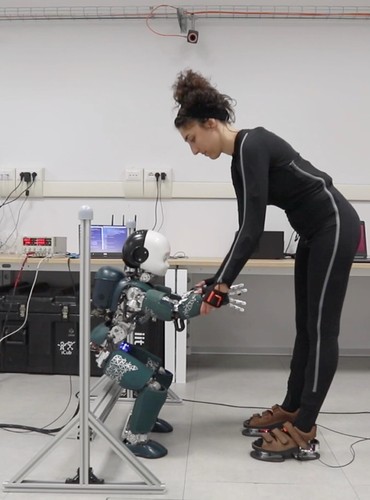}
		\label{fig:trajectory-advancement-standup-real-robot-state-3.png}
	\end{subfigure}%
	\begin{subfigure}{0.33\textwidth}
		\centering
		\includegraphics[width=0.975\textwidth]{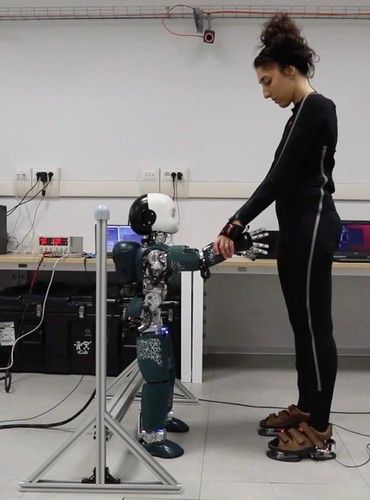}
		\label{fig:trajectory-advancement-standup-real-robot-state-4.png}
	\end{subfigure}
	\caption[iCub at different states during sit-to-stand transition with the whole-body standup controller~(Sec~\ref{sec:whole-body-standup-controller}) using assistance from a human partner]{iCub at different states during sit-to-stand transition with the whole-body standup controller~(Sec~\ref{sec:whole-body-standup-controller}) using assistance\footnotemark[1] from a human partner}
	\label{fig:trajectory-advancement-standup-real-robot-standup-states}
\end{figure}

\footnotetext[1]{Human assistance is considered in terms of the interaction wrench}

The results of trajectory advancement on the real robot are shown in Fig.~\ref{fig:real-robot-standup-trajectory-advancement}. Fig.~\ref{fig:real-robot-standup-left-external-wrench}~\ref{fig:real-robot-standup-right-external-wrench} highlights the wrench estimates at the hands of the robot under the influence of the external wrench applied while the human partner is assisting the robot. The external wrench that is helpful to achieve the task is shown in Fig.~\ref{fig:real-robot-standup-correction-wrench}. The reference trajectory is similar to a time parametrized trajectory i.e., $\psi = t$ until any helpful wrench is considered at the hands of the robot. Under the influence of helpful wrenches, the derivative of the trajectory free parameter $\dot{\psi}$ changes as shown in Fig.~\ref{fig:real-robot-standup-sdotvalue} and the corresponding trajectory advancement is reflected as an increase in $\psi$ as seen in Fig.~\ref{fig:real-robot-standup-svalue}. Accordingly, the reference is advanced further along the  CoM reference trajectory as shown in Fig.~\ref{fig:real-robot-standup-reference-trajectory}. The original CoM reference trajectory without trajectory advancement is shown in the same figure with reduced transparency. Furthermore, the center of mass trajectory tracking error is presented in Fig.~\ref{fig:real-robot-standup-com-error}. The overall tracking error is kept low except during the moments of trajectory advancement, where there is a momentary increase in the error which decreases quickly. 

\begin{figure}[hbt!]
	\centering
	\begin{subfigure}{0.49\textwidth}
		\centering
		\includegraphics[width=\textwidth]{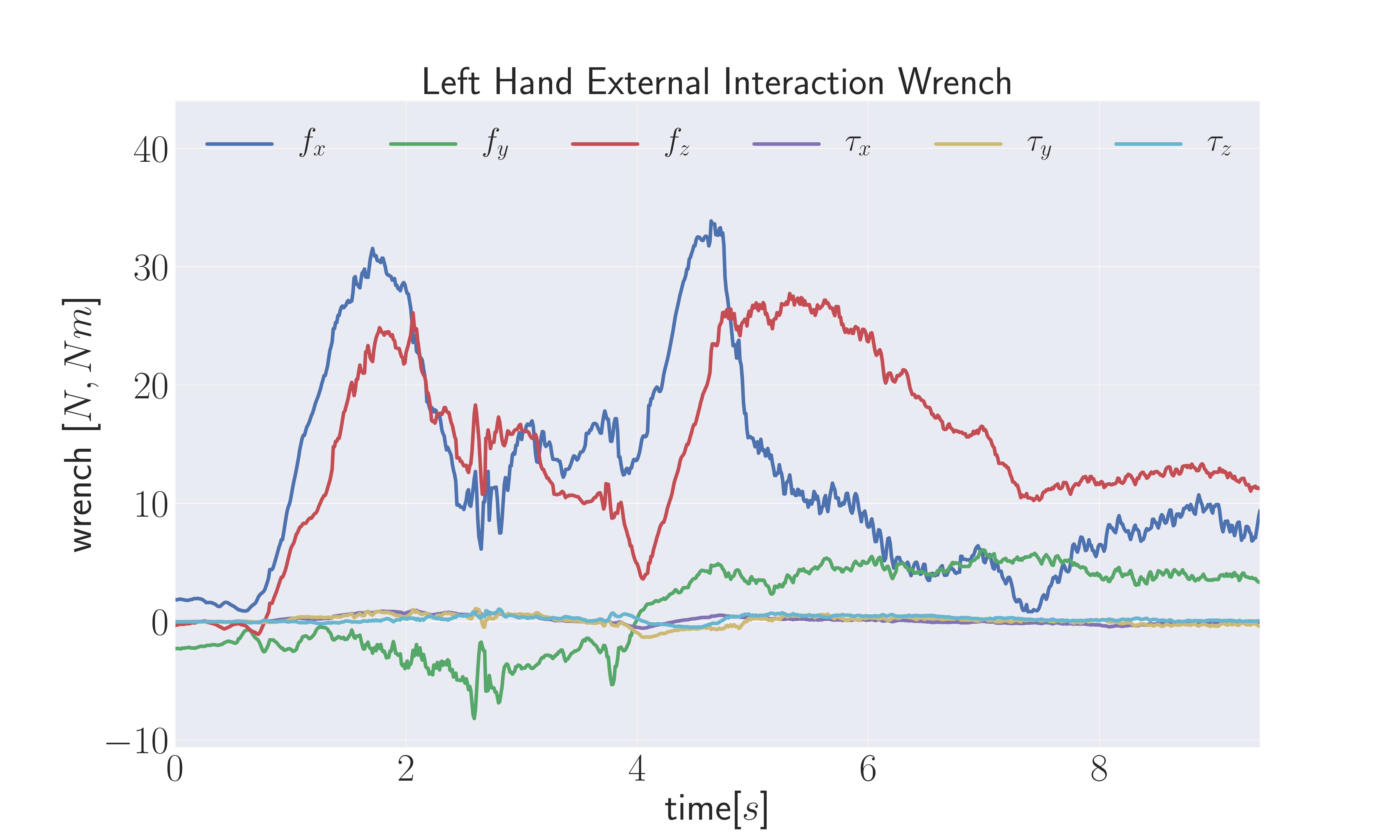}
		\caption{\hspace*{-4mm}}
		\label{fig:real-robot-standup-left-external-wrench}
	\end{subfigure}%
	\begin{subfigure}{0.49\textwidth}
		\centering
		\includegraphics[width=\textwidth]{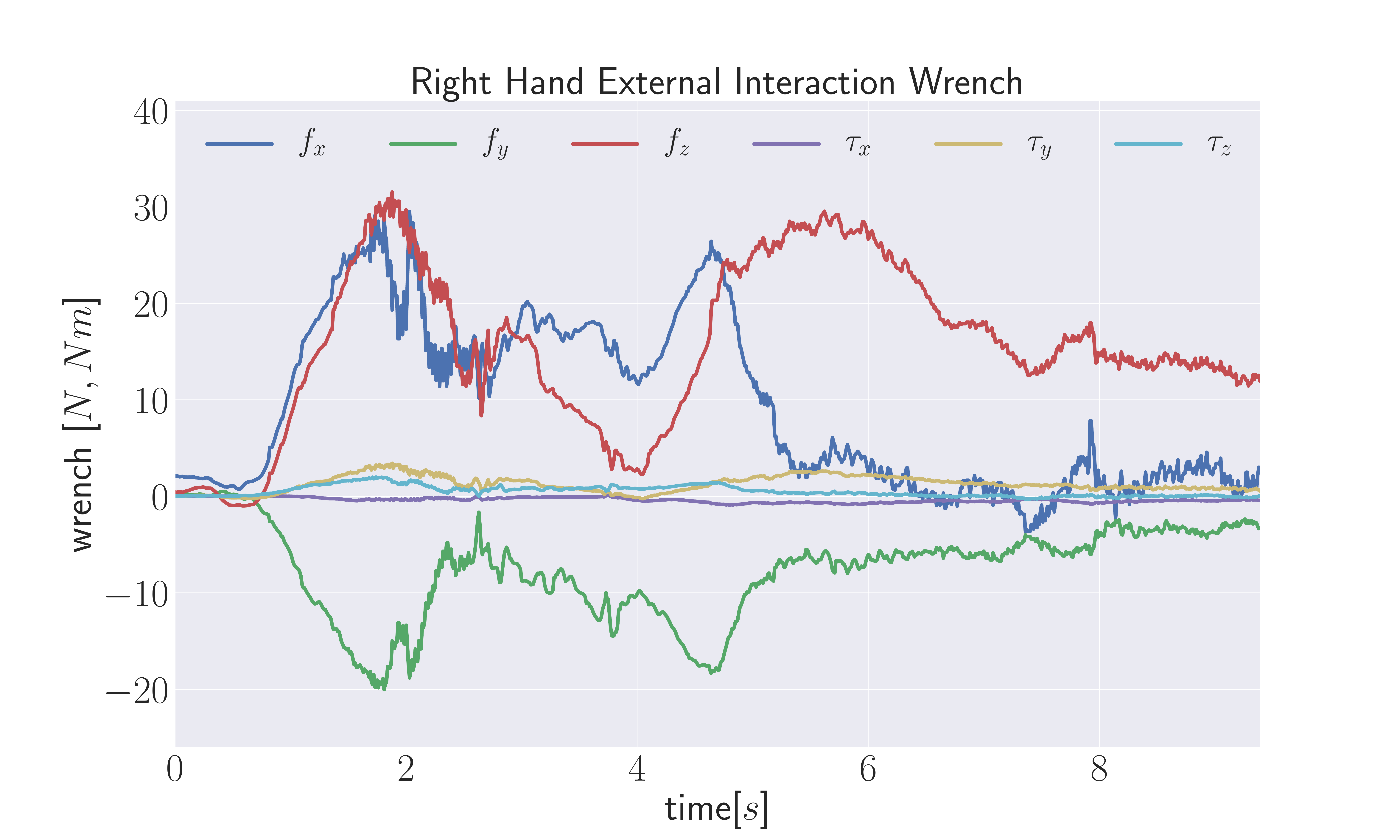}
		\caption{\hspace*{-4mm}}
		\label{fig:real-robot-standup-right-external-wrench}
	\end{subfigure}
	~
	\begin{subfigure}{0.85\textwidth}
		\centering
		\includegraphics[width=0.65\textwidth]{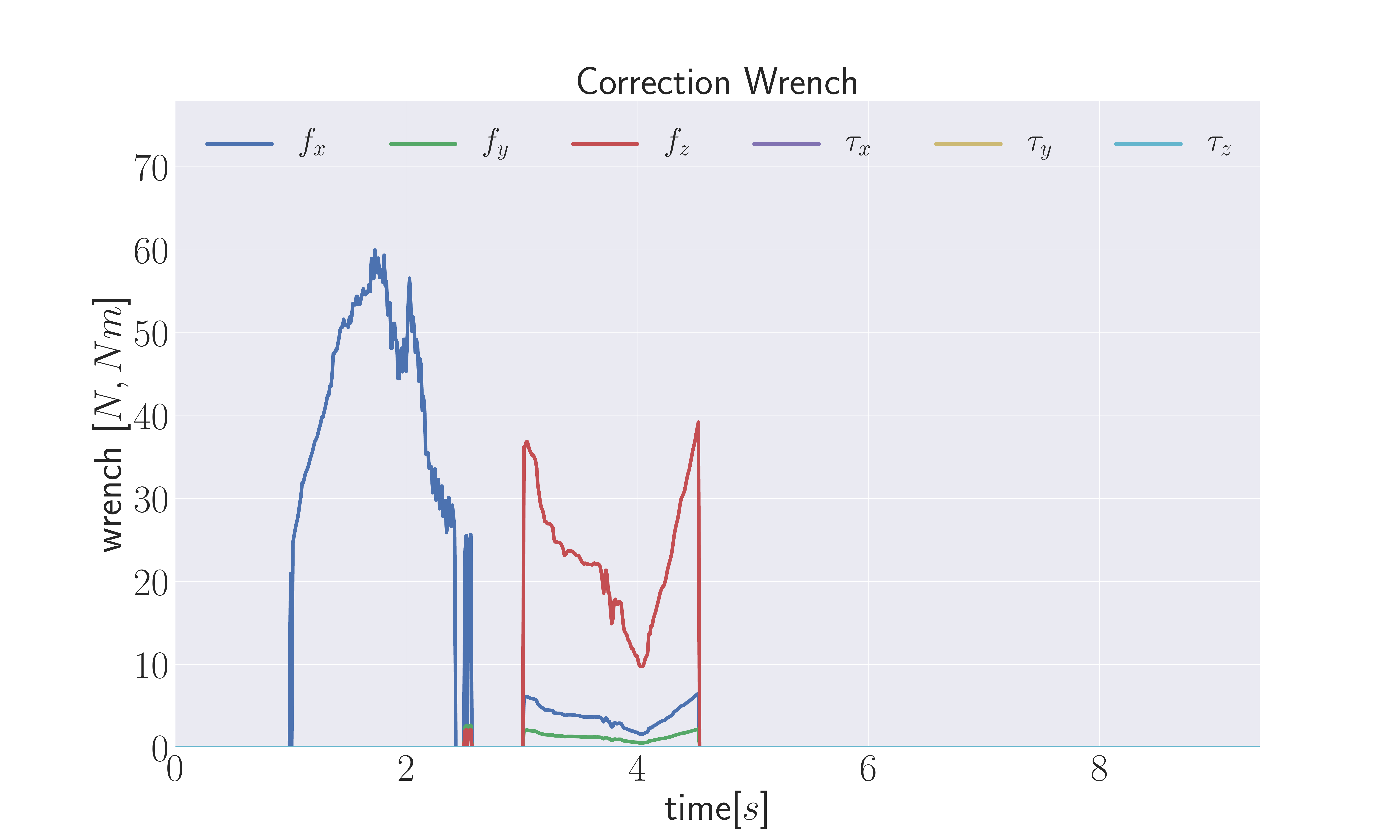}
		\caption{\hspace*{-4mm}}
		\label{fig:real-robot-standup-correction-wrench}
	\end{subfigure}
	\begin{subfigure}{0.49\textwidth}
		\centering
		\includegraphics[width=\textwidth]{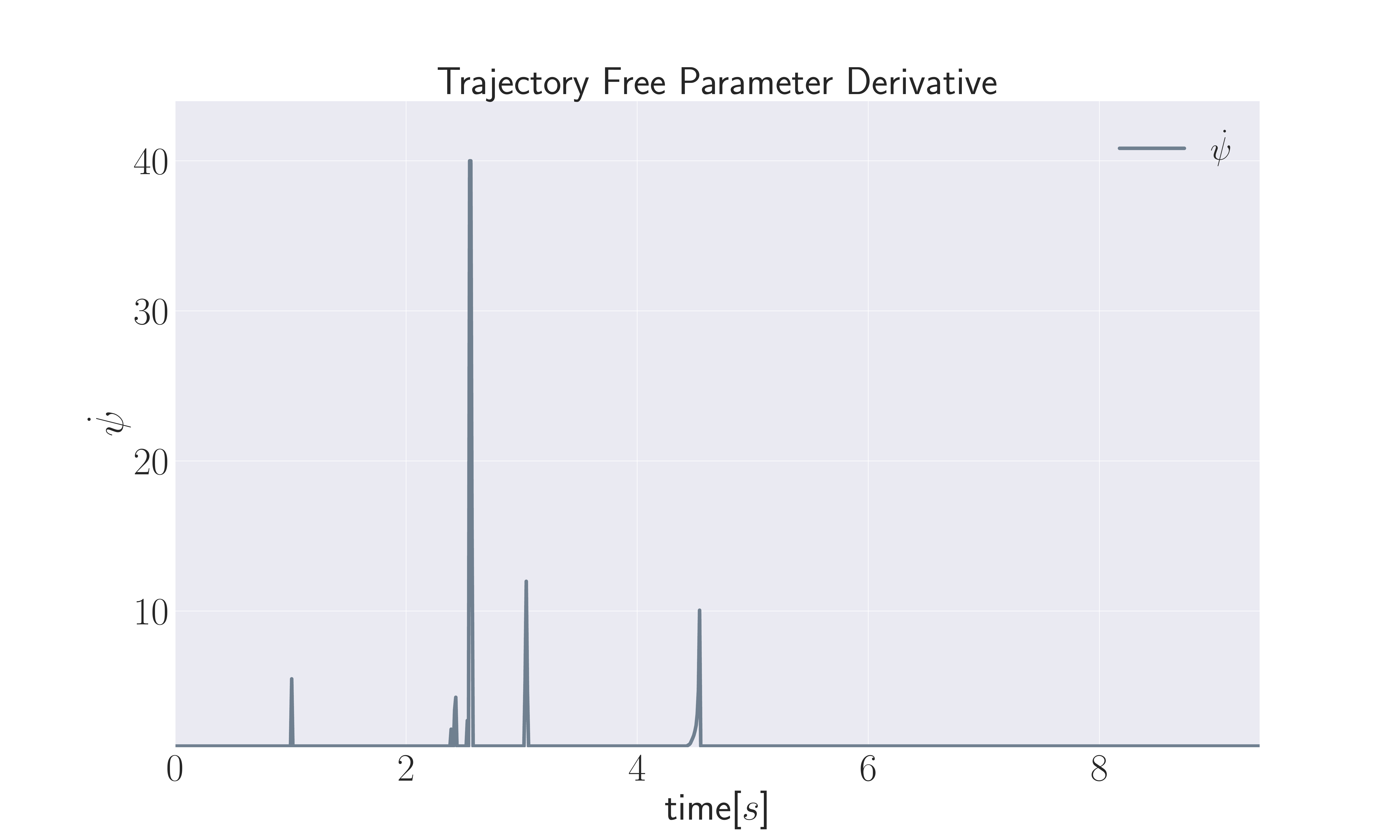}
		\caption{\hspace*{-4mm}}
		\label{fig:real-robot-standup-sdotvalue}
	\end{subfigure}%
	\begin{subfigure}{0.49\textwidth}
		\centering
		\includegraphics[width=\textwidth]{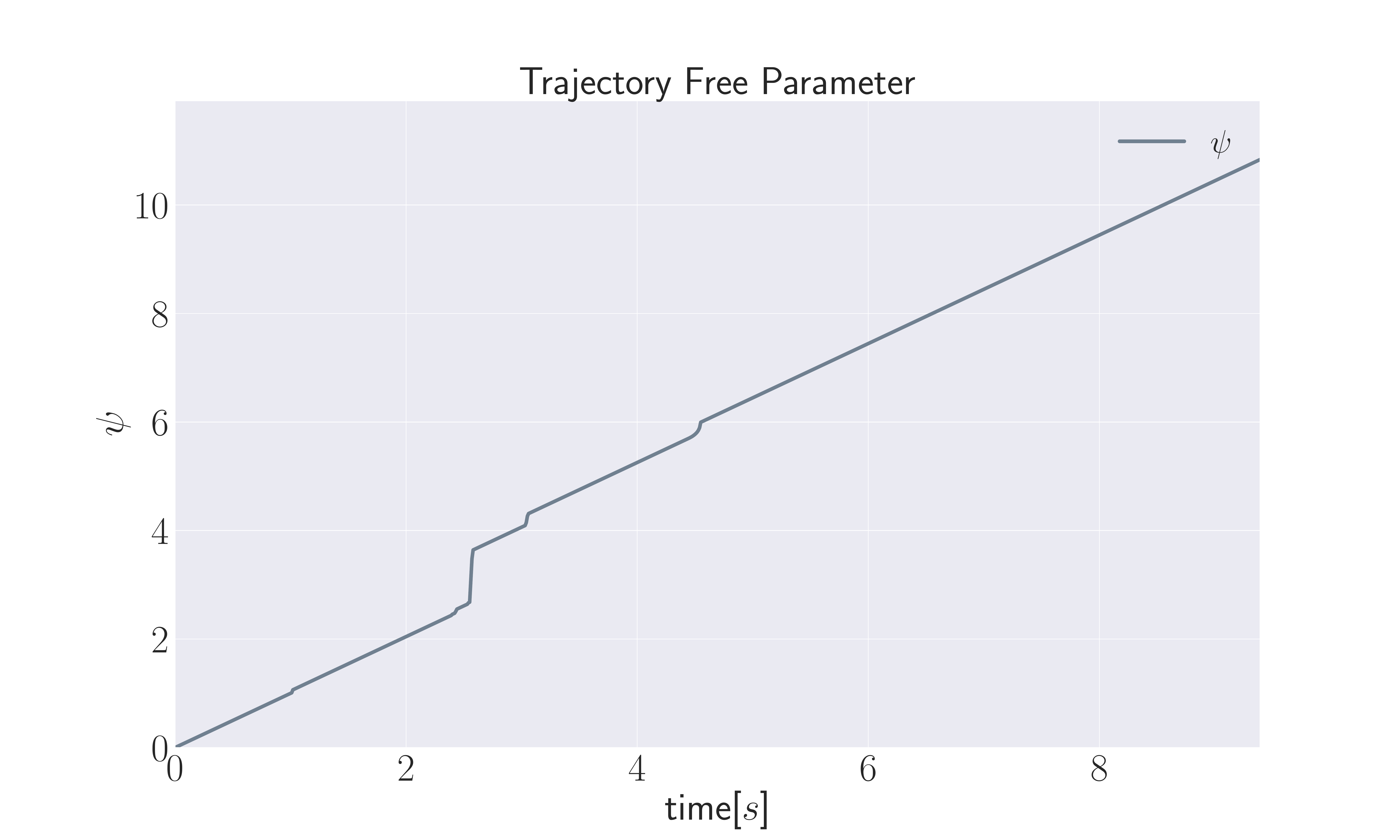}
		\caption{\hspace*{-4mm}}
		\label{fig:real-robot-standup-svalue}
	\end{subfigure}
	\begin{subfigure}{0.49\textwidth}
		\centering
		\includegraphics[width=\textwidth]{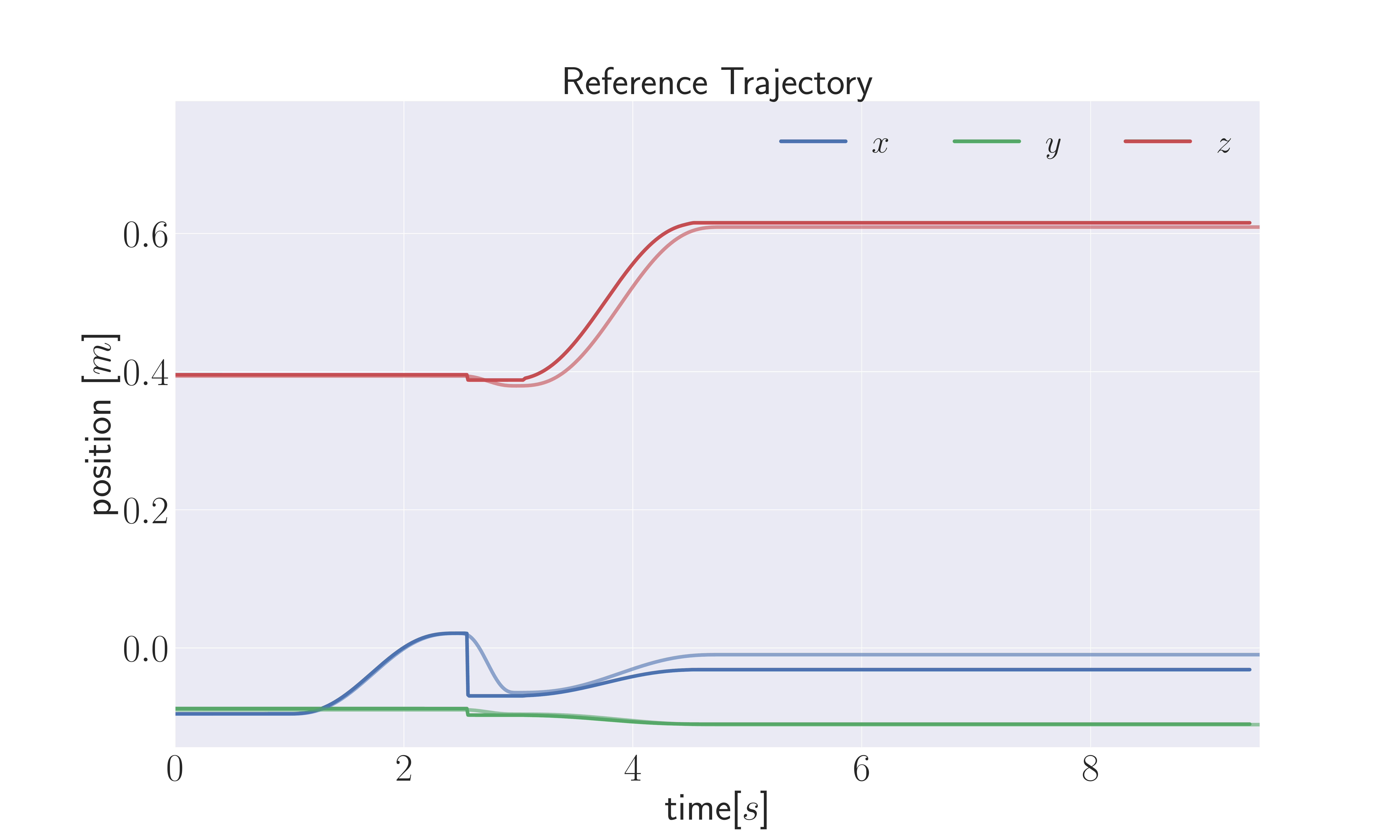}
		\caption{\hspace*{-4mm}}
		\label{fig:real-robot-standup-reference-trajectory}
	\end{subfigure}
    \begin{subfigure}{0.49\textwidth}
    	\centering
    	\includegraphics[width=\textwidth]{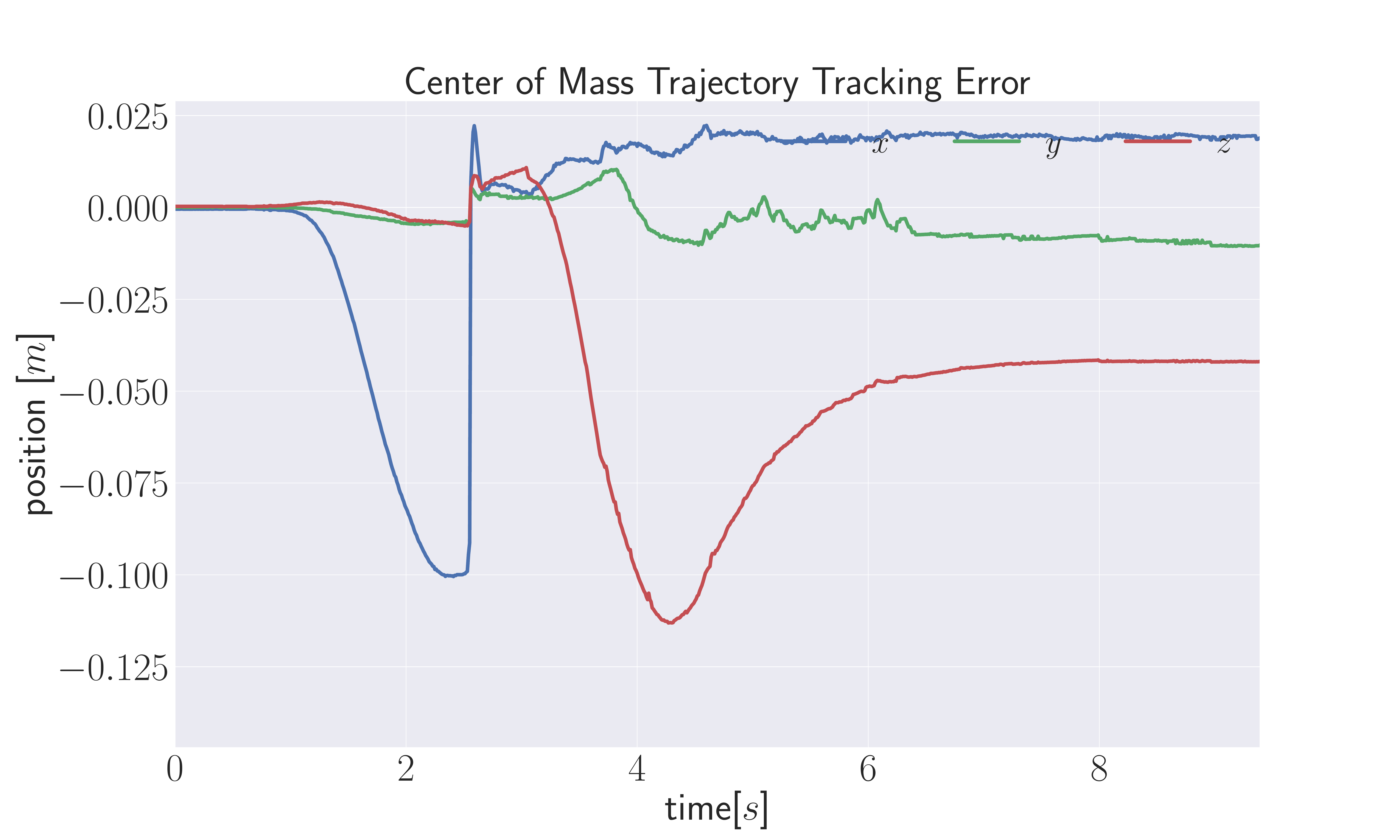}
    	\caption{\hspace*{-4mm}}
    	\label{fig:real-robot-standup-com-error}
    \end{subfigure}
	\caption{Real robot trajectory advancement during sit-to-stand transition with whole-body standup controller}
	\label{fig:real-robot-standup-trajectory-advancement}
\end{figure}

\chapter{Whole-Body Retargeting \& Teleoperation}
\label{cha:whole-body-retargeting}

\chapreface{The concept of telexistence of a human through a robotic avatar is a very challenging endeavor. It involves complex system integration of many technologies from research fields of perception, manipulation and control in robotics. This chapter presents the details of whole-body human motion retargeting to a humanoid avatar platform, and teleoperation experiments with state-of-the-art whole-body robot controllers for balancing and locomotion.}

Some of the technologies involved on the human operator side and the robotic avatar side of a sophisticated telexistence setup are highlighted in Fig.~\ref{fig:telexistence_technologies}. Potential applications of telexistence through a robotic avatar range from disaster response scenarios to providing health care and assistance remotely.

\begin{figure}[H]
	\centering
	\begin{subfigure}[b]{0.5\textwidth}
		\centering
		\includegraphics[width=\textwidth]{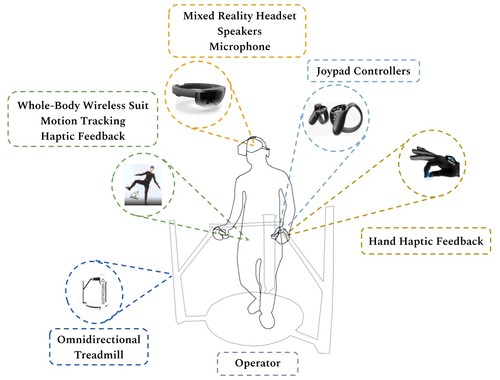}
		\label{fig:operator_technologies}
	\end{subfigure}%
	\begin{subfigure}[b]{0.5\textwidth}
		\centering
		\includegraphics[width=\textwidth]{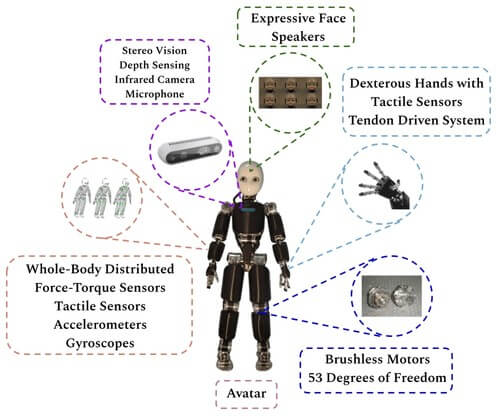}
		\label{fig:avatar_technologies}
	\end{subfigure}
	\caption{Technologies involved on the human operator side and the robotic avatar side of a sophisticated telexistence setup}
	\label{fig:telexistence_technologies}
\end{figure}

This chapter primarily focuses on the aspect of humanoid robot teleoperation using a novel framework of whole-body human motion retargeting to a humanoid robot avatar. Teleoperation is one of the core competencies for the successful realization of teleexistence of a human through a robotic avatar. Teleoperation stands for operating a robot from a distance that enables the extension of human capabilities to scenarios that are not reachable by humans due to the time and space constraints or the dangers posed by hazardous environments \cite{Hokayem2006}. The current robotic systems are still limited in terms of their perception and decision making capabilities to operate fully autonomous in real-world conditions. So, teleoperation facilitates an establishment of a human-robot team where the goal of the teleoperated robot is the same as the human operator and the chances of successful task completion are enhanced by bringing together the excellent cognitive capabilities of humans and the physical capabilities of robotic systems \cite{Zucker2015}. Teleoperation plays an important role in a wide range of real-world applications including  manipulation in hazardous environments \cite{Shimoga1993,Trevelyan2016}, telepresence \cite{Tachi1989}, telesurgery \cite{Burgner2015}, and space exploration \cite{Pedersen2003}.
 
Concerning the robotic avatars, humanoid robots are an ideal platform as they are designed and built based on the principles of Anthropomorphism. Unlike other robotic systems, they often have higher maneuverability and manipulation capabilities that are similar to a human being \cite{Ishiguro2018} which provides intuitive capabilities during telexistence. On the other hand, owing to their inherent complexity, humanoid robots are more challenging technologically for teleoperation in unstructured dynamic environments designed for humans. The level of autonomy, team organization and, the information exchange between the operator and the robot are some of the vital aspects in teleoperation performance to ensure successful task completion \cite{Beer2014, Steinfeld2006, goodrich2008human}. The level of autonomy ranges from being a semi-autonomous robot at the symbolic or the action level (high-level teleoperation) \cite{goodrich2008human, Hokayem2006} to complete control of the robot at the kinematic and the dynamic level (low-level teleoperation), either in the robot's configuration space or task space. A core component of the low-level teleoperation system is the human motion retargeting to a robot.

Two of the most studied teleoperation paradigms in literature are: 1) master-slave; and 2) bilateral systems. Under master-slave teleoperation paradigm, the flow of information is unidirectional from the human to the robot, while under the bilateral teleoperation paradigm there is an exchange of information between the human and the robot. In particular, haptic feedback to the human from the robot \cite{Ishiguro2017, Wang2015}. Teleoperation systems that involve humans in the control loop at the kinematic and dynamic level should have the prime objectives of situational awareness and transparency, i.e., the human operator experiencing the remote environment of the teleoperated robot as holistically as possible while maintaining the stability of the closed-loop system \cite{Lichiardopol2007, Hokayem2006}. Delays and information loss are some of the crucial problems with this approach that affect the transparency and stability of teleoperation greatly \cite{Hokayem2006, Lichiardopol2007}.  Different approaches such as Lyapunov stability analysis \cite{Islam2015, Chopra2003}, passivity based control \cite{Chopra2003} have been employed to address these limitations. However, these methods are studied extensively with manipulators and the stability measures for humanoid robot teleoperation are not well established \cite{Ramos2018TRO}. 

The research on teleoperation of humanoid robots can be broadly classified into three categories: upper body teleoperation, lower body teleoperation, and whole-body teleoperation. In upper body teleoperation, some of the research works consider mapping the human motion at the configuration space \cite{Liarokapis2013, Ayusawa2017, Stanton2012} while some others consider the task space \cite{Elobaid2018, Liarokapis2013}. Furthermore, the effect of change in Center of Mass (\textsc{com}) of the robot is also important to ensure the robot stability \cite{Elobaid2018}. So, concerning the lower body teleoperation of humanoid robots, the aspects of stability and locomotion have higher precedence over retargeting of all the lower limbs. A more detailed description of such approaches is discussed in \cite{Romualdi2018, Feng2015}. Coming to the topic of whole-body teleoperation of humanoid robots, the key challenge is to control the robot such that it does not fall while keeping its maneuverability and manipulability high, ensuring task completion by the human-robot team. Typically, the balance of the robot is achieved by either keeping the center of mass inside the support polygon or maintaining the net momentum about the Center of Pressure (\textsc{cop}) to zero \cite{Penco2018, Ishiguro2018}. Tasks involving multi-link dynamic contacts such as locomotion which involves careful monitoring and regulation of force exchange with the environment pose higher levels of complexity for teleoperation.

Whole-body human motion retargeting to a humanoid robot avatar is a critical component in the design of teleoperation technologies. One of the obvious shortcomings of the teleoperation systems proposed in the literature is the lack of ability to quickly and easily adapt the system for different human users and humanoid robots with different geometries, kinematics, and dynamics. The system designer often spends time and effort in careful consideration of changes in the human models and the robot models to ensure successful teleoperation which limits \textit{usability} and \textit{scalability}. 

This chapter aims at addressing this problem through a novel framework for whole-body human motion retargeting that requires minimal changes to use with different human operators or different robot avatars. Furthermore, experiments on whole-body teleoperation of humanoid robots with two state-of-the-art whole-body controllers for humanoid robots are presented.

\subsubsection{Notation}
\label{sec:background}

This chapter contains new notations presented in the following table in addition to the details presented in the nomenclature~\eqref{nomenclature_table} and background notation~\eqref{sec:background-notation}.

\[
  \left[
      \begin{tabular}{@{\quad}m{.05\textwidth}@{\quad}m{.83\textwidth}}
        {\Huge \faInfoCircle} & \raggedright \textbf{} \par
          \begin{tabular}{@{}p{0.18\textwidth}p{0.60\textwidth}@{}}
            $(.)^R$                 & Robot quantity\\
            $(.)^{H}$               & Human quantity\\
          \end{tabular}
      \end{tabular}
    \right]
\]

\section{Human Motion Retargeting}
\label{sec:kinematic-retargeting}

Human motion measurements are acquired through the Xsens whole-body motion tracking suit with distributed inertial measurement units as described in Section~\ref{sec:technologies-human-motion-perception}. The two approaches of retargeting human motion can be categorized as configuration space retargeting and task space retargeting.

\subsection{Configuration space retargeting}

Given a human model and the measurements of various limbs in terms of position and orientation, an inverse kinematics algorithm is employed to retrieve the human joint positions and velocities. A configuration space mapping between the human operator model and the robotic avatar is used as a reference to compute the joint position references for controlling the robot motion.

The architecture shown in Fig.~\ref{fig:configuration-retargeting} represents a typical configuration space retargeting scheme \cite{Ayusawa2017, Penco2018}. 

\begin{figure}[H]
	\centering
	\includegraphics[width=0.9\columnwidth]{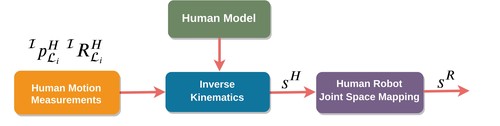}
	\caption{Typical configuration space retargeting scheme}
	\label{fig:configuration-retargeting}
\end{figure}

A key component in this approach is the customized mapping between the human operator and the robotic avatar in the configuration space. Oftentimes, the complexity and range of motion of certain human joints like the shoulder joint are very complex. So, the configuration space mapping must ensure these complexities and furthermore the limits of the robot joints to enable safe configuration space control references to the robot.

\subsection{Task space retargeting}

Task space retargeting approach depends on the task space mapping between the human operator limbs and the robotic avatar limbs. Leveraging the task space mapping, the human limb motion measurements are converted to appropriate robot limb motion references. Given a robot model and the computed robot limb motion references, an inverse kinematics algorithm is employed to compute the robot references in the joint space considering the robot joint limitations. The architecture shown in Fig.~\ref{fig:task-retargeting} represents a typical task space retargeting scheme  \cite{Elobaid2018, Ishiguro2018}.

\begin{figure}[H]
	\centering
	\includegraphics[width=0.9\columnwidth]{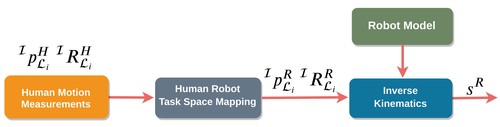}
	\caption{Typical task space retargeting scheme}
	\label{fig:task-retargeting}
\end{figure}

Clearly, a key component in this approach is finding the task space mapping between the human operator limbs and the robotic avatar limbs. Oftentimes, the scale of the human limbs are very different than that of the robot. Furthermore, considering the task space motion as a reference for the robot may lead to a robot joint configuration that may be dissimilar from the human joint configuration. This may often lead to psychological discomfort for the human operator or the people who are interacting with the robot as one cannot predict the robot's motions owing to non-anthropomorphic motions that depend on the inverse kinematics algorithm employed \cite{Liarokapis2013}. Moreover, the precise control of the robot's joint configuration becomes essential when the robot is deployed in cluttered environments where avoiding obstacles is crucial of mission success.

\subsection{Our Approach}
\label{sec:our-approach}

In our approach to whole-body kinematic motion retargeting, we first identify the task space mapping between the human links and the robot links in a neutral pose as shown in Figure.~\ref{fig:retargeting_link_mapping}. The frame equivalence from the human to the robot links is indicated by the numbering. The task space mapping is identified in terms of the rotation between the human links and the robot links i.e. $^{H_{\mathcal{L}_i}}{\comVar{R}}_{R_{\mathcal{L}_i}}$. The robot model is updated with new frames (attached to the robot links) that are identical to the human link frames. Given the rotation measurement from the human link frames to the inertial frame, $^{{\mathcal{I}}}{\comVar{R}}_{H_{\mathcal{L}_i}}$, and the constant rotation from the robot link frames to human link frames,  $^{H_{\mathcal{L}_i}}{\comVar{R}}_{R_{\mathcal{L}_i}}$, the rotation measurement of the robot frame with respect to the inertial frame is computed as,

\begin{equation}
	^{{\mathcal{I}}}{\comVar{R}}_{R_{\mathcal{L}_i}}^{*}  = \ ^{{\mathcal{I}}}{\comVar{R}}_{H_{\mathcal{L}_i}} \ ^{H_{\mathcal{L}_i}}{\comVar{R}}_{R_{\mathcal{L}_i}}
	\label{eq:human_robot_transformation}
\end{equation}

\begin{figure}[H]
	\centering
	\includegraphics[width=0.7\columnwidth]{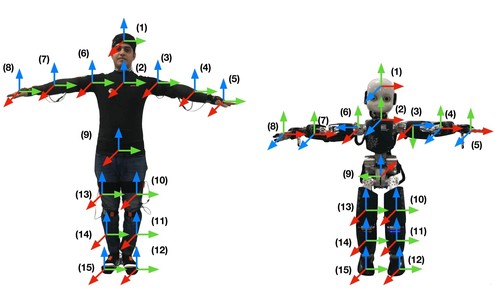}
	\caption{Task space mapping between the human links and the robot links in neutral pose}
	\label{fig:retargeting_link_mapping}
\end{figure}

Any measurements coming from the human are easily converted to the measurements of the robot links through the rotation mapping using Eq.~\ref{eq:human_robot_transformation}. Once the desired robot link orientation is computed, thanks to our modular software architecture, we leverage the dynamical optimization inverse kinematics \cite{Rapetti2019} outlined in Section~\ref{sec:dynamical-optimization} on the updated robot model to compute the robot joint positions $\comVar{s}^R$. Figure~\ref{fig:combined_retargeting} highlights our whole-body kinematic human motion retargeting approach.

\begin{figure}[H]
	\centering
	\includegraphics[width=0.9\columnwidth]{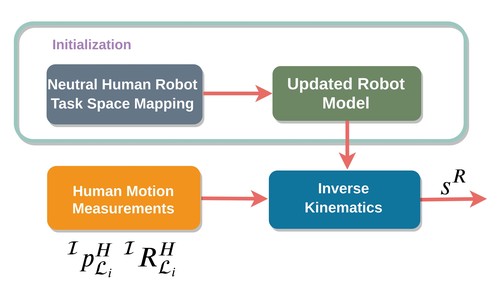}
	\caption{Block diagram of our kinematic whole-body motion retargeting}
	\label{fig:combined_retargeting}
\end{figure}

\section{Whole-Body Retargeting Experiments}
\label{sec:retargeting-experiments-results}

The software architecture for performing whole-body retargeting experiments is highlighted in Fig.~\ref{fig:hde-whole-body-retargeting}. Thanks to our modular infrastructure for holistic human perception presented in Chapter~\ref{cha:software-architecture}, only minimal changes are required to obtain the robot joint states for performing the retargeting. As highlighted in Fig.~\ref{fig:hde-whole-body-retargeting}, the key modification is to use the updated robot model from section~\ref{sec:our-approach} instead of the human model.

\begin{figure}[ht!]
	\centering
	\includegraphics[width=0.7\columnwidth]{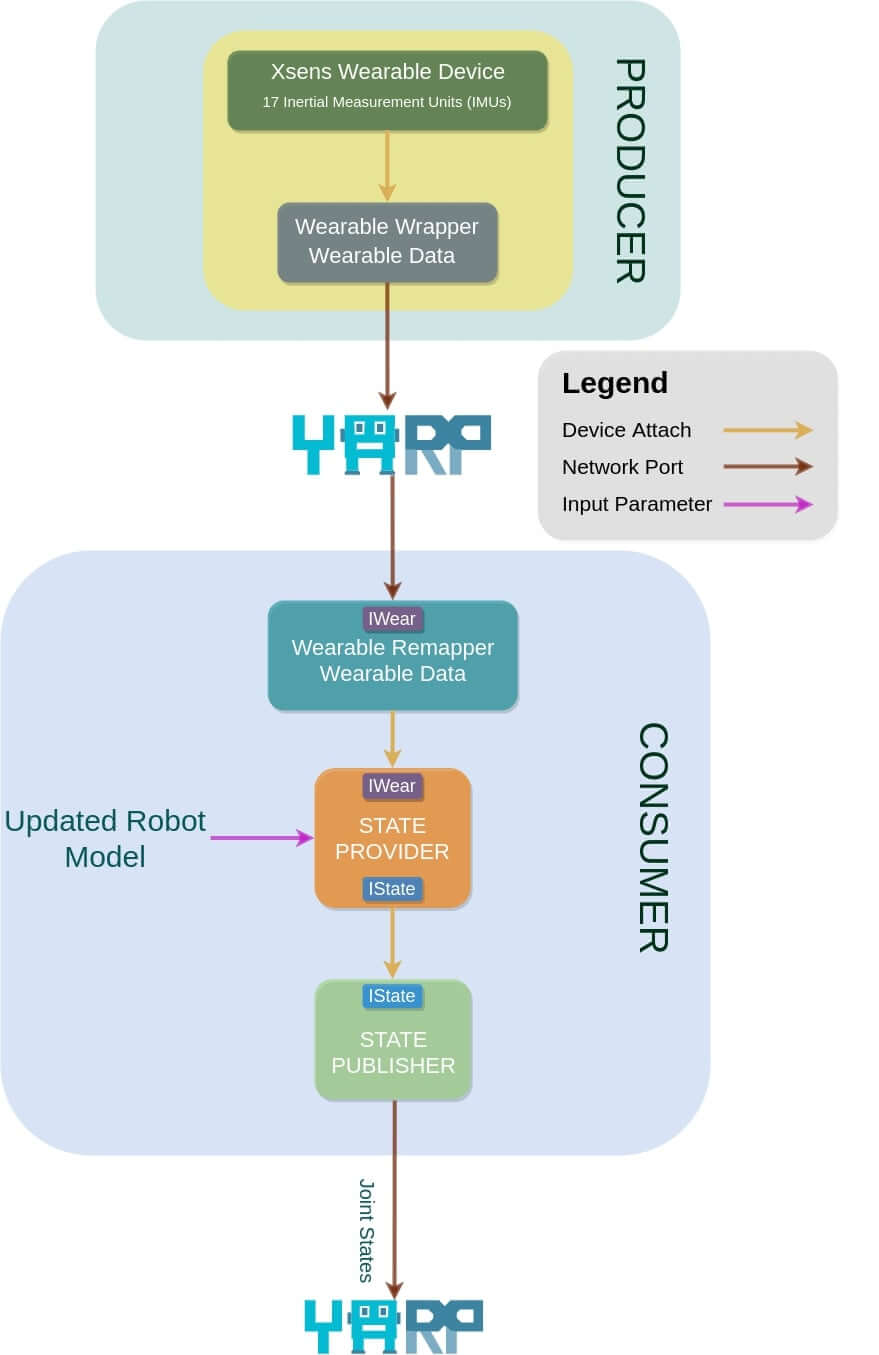}
	\caption{Software architecture from Chapter~\ref{cha:software-architecture} for whole-body retargeting using the updated robot model}
	\label{fig:hde-whole-body-retargeting}
\end{figure}

The whole-body retargeting experiments are performed with motion data captured for two human subjects. To demonstrate the scalability and usability of our proposed method, we perform whole-body kinematic retargeting with multiple robots having different degrees of freedom (DoFs). The robot models we considered are a) iCub humanoid robot with 32 DoFs, b) NAO humanoid robot with 24 DoFs, c) Atlas humanoid robot with 30 DoFs. To show that our method is not limited to humanoid robots, we perform a retargeting scenario with Baxter dual arm 15 DoFs robot. Additionally, we show the retargeting with a human model that has 66 DoFs which is described in Chapter~\ref{cha:human-modeling}.

The Fig.~\ref{fig:scalability} highlights our whole-body kinematic retargeting with different models and human subjects in Rviz, a 3D visualizer. The first row corresponds to the human motion of standing on the right foot from the first subject and the second row corresponds to the human motion of standing on the left foot by the second subject. Concerning the Baxter robot, the retargeting is done only for the arms and the head, as it is a fixed base robot.

\begin{figure}[ht!]
	\centering
	\begin{subfigure}[b]{0.2\textwidth}
	    \centering
		\includegraphics[clip, trim=0cm 2cm 0cm 2cm, scale=0.2215]{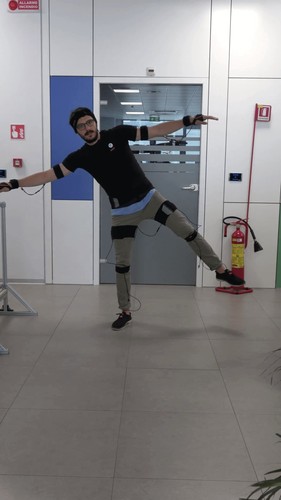}
		\label{fig:rviz-subject-one-foot}
	\end{subfigure}
	~
	\begin{subfigure}[b]{0.775\textwidth}
	    \centering
		\includegraphics[scale=0.6525]{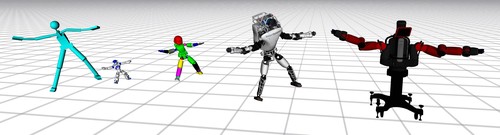}
		\label{fig:rviz-human-one-foot}
	\end{subfigure}
	\begin{subfigure}[b]{0.2\textwidth}
	    \centering
		\includegraphics[clip, trim=0cm 2cm 0cm 2cm, scale=0.2215]{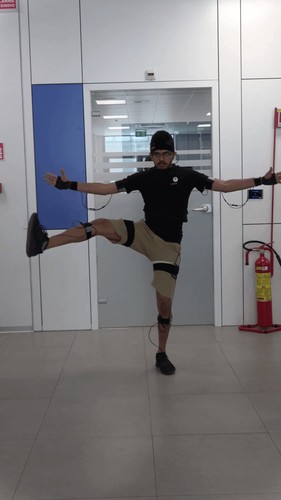}
		\label{fig:rviz-subject-jump}
	\end{subfigure}
	~
	\begin{subfigure}[b]{0.775\textwidth}
	    \centering
		\includegraphics[scale=0.6685]{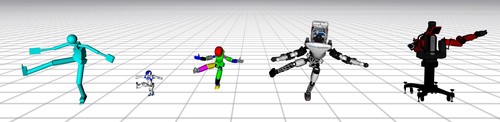}
		\label{fig:rviz-human-jump}
	\end{subfigure}
	\caption{Rviz visualization of whole-body retargeting of different human subjects motion to different models: a) Human Model b) Nao c) iCub d) Atlas e) Baxter; top: human subject stands on the right foot, bottom:  human subject stands on the left foot}
	\label{fig:scalability}
\end{figure}

\begin{figure}[!ht]
	\centering
	\begin{subfigure}{\textwidth}
	    \centering
		\includegraphics[width=0.8\textwidth]{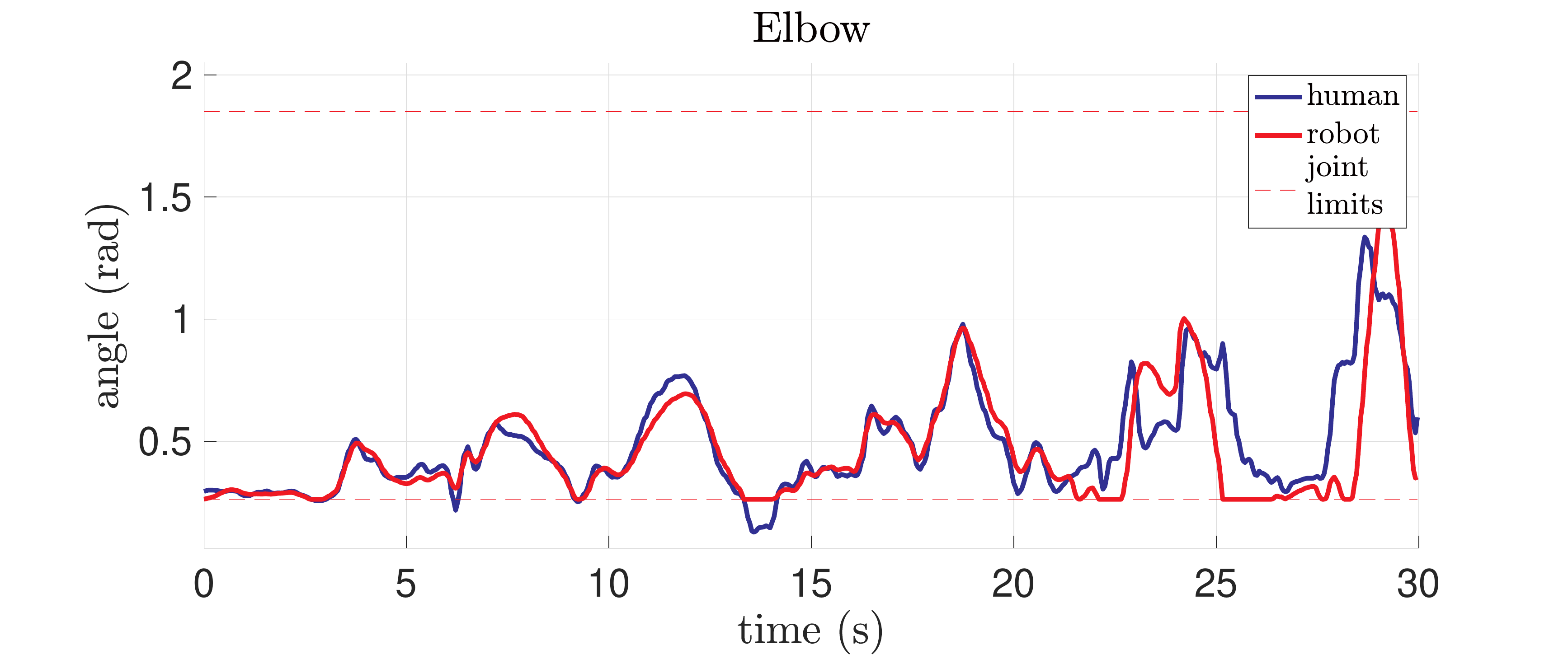}
		\caption{}
		\label{fig:error-joint-value} 
	\end{subfigure}
	\begin{subfigure}{\textwidth}
	    \centering
		\includegraphics[width=0.8\textwidth]{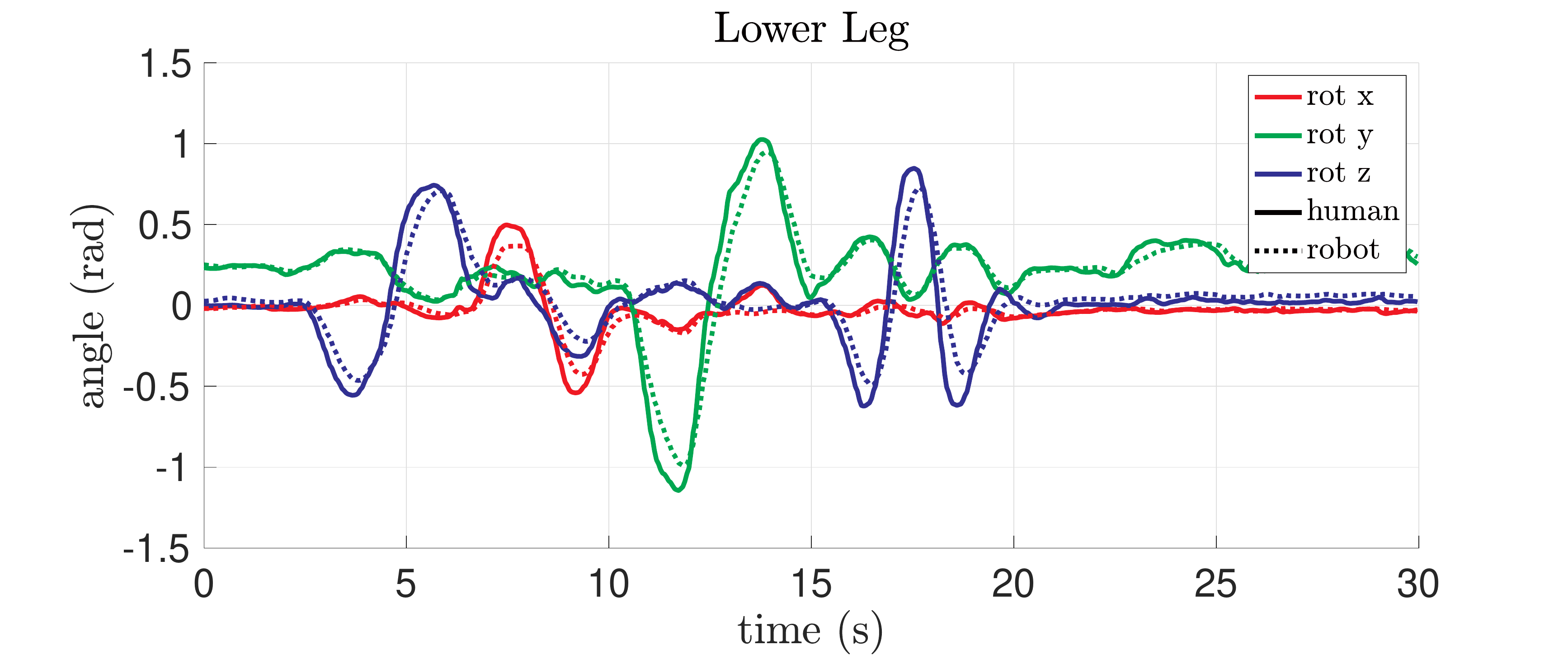}
		\caption{}
		\label{fig:error-link-value}
	\end{subfigure}
	\caption{Performance of whole-body retargeting of human motions to iCub}
	\label{fig:link_orientation_error_angles_robot_limits}
\end{figure}

The joint angles values of the right arm elbow joint for the human and the iCub robot model are highlighted in Fig.~\ref{fig:error-joint-value}. The overall retargeting of the joint position is good except for some configurations of the human where the robot is constrained by the joint limits e.g., time instant $t\sim \SI{13}{\second}$. The lower leg link orientation (in terms of \textit{Euler angles} ) of the human and the iCub robot model is highlighted in Fig.~\ref{fig:error-link-value}.

\section{Whole-Body Teleoperation Architecture}
\label{sec:RetargetingStructure}

At the current state, our teleoperation architecture highlighted in Fig.~\ref{fig:architecture} is composed of the following technologies:

\begin{itemize}
	\item Oculus Virtual Reality Headset: Visual feedback from the robot environment by streaming the robot camera images to the human operator
	\item Joypads: The robot hands are controlled via the joypads for opening and closing the fingers during manipulation scenarios
	\item Cyberith Virtualizer Virtual Reality Treadmill: The human locomotion information i.e., the linear velocity in $x$ and $y$ directions and the angular velocities about the $z$ direction
	\item Whole-Body Motion Tracking Suit: Human motion perception through inertial tracking technology described in Section~\ref{sec:technologies-human-motion-perception}
\end{itemize}

\begin{figure}[H]
	\centering
	\includegraphics[width=\textwidth]{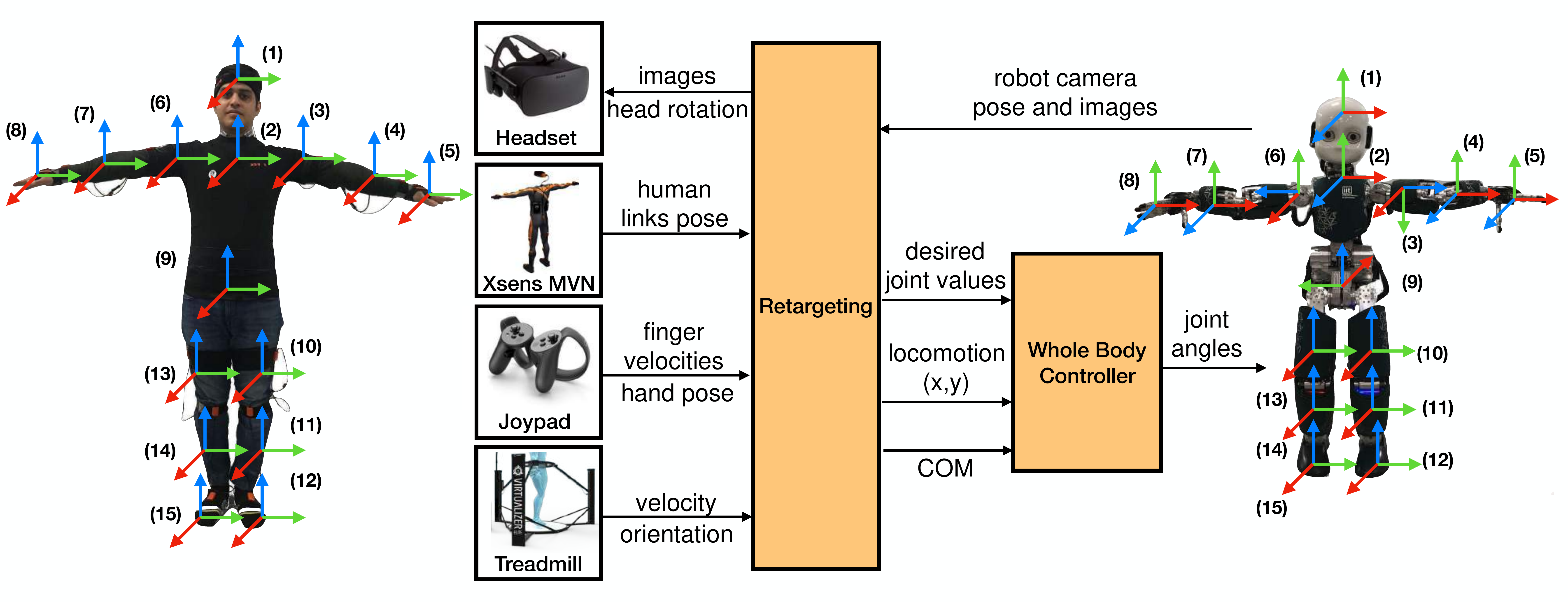}
	\caption{Whole-body teleoperation architecture with real-time human motion retargeting}
	\label{fig:architecture}
\end{figure}

\section{Whole-Body Teleoperation Experiments}
\label{sec:teleoperation-experiments-results}

Towards demonstrating the capabilities of our whole-body retargeting, we perform teleoperation experiments using two state-of-the-art \textit{whole-body controllers} for humanoid robots. The whole-body teleoperation experiments are carried with the 53 degrees of freedom iCub robot that is $\SI{104}{\centi \meter}$ tall \cite{natale2017icub}.
The controllers run at $\SI{100}{\hertz}$ while the retargeting application runs at $\SI{200}{\hertz}$.
The average walking speed of the robot is $\SI{0.23}{\meter \per \second}$.
Both the applications are run on a machine of $4$th generation Intel Core i7@$\SI{1.7}{\giga \hertz}$ with 8GB of RAM.

\subsection{Whole-Body Teleoperation with Balancing Controller}
\label{sec:balancing-controller}

Momentum-based control \cite{nava2016stability,herzog2014balancing} proved to be effective for maintaining the robot's stability by controlling the robot's momentum as the primary objective. Additionally, a \textit{postural task} projected into the nullspace of the primary task can be used for performing additional tasks like manipulation while ensuring the stability of the robot. The control problem is formulated as an optimization problem to achieve the two tasks while carefully monitoring and regulating the contact wrenches, considering the associated feasible domains by resorting to quadratic programming (\textsc{qp}) solvers.

We considered one such momentum-based balancing controller \cite{nava2016stability} and extended the postural task by giving the joint references from whole-body retargeting. The technologies involved in the teleoperation with the whole-body balancing controller are highlighted in Fig.~\ref{fig:balancing_updated}. The snapshots from the experiments of the whole-body retargeting with the balancing controller are shown in Fig.~\ref{fig:balancing-controller-snapshots}.

\begin{figure}[!ht]
	\centering
	\includegraphics[width=0.9\textwidth]{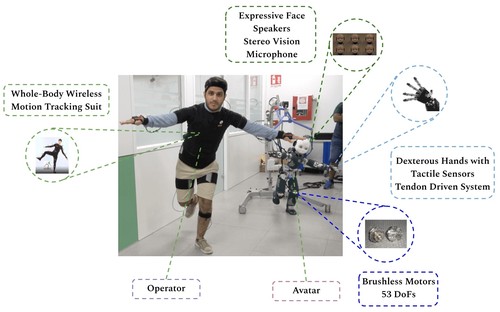}
	\caption{Technologies involved in teleoperation with whole-body balancing controller}
	\label{fig:balancing_updated}
\end{figure}

\begin{figure*}[!ht]
	\centering
	\begin{subfigure}[b]{0.45\textwidth}
	    \centering
		\includegraphics[width=\textwidth]{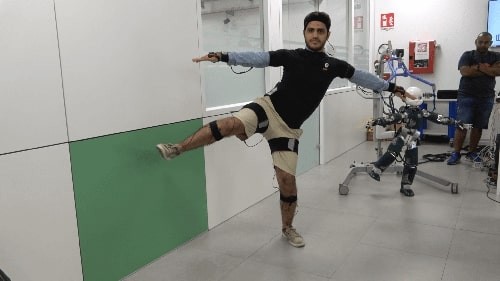}
		\caption{}
		\label{fig:balancing1} 
	\end{subfigure}
	~
	\begin{subfigure}[b]{0.45\textwidth}
	    \centering
		\includegraphics[width=\textwidth]{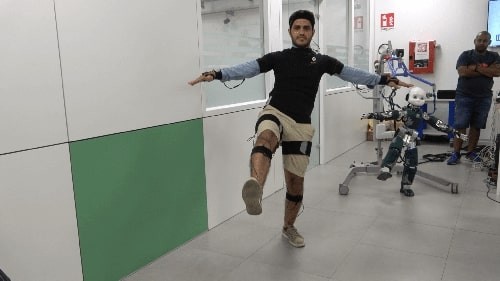}
		\caption{}
		\label{fig:balancing2}
	\end{subfigure}
	\begin{subfigure}[b]{0.45\textwidth}
	    \centering
		\includegraphics[width=\textwidth]{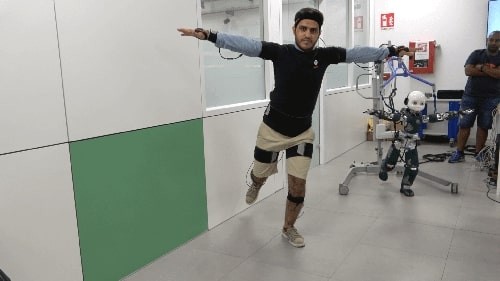}
		\caption{}
		\label{fig:balancing3}
	\end{subfigure}
	~
	\begin{subfigure}[b]{0.45\textwidth}
	    \centering
		\includegraphics[width=\textwidth]{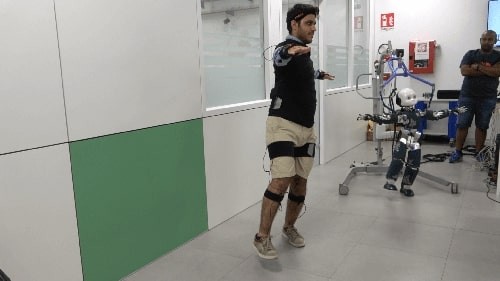}
		\caption{}
		\label{fig:balancing4}
	\end{subfigure}
	\caption{Snapshots of real-time motion retargeting with whole-body balancing controller at different time instances}
	\label{fig:balancing-controller-snapshots}
\end{figure*}

In this experiment, the robot is balancing on the left foot and maintaining the stability of its center of mass as shown in Fig.~\ref{fig:balancing_joint_tracking}. Additionally, it tracks all the joints with the references coming from whole-body retargeting.
The vertical dashed lines correspond to the experimental snapshots indicated in Fig.~\ref{fig:balancing-controller-snapshots}.
The references to the $x$ and $y$ components of the CoM are close to zero to maintain the stability of the robot by keeping the CoM inside the support polygon and the gains are tuned to achieve good tracking. The CoM motion along the $z$-axis does not affect the stability of the robot and the gain value of the $z$ components is kept lower in order to allow the vertical movements of the robot during retargeting. The input joint references from retargeting are smoothed through a minimum-jerk trajectory  \cite{Pattacini2010a}.

A smoothing time parameter is tuned in order to achieve a good balance between postural tracking and stability. Accordingly, the joints such as torso pitch, torso roll, and left knee for which the human does not move fast while balancing on left foot, achieve good tracking. On the other hand, the joints such as right shoulder pitch, right shoulder roll, and left ankle pitch are moved frequently while performing the retargeting and hence the tracking is not close owing to the delay from the smoothing time involved in producing minimum-jerk trajectory joint references for the robot joints. Ideally, the smoothing time can be kept lower considering that we receive continuous joint references from retargeting. At this point, we did not conduct exhaustive tests to find the lower threshold for the smoothing parameter that ensures fast and accurate retargeting of dynamic motions from the human while maintaining the robot's stability.

\begin{figure}[H]
	\centering
	\begin{subfigure}[b]{\textwidth}
		\centering
		\includegraphics[scale=0.275]{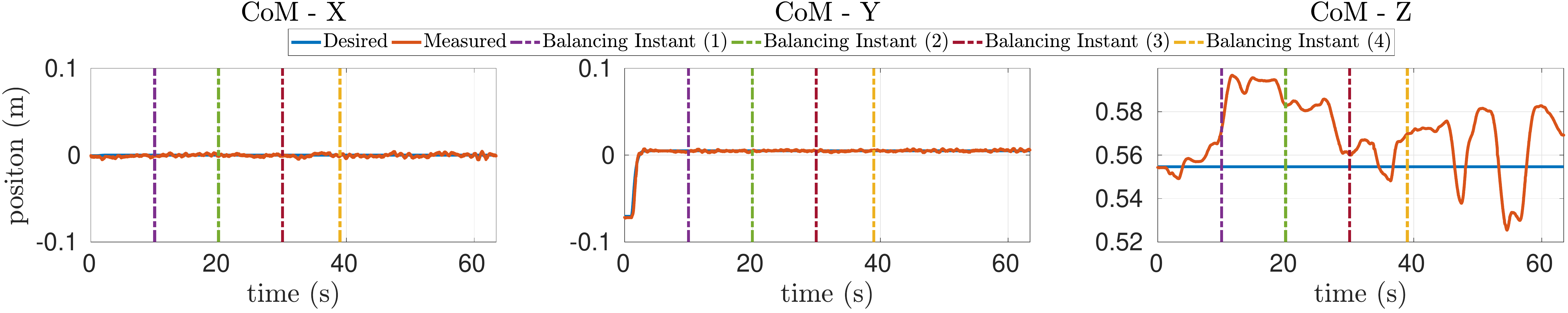}
		\label{fig:balancing_com_tracking}
	\end{subfigure}
	\vskip 0.1cm 
	\begin{subfigure}[b]{\textwidth}
		\centering
		\includegraphics[scale=0.275]{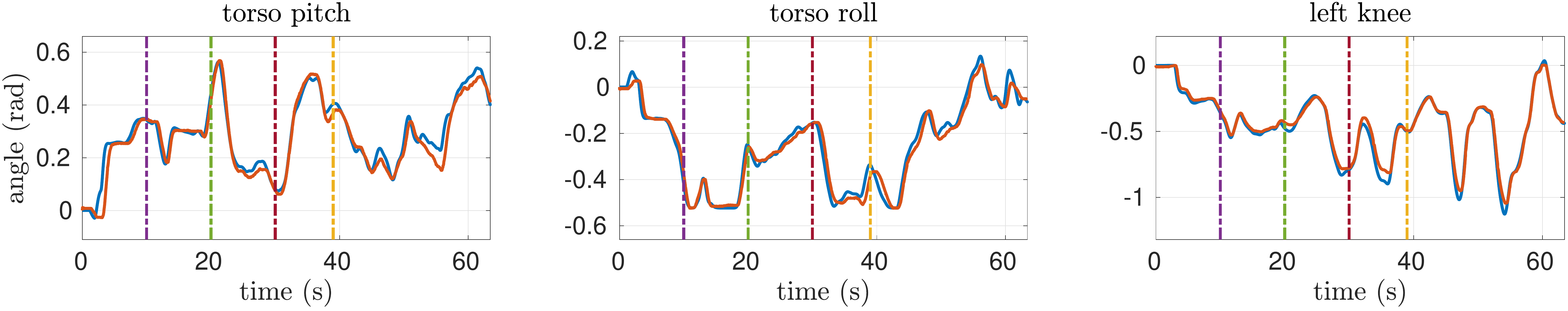}
		\label{fig:balancing_torso_tracking}
	\end{subfigure}
	\vskip 0.1cm 
	\begin{subfigure}[b]{\textwidth}
		\centering
		\includegraphics[scale=0.275]{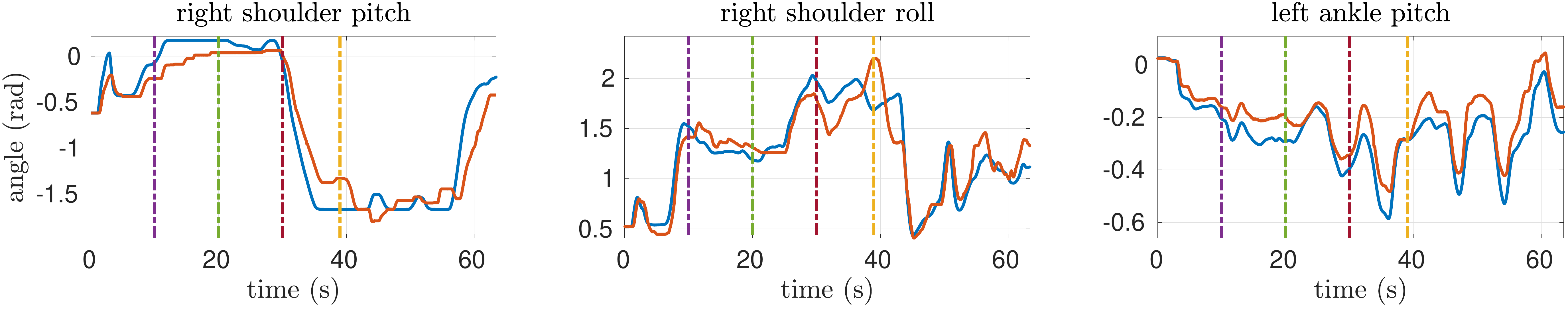}
		\label{fig:balancing_right_arm_tracking}
	\end{subfigure}
	\caption{Center of mass tracking and the time evolution of joint angles during real-time motion retargeting with whole-body balancing controller; blue line represents the desired quantity, orange line is the actual robot quantity. The vertical lines correspond to the time instances representing the snapshots from Fig.~\ref{fig:balancing-controller-snapshots} }
	\label{fig:balancing_joint_tracking}
\end{figure}

\newpage

\subsection{Whole-Body Teleoperation with Walking Controller}
\label{sec:walking-controller}

Humanoid robot walking is another challenging control paradigm. Divergent-Component-of-Motion (\textsc{dcm}) based control architectures proved promising for humanoid robot locomotion \cite{Romualdi2018, Englsberger2015}. The architecture typically consists of three layers: 1) Trajectory generation and optimization layer that generates the desired footsteps and the \textsc{dcm} trajectories \cite{Englsberger2015}; 2) Simplified model control layer that implements an \emph{instantaneous} control law with the objective of stabilizing the unstable \textsc{dcm} dynamics; and 3) Whole-body control layer that guarantees the tracking of the robot's set of tasks, including the Cartesian tasks and the postural tasks, using the stack-of-tasks paradigm implemented through a quadratic programming (\textsc{qp}) formalism.

We considered one such \textsc{dcm} based walking controller \cite{Romualdi2018} and extended the postural task by giving the joint references from whole-body retargeting. The technologies involved in the teleoperation with the whole-body walking controller are highlighted in Fig.~\ref{fig:locomotion_manipulation}.

\begin{figure}[H]
	\centering
	\includegraphics[width=\textwidth]{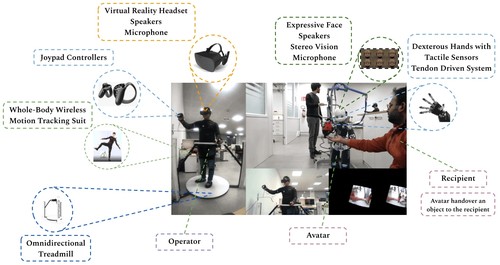}
	\caption{Technologies involved in teleoperation with the whole-body walking controller for locomotion and manipulation}
	\label{fig:locomotion_manipulation}
\end{figure}

The three different experimental stages of whole-body retargeting with the walking controller are shown in Fig.~\ref{fig:walking-controller-experimental-stages}. During the first and the third stages the robot is in double support standstill phase while during the second stage the robot is in walking phase.

\begin{figure}[H]
	\centering
	\begin{subfigure}[b]{0.55\textwidth}
	    \centering
		\includegraphics[width=\textwidth]{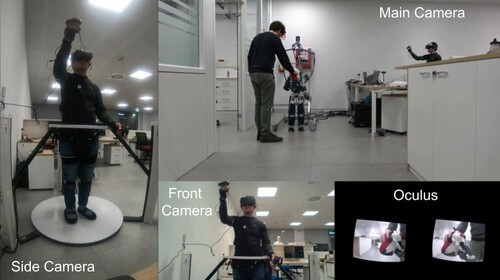}
		\caption{first stage}
		\label{fig:first-stage} 
	\end{subfigure}
	\begin{subfigure}[b]{0.55\textwidth}
	    \centering
		\includegraphics[width=\textwidth]{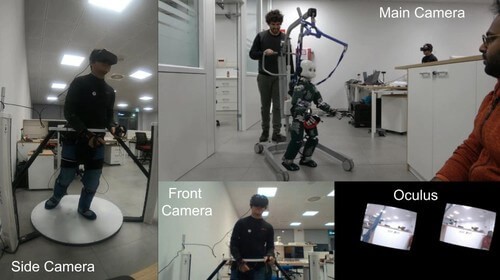}
		\caption{second stage}
		\label{fig:second-stage}
	\end{subfigure}
	\begin{subfigure}[b]{0.55\textwidth}
	    \centering
		\includegraphics[width=\textwidth]{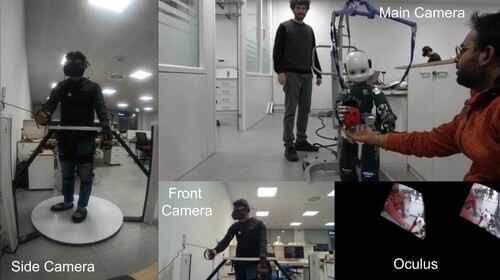}
		\caption{third stage}
		\label{fig:third-stage}
	\end{subfigure}
	\caption{Experimental stages during real-time retargeting with whole-body walking controller}
	\label{fig:walking-controller-experimental-stages}
\end{figure}

The walking controller's primary objective is to track the center of mass $x$ and $y$ components along the desired trajectory. The overall center of mass tracking of the $x$ and $y$ components is very good for the entire duration of the experiment as shown in Fig.~\ref{fig:walking_joint_tracking}.

Currently, we engage only the upper body retargeting, and the lower body is controlled by the walking controller. During our experiments, we observed that the weights for achieving satisfactory upper body retargeting of the postural task during the double support standstill phase and the walking phase are different. Having the same retargeting gains while walking leads to uncoordinated movements eventually compromising the robot's stability while walking. So, we choose higher retargeting gains during the double support standstill phase and the gain values are set to zero during the walking phase. The transition between the two sets of weights is achieved smoothly through minimum jerk trajectories \cite{Pattacini2010a}. Fig.~\ref{fig:walking_joint_tracking} highlights tracking for some of the upper-body joints. The blue line represents the desired joint position provided by human motion retargeting and the orange line is the actual robot joint position. The purple vertical dashed line indicates the starting instance of the \emph{second stage}, i.e., walking, and the green vertical dashed line indicates the stopping instance of the walking phase. During the \emph{first stage}, human motion retargeting is good and the joint position error is low. Instead, during the \emph{second stage}, as the robot starts walking the joint position error is higher as the retargeting gains are set to zero.

\begin{figure}[!t]
	\centering
	\begin{subfigure}[b]{\textwidth}
		\centering
		\includegraphics[scale=0.275]{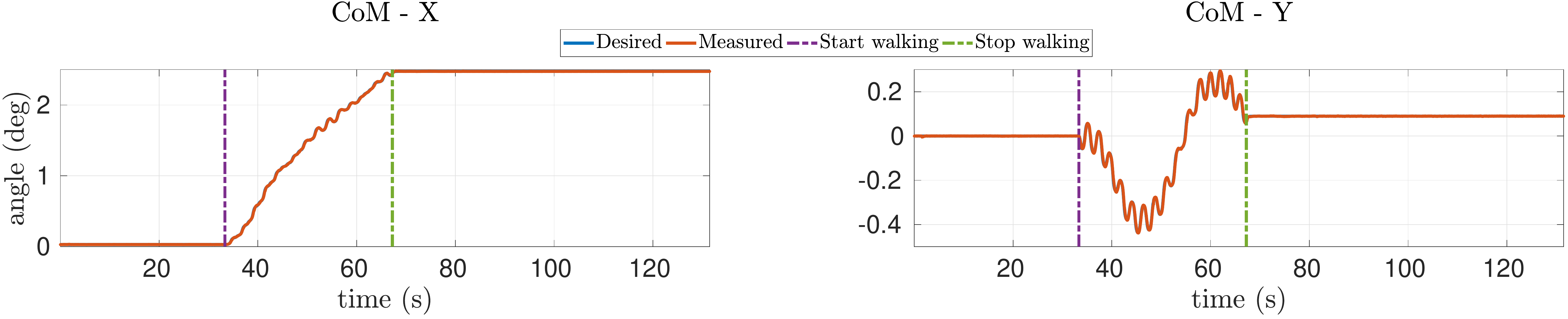}
		\label{fig:walking_com_tracking}
	\end{subfigure}
	\vskip 0.1cm
	\begin{subfigure}[b]{\textwidth}
		\centering
		\includegraphics[scale=0.275]{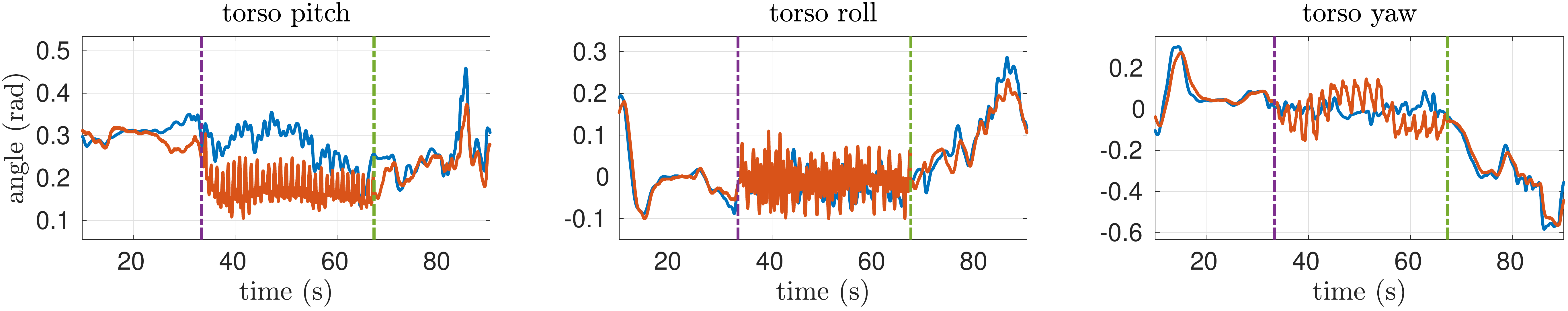}
		\label{fig:walking_torso_tracking}
	\end{subfigure}
	\vskip 0.1cm
	\begin{subfigure}[b]{\textwidth}
		\centering
		\includegraphics[scale=0.275]{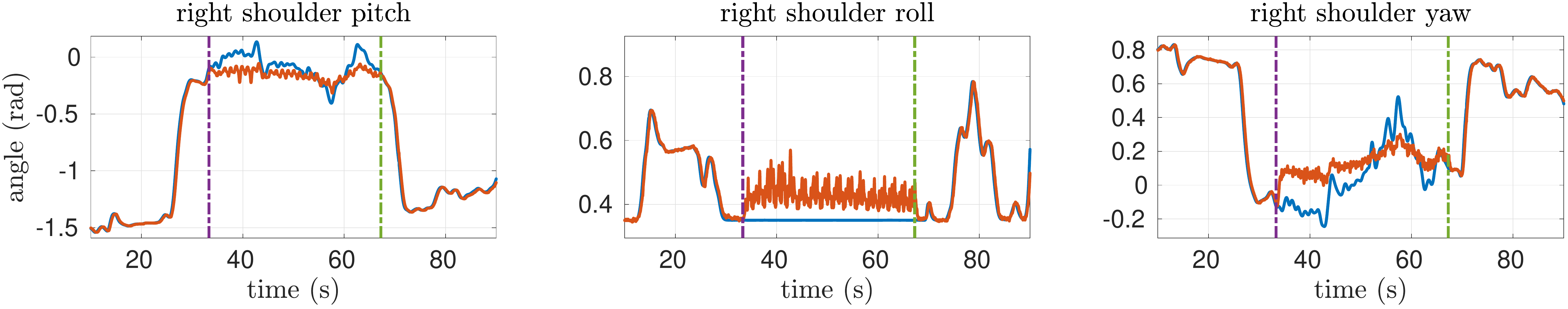}
		\label{fig:walking_right_shoulder_tracking}
	\end{subfigure}
	\caption{Center of mass tracking and the time evolution of joint angles during whole-body retargeting with walking controller}
	\label{fig:walking_joint_tracking}
\end{figure}

\bookmarksetup{startatroot}
\chapter*{Epilogue}
\addcontentsline{toc}{chapter}{Epilogue}

This thesis attempts at addressing some of the challenges for enabling human-robot collaboration. It presents a holistic human perception framework for real-time monitoring of whole-body human motion and dynamics. Furthermore, it attempts at methodically defining what constitutes an assistance from a human partner and propose partner-aware robot control strategies to endow robots with the capacity to meaningfully engage in a collaborative task.

The manuscript is divided into three main parts:

\begin{itemize}
	\item  The first part describes the role of human and robot partners in future socio-technical systems, followed by the mathematical preliminaries and background on rigid body systems. Furthermore, it recalls the importance of the human modeling concerning human-robot collaboration and presents the different enabling technologies that are employed in carrying out the research presented in this thesis.
	\item The second part focuses on the challenge of holistic human perception and presents promising approaches to real-time human motion tracking of a highly articulated human model. A novel sensorless external force estimation on the human links through a stochastic human dynamics estimation for a floating base human model is presented with experimental validation. Lastly, a modular and extensible software architecture towards realizing a holistic human perception framework is explained in detail. 
	\item The final part is dedicated to reactive robot control, where we attempt at methodically defining what constitutes an assistance from a human partner and propose partner-aware robot control strategies to endow robots with the capacity to meaningfully engage in a collaborative task. Furthermore, the concept of trajectory advancement for intuitive robot behavior leveraging interaction from a human partner is presented with experimental validation. Finally, our research towards sophisticated teleexistence setup is explained in detail followed by extensive experimental validation.
	
\end{itemize}
 
Before bringing down the curtains on this manuscript, we would like to reflect on some of the open questions and challenges that will guide future research aspirations. Firstly, inclusion of prediction and classification components in our human perception framework will facilitate new application scenarios. Concerning the aspect of human modeling, currently we precompute the inertial parameters of the human segments based on anthropometric tables and model it explicitly based on the measured parameters of the human subject. This approach however is not easily scalable and adaptable for different population~\cite{matrangola2008changes}. A promising research in this direction is the identification of body segment inertial parameters~\cite{hansen2014individual,Venture2019}. Automatic whole-body human model generation that is adaptable for multiple applications is still an open challenge~\cite{Venture2019} and there is a need for bridging the gaps between digital human models employed in different fields. Furthermore, higher level planning for task sharing between the human partner and the robot partner~\cite{raessa2019human} is an another interesting and potential research area that can facilitate new insights.

Concerning the robot control, the investigations presented in this thesis primarily focus on how the human partner assistance is leveraged by the robot partner for instantaneous control. On the other side of this question is how the robot partner can assist the human partner through holistic human perception and predictive control. Possible research endeavors in this direction are to build whole-body robot controllers that monitor the human joint torques and improve human ergonomics while performing complicated tasks such as collaborative transportation. Furthermore, the concept of trajectory advancement can be applied to exoskeleton control, where the physical interaction from the human guides the robotic exoskeleton to assist. Finally, adding a planning component will extend the applicability to collaborative scenarios that need deliberation between human and robot partners.

\begin{spacing}{0.9}

\bibliographystyle{apalike}
\cleardoublepage
\bibliography{thesis} %

\end{spacing}

\begin{appendices} %

\chapter{Human Dynamics Estimation \\ Maximum A-Posteriori Solution}
\label{app:appendix-HumanDynamicsEstimation}

The Maximum A-Posteriori (MAP) Solution for the human dynamics estimation is originally presented in detail in \cite{latella2018thesis}. The following mathematical details are directly borrowed from \cite{latella2018thesis} and presented in this appendix for the sake of minimal clarity to the reader. 

Since the normal distributions $\comVar{d}$ and $\comVar{y}$ are jointly Gaussian, the conditional probability distribution  $p(\comVar{d}|\comVar{y})$ is such that

\begin{equation}\label{conditional_prob_distribution}
    p(\comVar{d}|\comVar{y}) = \frac{p(\comVar{d}, \comVar{y})}{p(\comVar{y})} = \frac{ p(\comVar{d}) p(\comVar{y}|\comVar{d})} {p(\comVar{y})}
\end{equation} 

In the following computation, the term $p(\comVar{y})$ is negligible since it does not depend on $\comVar{d}$.  This is the reason of the proportionality between the conditional probability distribution and the joint distribution in \eqref{d_map}.  Hereafter each term in \eqref{conditional_prob_distribution} is computed separately to obtain the final analytical solution. For the sake of simplicity,  $(\comVar{q}, \dot {\comVar{q}})$ dependencies are omitted in the computations.

\subsection*{Computation of $p(\comVar{y}|\comVar{d})$}

\noindent
Let us first give an expression for the conditional probability density function (PDF) $p (\comVar{y}|\comVar{d})$:

\begin{eqnarray}\label{pdf:y_given_d}
    p(\comVar{y} | \comVar{d}) &\propto& \exp{{-}\frac{1}{2} \left\{ \left(\comVar{y} - \comVar{\mu}_{y|d}\right)^\top \comVar{\Sigma}_{y|d}^{-1} \left(\comVar{y} - \comVar{\mu}_{y|d}\right)\right\}} \notag \\
    & =&  \exp{{-}\frac{1}{2} \left\{\big[\comVar{y} - \left(\comVar{Y} \comVar{d} + \comVar{b}_Y \right) \big]^\top \comVar{\Sigma}_{y|d}^{-1} \big[\comVar{y} - \left(\comVar{Y} \comVar{d} + \comVar{b}_Y \right) \big]\right\}}
\end{eqnarray} 
which implicitly makes the assumption that the measurements equation \eqref{eq:measEquation} is affected by a Gaussian noise with zero mean and covariance $ \comVar{\Sigma}_{y|d}$.

\subsection*{Computation of $p(\comVar{d})$}

\noindent
Define now a PDF for the normal distribution $\comVar{d}$.  By pursuing the same methodology, we would like to write  its distribution in the following form
\begin{equation} \label{distr:dmoddyn_short}
    \comVar{d} \sim \mathcal N ({\comVar{\mu}}_D,{\comVar{\Sigma}}_D)
\end{equation}
such that the PDF
\begin{equation}\label{pdf:dmoddyn}
    p(\comVar{d}) \propto \exp  -\frac{1}{2}{\left\{\comVar{e}(\comVar{d})^\top \comVar{\Sigma}_D^{-1} \comVar{e}(\comVar{d})\right\}}
\end{equation}
taking into account constraints of equation \eqref{eq:matRNEA} with $\comVar{e} (\comVar{d})=\comVar{D} \comVar{d} + \comVar{b}_D$.

However, this intuitive choice leads to a degenerate normal distribution and a regularization term is needed. For example, if we have a Gaussian prior knowledge on $\comVar{d}$ in the form of $\comVar{d} \sim \mathcal N \left({\comVar{\mu}}_d,{\comVar{\Sigma}}_d\right)$ distribution, we can reformulate Equation
 \eqref{distr:dmoddyn_short} as follows:
\begin{equation} \label{distr:d_short}
    \comVar{d} \sim \mathcal N(\xoverline {\comVar{\mu}}_D, \xoverline {\comVar{\Sigma}}_D)
\end{equation}
such that \eqref{pdf:dmoddyn} becomes

\begin{eqnarray} \label{pdf:d}
    p(\comVar{d}) &\propto& \exp -\frac{1}{2}{\left\{\comVar{e}(\comVar{d})^\top \comVar{\Sigma}_D^{-1} \comVar{e}(\comVar{d})+(\comVar{d}{-}\comVar{\mu}_d)^\top \comVar{\Sigma}_d^{-1} (\comVar{d} -\comVar{\mu}_d) \right\}} \notag\\
    &=& \exp{-\frac{1}{2} \left\{ (\comVar{D} \comVar{d} + \comVar{b}_D)^\top \comVar{\Sigma}_D^{-1}
    (\comVar{D} \comVar{d} + \comVar{b}_D){+}(\comVar{d} -\comVar{\mu}_d)^\top \comVar{\Sigma}_d^{-1} (\comVar{d}-\comVar{\mu}_d)\right\}} \notag \\
    &=& \exp{{-}\frac{1}{2} \left\{ \big (\comVar{d} -\xoverline {\comVar{\mu}}_D \big )^\top
    \xoverline {\comVar{\Sigma}}_D^{-1} \big (\comVar{d} - \xoverline {\comVar{\mu}}_D \big )\right\}}
\end{eqnarray}
where the covariance and the mean are, respectively,
\begin{subequations} 
    \begin{eqnarray} 
        \label{eq:sigmaDbar}
        \xoverline {\comVar{\Sigma}}_D &=& \left(\comVar{D}^\top \comVar{\Sigma}_D^{-1} \comVar{D} + \comVar{\Sigma}_d^{-1}\right)^{-1}
        \\
        \label{eq:muDbar}
        \xoverline {\comVar{\mu}}_D &=& \xoverline {\comVar{\Sigma}}_D \left(\comVar{\Sigma}_d^{-1} \comVar{\mu}_d - \comVar{D}^\top \comVar{\Sigma}_D^{-1} \comVar{b}_D \right)
\end{eqnarray} 
\end{subequations}
The role of $\comVar{\Sigma}_D$ is to establish how much the dynamic model \eqref{eq:matRNEA} should be considered correct. The quantities $\comVar{\mu}_d$ and $\comVar{\Sigma}_d$, instead, define the Gaussian prior distribution on $\comVar{d}$ (namely, the regularization term).

\subsection*{Computation of $p(\comVar{d} | \comVar{y})$}

\noindent
By combining Equations \eqref{pdf:y_given_d} and \eqref{pdf:d} we are now ready to give a new formulation of the estimation problem for the conditional PDF of $\comVar{d}$ given $\comVar{y}$, i.e.,
\begin{eqnarray}
    p(\comVar{d} |\comVar{y}) &\propto& \exp {-\frac{1}{2}} \left\{ \big (\comVar{d} -\xoverline {\comVar{\mu}}_D \big)^\top \xoverline {\comVar{\Sigma}}_D^{-1} \big (\comVar{d}-\xoverline {\comVar{\mu}}_D \big)~+ \right. \notag
    \\
    &+& \left. \Big[\comVar{y} -(\comVar{Y} \comVar{d} + \comVar{b}_Y)\Big]^\top \comVar{\Sigma}_{y|d}^{-1}
    \Big [\comVar{y} - (\comVar{Y} \comVar{d} + \comVar{b}_Y)\Big] \right\}
\end{eqnarray}
with a covariance matrix and a mean as follows:

\begin{subequations}\label{MAP_solution}
    \begin{eqnarray}
        \label{eq:sigma_dgiveny}
        \comVar{\Sigma}_{d|y} &=& \left(\xoverline{ \comVar{\Sigma}}_D^{-1} + \comVar{Y}^\top \comVar{\Sigma}_{y|d}^{-1}\comVar{Y}\right)^{-1} 
        \\
        \label{eq:mu_dgiveny} 
        \comVar{\mu}_{d|y} &=& \comVar{\Sigma}_{d|y} \left[ \comVar{Y}^\top \comVar{\Sigma}_{y|d}^{-1} (\comVar{y}-\comVar{b}_Y) + \xoverline{\comVar{\Sigma}}_D^{-1} \xoverline{\comVar{\mu}}_D\right]
    \end{eqnarray} 
\end{subequations}
Moreover, in the Gaussian case the MAP solution coincides with the mean of the PDF $p(\comVar{d}|\comVar{y})$ yielding to:
\begin{eqnarray} \label{eq:d_gaussian_original}
    \comVar{d}^{\mbox{\tiny{MAP}}} = \comVar{\mu}_{d|y}
\end{eqnarray}

\chapter{Whole-Body Momentum Control}
\label{app:whole-body-momentum-control}

In the case of complex humanoids robots, state-of-the-art whole-body controllers often consider controlling the robot momentum \cite{7803266, nava2016stability}. For the sake of minimal clarity, this appendix presents whole-body momentum control for humanoid robots originally presented in~\cite{nava2016stability}.

Recall the equations of motion of a robotic system from Eq.\eqref{NERobot2},

\begin{equation}
	\robVar{M}(\robVar{q}) \dot{\robnu} + \robVar{C}(\robVar{q},\robnu) \robnu + \robVar{G}(\robVar{q}) =
	\robVar{B} {\robtau} + \sum_{i = 1}^{n_c} \robVar{J}^{\top}_{\mathcal{C}_i}  \robVar{f}_i,
\end{equation}

The coordinates of the state space are $(\robVar{q}, \robnu)$.  Consider the mass matrix partitioned as following,

\begin{equation}
	\robVar{M} = \begin{bmatrix}
	\robVar{M}_b & \robVar{M}_{bj}\\
	\robVar{M}_{bj}^\top & \robVar{M}_j
	\end{bmatrix}
\end{equation}

with $\robVar{M}_b \in \mathbb{R}^{6 \times 6}$, $\robVar{M}_{bj} \in \mathbb{R}^{6 \times n}$ and $\robVar{M}_j \in \mathbb{R}^{n \times n}$. Now, consider the following change of state variables:

\begin{align}
	\robVar{q} := \robVar{q}, ~~ \bar{\robnu} := T(\robVar{q}){\robnu}, \IEEEyessubnumber
\label{eq:centroidalTransform}
\end{align}
with 
\begin{IEEEeqnarray}{rCL}
	\label{eq:generalStructure}
	T &:=& \begin{bmatrix}
		\prescript{c}{}{\comVar{X}}_{\mathcal{F}} & \prescript{c}{}{\comVar{X}}_{\mathcal{F}} \robVar{M}^{-1}_b \robVar{M}_{bj} \\
		\comVar{0}_{n \times 6} & \comVar{1}_n
	\end{bmatrix}, \IEEEyessubnumber \\
	\prescript{c}{}{\comVar{X}}_{\mathcal{F}} &:=& \begin{bmatrix}
		\comVar{1}_3 & -\bm \skewOp(\prescript{\mathcal{I}}{}{\robVar{p}}_{c}-\prescript{\mathcal{I}}{}{\robVar{p}}_{\mathcal{F}}) \\ 
		\comVar{0}_{3 \times 3} & \comVar{1}_3
	\end{bmatrix}\IEEEyessubnumber
\end{IEEEeqnarray}
where the superscript $c$ denotes the frame with the origin located at the center of mass, and with the orientation of $\mathcal{I}$. Then, the equations of motion with new state variables $(\robVar{q},\xoverline{ \robnu})$ becomes,

\begin{align}
\label{eq:decoupled_system_dynamics}
\xoverline{\robVar{M}}(\robVar{q})\dot{\xoverline{\robnu}} + \xoverline{\robVar{C}}(\robVar{q}, \xoverline{\robnu}) \xoverline{\robnu} + \xoverline{\robVar{G}} = \robVar{B} \robtau +
\sum_{i = 1}^{n_c} \xoverline{\robVar{J}}^{\top}_{\mathcal{C}_i}  \robVar{f}_i,
\end{align}
with
\begin{IEEEeqnarray}{RCL}
	\xoverline{\robVar{M}}(\robVar{q}) &=& T^{-\top} {\robVar{M}} T^{-1} = \begin{bmatrix}
		\xoverline{\robVar{M}}_b(\robVar{q}) & \comVar{0}_{6\times n} \\ \comVar{0}_{n\times 6} & \xoverline{\robVar{M}}_j(\robVar{q}_j)
	\end{bmatrix}, \IEEEyessubnumber \label{eq:massMatrixStructure}\\
	\xoverline{\robVar{C}}(\robVar{q},\xoverline{\robnu}) &=& T^{-\top}({\robVar{M}}\dot{T}^{-1} + {\robVar{C}}T^{-1}), \IEEEyessubnumber \label{eq:coriolisMatrixStructure} \\
	\xoverline{\robVar{G}} &=& T^{-\top}{\robVar{G}} = mg\comVar{e}_3, \IEEEyessubnumber \label{eq:gravityStructure} \\
	\xoverline{\robVar{J}}_{\mathcal{C}_i}(\robVar{q}) &=& \robVar{J}_{\mathcal{C}_i}(\robVar{q}) T^{-1} = \begin{bmatrix} \xoverline{\robVar{J}}_{\mathcal{C}_i}^b(\robVar{q}) & 
		\xoverline{\robVar{J}}_{\mathcal{C}_i}^j(\robVar{q}_j)\end{bmatrix},\IEEEyessubnumber \label{eq:jacobianStructure}
\end{IEEEeqnarray}
\begin{align*}
\xoverline{\robVar{M}}_b(\robVar{q}) &= \begin{bmatrix}
m \comVar{1}_3 & \comVar{0}_{3\times3} \\ \comVar{0}_{3\times3} & \robVar{I}(\robVar{q})
\end{bmatrix},
\xoverline{\robVar{J}}_{\mathcal{C}_i}^b(\robVar{q}) {=}
\begin{bmatrix}
\comVar{1}_3 & {-}\bm \skewOp(\robVar{p}_{\mathcal{C}_i}{-}\prescript{\mathcal{I}}{}{\robVar{p}}_{c})  \\ 
\comVar{0}_{3\times3} & \comVar{1}_3 \\ 
\end{bmatrix}
\end{align*}
where $m$ is the mass of the robot and $\robVar{I}$ is the inertia matrix computed with respect to the center of mass, with the orientation of the inertial frame $\mathcal{I}$.

The mass matrix of the  transformed system~\eqref{eq:decoupled_system_dynamics} is a block matrix that decouples the transformed base acceleration and the joint acceleration~\cite{traversaro2016}. More precisely, the transformed robot's velocity $\xoverline{\robnu}$ is given by  
$\xoverline{\robnu} =
\begin{pmatrix}
^\mathcal{I}\dot{\robVar{p}}_c^\top &
^\mathcal{I}{\omega}_c^\top &
\dot{\robVar{s}}^\top
\end{pmatrix}^\top$
where $^\mathcal{I}\dot{\robVar{p}}_c$ is the velocity of the center-of-mass of the robot, and
$^\mathcal{I}\omega_c$ is the so-called \emph{average angular velocity}\footnote{The term $^\mathcal{I}{\omega}_c$ is also known as 
	the \emph{locked angular velocity}~\cite{marsden1993reduced}.}.

Assuming that the only contact constraint is between the environment and the robot's foot, we can write:

\begin{align}
	\label{eq:extForces}
	\sum_{k = 1}^{n_c} \xoverline{\robVar{J}}^\top_{\mathcal{C}_k} \robVar{ f}_i = \xoverline{\robVar{J}}_c^\top(\robVar{q}) \robVar{ f}
\end{align}

where $\xoverline{\robVar{J}}_c(\robVar{q}) \in \mathbb{R}^{6\times n+6}$ is the Jacobian of a frame attached to the foot's sole in contact with the environment, and $ \robVar{ f} \in \mathbb{R}^{6}$ the contact wrench.
Differentiating the kinematic constraint 
\begin{IEEEeqnarray}{RCL}
	\label{JNuEqualZero}
	\xoverline{\robVar{J}}_c(\robVar{q}) \xoverline{\robnu} = \begin{bmatrix}
		\xoverline{\robVar{J}}_b &  \xoverline{\robVar{J}}_j
	\end{bmatrix} \xoverline{\robnu} = 0
\end{IEEEeqnarray}
associated with the contact, yield
\begin{equation}
	\label{eq:constraints_simple}
	\xoverline{\robVar{J}}_c(\robVar{q}) \dot{\xoverline{\robnu}} + \dot{\xoverline{\robVar{J}}}_c(\robVar{q}) \xoverline{\robnu}  = 0
\end{equation}
\begin{equation}
	\label{eq:constraints_acc}
	\begin{bmatrix}
		\xoverline{\robVar{J}}_b &  \xoverline{\robVar{J}}_j
	\end{bmatrix} 
	\begin{bmatrix}
		\dot{\robVar{v}}_{\mathcal{F}}\\ 
		\ddot{\robVar{s}}
	\end{bmatrix} + 
	\begin{bmatrix}
		\dot{\xoverline{\robVar{J}}}_b & \dot{\xoverline{\robVar{J}}}_j
	\end{bmatrix} 
	\begin{bmatrix}
		\robVar{v}_{\mathcal{F}}\\
		\dot{\robVar{s}}
	\end{bmatrix} = 0
\end{equation}

The term $\dot{\xoverline{\robnu}}$ is the robot's acceleration that can be obtained from the decoupled equations of motion Eq.~\eqref{eq:decoupled_system_dynamics} as,

\begin{align}
	\dot{\xoverline{\robnu}} = {\xoverline{\robVar{M}}}^{-1} [ \robVar{B} \robtau +
	\xoverline{\robVar{J}}_c^\top \robVar{ f} - \xoverline{\robVar{h}} ]
	\label{eq:decoupled-robot-acceleration}
\end{align}

where, $\xoverline{\robVar{h}} = \xoverline{\robVar{C}}(\robVar{q}, \xoverline{\robnu}) \xoverline{\robnu} + \xoverline{\robVar{G}} \in \mathbb{R}^{n+6}$. Using the relation from Eq.~\eqref{eq:decoupled-robot-acceleration} in the constraints equation Eq.~\eqref{eq:constraints_simple} we obtain,

\begin{align}
	\xoverline{\robVar{J}}_c {\xoverline{\robVar{M}}}^{-1} [ \robVar{B} \robtau +
	\xoverline{\robVar{J}}_c^\top \robVar{ f} - \xoverline{\robVar{h}} ] + \dot{\xoverline{\robVar{J}}}_c \xoverline{\robnu}
	= 0
	\label{eq:control_torques_equation}
\end{align}

Now, the robot's momentum $\robVar{L} \in \mathbb{R}^6$ is given by $\robVar{L} = \robVar{M}_b \robVar{v}_\mathcal{F}$. The rate-of-change of the robot momentum equals the net external wrench acting on the robot which in the present case reduces to the contact wrench $\robVar{ f}$ plus the gravity wrench. The contact wrench $\robVar{ f}$ is assumed to be a virtual control input through which the robot momentum is controlled. Note that given the particular form of~\eqref{eq:decoupled_system_dynamics}, the first six rows correspond to the dynamics of the robot's momentum, i.e.
\begin{IEEEeqnarray}{RCL}
	\label{hDot}
	\frac{\dif }{\dif t}(\robVar{M}_b {\robVar{v}_\mathcal{F}}) &=& \xoverline{\robVar{J}}_b^\top \robVar{ f} - mg \robVar{e}_3 = \dot{\robVar{L}}(\robVar{ f}^*)
\end{IEEEeqnarray}
where ${\robVar{L}}:=(\robVar{L}_L,\robVar{L}_\omega)$, with 
$H_L, H_\omega \in \mathbb{R}^3$ linear and angular momentum, 
respectively.
The control objective can then be  defined as the stabilization of a desired robot momentum $\robVar{L}^d$. Let us define the momentum error as follows
$\tilde{\robVar{L}} = \robVar{L} - \robVar{L}^d$. So, the virtual control input $\robVar{ f}$ in Eq.~\eqref{hDot} is chosen such that,

\begin{IEEEeqnarray}{RCL}
	\label{hDotDes}
	\dot{\robVar{L}}(\robVar{ f}) &=& 
	\dot{\robVar{L}}^* := \dot{\robVar{L}}^d - \robVar{K}_p \tilde{\robVar{L}} - \robVar{K}_i \int \tilde{\robVar{L}}    
	\IEEEeqnarraynumspace  \IEEEyessubnumber 
\end{IEEEeqnarray}
where $\robVar{K}_p, \robVar{K}_i \in \mathbb{R}^{6\times 6}$ are two symmetric, positive definite matrices.

Given the assumption that the contact wrench $\robVar{ f}$ is a virtual control input, it is chosen as to satisfy the following relation,

\begin{equation}
	\label{eq:forces}
	\robVar{ f}^* = \xoverline{\robVar{J}}_b^{-\top} \left( 
	\dot{\robVar{L}}^*
	+ mg \robVar{e}_3\right)
\end{equation}

Now, to determine the control torques that instantaneously realize the contact force given by~\eqref{eq:forces}, we use the relation from Eq.~\eqref{eq:control_torques_equation} that is derived from the decoupled dynamic equations~\eqref{eq:decoupled_system_dynamics} along with the constraints~\eqref{eq:constraints_acc}, which yield
\begin{equation}
	\label{eq:torques}
	\robtau = \robVar{\Lambda}^\dagger (\xoverline{\robVar{J}}_c {\xoverline{\robVar{M}}}^{-1}(\xoverline{\robVar{h}} - \xoverline{\robVar{J}}_c^\top \robVar{ f}^*) - \dot{\xoverline{\robVar{J}}}_c \xoverline{\robnu} )+ N_{\robVar{\Lambda}} \tau_0
\end{equation}

where $\robVar{\Lambda} = \xoverline{\robVar{J}}_j \xoverline{\robVar{M}}_j^{-1} \in \mathbb{R}^{6\times n}$,  $N_{\robVar{\Lambda}} \in \mathbb{R}^{n\times n}$ is the nullspace projector of $\robVar{\Lambda}$, and $\tau_0 \in \mathbb{R}^n$ is a free variable that can be used for additional tasks. The free variable  ${\robtau}_0$ is exploited to achieve the postural task. A classical state-of-the-art choice for this postural task consists in:
${\robtau}_0= \xoverline{\robVar{h}}_j - \xoverline{\robVar{J}}_j^\top \robVar{ f} -  \robVar{K}^j_{p}(\robVar{s}- \robVar{s}^d) -\robVar{K}^j_{d}\dot{\robVar{s}}$,
where  
$\xoverline{\robVar{h}}_j - \xoverline{\robVar{J}}_j^\top \robVar{ f}$ compensates for the nonlinear effect and the external wrenches acting on the joint space of the system.

The reader is advised to refer \cite{nava2016stability} for a detailed discussion on whole-body momentum control of a humanoid robot for the balancing task.

\end{appendices}

\printthesisindex %

\end{document}